\definecolor{blue}{rgb}{0.0, 0.0, 1.0}
\definecolor{jgGreen}{rgb}{0.0, 0.5, 0.0}
\definecolor{pink}{rgb}{0.5, 0.0, 0.25}
\newcommand{\ignore}[1]{}
\definecolor{darkorange}{RGB}{255, 140, 0}
\newcommand{\cprm}{R^\e}
\newcommand{\cdl}{\bar R}
\newcommand{\cB}{{\mathcal B}}
\newcommand{\cL}{\mathcal{L}}
\newcommand{\cM}{\mathcal{M}}
\newcommand{\cS}{\mathcal{S}}
\newcommand{\sU}{{\mathscr U}}
\newcommand{\ba}{{\mathbf a}}
\newcommand{\bb}{{\mathbf b}}
\newcommand{\bh}{{\mathbf h}}
\newcommand{\bu}{{\mathbf u}}
\newcommand{\bx}{{\mathbf x}}
\newcommand{\by}{{\mathbf y}}
\newcommand{\bz}{{\mathbf z}}
\DeclareMathOperator*{\argmin}{argmin}
\DeclareMathOperator*{\esssup}{ess\,sup}
\DeclareMathOperator{\interior}{int}
\DeclareMathOperator{\supp}{supp}
\DeclareMathOperator{\comp}{comp}
\def\Rset{\mathbb{R}}
\def\Nset{\mathbb{N}}
\newcommand{\PP}{{\mathbb P}}
\newcommand{\QQ}{{\mathbb Q}}
\newcommand{\one}{{\mathbf{1}}}
\newcommand{\zero}{{\mathbf{0}}}
\newcommand{\Wball}[1]{{\cB^\infty_{#1}}}
\newcommand{\UWball}[1]{{\wtd \cB^\infty_{#1}}}
\newcommand{\ov}{\overline}
\newcommand{\td}{\tilde}
\newcommand{\wtd}{\widetilde}
\newcommand{\e}{\epsilon}
\newcommand{\sidebysidesubequations}[3]{%

    \begin{subequations}\label{#1}

        \begin{minipage}{0.5\linewidth}
         \begin{equation}
              #2 \vphantom{#3}  \label{#1_a}
        \end{equation}
         \end{minipage}
         \begin{minipage}{0.5\linewidth}
        \begin{equation}
              #3 \vphantom{#2} \label{#1_b}
         \end{equation}
         \end{minipage}
    \end{subequations}

 }
\setlist[enumerate]{leftmargin=.5in}
\setlist[itemize]{leftmargin=.5in}
\crefname{hypothesis}{Hypothesis}{Hypotheses}
\title{A Notion of Uniqueness for the Adversarial Bayes Classifier\thanks{Submitted to the editors DATE.
\funding{Natalie Frank was supported in part by the Research Training Group in Modeling and Simulation
funded by the National Science Foundation via grant RTG/DMS – 1646339 and grants DMS-2210583, CCF-1535987, IIS-1618662.}}}
\author{Natalie S. Frank\thanks{Courant Institute, New York, NY 
  (\email{nf1066@nyu.edu}, \url{https://natalie-frank.github.io/}).}}
\begin{document}

\maketitle
\begin{abstract}
We propose a new notion of uniqueness for the adversarial Bayes classifier in the setting of binary classification. Analyzing this concept produces a simple procedure for computing all adversarial Bayes classifiers for a well-motivated family of one dimensional data distributions. This characterization is then leveraged to show that as the perturbation radius increases, certain notions of regularity for the adversarial Bayes classifiers improve. Furthermore, these results provide tools for understanding relationships between the Bayes and adversarial Bayes classifiers in one dimension.
\end{abstract}

\begin{keywords}
Robust Learning, Calculus of Variations, $\infty$-Wasserstein Metric
\end{keywords}

\begin{MSCcodes}
62A995
\end{MSCcodes}


\section{Introduction}
A crucial reliability concern for machine learning models is their susceptibility to adversarial attacks. Neural nets are particularly sensitive to small perturbations to data. For instance, \cite{biggio2013evasion, szegedy2013intriguing} show that perturbations imperceptible to the human eye can cause a neural net to misclassify an image. In order to reduce the susceptibility of neural nets to such attacks, several methods have been proposed to minimize the \emph{adversarial classification risk}, which incurs a penalty when a data point can be perturbed into the opposite class. However, state-of-the-art methods for minimizing this risk still achieve significantly lower accuracy than standard neural net training on simple datasets, even for small perturbations. For example, on the CIFAR10 dataset, \cite{peng2023robust} achieves 71\% robust accuracy for $\ell_\infty$ perturbations size  $8/255$ while \cite{dosovitskiy2021image} achieves over 99\% accuracy without an adversary.

In the setting of standard (non-adversarial) classification, a \emph{Bayes classifier} is defined as a minimizer of the classification risk. This classifier simply predicts the most probable class at each point. When multiple classes are equally probably, the Bayes classifier may not be unique. The Bayes classifier has been a helpful tool in the development of machine learning classification algorithms \cite[Chapter~2.4]{HastieTibshiraniFriedman2009}. On the other hand, in the adversarial setting, computing minimizers of the adversarial classification risk in terms of the data distribution is a challenging problem. These minimizers are referred to as \emph{adversarial Bayes classifiers}. Prior work \cite{AikenBrewerMurray2022,BhagojiCullinaMittalji2019lower,PydiJog2019} calculates these classifiers by first proving a minimax principle relating the adversarial risk with a dual problem, and then showing that the adversarial risk of a proposed set matches the dual risk of an element of the dual space.

In this paper, we propose a new notion of `equivalence' and `uniqueness' for adversarial Bayes classifiers in the setting of binary classification under the evasion attack. In the non-adversarial setting, two classifiers are \emph{equivalent} if they are equal a.e. with respect to the data distribution, and one can show that any two equivalent classifiers have the same classification risk. The Bayes classifier is \emph{unique} if any two minimizers of the classification risk are equivalent. However, under this equivalence relation, two equivalent sets can have different adversarial classification risks. This discrepancy necessitates a new definition of equivalence for adversarial Bayes classifiers.
Further analyzing this new equivalence relation in one dimension results in a method for calculating all possible adversarial Bayes classifiers for a well-motivated family of distributions using optimality conditions from calculus. In contrast, prior work on calculating adversarial Bayes classifier involved verifying a strong duality principle from optimal transport \cite{PydiJog2019} (see \cref{sec:related_works} for a further discussion of related works). 

We apply our new techniques to show that certain regularity properties of adversarial Bayes classifiers improve as $\e$ increases. Subsequent examples demonstrate that different adversarial Bayes classifiers can achieve varying levels of standard classification risk. Together, these examples suggest that in certain cases, the accuracy-robustness tradeoff could be mitigated by a careful selection of an adversarial Bayes classifier (see \cite{ZhangYuetal19principled} for a further discussion of this phenomenon). Followup work \cite{Frank2024consistency} demonstrates that the concepts presented in this paper have algorithmic implications--- when the data distribution is absolutely continuous with respect to Lebesgue measure, adversarial training with a convex loss is adversarially consistent iff the adversarial Bayes classifier is unique, according to the new notion of uniqueness proposed in this paper. Hopefully, a better understanding of adversarial Bayes classifiers will aid the design of algorithms for robust classification.
\section{Background}\label{sec:Background}
\subsection{Bayes classifiers}\label{sec:background_bayes_classifiers}
We study binary classification on the space $\Rset^d$ with labels $\{-1,+1\}$. The measure $\PP_0$ describes the probability of data with label $-1$ occurring in regions of $\Rset^d$ while the measure $\PP_1$ describes the probability of data with label $+1$ occurring in regions of $\Rset^d$. Most of our results will assume that $\PP_0$ and $\PP_1$ are absolutely continuous with respect to the Lebesgue measure $\mu$.
Vectors in $\Rset^d$ will be denoted in boldface ($\bx$). Many of the results in this paper focus on the case $d=1$ for which we will use non-bold letters ($x$).  The functions $p_0$ and $p_1$ will denote the densities of $\PP_0$, $\PP_1$ respectively. A classifier is represented by a set $A$ corresponding to points labeled $+1$. The \emph{classification risk} of the set $A$ is then the proportion of incorrectly classified data:
\begin{equation}\label{eq:classification_risk_set}
R(A)=\int \one_{A^C} d\PP_1+\int \one_Ad\PP_0. 
\end{equation}

A minimizer of the classification risk is called a \emph{Bayes classifier}. Analytically finding the minimal classification risk and Bayes classifiers is a straightforward calculation: Let $\PP=\PP_0+\PP_1$, representing the total probability of a region, and let $\eta$ be the the Radon-Nikodym derivative $\eta=d\PP_1/d\PP$, the conditional probability of the label $+1$ at a point $\bx$. Thus one can re-write the classification risk as
\begin{equation}\label{eq:classification_risk_eta}
R(A)=\int C(\eta(\bx),\one_A(\bx)) d\PP(\bx).
\end{equation}
and the minimum classification risk as $\inf_f R(f)=\int C^*(\eta)d\PP$ with

\vspace{-15pt}\sidebysidesubequations{eq:C_def}{C(\eta,\alpha)=\eta \one_{\alpha \leq 0}+(1-\eta) \one_{\alpha >0}}{C^*(\eta)=\inf_{\alpha\in\{0,1\}} C(\eta,\alpha)=\min(\eta,1-\eta)}
    \vspace{5pt}

A set $B$ is a Bayes classifier iff $\one_B$ minimizes \cref{eq:C_def_a} pointwise a.e. Accordingly, a Bayes classifier $B$ must include $\{p_1(x)>p_0(x)\}$ and exclude $\{p_1(x)<p_0(x)\}$ up to $\PP$-null sets. Rephrased in terms of $\PP$ and $\eta$, any Bayes classifier must include $\{\eta(x)>1/2\}$ and exclude $\{1-\eta(x)<1/2\}$ up to measure zero sets. The set of points with $\eta(\bx)=1/2$ can be arbitrarily split between $B$ and $B^C$. Two Bayes classifiers are \emph{equivalent} if they split this ambiguous set in the same way: 
\begin{definition}\label{def:equivalence_Bayes}
    The Bayes classifiers $B_1$ and $B_2$ are \emph{equivalent} if $\one_{B_1}=\one_{B_2}$ $\PP$-a.e. 
\end{definition}
This notion defines an equivalence relation because equality $\PP$-a.e. satisfies the properties of an equivalence relation. 
The Bayes classifier is \emph{unique} if there is a single equivalence class.
\begin{proposition}\label{prop:Bayes_equivalencies}
    The following are equivalent:
    \begin{enumerate}[label=\Roman*)]
        \item\label{it:Bayes_equivalencies_unique} The Bayes classifier is unique
        \item \label{it:Bayes_equivalencies_values}Amongst all Bayes classifiers $B$, either the value of $\PP_0(B)$ is unique or the value of $\PP_1(B^C)$ is unique
        \item \label{it:Bayes_equivalencies_eta}$\PP(\eta=1/2)=0$
    \end{enumerate}
\end{proposition}
See \cref{app:Bayes_equivalencies} for a proof. This paper will develop the counterpart of this result for adversarial Bayes classifiers.
 Next, we focus on the simplified scenario of one dimensional classifiers and $p_0, p_1$ continuous. In particular, by the preceding discussion, the set $B=\{p_1(x)>p_0(x)\}$ is a Bayes classifier. This set is open can therefore be written as a countable union of disjoint open intervals:
 \begin{equation}\label{eq:Bayes_as_union}
    B=\bigcup_{i=n}^N (c_i,d_i),    
 \end{equation}
  where $-\infty\leq n\leq N-1\leq +\infty$. The option $n=N-1$ specifically accounts for the empty set.
 
As $p_0$ and $p_1$ are continuous, points in the boundary of this Bayes classifier must satisfy

\vspace{-15pt}\sidebysidesubequations{eq:Bayes_necessary}{p_0(c_i)=p_1(c_i)}{p_0(d_i)=p_1(d_i)}
    \vspace{5pt}


The points $c_i$ mark transitions from $\{p_1(x)>p_0(x)\}$ to $\{p_1(x)\leq p_0(x)\}$ while the points $d_i$ mark transitions from $\{p_1(x)\leq p_0(x)\}$ to $\{p_1(x)>p_0(x)\}$. Consequently, if $p_0$ and $p_1$ are differentiable at $c_i$ or $d_i$, these boundary points $c_i, d_i$ must also satisfy

\vspace{-15pt}\sidebysidesubequations{eq:Bayes_second_order_necessary}{p_1'(c_i)-p_0'(c_i)\geq 0}{p_0'(d_i)-p_1'(d_i)\geq 0}
    \vspace{5pt}
respectively.
This analysis suggests a procedure for finding all possible Bayes classifiers up to equivalence:
\begin{enumerate}[label=\roman*)]
      \item\label{it:Bayes_procedure_first} Let $\mathfrak{c}$, $\mathfrak{d}$ be the set of points that satisfy the necessary conditions \cref{eq:Bayes_necessary,eq:Bayes_second_order_necessary} for $c_i$, $d_i$ respectively
      \item Construct a Bayes classifier $B=\bigcup_{i=n}^N (c_i,d_i)$ with $c_i\in \mathfrak{c}$ and $d_i\in \mathfrak{d}$. 
      \item \label{it:Bayes_procedure_last} Find the set $S=\{x: p_1(x)=p_0(x)>0\}$. For any two disjoint $S_1,S_2$ with $S_1\sqcup S_2=S$, the set $B\cup S_1-S_2$ is also a Bayes classifier. The equivalence classes of Bayes classifiers are enumerated by subsets $S_1\subset S$ that differ by a set of positive $\PP$-measure.
\end{enumerate}
This paper will extend notions of equivalence and uniqueness for classifiers in the adversarial setting. Furthermore, our results adapt \cref{prop:Bayes_equivalencies}, \cref{eq:Bayes_necessary,eq:Bayes_second_order_necessary}, and the procedure \ref{it:Bayes_procedure_first}-\ref{it:Bayes_procedure_last} to the adversarial context.

\subsection{Adversarial Bayes classifiers}\label{sec:background_adv_bayes_classifiers}

In the adversarial scenario, an adversary tries to perturb the data point $\bx$ into the opposite class. We assume that perturbations are in a closed $\e$-ball $\ov{B_\e(\zero)}$ in some norm $\|\cdot\|$. The proportion of incorrectly classified data under an adversarial attack is the \emph{adversarial classification risk},\footnote{In order to define the adversarial classification risk, one must show that $S_\e(\one_A)$ is measurable for measurable $A$. A full discussion of this issue is delayed to \cref{sec:fundamental_regularity}.}
\begin{equation}\label{eq:adv_classification_risk_set}
R^\e(A)= \int S_\e(\one_{A^C}) d\PP_1 +\int S_\e(\one_{A})d\PP_0
\end{equation}
where the $S_\e$ operation on a function $g$ is defined according to
\begin{equation}\label{eq:S_e_def}
    S_\e(g)(\bx)=\sup_{\|\bh\|\leq \e} g(\bx+\bh).
\end{equation}

Under this model, a set $A$ incurs a penalty wherever $\bx\in A \oplus \ov{B_\e(\zero)}$, and thus we define the \emph{$\e$-expansion} of a set $A$ as 
\[A^\e=A\oplus \ov{B_\e(\zero)}.\]
The supremum of an indicator function equals the indicator function of an $\e$-expansion:
\begin{equation}\label{eq:indicator_supremum}
    S_\e(\one_A)=\one_{A^\e}.
\end{equation}
Hence the adversarial risk can also be written as
\[R^\e(A)= \int \one_{(A^C)^\e}d\PP_1+\int \one_{A^\e} d\PP_0.\]
Prior work has established the existence of minimizers to \cref{eq:adv_classification_risk_set}, referred to as \emph{adversarial Bayes classifiers} \cite{AwasthiFrankMohri2021,BhagojiCullinaMittalji2019lower,FrankNilesWeed23consistency,PydiJog2021}, see \cite[Theorem~1]{FrankNilesWeed23consistency} for an existence theorem that matches the setup of this paper. However, identifying minimizers of \cref{eq:adv_classification_risk_set} is substantially more difficult than in the standard classification problem, because unlike the standard classification problem, as the integrand cannot be minimized in a pointwise manner. 

In addition to existence, prior research has investigated the structure of the set of minimizers to $\cprm$. In particular, \cite{BungertGarciaMurray2021} establishes the following result:
        \begin{lemma}\label{lemma:union_intersection_adv_bayes}
        If $A_1,A_2$ are two adversarial Bayes classifiers, then so are $A_1\cup A_2$ and $A_2\cap A_1$.
    \end{lemma}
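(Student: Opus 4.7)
The plan is to exploit a supermodularity-type inequality for $R^\e$ derived from set-theoretic relations for the $\e$-expansion. The key observations are that $\e$-expansion distributes over unions, $(A_1\cup A_2)^\e = A_1^\e \cup A_2^\e$, but only satisfies an inclusion over intersections, $(A_1\cap A_2)^\e \subseteq A_1^\e \cap A_2^\e$. Combining these with the elementary indicator identity $\one_{U\cup V}+\one_{U\cap V}=\one_U+\one_V$ yields the pointwise inequality
\[\one_{(A_1\cup A_2)^\e}+\one_{(A_1\cap A_2)^\e}\leq \one_{A_1^\e\cup A_2^\e}+\one_{A_1^\e\cap A_2^\e}= \one_{A_1^\e}+\one_{A_2^\e}.\]
Applying the same reasoning to $A_1^C$ and $A_2^C$, and using that $(A_1\cup A_2)^C=A_1^C\cap A_2^C$ while $(A_1\cap A_2)^C=A_1^C\cup A_2^C$, I obtain an analogous pointwise inequality for the complementary $\e$-expansions.

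Integrating the first inequality against $\PP_0$ and the second against $\PP_1$, and summing, I would obtain
\[R^\e(A_1\cup A_2)+R^\e(A_1\cap A_2)\leq R^\e(A_1)+R^\e(A_2)=2\inf_A R^\e(A),\]
where the final equality uses that both $A_1$ and $A_2$ achieve the infimum. Since each term on the left-hand side is individually at least $\inf_A R^\e(A)$, the inequality must be an equality in each term, showing that $A_1\cup A_2$ and $A_1\cap A_2$ are both adversarial Bayes classifiers.

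The only non-cosmetic concern is measurability: in order to apply the integral inequalities, the sets $(A_1\cup A_2)^\e$, $(A_1\cap A_2)^\e$, and their complementary analogues must be measurable. The paper flags this as an issue to be treated in its section on fundamental regularity, which I would invoke. Beyond that, the argument is purely formal set algebra combined with monotonicity of integration, so I do not anticipate any substantial obstacle.
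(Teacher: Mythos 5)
Your proof is correct and is essentially the same as the paper's: the paper establishes the identical pointwise inequality $S_\e(\one_{A_1})+S_\e(\one_{A_2})\geq S_\e(\one_{A_1\cap A_2})+S_\e(\one_{A_1\cup A_2})$ (its Lemma A.1, phrased via max/min of indicators rather than your set identities $(A_1\cup A_2)^\e=A_1^\e\cup A_2^\e$ and $(A_1\cap A_2)^\e\subseteq A_1^\e\cap A_2^\e$, which encode the same facts), then integrates to get subadditivity of $R^\e$ and concludes exactly as you do. Your measurability caveat is correct and handled by the paper via universal measurability (Theorem 2.8 / Theorem 2.9).
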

See \cref{app:union_intersection_adv_bayes} for a proof.

Next, we focus on classifiers in one dimension as this case is simple to analyze yet still yields non-trivial behavior.
    Prior work shows that when $\PP_0,\PP_1\ll \mu$ and $p_0,p_1$ are continuous, if the adversarial Bayes classifier is sufficiently `regular', one can find necessary conditions describing the boundary of the adversarial Bayes classifier \cite{trillosMurray2022}. Assume that an adversarial Bayes classifier $A$ can be expressed as a union of disjoint intervals $A=\bigcup_{i=m}^M (a_i,b_i)$ with $a_i<b_i<a_{i+1}$, where the $m,M,a_i$, and $b_i$ can be $\pm\infty$. Notice that one can arbitrarily include/exclude the endpoints $\{a_i\}$, $\{b_i\}$ without changing the value of the adversarial risk $R^\e$. If $b_i-a_i> 2\e$ and $a_{i+1}-b_i> 2\e$, the adversarial classification risk can then be expressed as:
    \begin{equation}\label{eq:risk_on_intervals}
        R^\e(A)= \cdots+\int_{b_{i-1}-\e}^{a_i+\e} p_1(x) dx+\int_{a_i-\e} ^{b_i+\e} p_0(x)dx+\int_{b_i-\e}^{a_{i+1}+\e} p_1(x)dx+\cdots    
    \end{equation}
   When the densities $p_0$ and $p_1$ are continuous, differentiating this expression in $a_i$ and $b_i$ produces necessary conditions 
    
    \vspace{-15pt}\sidebysidesubequations{eq:first_order_necessary}{p_1(a_i+\e)-p_0(a_i-\e)=0}{p_0(b_i+\e)-p_1(b_i-\e)=0}
    \vspace{5pt}
    which describe the boundary of the adversarial Bayes classifier. When $\e=0$, these equations reduce to the condition describing the boundary of the Bayes classifier in \cref{eq:Bayes_necessary}. Prior work shows that when $p_0,p_1$ are well-behaved, the necessary conditions hold for sufficiently small $\e$.

    \begin{theorem}[\cite{trillosMurray2022}]\label{thm:sufficient_prior_work}
        Assume that $p_0,p_1$ are $C^1$, the relation $p_0(x)=p_1(x)$ is satisfied at finitely many points $x\in \supp \PP$, and that at these points, $p_0'(x)\neq p_1'(x)$. Then for sufficiently small $\e$, there exists an adversarial Bayes classifier for which the $a_i$ and $b_i$ satisfy the necessary conditions \cref{eq:first_order_necessary}.            
        \end{theorem}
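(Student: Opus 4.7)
The plan is to construct a candidate set $A_\e$ by perturbing the endpoints of the (non-adversarial) Bayes classifier via the implicit function theorem, and then to verify that $A_\e$ achieves the minimum adversarial risk for $\e$ small. Since $p_0, p_1$ are $C^1$ and the equation $p_0 = p_1$ has only finitely many solutions $x_1,\ldots,x_n$ in $\supp \PP$, each transverse, the Bayes classifier $B_0 = \{p_1 > p_0\}$ agrees up to a $\PP$-null set with a finite disjoint union of intervals $\bigcup_{i=m}^M (\alpha_i,\beta_i)$ whose finite endpoints lie in $\{x_1,\ldots,x_n\}$, and whose interval lengths and gaps are uniformly positive.

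At each finite left endpoint $\alpha_i$, define $F(a,\e) := p_1(a+\e) - p_0(a-\e)$. Then $F(\alpha_i,0) = 0$ and $\partial_a F(\alpha_i,0) = p_1'(\alpha_i) - p_0'(\alpha_i) \neq 0$ by the transversality hypothesis, so the implicit function theorem yields a $C^1$ function $a_i(\e)$ with $a_i(0) = \alpha_i$ satisfying \cref{eq:first_order_necessary_a}. An analogous construction at each finite right endpoint $\beta_i$ yields $b_i(\e)$ satisfying \cref{eq:first_order_necessary_b}. Since the crossings are isolated, for $\e$ sufficiently small the perturbed endpoints remain ordered and satisfy $b_i(\e) - a_i(\e) > 2\e$ and $a_{i+1}(\e) - b_i(\e) > 2\e$, so the candidate $A_\e := \bigcup_i (a_i(\e), b_i(\e))$ admits the representation \cref{eq:risk_on_intervals}.

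It remains to show $A_\e$ is a global minimizer of $R^\e$. Differentiating \cref{eq:risk_on_intervals} in each $a_i$ and $b_i$ shows $A_\e$ is a critical point within the finite-dimensional class of unions of $M-m+1$ intervals obeying the separation hypothesis, and a Hessian computation at $\e = 0$ using $p_1'(x_j) \neq p_0'(x_j)$ yields strict local minimality, which persists by continuity for small $\e$. To upgrade local to global minimality, I would use a stability argument: as $\e \to 0^+$, any sequence of adversarial Bayes classifiers $A^*_\e$ converges to $B_0$ in the sense $\PP(A^*_\e \triangle B_0) \to 0$, forcing $A^*_\e$ to have the same combinatorial structure as $A_\e$ for all sufficiently small $\e$. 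Then applying \cref{lemma:union_intersection_adv_bayes} to $A_\e$ and $A^*_\e$ identifies $A_\e$ as an adversarial Bayes classifier.

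The main obstacle is this global/stability step. The implicit function construction and the local minimality both follow cleanly from the $C^1$ transversality hypothesis, but ruling out competitors with a genuinely different combinatorial structure (extra components, or intervals collapsing below width $2\e$ so that \cref{eq:risk_on_intervals} no longer applies) requires a compactness or $\Gamma$-convergence-style argument tailored to $R^\e$ as $\e \to 0^+$.
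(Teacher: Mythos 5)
This theorem is not proved in the paper at all: it is cited verbatim from \cite{trillosMurray2022} (see the line following the statement, ``For a proof, see the discussion of Equation (4.1) and Theorem~5.4 in \cite{trillosMurray2022}''). So there is no internal proof to compare against; the present paper instead develops a strictly stronger substitute, \cref{thm:adv_bayes_and_degenerate} together with \cref{thm:exists_regular}, which yields the first-order conditions for \emph{every} $\e$ under weaker hypotheses (no transversality needed), via degeneracy arguments rather than perturbation of the Bayes classifier.

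Your implicit-function construction of the candidate $A_\e$ and the local minimality argument at $\e=0$ are correct given the transversality hypothesis; this is a reasonable way to start. But the global step, which you rightly flag as the main obstacle, is not just unfinished, it contains a logical error. Your final sentence proposes to ``apply \cref{lemma:union_intersection_adv_bayes} to $A_\e$ and $A^*_\e$ to identify $A_\e$ as an adversarial Bayes classifier,'' but that lemma only applies when \emph{both} inputs are already known to be minimizers; you are trying to prove that $A_\e$ is one, so the invocation is circular. What you actually need after the stability step is a uniqueness argument: show that any regular adversarial Bayes classifier close to $B_0$ must have endpoints solving the same first-order equations, and then invoke the IFT to conclude those solutions coincide with $a_i(\e), b_i(\e)$. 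Independently, the convergence claim $\PP(A^*_\e \triangle B_0)\to 0$ as $\e\to 0^+$ requires an argument (e.g.\ bounding $\PP(A^*_\e\triangle B_0)$ by $\PP(|p_1-p_0|<\delta)+\delta^{-1}(R(A^*_\e)-R(B_0))$ and noting $\PP(p_0=p_1)=0$ under your hypotheses), and passing from $L^1$-closeness to ``same combinatorial structure'' is itself nontrivial without a regularity or component-count result like \cref{thm:adv_bayes_and_degenerate} already in hand, which is precisely the machinery the cited paper would have to develop. As written the proposal is a sketch with an acknowledged gap and a misapplied lemma at the end, not a proof.
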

    For a proof, see the discussion of Equation (4.1) and Theorem~5.4 in \cite{trillosMurray2022}.

    Differentiating \cref{eq:risk_on_intervals} twice results in the second order necessary conditions
        \vspace{-15pt} 
    \sidebysidesubequations{eq:second_order_necessary}{p_1'(a_i+\e)-p_0'(a_i-\e)\geq 0}{p_0'(b_i+\e)-p_1'(b_i-\e)\geq 0}
    \vspace{5pt}
    When $\e=0$, these relations reduce to the conditions \cref{eq:Bayes_second_order_necessary} describing the boundary of the Bayes classifier.

    A central goal of this paper is producing necessary conditions analogous to \cref{eq:first_order_necessary,eq:second_order_necessary} that hold for all $\e$.

        \subsection{Minimax theorems for the adversarial classification risk}
            We analyze the properties of  adversarial Bayes classifiers
            by expressing the minimal $R^\e$ risk in a `pointwise' manner analogous to \cref{eq:classification_risk_eta}.  The Wasserstein-$\infty$ metric from optimal transport and the minimax theorems in \cite{FrankNilesWeed23consistency,PydiJog2021} are essential tools for expressing $R^\e$ in this manner.

            Informally, the measure $\QQ'$ is in the Wasserstein-$\infty$ ball of radius $\e$ around the positive finite measure $\QQ$ if one can produce the measure $\QQ'$ by moving points in $\Rset^d$ by at most $\e$ under the measure $\QQ$. Formally, the $W_\infty$ metric is defined in terms the set of couplings $\Pi(\QQ,\QQ')$ between two finite positive measures $\QQ,\QQ'$:
            \[\Pi(\QQ,\QQ')=\{\gamma \text{ positive measure on }\Rset^d\times \Rset^d: \gamma(A\times \Rset^d)=\QQ(A), \gamma(\Rset^d\times A)=\QQ'(A)\}.\]
            The Wasserstein-$\infty$ distance between two finite positive measures $\QQ'$ and $\QQ$ with $\QQ(\Rset^d)=\QQ'(\Rset^d)$ is then defined as 
            \begin{equation*}
                W_\infty(\QQ,\QQ')=\inf_{\gamma \in \Pi(\QQ,\QQ')} \esssup_{(\bx,\by)\sim \gamma} \|\bx-\by\|.    
            \end{equation*}
            
            The $W_\infty$ metric is in fact a metric on the space of measures, as it is a limit of the Wasserstein-$p$ metrics as $p\to \infty$, see \cite{ChampionDePascaleJuutinen07,Jylha15} for details. We denote the $\e$-ball in the $W_\infty$ metric around a measure $\QQ$ by
            \[\Wball \e(\QQ)=\{\QQ':\QQ' \text{ Borel}, W_\infty(\QQ,\QQ')\leq \e\}\]

 Prior work \cite{PydiJog2021,TrillosJacobsKim22} applies properties of the $W_\infty$ metric to find a dual problem to the minimization of $R^\e$: let $\PP_0',\PP_1'$ be finite Borel measures and define
        \begin{equation}\label{eq:cdl_def}
            \cdl(\PP_0',\PP_1')=\int C^*\left(\frac{d \PP_1'}{d(\PP_0'+\PP_1')}\right)d(\PP_0'+\PP_1')
        \end{equation}
        where $C^*$ is defined by \eqref{eq:C_def}. Prior results \cite{FrankNilesWeed23consistency,PydiJog2021} relate this risk to $R^\e$.
    \begin{theorem}\label{thm:minimax_classification} \label{thm:existence_adv_classification}
        Let $\cdl$ be defined by \eqref{eq:cdl_def}. Then
	        \begin{equation}\label{eq:minimax_classification}
	           \inf_{\substack{A\text{ Borel}}}\cprm(A)=\sup_{\substack{\PP_0'\in\Wball \e (\PP_0)\\ \PP_1'\in \Wball \e (\PP_1)}}\cdl(\PP_0',\PP_1')
	        \end{equation}
	        and furthermore equality is attained for some Borel measurable $A$ and 
         $\PP_1^*, \PP_0^*$ with $W_\infty(\PP_0^*,\PP_0)\leq \e$ and $W_\infty(\PP_1^*,\PP_1)\leq \e$.   
    \end{theorem}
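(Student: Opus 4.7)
Fix a Borel set $A$ and admissible $(\PP_0', \PP_1')$. Letting $\gamma_i \in \Pi(\PP_i, \PP_i')$ be couplings concentrated on $\{\|\bx-\by\|\leq \e\}$, the pointwise inequality $\one_{A^\e}(\bx) \geq \one_A(\by)$ on $\|\bx-\by\|\leq \e$ integrates to $\int \one_{A^\e}\,d\PP_0 \geq \int \one_A\,d\PP_0'$, and symmetrically $\int \one_{(A^C)^\e}\,d\PP_1 \geq \int \one_{A^C}\,d\PP_1'$. Adding these and applying the pointwise bound $\eta'\one_{A^C} + (1-\eta')\one_A \geq C^*(\eta')$ with $\eta'=d\PP_1'/d(\PP_0'+\PP_1')$, integrated against $\PP_0'+\PP_1'$, gives $\cprm(A) \geq \cdl(\PP_0', \PP_1')$.

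\textbf{Minimax reformulation and existence of a dual maximizer.} A measurable-selection argument sharpens the first step to $\int S_\e(f)\,d\PP_0 = \sup_{\PP_0' \in \Wball\e(\PP_0)} \int f\,d\PP_0'$ for any measurable $f:\Rset^d\to[0,1]$, obtained by transporting $\PP_0$-mass at $\bx$ to a point in $\ov{B_\e(\bx)}$ maximizing $f$. Hence, for relaxed classifiers $f$, the quantity $\int S_\e(f)\,d\PP_0 + \int S_\e(1-f)\,d\PP_1$ equals $\sup_{\PP_0',\PP_1'}\bigl(\int f\,d\PP_0' + \int (1-f)\,d\PP_1'\bigr)$, which is linear in $f$ and in $(\PP_0',\PP_1')$. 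Each ball $\Wball\e(\PP_i)$ is convex, tight ($\PP_i'(K^\e)\geq \PP_i(K)$ for compact $K$) and weakly closed (a Portmanteau argument passes the support constraint on the coupling to the limit), hence weakly compact. A minimax theorem of Sion--Fan type then swaps the inf over $f$ with the sup over $(\PP_0',\PP_1')$, while a layer-cake argument using $\{S_\e(f) > t\} = (\{f > t\})^\e$ shows the $f$-infimum equals the $A$-infimum, establishing strong duality. Existence of a dual maximizer follows from weak compactness together with the identity $\cdl(\PP_0',\PP_1') = \tfrac12[(\PP_0+\PP_1)(\Rset^d) - \|\PP_0'-\PP_1'\|_{TV}]$, which makes $\cdl$ weakly upper semi-continuous via lower semi-continuity of the TV norm.

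\textbf{Primal recovery and main obstacle.} Given a dual optimizer $(\PP_0^*, \PP_1^*)$, set $\eta^* = d\PP_1^*/d(\PP_0^*+\PP_1^*)$ and let $A^* = \{\eta^* > 1/2\}$, with the level set $\{\eta^* = 1/2\}$ assigned so as to saturate the weak-duality inequalities. Equality $\cprm(A^*) = \cdl(\PP_0^*, \PP_1^*)$ then reduces to showing the optimal $W_\infty$-couplings transport $\PP_0$-mass from $(A^*)^\e$ into $A^*$ and $\PP_1$-mass from $((A^*)^C)^\e$ into $(A^*)^C$, for otherwise a local perturbation of $\PP_i^*$ within $\Wball\e(\PP_i)$ would strictly increase $\cdl$, contradicting optimality. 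The principal obstacle is making this optimality-to-support argument rigorous: one needs concrete admissible first-order perturbations of $\PP_i^*$ together with a careful Borel assignment on $\{\eta^* = 1/2\}$ so that $A^*$ is measurable and the couplings align. The minimax swap is also delicate---one must choose a topology on $[0,1]$-valued measurable functions under which both the feasible set is compact and the bilinear objective is continuous, which may require working in $L^\infty$ with the weak-$*$ topology or using a suitable relaxation to densities.
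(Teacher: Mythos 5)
The paper does not give its own proof of this theorem; it is quoted directly as Theorem~1 of \cite{FrankNilesWeed23consistency}, so there is no internal argument to compare against. Your weak-duality paragraph is correct and essentially reproduces \cref{lemma:S_e_and_W_inf}, and the total-variation identity $\cdl(\PP_0',\PP_1')=\tfrac12\bigl[(\PP_0+\PP_1)(\Rset^d)-\|\PP_0'-\PP_1'\|_{TV}\bigr]$ is right and gives a clean route to upper semi-continuity of $\cdl$. But the proposal as written does not prove the theorem: you flag two obstacles, and neither is resolved, and both are substantive rather than matters of polish.

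First, the minimax swap. The feasible set of $[0,1]$-valued $f$ needs a topology under which it is compact and the pairing $f\mapsto\int f\,d\PP_0'$ is continuous as $\PP_0'$ ranges over the whole $W_\infty$-ball. Weak-$*$ $L^\infty(\nu)$ for a fixed reference measure $\nu$ does not work: measures in $\Wball\e(\PP_0)$ can push mass anywhere within $\e$ of $\supp\PP_0$, so they need not be absolutely continuous with respect to any single $\nu$, and when $\PP_0'\not\ll\nu$ the pairing is not even well defined on a $\nu$-a.e.\ class of $f$'s. Separately, for Borel $f$ the function $S_\e(f)$ is in general only universally measurable (\cref{thm:univ_meas}; see also \cref{sec:fundamental_regularity}), so the primal functional must be defined carefully before the swap can even be set up. Second, primal recovery. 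Choosing $A^*=\{\eta^*>1/2\}$ is the right candidate, but deducing $\cprm(A^*)=\cdl(\PP_0^*,\PP_1^*)$ from a ``perturb $\PP_i^*$ and contradict optimality'' step is circular here: the complementary-slackness characterization (\cref{thm:complimentary_slackness_classification}) is \emph{derived from} the minimax identity and attainment, not a route to them, and a first-order variation of $\PP_i^*$ shows only stationarity, not that a Borel set attains the primal infimum. The cited proof instead constructs an explicit $\hat\eta$ together with couplings $\gamma_0^*,\gamma_1^*$ (cf.\ \cref{prop:hat_eta_detailed}) so that every inequality in the weak-duality chain becomes an equality; that construction is where the real work lies, and the sketch leaves it open.
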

        See Theorem~1 of \cite{FrankNilesWeed23consistency} for the above result.
       This minimax theorem then implies complementary slackness conditions that characterize optimal $A$ and $\PP_0^*,\PP_1^*$, see \cref{app:duality_proofs} for a proof.

        \begin{theorem}\label{thm:complementary_slackness_classification}
        The set $A$ is a minimizer of $R^\e$ and $(\PP_0^*,\PP_1^*)$ is a maximizer of $\bar R$ over $\Wball \e(\PP_0)\times \Wball \e(\PP_1)$ iff $W_\infty(\PP_0^*,\PP_0)\leq \e$, $W_\infty(\PP_1^*,\PP_1)\leq \e$, and 
    \begin{enumerate}[label=\arabic*)]\item\label{it:sup_assumed_precondition_classification}
        \begin{minipage}{\linewidth}
        \vspace{-10pt}
            \begin{equation}\label{eq:sup_comp_slack_classification}
            \int S_\e(\one_{A^C})d\PP_1=\int \one_{A^C} d\PP_1^*\quad \text{and} \quad \int S_\e(\one_{A}) d\PP_0=\int \one_{ A} d\PP_0^* 
        \end{equation}    
        \end{minipage}
        
        \item \label{it:comp_slack_equation_classification} If we define $\PP^*=\PP_0^*+\PP_1^*$ and $\eta^*=d\PP_1^*/d\PP^*$, then 
    \begin{equation}\label{eq:complementary_slackness_necessary_classification}
        \eta^*(\by)\one_{A^C}(\by)+(1-\eta^*(\by))\one_{A}(\by)=C^*(\eta^*(\by))\quad \PP^*\text{-a.e.}    
    \end{equation}
    \end{enumerate}

    \end{theorem}

\section{Main results}\label{sec:main_results}

\subsection*{Definitions}
As discussed in \cref{sec:background_bayes_classifiers}, a central goal of this paper is describing the regularity of adversarial Bayes classifiers and finding necessary conditions that hold for every $\e$ in one dimension. 

As an example of non-regularity, consider a data distribution defined by $p_0(x)=1/5$,  for $|x|\leq 1/4$ and $p_0(x)=0$ elsewhere; and $p_1(x)=3/5$ for $1\geq |x|> 1/4$ and $p_1(x)=0$ elsewhere (see \cref{fig:degenerate} for a depiction of $p_0$ and $p_1$). If $\e=1/8$, an adversarial Bayes classifier is $A=\Rset$. However, \emph{any} subset $S$ of $[-1/4+\e,1/4-\e]$ satisfies $R^\e(S^C)=R^\e(\Rset)$, and thus $S^C$ is an adversarial Bayes classifier as well. (These claims are rigorously justified in \cref{ex:degenerate}.) Consequently there are many adversarial Bayes classifiers lacking regularity, but they are all morally equivalent to the regular set $A=\Rset$. The notion of \emph{equivalence up to degeneracy} encapsulates this behavior.

\begin{definition}\label{def:equivalence_up_to_degeneracy}
    Two adversarial Bayes classifiers $A_1$ and $A_2$ are \emph{equivalent up to degeneracy} if for any Borel set $E$ with $A_1\cap A_2\subset E\subset A_1\cup A_2$, the set $E$ is also an adversarial Bayes classifier. The adversarial Bayes classifier is \emph{unique up to degeneracy} if any two adversarial Bayes classifiers are equivalent up to degeneracy.
\end{definition} 

Due to \cref{lemma:union_intersection_adv_bayes}, to verify that an adversarial Bayes classifier is unique up to degeneracy, it suffices to show that if $A_1$ and $A_3$ are any two adversarial Bayes classifiers with $A_1\subset A_3$, then any set satisfying $A_1\subset E\subset A_3$ is an adversarial Bayes classifier as well. In the example distribution discussed above (\cref{ex:degenerate}), the non-regular portion of the adversarial Bayes classifier could only be some subset of $D=[-1/4+\e,1/4-\e]$. The notion of `degenerate sets' formalizes this behavior.
\begin{definition}
    
    A set $D$ is \emph{degenerate} for an adversarial Bayes classifier $A$ if for every Borel set $E$ satisfying $A-D\subset E\subset A\cup D$, the set $E$ is also an adversarial Bayes classifier. 
\end{definition}

Equivalently, a set $D$ is degenerate for $A$ if for all disjoint subsets $D_1,D_2\subset D$, the set $(A\cup D_1)-D_2$ is also an adversarial Bayes classifier.
In terms of this definition: the adversarial Bayes classifiers $A_1$ and $A_2$ are equivalent up to degeneracy iff the set $A_1\triangle A_2$ is degenerate for either $A_1$ or $A_2$.

This paper first studies properties of these new notions, and then uses the resulting insights to characterize adversarial Bayes classifiers in one dimension. To start, we show that when $\PP\ll \mu$, equivalence up to degeneracy is in fact an equivalence relation (\cref{thm:equivalence_up_to_degeneracy}) and furthermore, every adversarial Bayes classifier has a `regular' representative when $d=1$ (\cref{thm:adv_bayes_and_degenerate}). The differentiation argument in \cref{sec:background_adv_bayes_classifiers} then produces necessary conditions characterizing regular adversarial Bayes classifiers in one dimension (\cref{thm:exists_regular}). These conditions provide a tool for understanding how the adversarial Bayes classifier depends on $\e$; see \cref{thm:adv_bayes_increasing_e} and \crefrange{prop:within_e_to_risk_bound}{prop:eta_0_1}. 
Identifying all adversarial Bayes classifiers then requires characterizing degenerate sets, and we provide such a criterion under specific assumptions. Lastly, \cref{thm:TFAE_equiv} provides alternative criteria for equivalence up to degeneracy.

\subsection*{Theorem statements}
First, equivalence up to degeneracy is in fact an equivalence relation for many common distributions.
\begin{theorem}
\label{thm:equivalence_up_to_degeneracy}
    If $\PP\ll \mu$, then equivalence up to degeneracy is an equivalence relation. 
\end{theorem}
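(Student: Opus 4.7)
The proof has three parts: reflexivity, symmetry, and transitivity. Reflexivity is immediate, since the only Borel set $E$ satisfying $A\cap A\subset E\subset A\cup A$ is $E=A$, which is adversarial Bayes by assumption. Symmetry is immediate from the symmetric form of the condition in \cref{def:equivalence_up_to_degeneracy}, so the absolute continuity hypothesis is not yet used.

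Transitivity is the main content. Suppose $A_1\sim A_2$ and $A_2\sim A_3$, and fix any Borel set $E$ with $A_1\cap A_3\subset E\subset A_1\cup A_3$. My plan is to use $A_2$ as a ``bridge,'' expressing $E$ and related sets as unions or intersections of sets that can be certified to be adversarial Bayes classifiers via the hypotheses and \cref{lemma:union_intersection_adv_bayes}. As a first step, I would verify that $E\cup A_2$ and $E\cap A_2$ are both adversarial Bayes classifiers. For the union, since $E\subset A_1\cup A_3$, one can write
\[E\cup A_2 = \bigl((E\cup A_2)\cap(A_1\cup A_2)\bigr)\cup \bigl((E\cup A_2)\cap(A_3\cup A_2)\bigr).\]
Each piece contains $A_2$ and is contained in $A_i\cup A_2$ for $i\in\{1,3\}$, so it lies between $A_i\cap A_2$ and $A_i\cup A_2$, making it an adversarial Bayes classifier by the hypotheses $A_1\sim A_2$ or $A_2\sim A_3$. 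Applying \cref{lemma:union_intersection_adv_bayes} to the union gives that $E\cup A_2$ is adversarial Bayes. The analogous ``intersection of unions'' decomposition,
\[E\cap A_2 = \bigl((E\cap A_2)\cup(A_1\cap A_2)\bigr)\cap \bigl((E\cap A_2)\cup(A_3\cap A_2)\bigr),\]
which is valid because $A_1\cap A_2\cap A_3\subset E\cap A_2$, shows that $E\cap A_2$ is adversarial Bayes.

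The main obstacle is the final step: concluding from $E\cap A_2\subset E\subset E\cup A_2$, with both endpoints adversarial Bayes, that $E$ itself is adversarial Bayes. Sandwiching alone is not enough, since $R^\e$ is not monotone in set inclusion, and this is precisely where $\PP\ll\mu$ enters. The plan here is to invoke the complementary slackness conditions of \cref{thm:complimentary_slackness_classification} with a fixed pair of dual maximizers $(\PP_0^*,\PP_1^*)$: the pointwise identity \cref{eq:complimentary_slackness_necessary_classification} for $E$ follows from those for $E\cap A_2$ and $E\cup A_2$ on the $\PP^*$-nondegenerate set $\{\eta^*\neq 1/2\}$, while the integral identities \cref{eq:sup_comp_slack_classification} for $E$ would be deduced from the corresponding identities for $E\cap A_2$ and $E\cup A_2$ by using $\PP\ll\mu$ to control the $\PP_0$- and $\PP_1$-measure of the $\e$-expansions of the symmetric differences between these three sets. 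Verifying this measure-theoretic reduction, and in particular checking that the absolute continuity hypothesis suffices to force the integral identities to pass to intermediate sandwiched sets, is the crux of the proof.
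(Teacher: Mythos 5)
Your Step~1 decompositions are correct and rather slick — both
\[E\cup A_2 = \bigl((E\cup A_2)\cap(A_1\cup A_2)\bigr)\cup \bigl((E\cup A_2)\cap(A_3\cup A_2)\bigr)\]
and the dual identity for $E\cap A_2$ check out by set algebra (the latter using $A_1\cap A_2\cap A_3\subset E\cap A_2$), each piece is sandwiched between $A_i\cap A_2$ and $A_i\cup A_2$ for $i\in\{1,3\}$, and \cref{lemma:union_intersection_adv_bayes} closes the union and intersection. So $E\cap A_2$ and $E\cup A_2$ are indeed adversarial Bayes classifiers, and absolute continuity is not yet used.

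The unspecified ``crux,'' however, is a genuine gap, and your sketched plan for it does not work as stated. Write $B_1=E\cap A_2$, $B_2=E\cup A_2$. The sandwiches
\[\int\one_{B_1}\,d\PP_0^* = \int S_\e(\one_{B_1})\,d\PP_0 \leq \int S_\e(\one_E)\,d\PP_0 \leq \int S_\e(\one_{B_2})\,d\PP_0 = \int\one_{B_2}\,d\PP_0^*\]
and $\int\one_{B_1}\,d\PP_0^* \leq \int\one_E \,d\PP_0^* \leq \int\one_{B_2}\,d\PP_0^*$ pinch both $\int S_\e(\one_E)\,d\PP_0$ and $\int\one_E\,d\PP_0^*$ between the same two numbers, but you can only force them to coincide if the endpoints coincide, i.e.\ if $\PP_0^*(E\triangle A_2)=0$. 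Since $E\triangle A_2\subset(A_1\triangle A_2)\cup(A_2\triangle A_3)$, this would follow from $\PP^*(A_1\triangle A_2)=\PP^*(A_2\triangle A_3)=0$; but passing from ``$A_1\sim A_2$'' to ``$\PP^*(A_1\triangle A_2)=0$'' is precisely the hard direction of \cref{prop:abs_cont_equivalencies}, proved via the rational/irrational splitting of $\interior(A_1\triangle A_2)$ (\cref{lemma:open^e}) together with the boundary argument (\cref{lemma:boundary_difference}), and that is where $\PP\ll\mu$ actually earns its keep. Your proposed substitute — ``control the $\PP_0$- and $\PP_1$-measure of the $\e$-expansions of the symmetric differences'' — is the wrong currency: two adversarial Bayes classifiers can have genuinely different values of $\PP_0(A^\e)$ (this is exactly condition~\ref{it:S_e_unique} in \cref{thm:TFAE_equiv}, which is not automatic), and what one must bound is the $\PP^*$-mass of the symmetric difference, not $\PP_0$ of an expansion. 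You would end up reproving \cref{prop:abs_cont_equivalencies}.

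Once \cref{prop:abs_cont_equivalencies} is granted, the paper's proof is one line: $A_1\sim A_2$ iff $\one_{A_1}=\one_{A_2}$ $\PP^*$-a.e., which is manifestly an equivalence relation. At that point your Step~1 decompositions become unnecessary — nice set algebra, but not a shortcut around the technical core.
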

\Cref{thm:equivalence_up_to_degeneracy} also implies that two equivalent adversarial Bayes classifiers must have the same degenerate sets.
\Cref{ex:non_equiv} demonstrates that the assumption $\PP \ll\mu$ is necessary. 
Additionally, uniqueness up to degeneracy generalizes certain notions of uniqueness for the Bayes classifier.
    \begin{theorem}\label{thm:TFAE_equiv}
        Assume that $\PP\ll \mu$ and $\e>0$. Then the following are equivalent:
        \begin{enumerate}[label=\Alph*)]
            \item \label{it:unique_under_deg} The adversarial Bayes classifier is unique up to degeneracy
            \item \label{it:S_e_unique} 
            Amongst all adversarial Bayes classifiers $A$, either the value of $\PP_0(A^\e)$ is unique or the value of  $\PP_1((A^C)^\e)$ is unique 
            \item \label{it:eta_*_meas zero} There are maximizers $\PP_0^*,\PP_1^*$ of $\cdl$ for which $\PP^*(\eta^*=1/2)=0$, where $\PP^*=\PP_0^*+\PP_1^*$ and $\eta^*=d\PP_1^*/d\PP^*$ 
        \end{enumerate}
    \end{theorem}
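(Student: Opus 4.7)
The plan is to establish the cycle (C) $\Rightarrow$ (B) $\Rightarrow$ (A) $\Rightarrow$ (C). The main tools will be the complementary slackness conditions of \cref{thm:complimentary_slackness_classification}, the closure of adversarial Bayes classifiers under finite union and intersection (\cref{lemma:union_intersection_adv_bayes}), and the monotonicity of the $\e$-expansion $E \mapsto E^\e$.

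For (C) $\Rightarrow$ (B), fix a maximizer $(\PP_0^*, \PP_1^*)$ with $\PP^*(\eta^*=1/2)=0$. For any adversarial Bayes classifier $A$, condition~\ref{it:sup_assumed_precondition_classification} of \cref{thm:complimentary_slackness_classification} yields $\PP_0^*(A) = \PP_0(A^\e)$, while condition~\ref{it:comp_slack_equation_classification} forces $A$ to agree with $\{\eta^*>1/2\}$ off of $\{\eta^*=1/2\}$ modulo $\PP^*$. Writing $\PP_0^*(A) = \PP_0^*(\{\eta^*>1/2\}) + \PP_0^*(A\cap \{\eta^*=1/2\})$ and using $\PP_0^* \leq \PP^*$ kills the second term, so $\PP_0(A^\e) = \PP_0^*(\{\eta^*>1/2\})$ is independent of $A$, giving \ref{it:S_e_unique}.

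For (B) $\Rightarrow$ (A), assume without loss of generality that $\PP_0(A^\e)$ is constant (the case for $\PP_1((A^C)^\e)$ is symmetric). Given adversarial Bayes classifiers $A_1, A_2$, \cref{lemma:union_intersection_adv_bayes} makes $A_1 \cap A_2$ and $A_1 \cup A_2$ adversarial Bayes. For any Borel $E$ with $A_1 \cap A_2 \subset E \subset A_1 \cup A_2$, monotonicity of the $\e$-expansion sandwiches $E^\e$ between $(A_1\cap A_2)^\e$ and $(A_1 \cup A_2)^\e$, and the hypothesis forces $\PP_0(E^\e) = \PP_0((A_1\cap A_2)^\e)$; combined with the inclusion $(E^C)^\e \subset ((A_1\cap A_2)^C)^\e$ this yields $R^\e(E) \leq R^\e(A_1\cap A_2) = \inf R^\e$, so $E$ is adversarial Bayes.

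The main obstacle is (A) $\Rightarrow$ (C). Set $D := \{\eta^*=1/2\}$ for an arbitrary maximizer. Under (A), \cref{lemma:union_intersection_adv_bayes} together with the definition of equivalence up to degeneracy produces a large family of adversarial Bayes classifiers of the form $\{\eta^*>1/2\}\cup S$ (modulo $\PP^*$) as $S$ varies over Borel subsets of $D$ lying between two extremal choices; applying complementary slackness condition~\ref{it:sup_assumed_precondition_classification} across this whole family imposes rigid consistency identities on $\PP_0^*|_D$ and $\PP_1^*|_D$ relating them to $\PP_0$- and $\PP_1$-mass of various $\e$-expansions. The plan is to use these identities together with $\PP \ll \mu$ to modify the transport plan defining $(\PP_0^*, \PP_1^*)$ so that the dual mass on $D$ is redistributed off $D$ while preserving both the $W_\infty$ constraints and the value $\bar R(\PP_0^*,\PP_1^*)$. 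I expect this transport-modification step to be the hardest; absolute continuity of $\PP$ presumably enters here to ensure enough flexibility in the $W_\infty$ balls, and the resulting maximizer will witness \ref{it:eta_*_meas zero}.
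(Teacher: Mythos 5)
Your (C)\,$\Rightarrow$\,(B) and (B)\,$\Rightarrow$\,(A) steps are both sound. In fact (B)\,$\Rightarrow$\,(A) is a pleasantly direct argument that sidesteps the machinery of \cref{prop:abs_cont_equivalencies}: from $(A_1\cap A_2)^\e \subset E^\e \subset (A_1\cup A_2)^\e$ and \cref{lemma:union_intersection_adv_bayes}, hypothesis~(B) pins down $\PP_0(E^\e)$, while $E\supset A_1\cap A_2$ controls $\PP_1((E^C)^\e)$, giving $R^\e(E)\leq R^\e_*$ directly. The paper instead proves (A)\,$\Rightarrow$\,(B) through the equivalences in \cref{prop:abs_cont_equivalencies}, so you are running that edge of the cycle in the opposite direction with a shorter argument; this is a legitimate alternative.

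The gap is in (A)\,$\Rightarrow$\,(C). Complementary slackness~\cref{eq:complimentary_slackness_necessary_classification} only tells you that every adversarial Bayes classifier $A$ is \emph{sandwiched}: $\one_{\eta^*>1/2}\leq\one_A\leq\one_{\eta^*\geq 1/2}$ modulo $\PP^*$ (this is \cref{eq:adv_eta^*_compare}). It does \emph{not} tell you that the extremal choices are realized, i.e., that there exist adversarial Bayes classifiers $\hat A_1,\hat A_2$ with $\one_{\hat A_1}=\one_{\eta^*>1/2}$ and $\one_{\hat A_2}=\one_{\eta^*\geq 1/2}$ $\PP^*$-a.e. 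Without such a pair, hypothesis~(A) (uniqueness up to degeneracy) gives you no leverage on $D:=\{\eta^*=1/2\}$ whatsoever, because (A) only compares adversarial Bayes classifiers that actually exist; you cannot even get the ``large family of adversarial Bayes classifiers of the form $\{\eta^*>1/2\}\cup S$'' that your plan assumes. The paper's \cref{lemma:hat_eta_adv_Bayes}, which rests on the nontrivial structural result \cref{prop:hat_eta_detailed} (Lemma~24 of \cite{FrankNilesWeed23minimax}) producing a function $\hat\eta$ and a specific maximizer $(\PP_0^*,\PP_1^*)$ for which $\{\hat\eta>1/2\}$ and $\{\hat\eta\geq 1/2\}$ verify both complementary slackness conditions, is precisely what supplies these two witnesses. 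Once you have them, (A) forces $\PP^*(\hat A_1\triangle\hat A_2)=\PP^*(\eta^*=1/2)=0$ for that maximizer, and you are done. Your alternative of modifying the transport plan to ``redistribute mass off $D$'' is not what the paper does and would be much harder to execute: one must preserve \emph{both} marginal $W_\infty$ constraints and the dual value simultaneously, and there is no obvious local move that kills $\PP^*$-mass at $\eta^*=1/2$ without breaking optimality. Absolute continuity of $\PP$ does not by itself provide the flexibility you are hoping for; what it actually buys in the paper is that equivalence up to degeneracy is an equivalence relation (\cref{thm:equivalence_up_to_degeneracy}) and the characterization \cref{prop:abs_cont_equivalencies}, not freedom to deform the dual coupling.
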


When $\e=0$,  \cref{it:S_e_unique} and \cref{it:eta_*_meas zero} are equivalent notions of uniqueness of the Bayes classifier (see \cref{prop:Bayes_equivalencies}). However, if $B_1$ and $B_2$ are Bayes classifiers, any set $E$ satisfying $B_1\cap B_2\subset E\subset B_1\cup B_2$ is always a Bayes classifier. Thus \cref{it:unique_under_deg} is not necessarily equivalent to \cref{it:S_e_unique,it:eta_*_meas zero} when $\e=0$.
When $\PP \not \ll \mu$, \cref{thm:equivalence_up_to_degeneracy} is false although \cref{it:S_e_unique} and \cref{it:eta_*_meas zero} are still equivalent (see \cref{ex:non_equiv} and \cref{lemma:non_abs_cont}). This equivalence  suggests a different notion of uniqueness for such distributions, see \cref{sec:den_equivalence_relation} for more details.

A central result of this paper is that degenerate sets are the only form of non-regularity possible in the adversarial Bayes classifier in one dimension. 
\begin{theorem} \label{thm:adv_bayes_and_degenerate}
    Assume that $d=1$, $\e>0$, and $\PP_0,\PP_1\ll \mu$. Then any adversarial Bayes classifier is equivalent up to degeneracy to an adversarial Bayes classifier
   $ A'= \bigsqcup_{i=m}^M (a_i,b_i)$ with $b_i-a_i>2\e$, $a_{i+1}-b_i>2\e$, and $-\infty\leq m\leq M-1\leq +\infty$.

\end{theorem}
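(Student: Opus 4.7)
The plan is to regularize an arbitrary adversarial Bayes classifier $A$ via a morphological simplification of its $\e$-expansion, and then use the complementary slackness conditions from \cref{thm:complimentary_slackness_classification} to verify that this regularization is equivalent to $A$ up to degeneracy. The one-dimensional topology does most of the structural work: since $A^\e = A + \overline{B_\e(\zero)}$ is closed in $\Rset$, it decomposes uniquely as a countable disjoint union of closed intervals $A^\e = \bigsqcup_i [c_i, d_i]$, and every such component satisfies $d_i - c_i \geq 2\e$ because every point of $A^\e$ lies within $\e$ of a point of $A$.

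The candidate regular representative is then $A' = \bigsqcup_i (c_i + \e, d_i - \e)$, where the union is restricted to those components of $A^\e$ with $d_i - c_i > 4\e$. The length condition $b_i - a_i = d_i - c_i - 2\e > 2\e$ is immediate from this restriction, and the spacing condition $a_{i+1} - b_i = (c_{i+1} - d_i) + 2\e > 2\e$ follows from disjointness of the components of $A^\e$, which forces $c_{i+1} > d_i$. The symmetric difference $A \triangle A'$ is contained in the union of the discarded thin components (those with $d_i - c_i \leq 4\e$), together with the endpoints of the retained intervals.

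To verify that $A'$ is an adversarial Bayes classifier equivalent to $A$ up to degeneracy, I would use \cref{thm:complimentary_slackness_classification} to fix dual optimizers $(\PP_0^*, \PP_1^*)$ paired with $A$, and show that the same pair witnesses optimality of every Borel set $E$ with $A \cap A' \subset E \subset A \cup A'$. Two facts are required: (i) on the symmetric difference $A \triangle A'$ the dual ratio $\eta^* = d\PP_1^*/d\PP^*$ equals $1/2$, so that the pointwise optimality condition \eqref{eq:complimentary_slackness_necessary_classification} is insensitive to the value of $\one_E$ there; and (ii) the mass-matching equalities \eqref{eq:sup_comp_slack_classification} are preserved as $E$ varies, which I would obtain by showing that each modified region of length at most $4\e$ is fully contained in the $\e$-neighborhood of points of $A \cap A'$, so that $S_\e(\one_E)$ and $S_\e(\one_{E^C})$ agree with $S_\e(\one_A)$ and $S_\e(\one_{A^C})$ wherever $\PP_0$ or $\PP_1$ place mass.

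The main obstacle is fact (i): establishing that the dual density $\eta^*$ is identically $1/2$ on the thin components that are dropped. Since $\PP_0^*$ and $\PP_1^*$ need not be absolutely continuous and may carry singular mass concentrated at the boundary points $\{c_i, d_i\}$ of the components of $A^\e$, a local analysis of the optimal $W_\infty$ transport from $\PP_0, \PP_1$ is needed. The assumptions $d = 1$ and $\PP_0, \PP_1 \ll \mu$ are essential here, since they reduce the transport to a monotone rearrangement and allow a component-by-component verification that the masses contributed by the two labels within each thin region must balance, pinning $\eta^* = 1/2$ there.
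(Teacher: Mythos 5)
Your construction of $A'$ from the connected components of $A^\e$ is a genuinely different and arguably cleaner decomposition than the one the paper uses: the paper's proof goes through \cref{lemma:Rset_split}, building $\td A_1 = ((\interior A)^{-\e})^\e$ and $\td A_2$ with $\td A_2^C = (((\ov A)^C)^{-\e})^\e$ and then splicing the degenerate gap $\td A_2 - \td A_1$ into adjacent intervals, whereas you work directly with $A^\e = \bigsqcup [c_i,d_i]$. Your observations that each component of $A^\e$ has length at least $2\e$ and that shrinking the long ones by $\e$ on each side while dropping the short ones yields $b_i - a_i > 2\e$ and $a_{i+1}-b_i = (c_{i+1}-d_i) + 2\e > 2\e$ are correct, and the resulting $A'$ is exactly $\interior\bigl((A^\e)^{-\e}\bigr)$ with its components of length at most $2\e$ deleted. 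The two constructions can genuinely disagree (e.g.\ for $A=(0,\e)\cup(3\e,4\e)$ the paper's recipe produces $\emptyset$ while yours produces $(0,4\e)$), and both are valid regular representatives.

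Where the argument breaks down is the verification that $A'$ is an adversarial Bayes classifier equivalent to $A$. Your fact~(ii) rests on the geometric claim that ``each modified region of length at most $4\e$ is fully contained in the $\e$-neighborhood of points of $A\cap A'$,'' and this is false: take $A$ an adversarial Bayes classifier whose only component near the origin is $(0,\e)$, located outside $\supp\PP^\e$. The corresponding component of $A^\e$ is $[-\e,2\e]$, of length $3\e\le 4\e$, so it is dropped; then $A\cap A'$ contains nothing near the origin, and the modified region $(0,\e)$ is certainly not in the $\e$-neighborhood of $A\cap A'$. The conclusion of (ii) still happens to hold here, but only because $\PP_0([-\e,2\e])=0$ is forced by the optimality of $A$---a measure-theoretic fact, not a morphological one. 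As for fact~(i), you flag it yourself as the ``main obstacle,'' and the path you sketch (a local analysis of the optimal $W_\infty$ transport pinning $\eta^*=1/2$) is both unfinished and essentially circular: by \cref{prop:abs_cont_equivalencies}, the statement $\PP^*(A\triangle A')=0$ is \emph{equivalent} to equivalence up to degeneracy once one knows $A'$ is an adversarial Bayes classifier, so one cannot hope to establish it from scratch more easily than the target claim itself.

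The short route from your construction to the conclusion is to avoid complementary slackness entirely and use the paper's existing degenerate-set machinery. By \cref{cor:A_e_diff_degenerate}, $(A^\e)^{-\e}=\bigsqcup[c_i+\e,d_i-\e]$ is an adversarial Bayes classifier equivalent to $A$ up to degeneracy; by \cref{lemma:closure_interior_adversarial_Bayes}, so is its interior $B=\bigsqcup(c_i+\e,d_i-\e)$. Each open component of $B$ of length at most $2\e$ satisfies $C^{-\e}=\emptyset$, so by \cref{prop:degenerate_connected_component} their union lies in the degenerate set $(B^\e)^{-\e}-(B^{-\e})^\e$, and deleting it gives precisely your $A'$, which is therefore an adversarial Bayes classifier equivalent to $B$ and hence to $A$. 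Combined with your (correct) length and spacing computations, this completes the proof along your chosen decomposition.
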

Again, the alternative $m=M-1$ accounts for $A'=\emptyset$.
This result motivates the definition of \emph{regularity} in one dimension.
\begin{definition}\label{def:regularity}
    We say $E\subset \Rset$ is a \emph{regular set of radius $\e$} if one can write both $E$ and $E^C$ as a disjoint union of intervals of length strictly greater than $2\e$. 
\end{definition}
We will drop `of radius $\e$' when clear from the context. 
\Cref{thm:adv_bayes_and_degenerate} states that any adversarial Bayes classifier is equivalent to a regular set. However, as demonstrated by the example below, this claim does not extend to Bayes classifiers. 

\begin{example} \label{ex:counterexample_Bayes}
Define a distribution by
\begin{equation}\label{eq:counterexample_Bayes_def}
    p(x)=\begin{cases}
    0 &\text{if }x<-2\\
    \frac 13 (x+ 2) &\text{if }-2\leq x<-1\\
    \frac 13 &\text{if }-1\leq x\leq 1\\
    -\frac 13 (x-2)&\text{if } 1< x\leq 2\\
    0&\text{if }x>2
    \end{cases} \quad\quad\quad \eta(x)=\frac 12 +\frac{x}4 \cos\left(\frac \pi {x}  \right)
\end{equation}
Examining \cref{eq:Bayes_necessary}, one can conclude that the Bayes classifier $B=\{\eta(x)>1/2\}$ can be expressed as 
\begin{equation}\label{eq:Bayes_soln}
    B= \bigcup_{n=-1}^{-\infty} \left(\frac 2 {4n+3}, \frac 2 {4n+1}\right) \cup \bigcup_{n=0}^\infty \left(\frac 2 {4n+3}, \frac 2 {4n+1}\right)
\end{equation}
The point $0$ is an isolated point of $B^C$, and consequently $B$ is not a regular set. Furthermore, $B$ is not equivalent to any regular set, with equivalence defined as in \cref{def:equivalence_Bayes}. See \cref{app:counterexample_Bayes} for detailed proofs of these claims.
\end{example}
In fact, this example reveals that one cannot assume that $d_i<c_{i+1}$ in \cref{eq:Bayes_as_union}, even when $p_0$ and $p_1$ are continuous.
When $p_0,p_1$ are continuous, the necessary conditions \cref{eq:first_order_necessary} always hold for a regular adversarial Bayes classifier.
\begin{theorem}\label{thm:exists_regular}
    Let $d=1$ and assume that $\PP\ll \mu$. Let $ A=\bigcup_{i=m}^M(a_i,b_i)$ be a regular adversarial Bayes classifier. 

    If $p_0$ is continuous at $a_i-\e$ (resp. $b_i+\e$) and $p_1$ is continuous at $a_i+\e$ (resp. $b_i-\e$), then $a_i$ (resp. $b_i$) must satisfy the first order necessary conditions \cref{eq:first_order_necessary_a} (resp. \cref{eq:first_order_necessary_b}). Similarly, if $p_0$ is differentiable at $a_i-\e$ (resp. $b_i+\e$) and $p_1$ is differentiable at $a_i+\e$ (resp. $b_i-\e$), then $a_i$ (resp. $b_i$) must satisfy the second order necessary conditions \cref{eq:second_order_necessary_a} (resp. \cref{eq:second_order_necessary_b}).
\end{theorem}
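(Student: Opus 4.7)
The plan is a classical first/second derivative test from the calculus of variations. Fix an endpoint $a_i$; the argument for $b_i$ will be entirely symmetric (and the cases $a_m = -\infty$ or $b_M = +\infty$ require no perturbation and carry no necessary condition). For small $t$, let $A_t$ denote the set obtained from $A = \bigsqcup_{j=m}^M (a_j,b_j)$ by replacing $a_i$ with $a_i + t$ and leaving every other endpoint unchanged. Because $A$ is a regular adversarial Bayes classifier, the inequalities $b_i - a_i > 2\e$ and $a_i - b_{i-1} > 2\e$ are strict, so there exists $\delta > 0$ such that $A_t$ remains regular for all $|t| < \delta$. The risk formula \cref{eq:risk_on_intervals} therefore applies to $A_t$ throughout this neighborhood, and collecting only the $t$-dependent terms gives
\[
g(t) \;:=\; \cprm(A_t) \;=\; C \;+\; \int_{b_{i-1}-\e}^{a_i+t+\e} p_1(x)\,dx \;+\; \int_{a_i+t-\e}^{b_i+\e} p_0(x)\,dx,
\]
where $C$ is a constant independent of $t$. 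Since each $A_t$ is admissible and $A = A_0$ is an adversarial Bayes classifier, $g$ attains a global minimum at $t=0$.

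If $p_1$ is continuous at $a_i+\e$ and $p_0$ is continuous at $a_i-\e$, the fundamental theorem of calculus gives
\[
g'(0) \;=\; p_1(a_i+\e) - p_0(a_i-\e),
\]
and the first-order condition $g'(0)=0$ is exactly \cref{eq:first_order_necessary_a}. If in addition $p_1$ is differentiable at $a_i+\e$ and $p_0$ is differentiable at $a_i-\e$, the pointwise Taylor expansion $p_1(a_i+\e+s) = p_1(a_i+\e) + p_1'(a_i+\e)\,s + o(s)$ (and analogously for $p_0$) integrates to
\[
g(t) - g(0) \;=\; \bigl[p_1(a_i+\e)-p_0(a_i-\e)\bigr]\,t \;+\; \tfrac{t^2}{2}\bigl[p_1'(a_i+\e)-p_0'(a_i-\e)\bigr] \;+\; o(t^2).
\]
Combining this with $g'(0) = 0$ and the minimality $g(t) \ge g(0)$ forces the coefficient of $t^2/2$ to be nonnegative, which is precisely \cref{eq:second_order_necessary_a}. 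Applying the same argument after shifting $b_i$ to $b_i + t$ (with the roles of $p_0$ and $p_1$ interchanged) produces \cref{eq:first_order_necessary_b} and \cref{eq:second_order_necessary_b}.

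No substantial obstacle is anticipated. The only minor point requiring care is verifying that the pointwise $o(s)$ remainders from the differentiability hypothesis integrate to $o(t^2)$; this follows from the definition of little-$o$ by the bound $\bigl|\int_0^t r(s)\,ds\bigr| \le \tfrac{t^2}{2}\sup_{|s|\le|t|}|r(s)/s|$ whenever $r(s)/s \to 0$. All other steps amount to the fundamental theorem of calculus and the standard first- and second-derivative tests for a local minimum, so the substance of the argument is entirely in the derivation of the closed-form expression for $g(t)$ from the regularity of $A$.
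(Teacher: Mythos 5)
Your proposal is correct and follows the same route as the paper: fix every endpoint except $a_i$, use \cref{eq:risk_on_intervals} (applicable because the strict gap inequalities in the definition of a regular set persist under small perturbations of $a_i$) to write the risk of the perturbed classifier explicitly, and invoke minimality of $A$ at $t=0$. The paper's own proof is a one-line appeal to the first- and second-order necessary conditions for a minimizer of a differentiable function. Where you genuinely improve on that terseness is in the second-order step, and the care is warranted: the hypothesis grants only \emph{pointwise} differentiability of $p_0,p_1$ at the single points $a_i\mp\e$, not differentiability on a neighborhood, so $g$ need not be twice differentiable at $0$ and one cannot simply write $g''(0)\ge 0$. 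Your Taylor expansion with a little-$o$ remainder — integrating the pointwise expansions and bounding the remainder via $\bigl|\int_0^t r(s)\,ds\bigr|\le \tfrac{t^2}{2}\sup_{|s|\le|t|}|r(s)/s|$ so that it contributes $o(t^2)$ — is exactly the right way to extract the conclusion $p_1'(a_i+\e)-p_0'(a_i-\e)\ge 0$ from $g(t)\ge g(0)$ and $g'(0)=0$ under this weaker hypothesis. One presentational note: the theorem statement in the paper inadvertently swaps the words ``first order'' and ``second order'' relative to the equation labels it cites (\cref{eq:first_order_necessary} is the stationarity condition, \cref{eq:second_order_necessary} the second-derivative inequality); you have parsed this correctly and proved the intended claims.
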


\Cref{thm:exists_regular} provides a method for identifying a representative of every equivalence class of adversarial Bayes classifiers under equivalence up to degeneracy. 

\begin{enumerate}[label=\arabic*)]
    \item \label{it:procedure_first}Let $\mathfrak{a}$, $\mathfrak{b}$ be the set of points that satisfy the necessary conditions for $a_i$, $b_i$ respectively
    \item Form all possible open regular sets $\bigcup_{i=m}^M (a_i,b_i)$ with $a_i\in \mathfrak{a}$ and $b_i\in \mathfrak b$. 
    \item Identify which of these sets would be be equivalent up to degeneracy, if they were adversarial Bayes classifiers.
    \item \label{it:procedure_last}Compare the risks of all non-equivalent sets from step 2) to identify which are adversarial Bayes classifiers.
\end{enumerate}

 One only need to consider open sets in step 2) because the boundary of a regular adversarial Bayes classifier is always a degenerate set when $\PP\ll\mu$, as noted in \cref{sec:background_adv_bayes_classifiers} (see \cref{lemma:max_degenerate_set} for a formal proof). 
 
 \Cref{sec:examples} applies the procedure above to several example distributions, see \cref{ex:gaussians_equal_variances} for a crisp demonstration. The analysis in \cref{sec:examples} reveals some interesting patterns. First, boundary points of the adversarial Bayes classifier are frequently within $\e$ of boundary points of the Bayes classifier. \Cref{prop:uniform_within_e} and \cref{prop:eta_0_1} prove that this phenomenon occurs when either $\PP$ is a uniform distribution on an interval or $\eta\in\{0,1\}$, and \cref{prop:within_e_to_risk_bound} shows that this occurrence can reduce the accuracy-robustness tradeoff. Second, uniqueness up to degeneracy often fails only for a small number of values of $\e$ when $\PP_0(\Rset)\neq \PP_1(\Rset)$. Understanding both of these occurrences in more detail is an open problem.
 
 As illustrated by the examples throughout this paper, the procedure \crefrange{it:procedure_first}{it:procedure_last} is fairly brute force. Reducing the number of alternatives considered in this process is an open problem. In contrast, the procedure \ref{it:Bayes_procedure_first}-\ref{it:Bayes_procedure_last} provides a method for constructing a Bayes classifier directly from the sets $\mathfrak c$, $\mathfrak d$. Analyzing the set $S$ in step \ref{it:Bayes_procedure_last} serves solely to enumerate \emph{all} equivalence classes of Bayes classifiers.

\Cref{thm:exists_regular} is a tool for identifying a representative of each equivalence class of adversarial Bayes classifiers under equivalence up to degeneracy. Can one characterize all the members of a specific equivalence class? Answering this question requires understanding properties of degenerate sets.

\begin{theorem}\label{thm:1d_degenerate}
    Assume that $d=1$, $\PP\ll\mu$, $\e>0$ and let $A$ be an adversarial Bayes classifier.
    \begin{itemize}
        \item If some interval $I$ is degenerate for $A$ and $I$ is contained in $\supp \PP$, then $|I|\leq 2\e$.
        \item Conversely, the connected components of $A$ and $A^C$ of length less than or equal to $2\e$ are contained in a degenerate set.
        \item A countable union of degenerate sets is degenerate.
        \item Assume that $\supp \PP$ is an interval and $\PP(\eta\in\{0,1\})=0$. If $D$ is a degenerate set for $A$, then $D$ must be contained in the degenerate set $\ov{(\supp \PP^\e)^C}\cup \partial A$. 
    \end{itemize}
\end{theorem}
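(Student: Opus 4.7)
The four statements split naturally into two perturbation arguments (Claims 1 and 2), a convergence argument (Claim 3), and a characterization that combines both (Claim 4). The geometric identity driving everything is that the $\e$--expansion of a small set is swallowed by the $\e$--expansion of the surrounding complement.

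\textbf{Claim 1.} Suppose for contradiction that $I = (\alpha,\beta) \subset \supp \PP$ is degenerate for $A$ with $\beta-\alpha > 2\e$. Set $B = A \setminus I$, so that $B \cup J$ is adversarial Bayes for every Borel $J \subset I$. I will extract information from two complementary choices:
\begin{itemize}
\item $J=\{x\}$, $x \in I^\circ$: since removing one point from $B^C$ does not affect its $\e$-expansion, $R^\e(B\cup\{x\})-R^\e(B) = \PP_0([x-\e,x+\e]\setminus B^\e) = 0$.
\item $J = I\setminus\{x\}$, $x\in I^\circ$: by the same singleton observation applied to $B\cup I$, we obtain $\PP_1([x-\e,x+\e]\setminus ((B\cup I)^C)^\e) = 0$.
\end{itemize}
The crucial identity $B^\e \cup ((B\cup I)^C)^\e = (B\cup(B\cup I)^C)^\e = (I^C)^\e = \Rset\setminus(\alpha+\e,\beta-\e)$ reveals a common "hole" $[x-\e,x+\e]\cap(\alpha+\e,\beta-\e)$ on which both $\PP_0$ and $\PP_1$ vanish. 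Letting $x$ range over $(\alpha+\e,\beta-\e)$ and applying a Lindel\"of covering yields $\PP((\alpha+\e,\beta-\e)) = 0$; since $\beta-\alpha>2\e$ this nonempty open set sits inside $\supp \PP$, a contradiction.

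\textbf{Claim 2.} Let $K=(a,b)$ be a connected component of $A$ with $|K|\le 2\e$ (the $A^C$ case is symmetric). Every point of $K^\e=(a-\e,b+\e)$ lies within $\e$ of $a$ or $b$, both of which belong to $\overline{A^C}$, so $K^\e \subset (A^C)^\e$. Thus for any Borel $K'\subset K \subset A$, $((A\setminus K')^C)^\e = (A^C)^\e \cup (K')^\e = (A^C)^\e$, and only the $\PP_0$ term of $R^\e$ moves:
\begin{equation*}
R^\e(A\setminus K')-R^\e(A) = \PP_0((A\setminus K')^\e) - \PP_0(A^\e) \le 0.
\end{equation*}
Optimality of $A$ forces equality, so $A\setminus K'$ is adversarial Bayes; as every modification of $A$ supported inside $K$ takes this form, $K$ itself is a degenerate set.

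\textbf{Claim 3.} Given degenerate $\{S_n\}$ for $A$ and $E$ with $A\triangle E \subset \bigcup_n S_n$, decompose $A\triangle E = \bigsqcup_n F_n$ with $F_n\subset S_n\setminus (S_1\cup\cdots\cup S_{n-1})$, and split $F_n = F_n^+\sqcup F_n^-$ by whether a point is being added to, or removed from, $A$. Define the additions--then--removals sequence
\begin{equation*}
E_n = \bigl(A \cup \bigsqcup_{k\le n} F_k^+\bigr)\setminus \bigsqcup_{k\le n} F_k^-,
\end{equation*}
so that $(E_n^\e)$ decreases monotonically after the initial increase and $(E_n)^C$ behaves dually. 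Using Claim~2's geometric swallowing argument (or, equivalently, the dual characterization of Theorem 2.5, which identifies degenerate modifications with ones happening inside $\{\eta^*=1/2\}$), each $E_n$ is adversarial Bayes. Finally $R^\e(E_n) \to R^\e(E)$ by monotone convergence applied separately to the additive and subtractive phases, so $E$ attains the minimum risk.

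\textbf{Claim 4.} First, $\overline{(\supp \PP^\e)^C} \cup \partial A$ is itself degenerate: on $\overline{(\supp \PP^\e)^C}$ any modification has its $\e$-expansion meeting $\supp \PP$ only on the empty set $\partial \supp \PP^\e \cap \supp \PP$ (using that $\supp \PP$ is an interval), so $R^\e$ is unaffected; on $\partial A$---which for the regular representative from Theorem 3.10 is a countable set of endpoints---modifications change neither $A^\e$ nor $(A^C)^\e$. For the converse, given degenerate $D$ and $x\in D\cap (\supp \PP^\e)^\circ$ lying in say $A^\circ$, Claim 2 combined with the hypothesis $\PP(\eta\in\{0,1\})=0$ rules out $x$ lying in a small component (such a component would need $\PP_0 \equiv 0$ on its $\e$-neighborhood, which contradicts $p_0,p_1 > 0$ a.e.\ on $\supp \PP$). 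Hence $x$ sits in a component of $A$ of length $>2\e$; invoking the singleton perturbation as in Claim 1 produces a subinterval of $\supp \PP$ on which $\PP$ vanishes, contradicting $x\in (\supp \PP^\e)^\circ$ and $\supp \PP$ being an interval. Therefore $x\in \partial A$, completing the inclusion.

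\textbf{Main obstacle.} The subtlety in Claim 3 is that degeneracy is defined relative to the specific $A$, so a naive induction "$S_1\cup S_2$ is degenerate whenever $S_1,S_2$ are" requires transferring degeneracy to an intermediate adv Bayes classifier that differs from $A$ on $S_1$. Executing the monotone limit cleanly, or alternatively routing the argument through the dual witness $(\PP_0^*,\PP_1^*)$ of Theorem 2.5 so that the characterization "$S$ degenerate $\iff$ $S \subset \{\eta^*=1/2\}$ $\PP^*$-a.e.\ plus two integral identities" makes countable unions immediate, is the principal technical hurdle.
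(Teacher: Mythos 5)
Your Claim 1 is complete and takes a genuinely different route from the paper's: you derive the contradiction from two singleton perturbations together with the identity $B^\e\cup((B\cup I)^C)^\e=(I^C)^\e=\Rset\setminus(\alpha+\e,\beta-\e)$, whereas the paper goes through \cref{lemma:degenerate_set_-e} (any degenerate $D$ has $\PP(D^{-\e})=0$) and then \cref{cor:degenerate_size}. Your version is more elementary and self-contained; the paper's is less direct but extracts the reusable intermediate fact $\PP(D^{-\e})=0$.

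Claim 3 is the real gap, and you name it but do not close it. The monotone-convergence skeleton is fine as far as it goes --- under the monotone limits the inclusions $\bigcap_n(E_n^{(2)})^\e\supset E^\e$ and $\bigcap_n(E_n^{(1)C})^\e\supset(E_\infty^{(1)C})^\e$ give $\lim_n R^\e(E_n)\geq R^\e(E)$, which is the inequality you need --- but nothing in the proposal shows that each intermediate $E_n$ is an adversarial Bayes classifier. Degeneracy of $S_2$ \emph{for $A$} does not transfer to degeneracy of $S_2$ for $E_1$, and \cref{lemma:union_intersection_adv_bayes} applied to the classifiers $A\cup F_k^+\setminus F_k^-$ produces only $A\cup\bigcup_k F_k^+$ (their union) and $A\setminus\bigcup_k F_k^-$ (their intersection), not the mixed set $A\cup\bigcup_k F_k^+\setminus\bigcup_k F_k^-$. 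The ``dual characterization'' you invoke as an alternative is not a statement proved in the paper and is not developed here, so it cannot carry the argument. The paper side-steps the issue entirely: by \cref{prop:abs_cont_equivalencies}, each $D_i$ satisfies $\PP_0(D_i^\e\setminus A^\e)=0$ and $\PP_1(D_i^\e\setminus(A^C)^\e)=0$, and countable subadditivity of $\PP_0$, $\PP_1$ then makes the union degenerate with no passage through intermediate classifiers.

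Two smaller issues. In Claim 2 the containment $K^\e\subset(A^C)^\e$ fails when $K$ is a \emph{closed} component of length exactly $2\e$: with $K=[a,a+2\e]$, the midpoint $z=a+\e$ has $\ov{B_\e(z)}=K\subset A$, so $z\notin(A^C)^\e$; your argument only handles open $K$, and the remaining boundary points need to be absorbed via the degeneracy of $\partial A$ (\cref{lemma:closure_interior_adversarial_Bayes}), which is how the paper handles it through \cref{lemma:Rset_split}. In Claim 4 the parenthetical that does the work --- ``such a component would need $\PP_0\equiv 0$ on its $\e$-neighborhood, which contradicts $p_0,p_1>0$ a.e.\ on $\supp\PP$'' --- is standing in for the paper's \cref{lemma:degenerate_1d_eta_0_1}, whose proof is a nontrivial four-case analysis, and the stated implication does not hold: $\PP(\eta\in\{0,1\})=0$ together with $\PP\ll\mu$ gives only $\eta\in(0,1)$ \emph{$\PP$-a.e.}, not $p_0,p_1>0$ $\mu$-a.e.\ on $\supp\PP$ (the support may contain a positive-$\mu$-measure set where $p=0$). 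The paper's argument uses precisely the weaker $\PP$-a.e.\ statement, applied after showing the interval $(z-\e,z+\e)$ has positive $\PP$-mass.
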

The first two bullets state that within the support of $\PP$, degenerate intervals must have length at most $2\e$, and conversely a component of size at most $2\e$ must be degenerate. The last bullet implies that when $\supp \PP$ is an interval and $\PP(\eta\in \{0,1\})=0$, the equivalence class of an adversarial Bayes classifier $A$ consists of all Borel sets that differ from $A$ by a measurable subset of $\ov{(\supp \PP^\e)^C}\cup \partial A$. Specifically, under these conditions, $A$ cannot have a degenerate interval contained in $\supp \PP^\e$, see \cref{lemma:degenerate_1d_eta_0_1}--- a helpful observation for identifying sets which are equivalent up to degeneracy in step 3) of the procedure above. In contrast, Bayes classifiers with pathological behavior on $\supp \PP$ always exist. If $B=\bigcup_{i=m}^M (c_i,d_i)$ is a Bayes classifier, then at least one of $B\cup \QQ$ and $B-\QQ$ is a Bayes classifier that fails to satisfy the definition of regularity in \cref{def:regularity} for $\e=0$. Both of the assumptions present in the fourth bullet are necessary--- \cref{ex:degenerate} presents a counterexample  where $\supp \PP$ is an interval and $\PP(\eta\in \{0,1\})>0$  while \cref{ex:deg_eta_0_1_counterexample} presents a counterexample for which $\PP(\eta\in \{0,1\})=0$ but $\supp \PP$ is not an interval.

Prior work \cite{AwasthiFrankMohri2021,BungertGarciaMurray2021} shows that a certain form of regularity for adversarial Bayes classifiers improves as $\e$ increases. \Cref{thm:adv_bayes_and_degenerate} is an expression of this principle: this theorem states that each adversarial Bayes classifier $A$ is equivalent to a regular set of radius $\e$, and thus the regularity guarantee improves as $\e$ increases. Another form of regularity also improves as $\e$ increases---the number of components of $A$ and $A^C$ must decrease for well-behaved distributions.
Let $\comp(A)\in \Nset\cup \{\infty\}$ be the number of connected components of a set $A$.
\begin{theorem}\label{thm:adv_bayes_increasing_e}
    Assume that $d=1$, $\PP\ll \mu$, $\supp \PP$ is an interval $I$, and $\PP(\eta\in \{0,1\})=0$. Let $\e_2>\e_1\geq 0$ and let $A_1$, $A_2$ be adversarial Bayes classifiers corresponding to perturbation radiuses $\e_1$ and $\e_2$ respectively. Then either $\emptyset$ and $\Rset$ minimize both $R^{\e_1}$ and $R^{\e_2}$ or $\comp(A_1\cap \interior( I^{\e_1}))\geq \comp(A_2\cap \interior(I^{\e_2}))$ and $\comp(A_1^C\cap \interior(I^{\e_1}))\geq \comp(A_2^C\cap \interior(I^{\e_2}))$.
\end{theorem}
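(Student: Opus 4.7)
My plan is a two-step comparison argument.

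Step 1 (canonical count at fixed $\e$): I would first establish that, under the hypotheses, the quantity $\comp(A\cap I^\e)$ does not depend on the choice of regular adversarial Bayes classifier $A$ at radius $\e$. Given two such classifiers $A,A'$, \cref{lemma:union_intersection_adv_bayes} tells us that both $A\cup A'$ and $A\cap A'$ are adversarial Bayes, so the symmetric difference $A\triangle A'$ is a degenerate set for $A\cap A'$. The fourth bullet of \cref{thm:1d_degenerate}, which applies since $\supp \PP = I$ is an interval and $\PP(\eta\in\{0,1\})=0$, then forces $A\triangle A'\subset \overline{(\supp \PP^\e)^C}\cup \partial(A\cap A')$. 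Intersecting with $I^\e$ strips off the exterior piece, and boundary points cannot alter the component count of an open set, so $\comp(A\cap I^\e)=\comp(A'\cap I^\e)$. The complement statement follows identically by swapping $A\leftrightarrow A^C$.

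Step 2 (monotonicity in $\e$): Suppose for contradiction that $\comp(A_2\cap I^{\e_2})>\comp(A_1\cap I^{\e_1})$. Because each component of $A_2$ has length greater than $2\e_2>2\e_1$ and consecutive components are separated by gaps of the same size, $A_2$ is itself a regular set of radius $\e_1$. The idea is to build a regular adversarial Bayes classifier $\tilde A$ at radius $\e_1$ with $\comp(\tilde A\cap I^{\e_1})\geq \comp(A_2\cap I^{\e_2})$, which contradicts Step 1 applied at $\e=\e_1$. To construct $\tilde A$, I would start from $A_1$ and, for each component $(c_k,d_k)$ of $A_2$ inside $I^{\e_2}$ not already aligned with a component of $A_1$, insert a component inside the shrunken interval $(c_k+(\e_2-\e_1),\,d_k-(\e_2-\e_1))$, which has length strictly greater than $2\e_1$. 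To justify that the resulting set is still optimal at $\e_1$, I would pull back the dual variables $(\PP_0^{*(2)},\PP_1^{*(2)})$ furnished by \cref{thm:complimentary_slackness_classification} for $A_2$ at $\e_2$: they lie in the smaller $W_\infty$-balls after a suitable local modification, and the complementary slackness equations \cref{eq:sup_comp_slack_classification}--\cref{eq:complimentary_slackness_necessary_classification} continue to hold against $\tilde A$ on each inserted component.

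The hard part will be showing that one truly can transfer optimality across radii to certify $\tilde A$ as adversarial Bayes at $\e_1$, and conversely that $A_1$ cannot already have the extra components. Without assuming continuity of $p_0,p_1$, the first-order necessary conditions \cref{eq:first_order_necessary} from \cref{thm:exists_regular} need not hold pointwise, so the optimality comparison must be carried out at the level of integrals using the risk decomposition \cref{eq:risk_on_intervals} together with the $W_\infty$ constraint on the dual variables, rather than through a derivative argument. The complement inequality $\comp(A_1^C\cap I^{\e_1})\geq \comp(A_2^C\cap I^{\e_2})$ then follows by interchanging the roles of $\PP_0$ and $\PP_1$, since the entire formulation of $R^\e$ and of $\cdl$ is symmetric under this swap combined with $A\mapsto A^C$.
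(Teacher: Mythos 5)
Your Step 1 is a sound observation and does follow from \cref{lemma:union_intersection_adv_bayes} and the fourth bullet of \cref{thm:1d_degenerate}, though you should be careful that the containment you obtain is $A\triangle A'\subset\ov{(\supp\PP^\e)^C}\cup\partial(A\cap A')$ (with the boundary of the \emph{intersection}), and that regularity of both sets is what prevents removing or adding those boundary points from merging or splitting components. The paper does not actually isolate this fact; it is a detour from the published argument, but not a wrong one.

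Step 2 is where the proposal has a genuine gap, and the dual-pullback idea is the wrong tool. You want to certify optimality of $\tilde A$ at radius $\e_1$ by re-using $(\PP_0^{*(2)},\PP_1^{*(2)})$ from radius $\e_2$, but those dual measures lie in the larger ball $\Wball{\e_2}$, and there is no canonical ``suitable local modification'' that moves them into $\Wball{\e_1}$ while simultaneously preserving the two complementary slackness conditions \cref{eq:sup_comp_slack_classification} and \cref{eq:complimentary_slackness_necessary_classification} against a set $\tilde A$ that you have also modified. Moreover, even the shape of the contradiction is off: the paper's \cref{lemma:inclusion_basic} shows that if a component $(a_j^2,b_j^2)$ of $A_2$ (away from the $\e_2$-edge zone, using \cref{cor:transition_eta_0_1}) were contained in a gap $(b_i^1,a_{i+1}^1)$ of $A_1$, then inserting the shrunken interval $(a_j^2+(\e_2-\e_1),\,b_j^2-(\e_2-\e_1))$ into $A_1$ \emph{strictly} decreases $R^{\e_1}$. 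The strictness comes from optimality of $A_2$ at $\e_2$ (giving $\int_{a_j^2-\e_2}^{b_j^2+\e_2}p_0\,dx\leq\int_{a_j^2+\e_2}^{b_j^2-\e_2}p_1\,dx$) combined with $p_0,p_1>0$ on $\supp\PP$, which is exactly where the hypothesis $\PP(\eta\in\{0,1\})=0$ enters. That strict decrease contradicts optimality of $A_1$ \emph{before} you ever get to ask whether $\tilde A$ is optimal, so Step 1 and the dual machinery aren't needed. You also omit the edge case handled by \cref{lemma:R_e_constant} where both $\Rset$ and $\emptyset$ are adversarial Bayes classifiers at $\e_1$, in which case the comparison must be made separately. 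In short: your instinct that the comparison must be done ``at the level of integrals'' is right, but it should be a purely primal risk comparison showing $A_1$ is suboptimal, not a dual certification of a newly constructed $\tilde A$.
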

    The case where $\Rset$ and $\emptyset$ are both minimizers of both $R^{\e_1}$, $R^{\e_2}$ arises, for instance, when $\PP_0$ and $\PP_1$ are identical.  Due to the fourth bullet of \cref{thm:1d_degenerate}, the assumptions of \cref{thm:adv_bayes_increasing_e} imply that there is no degenerate interval within $\interior(\supp \PP^\e)$ when $\e>0$, and hence every adversarial Bayes classifier matches a regular adversarial Bayes classifier on $\supp \PP^\e$.
    The intersections with $\interior(I^{\e_1})$ and $\interior(I^{\e_2})$ are necessary in  the theorem above because the complements $\interior(I^{\e_1})^C$ and $\interior(I^{\e_2})^C$ are degenerate sets for $A_1$ and $A_2$, respectively, by the fourth bullet of \cref{thm:1d_degenerate}. \Cref{sec:increasing_e} actually proves a stronger statement: typically, no component of $A_1\cap I^{\e_1}$ can contain a connected component of $A_2^C$ and no component of $A_1^C\cap I^{\e_1}$ can contain a connected component of $A_2$.

    When computing adversarial Bayes classifiers, \cref{thm:adv_bayes_increasing_e} and the stronger version in \cref{sec:increasing_e} are useful tools in ruling out some of the sets in step 2) of the procedure above without explicitly computing their risk.

    The crux of the one-dimensional characterization discussed above is proving that every adversarial Bayes classifier is equivalent up to degeneracy to a regular adversarial Bayes classifier. Unfortunately, this result is false in high dimensions: See Figure~3 of \cite{BungertGarciaMurray2021} for a counterexample\footnote{This example discusses a singular distribution, but the result could be extended to non-singular distributions by replacing the $\delta$-functions with a uniform distribution on a small ball.} and Figure~5 of \cite{BungertGarciaMurray2021} for a depiction of a key obstacle in two dimensions. However, it is possible to establish one-sided regularity results:
    \begin{theorem}\label{thm:high_dim_uniqueness regularity}
    Let $A$ be an adversarial Bayes classifier. Then $A$ is equivalent up to degeneracy to a classifier $A_1$ for which $A_1=C^\e$ and a classifier $A_2$ for which $A_2^C=E^\e$, for some sets $C$, $E$. 
    \end{theorem}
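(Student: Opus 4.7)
The plan is to build $A_1$ as a maximal superset of $A$ within its equivalence class, and then show that maximality automatically forces the morphological structure $A_1 = C^\e$. The classifier $A_2$ with $A_2^C = E^\e$ follows by symmetry: $A^C$ is an adversarial Bayes classifier for the problem with $\PP_0$ and $\PP_1$ swapped, and applying the same construction there yields a set $E^\e$ equivalent up to degeneracy to $A^C$, whose complement is the desired $A_2$.

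First, I would consider the family
\[
  \mathcal{F}_A = \{B \text{ Borel} : A \subseteq B, \text{ and every Borel } E \text{ with } A \subseteq E \subseteq B \text{ is an adversarial Bayes classifier}\}.
\]
A short argument shows $\mathcal{F}_A$ is closed under finite union: if $B_1, B_2 \in \mathcal{F}_A$ and $A \subseteq E \subseteq B_1 \cup B_2$, then $A \subseteq E \cap B_i \subseteq B_i$ for $i=1,2$, so each $E \cap B_i$ is an adversarial Bayes classifier by the definition of $\mathcal F_A$, and $E = (E \cap B_1) \cup (E \cap B_2)$ is then also adversarial Bayes by \cref{lemma:union_intersection_adv_bayes}. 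A monotone convergence argument, using that $(\cdot)^\e$ commutes with increasing unions and (via compactness of the closed unit ball in $\Rset^d$) with decreasing intersections of closed sets, extends this closure property to countable unions. A Zorn/exhaustion argument then produces a maximal element $A_1 \in \mathcal{F}_A$.

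Next, set $C := \{x : \overline{B_\e(x)} \subseteq A_1\}$ (the $\e$-erosion of $A_1$). The inclusion $C^\e \subseteq A_1$ is immediate. For the reverse, suppose toward contradiction that $x_0 \in A_1 \setminus C^\e$: no closed $\e$-ball containing $x_0$ lies inside $A_1$. I would argue that $\tilde A_1 := A_1 \cup \overline{B_\e(x_0)}$ still lies in $\mathcal F_A$, contradicting maximality. The key point is that by \cref{thm:complimentary_slackness_classification} applied to $A_1$, the dual optimizer $\PP_0^*$ satisfies $\int \mathbf{1}_{A_1}\, d\PP_0^* = \PP_0(A_1^\e)$; the additional $\PP_0$-mass in $\tilde A_1^\e \setminus A_1^\e$ is already covered by the adversarial transport witnessed by $\PP_0^*$, while the $\PP_1$-term in $R^\e$ can only decrease under enlargement. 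This yields $R^\e(\tilde A_1) = R^\e(A_1)$, and the same balancing applied to any intermediate set between $A_1$ and $\tilde A_1$ places $\tilde A_1$ in $\mathcal F_A$.

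The main obstacle is this last step: converting the local geometric condition $\overline{B_\e(x_0)} \not\subseteq A_1$ into the risk-preservation identity $R^\e(A_1 \cup \overline{B_\e(x_0)}) = R^\e(A_1)$ via complementary slackness, and in particular bounding the adversarial $\PP_0$-contribution of the newly added ball by mass already accounted for by $\PP_0^*$. A secondary technical issue is making the Zorn/countable-union step fully rigorous without further hypotheses on $\PP$; under the absolute continuity $\PP \ll \mu$ standing elsewhere in the paper this should follow by routine measure-theoretic arguments, but the statement as given may require a separable-approximation workaround.
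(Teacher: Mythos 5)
Your proposal takes a genuinely different route from the paper, and unfortunately it has a real gap at exactly the point you flag. The paper proves this directly from the morphological calculus developed in \cref{sec:degenerate}: \cref{lemma:A_e_-e_inclusion} gives $(A^{-\e})^\e\subset A\subset (A^\e)^{-\e}$, and \cref{cor:A_e_diff_degenerate} shows all three are equivalent up to degeneracy. Then $A_1=(A^{-\e})^\e$ is manifestly of the form $C^\e$ with $C=A^{-\e}$, while $A_2=(A^\e)^{-\e}$ satisfies $A_2^C=((A^\e)^C)^\e=E^\e$ with $E=(A^\e)^C$. No extremal/Zorn argument is needed, and nothing has to be verified about risk preservation under enlargement --- that work was already done in \cref{lemma:A_e_-e_containments}, which shows that these two operations leave $S_\e(\one_A)$ and $S_\e(\one_{A^C})$ unchanged where it matters.

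The gap in your argument is the step you call the ``main obstacle,'' and I do not think it can be closed as stated. You want to show that if $A_1$ is maximal and $x_0\in A_1\setminus C^\e$ with $C=A_1^{-\e}$, then $R^\e(A_1\cup\ov{B_\e(x_0)})=R^\e(A_1)$. But complementary slackness (\cref{thm:complimentary_slackness_classification}) only asserts the identity $\int\one_{A_1}\,d\PP_0^*=\PP_0(A_1^\e)$ for the optimal pair $(A_1,\PP_0^*)$; it gives no control over the increment $\PP_0(\tilde A_1^\e\setminus A_1^\e)$. Since $x_0\in A_1\setminus C^\e$ forces $\ov{B_\e(x_0)}$ to poke outside $A_1$, the added region $\ov{B_{2\e}(x_0)}\setminus A_1^\e$ can carry positive $\PP_0$ mass not compensated by the decrease in the $\PP_1$ term, so the risk would strictly increase. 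There is also a directional mismatch: if $A_1$ is a \emph{maximal} superset of $A$ in the equivalence class, the natural consequence (via \cref{lemma:A_e_-e_inclusion} and \cref{cor:A_e_diff_degenerate}) is $(A_1^\e)^{-\e}=A_1$, i.e.\ $A_1^C=((A_1^\e)^C)^\e$, which is the $A_2$-form $A_2^C=E^\e$, not the $A_1$-form $A_1=C^\e$. For the form $A_1=(A_1^{-\e})^\e=C^\e$ you would want a \emph{minimal} element of the equivalence class, not a maximal one. Even after fixing the direction, the extremality approach still requires proving the countable-union stability of the family $\mathcal F_A$ carefully, and it buys nothing over the two-line proof from \cref{cor:A_e_diff_degenerate}.
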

    This result has appeared in a different form in prior work-- see for instance Lemma~3.29 of \cite{BungertGarciaMurray2021} or Lemma~15 of \cite{AwasthiFrankMohri2021}. 
    Further understanding uniqueness up to degeneracy in higher dimension is an open problem.

\subsection*{Paper Outline} \Cref{sec:examples} applies the tools presented above to compute adversarial Bayes classifiers for a variety of distributions. Subsequently, \cref{sec:equiv_degeneracy} presents properties of equivalence up to degeneracy, including proofs of \cref{thm:equivalence_up_to_degeneracy,thm:TFAE_equiv,thm:high_dim_uniqueness regularity}. \Cref{sec:fundamental_regularity,sec:degenerate} further develop properties of degenerate sets, and these results are subsequently applied in \cref{sec:regular_adv_bayes} to prove \cref{thm:adv_bayes_and_degenerate,thm:exists_regular}. \Cref{sec:degenerate_1d} focuses specifically on distributions in one dimension to prove \cref{thm:1d_degenerate}. Subsequently, \cref{sec:increasing_e} proves \cref{thm:adv_bayes_increasing_e}. Lastly, \Cref{sec:related_works} compares our results with related works. Technical proofs and calculations appear in the \ifthenelse{\boolean{appendixmode}}{appendix}{supplementary materials}.

\section{Examples}\label{sec:examples}
 The examples below calculate the equivalence classes under equivalence up to degeneracy for any $\e>0$. \Cref{sec:elementary_examples} presents elementary examples illustrating our procedure for finding adversarial Bayes classifiers. \Cref{sec:anomalous_examples} discusses two distributions with pathological behavior: \cref{ex:non_uniqueness_all} presents a distribution with a unique Bayes classifier but non-unique adversarial Bayes classifiers for all $\e>0$ while \cref{ex:degenerate} presents an example with a degenerate set of positive Lebesgue measure. Remarkably, in these examples, for sufficiently small $\e$, there always exists a set that simultaneously simultaneously serves as both a Bayes and adversarial Bayes classifier. For such distributions, a deliberate selection of the adversarial Bayes classifier would mitigate the tradeoff between robustness and accuracy.

 Moreover, all of the examples below except \cref{ex:gaussians_equal_means} exhibit a curious occurrence--- the boundary of the adversarial Bayes classifier lies within $\e$ of the boundary of the Bayes classifier. \Cref{sec:comparing_bayes_and_adv_bayes} explores this pattern in detail--- \Cref{prop:uniform_within_e,prop:eta_0_1} provide conditions under which this behavior occur while \cref{prop:within_e_to_risk_bound} examines the robustness-accuracy tradeoff in light of this phenomenon.

Three of the examples in this section involve gaussian mixtures with varying parameters. Understanding how uniqueness depends on the parameters of the gaussian mixture remains an open problem. More broadly, understanding how uniqueness is influenced by the parameters of a broader parametric family of distributions remains an open question.

\subsection{Elementary examples}\label{sec:elementary_examples}
The first two examples study Gaussian mixtures: $p_0=(1-\lambda) g_{\mu_0,\sigma_0}(x)$, $p_1=\lambda g_{\mu_1,\sigma_1}(x)$, 
where $\lambda \in (0,1)$ and $g_{\mu,\sigma}$ is the density of a gaussian with mean $\mu$ and variance $\sigma^2$. Prior work \cite{PydiJog2019} 
calculates a single adversarial Bayes classifier for $\lambda=1/2$ and any value of $\mu_i$ and $\sigma_i$. Below, our goal is to find \emph{all} adversarial Bayes classifiers.

\begin{figure}
     \centering
     \begin{subfigure}[b]{0.30\textwidth}
         \centering
         \includegraphics[width=\textwidth]{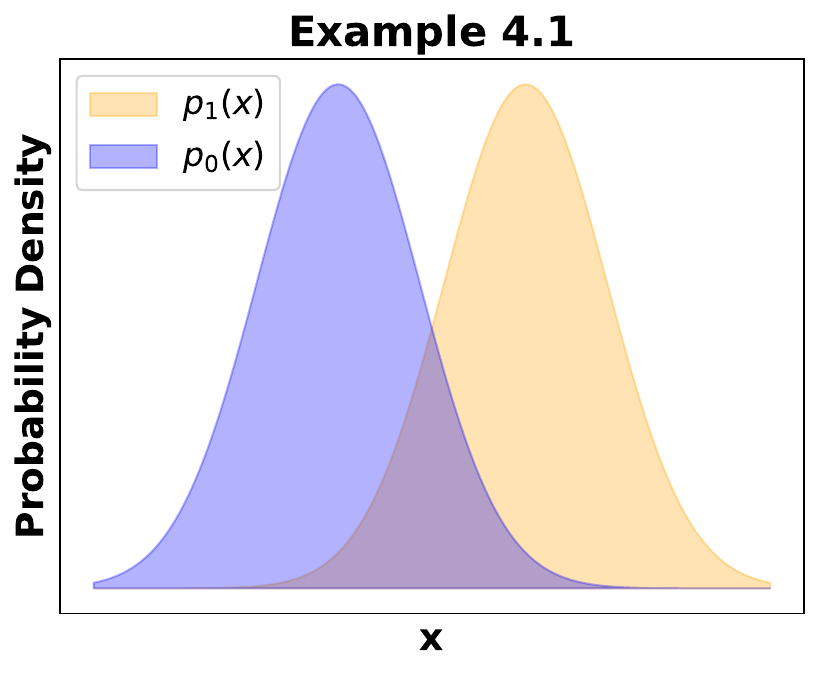}
         \caption{}
         \label{fig:gaussian_equal_var}
     \end{subfigure}
     \hfill
     \begin{subfigure}[b]{0.30\textwidth}
         \centering
         \includegraphics[width=\textwidth]{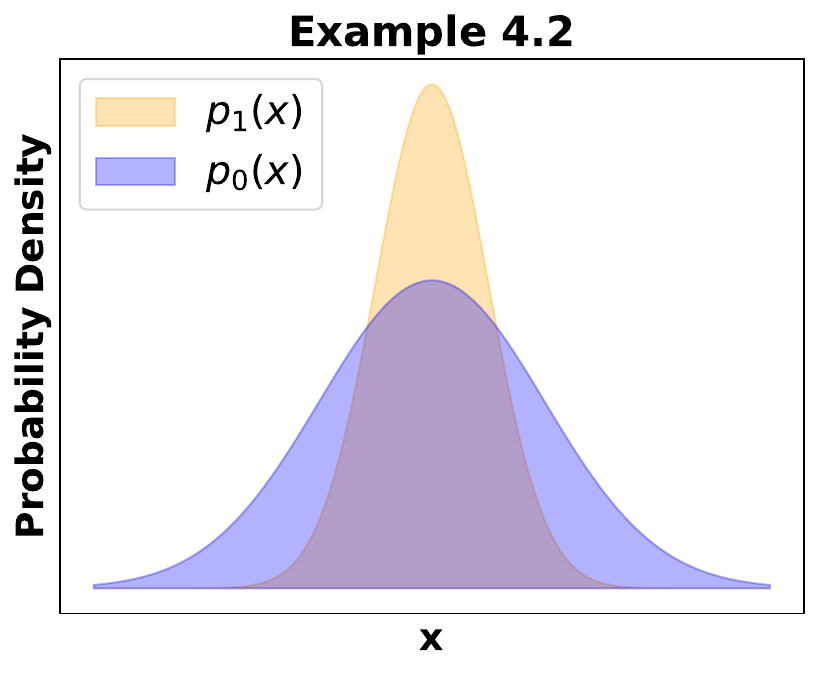}
         \caption{}
         \label{fig:gaussian_equal_means}
     \end{subfigure}
      \hfill
     \begin{subfigure}[b]{0.30\textwidth}
         \centering
        \includegraphics[width=\textwidth]{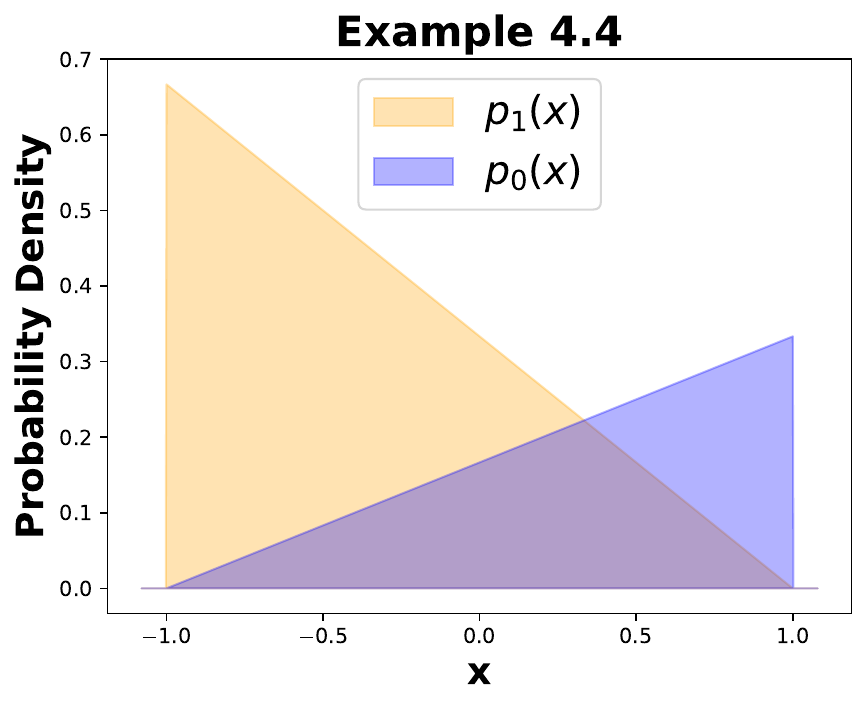}
         \caption{}
         \label{fig:non_uniqueness_single}
     \end{subfigure}\caption{(a) Gaussian mixture with equal means and unequal variances as a in \cref{ex:gaussians_equal_means}. (b) Gaussian mixture with equal weights, unequal means, and equal variances as in \cref{ex:gaussians_equal_variances}. (c) The distribution of \cref{ex:non_uniqueness_single}.}
        \label{fig:gaussians}
\end{figure}

\begin{example}[Gaussian mixtures--- equal variances, equal weights]\label{ex:gaussians_equal_variances}
    Consider a gaussian mixture with $p_0(x)=\frac 1 2 \cdot\frac{1}{\sqrt{2\pi}\sigma} e^{-(x-\mu_0)^2/2\sigma^2}$, $p_1(x)=\frac 12 \cdot \frac 12 \frac 1 {\sqrt{2\pi}\sigma} e^{-(x-\mu_1)^2/\sigma^2}$, and $\mu_1>\mu_0$, depicted in \cref{fig:gaussian_equal_var}. The solutions to the first order necessary conditions $p_1(b-\e)-p_0(b+\e)=0$ and $p_1(a+\e)-p_0(a-\e)=0$ from \cref{eq:first_order_necessary} are 
    \begin{equation*}
           a(\e)=b(\e)=\frac{\mu_0+\mu_1}2
    \end{equation*}
    However, the point $b(\e)$ does not satisfy the second order necessary condition \cref{eq:second_order_necessary_b} (see \cref{app:gaussians_equal_variances}).
    Thus the candidate sets for the Bayes classifier are $\Rset$, $\emptyset$, and $(a(\e),+\infty)$. The fourth bullet of \cref{thm:1d_degenerate} implies that none of these sets could be equivalent up to degeneracy. By comparing the adversarial risks of these three sets, one can show that the set $(a(\e),+\infty)$ is an adversarial Bayes classifier iff $\e\leq \frac{\mu_1-\mu_0} 2$ and $\Rset$, $\emptyset$ are adversarial Bayes classifiers iff $\e\geq \frac{\mu_1-\mu_0} 2$ (see \cref{app:gaussians_equal_variances} for details).
    Thus the adversarial Bayes classifier is unique up to degeneracy only when $\e<\frac{\mu_1-\mu_0} 2$.
\end{example}

When $\e\leq \frac{\mu_1-\mu_0}2$, the set $(a(\e),+\infty)$ is both a Bayes classifier and an adversarial Bayes classifier, and thus there is no accuracy-robustness tradeoff. In this example, uniqueness up to degeneracy fails for all sufficiently large $\e$. In contrast, the example below demonstrates a distribution for which the adversarial Bayes classifier is unique up to degeneracy for all $\e$.

\begin{example}[Gaussian mixtures--- equal means]\label{ex:gaussians_equal_means}
    Consider a gaussian mixture with $p_0(x)=\frac{1-\lambda}{\sqrt{2\pi}\sigma_0} e^{-x^2/2\sigma_0^2}$, $p_1(x)=\frac{\lambda}{\sqrt{2\pi}\sigma_1}e^{-x^2/2\sigma_1^2}$, and $\lambda\geq 1/2$. Assume that $p_0$ has a larger variance than $p_1$ but that the peak of $p_0$ is below the peak of $p_1$, or other words, $\sigma_0>\sigma_1$ but $\frac \lambda{\sigma_1}>\frac{1-\lambda}{\sigma_0}$, see \cref{fig:gaussian_equal_means} for a depiction. Calculations similar to \cref{ex:gaussians_equal_variances} show that the adversarial Bayes classifier is unique up to degeneracy for every $\e$, and is given by $(-b(\e),b(\e))$ where
    \begin{equation}\label{eq:b(e)_def}
        b(\e)=\frac{\e\left( \frac 1 {\sigma_1^2}+\frac 1 {\sigma_0^2}\right)+\sqrt{\frac{4\e^2}{\sigma_0^2\sigma_1^2}+2\left(\frac 1 {\sigma_1^2}-\frac 1 {\sigma_0^2} \right)\ln \frac{\lambda \sigma_0}{(1-\lambda)\sigma_1}}}{\frac 1 {\sigma_1^2}-\frac 1 {\sigma_0^2}}.
    \end{equation}
    The computational details are similar to those of \cref{ex:gaussians_equal_variances}, and thus are delayed to \cref{app:gaussians_equal_means_details}.
\end{example}
    Unlike \cref{ex:gaussians_equal_variances},  the Bayes and adversarial Bayes classifiers for this distribution can differ substantially.

    The next three examples are distributions for which $\supp \PP$ is a finite interval. In such situations, it is often helpful to assume that $a_i,b_i$ are not near $\partial \supp \PP$.

\begin{lemma}\label{lemma:eps_in_interval}
    Consider a distribution for which $\PP\ll \mu$ and $\supp \PP$ is an interval. Then every adversarial Bayes classifier is equivalent up to degeneracy to an open regular adversarial Bayes classifier $A=\bigcup_{i=m}^M(a_i,b_i)$ for which the finite $a_i$, $b_i$ are contained in $\interior(\supp \PP ^{-\e})$.
\end{lemma}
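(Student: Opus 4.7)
The plan is to start with a regular adversarial Bayes classifier $A=\bigcup_i(a_i,b_i)$, guaranteed by \cref{thm:adv_bayes_and_degenerate}, and to modify it near the boundary of $\supp\PP=[c,d]$ so that every finite endpoint lies strictly between $c+\e$ and $d-\e$, while staying within the same equivalence class. The key observation is that $R^\e(\cdot)$ depends only on the restriction of the classifier to $[c-\e,d+\e]$: the integrals defining $R^\e$ are supported on $[c,d]$, and for $x\in[c,d]$ whether $x\in A^\e$ or $x\in(A^C)^\e$ depends only on $A\cap[x-\e,x+\e]\subset[c-\e,d+\e]$. This gives flexibility in how we extend or truncate $A$ outside $[c-\e,d+\e]$, and in particular in how we treat portions of $A$ lying in the boundary strips $(-\infty,c+\e]$ and $[d-\e,+\infty)$.

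Concretely, I would modify each component $(a_i,b_i)$ according to: (a) if $b_i\leq c+\e$ or $a_i\geq d-\e$, remove it; (b) if $a_i\leq c+\e<b_i$, replace it with $(-\infty,b_i)$, further extended to $+\infty$ on the right if also $b_i\geq d-\e$; (c) if $c+\e<a_i$ and $b_i\geq d-\e$, replace it with $(a_i,+\infty)$; (d) otherwise keep. By the regularity condition $a_{i+1}-b_i>2\e$, at most one component satisfies $a_i\leq c+\e<b_i$ and at most one satisfies $a_i<d-\e\leq b_i$, so at most one component is extended in each direction. The resulting set $A'$ is regular, and all its finite endpoints lie in $(c+\e,d-\e)=\interior\supp\PP^{-\e}$.

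To verify that $A'$ is adversarial Bayes and equivalent to $A$ up to degeneracy, I would analyze how $A^\e$ and $(A^C)^\e$ change on $[c,d]$ under each modification. For the representative case of extending a component with $a_i\leq c+\e<b_i$ to $(-\infty,b_i)$: since $A$ already contains $(a_i,b_i)$ and $a_i-\e\leq c$, the set $A^\e\cap[c,d]$ is unchanged, while $(A^C)^\e$ loses $[c,a_i+\e]$, because the witnesses in $A^C\cap(-\infty,a_i]$ are absorbed into $A$ and regularity $b_i-a_i>2\e$ rules out witnesses on the right. The risk therefore changes by $-\int_c^{a_i+\e}p_1$, which must vanish since $A$ minimizes $R^\e$ and $A'$ is a valid competitor. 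An analogous analysis for each modification type yields a vanishing boundary integral of $p_0$ or $p_1$ on an interval of length at most $2\e$ near $c$ or $d$. Because these same vanishing integrals bound the contribution from any intermediate $E$ with $A\cap A'\subset E\subset A\cup A'$, we obtain $R^\e(E)=R^\e(A)$, establishing equivalence up to degeneracy.

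The main obstacle will be the careful case analysis and bookkeeping required: tracking how $A^\e$ and $(A^C)^\e$ change on $[c,d]$ under each modification type, confirming that the changes are confined to the boundary regions $[c,c+2\e]$ and $[d-2\e,d]$, and using regularity ($b_i-a_i>2\e$ and $a_{i+1}-b_i>2\e$) to ensure that components do not interfere under the modifications and that $A'$ is itself regular.
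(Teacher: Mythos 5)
Your approach is correct and would yield a complete proof once the acknowledged bookkeeping is carried out; it is, however, organized quite differently from the paper's proof. The paper (Appendix~G.3) starts from a regular $A$, looks at the single boundary endpoint nearest to the edge $\ell$ of $\supp\PP$ from above, and either appends $(-\infty, a_{i^*})$ or removes $(-\infty, b_{j^*})$ depending on whether that endpoint is a left or a right endpoint of a component of $A$; the fact that this preserves the adversarial Bayes property and equivalence up to degeneracy is outsourced to Lemma~\ref{lemma:transition_eta_0_1}, which separately establishes that a boundary endpoint $y\in(\ell-\e,\ell+\e]$ forces $[\ell-\e,y]$ to be degenerate, and to the degeneracy of $\ov{(\supp\PP^\e)^C}$ from Lemma~\ref{lemma:max_degenerate_set}. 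You instead run a per-component case analysis (remove, extend left, extend right, keep) and re-derive the vanishing-boundary-integral fact inline rather than citing those lemmas. Both routes are valid: yours is self-contained and more explicit about what happens to every component, while the paper's is terser at the cost of depending on machinery proved elsewhere in the appendix.

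One step you should spell out, since the sign argument is not symmetric with the case you did work out: in the removal case~(a), a component $(a_j,b_j)$ with $b_j\leq c+\e$ has $a_j<c-\e$ by regularity, so passing from $A$ to $A\setminus(a_j,b_j)$ changes the risk by
\[
-\int_{c}^{b_j+\e} p_0\,dx \;+\; \int_{\max(c,\,a_j+\e)}^{b_j-\e} p_1\,dx,
\]
and the second integral vanishes because $b_j-\e\leq c$, so removal is automatically a valid (non-increasing) competitor; by minimality the first integral is zero as well. Without noting this, a reader might worry that removal could strictly increase the risk, which would break the construction. It is also worth saying explicitly that the type~(a) components with $b_j<c-\e$ have $\e$-expansions disjoint from $\supp\PP$, so they can be dropped freely, and that by regularity at most one component near each edge has an endpoint in $(c-\e,c+\e]$ (resp.\ $[d-\e,d+\e)$); this is what guarantees that the different boundary modifications do not interact and that the final $A'$ is regular.
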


See \cref{app:eps_in_interval} for a proof. The following example presents a setting in which $\supp \PP$ is an interval but $p_0$, $p_1$ are discontinuous at $\partial \supp \PP$. This example illustrates how \cref{lemma:eps_in_interval} facilitates the analysis of such distributions.

\begin{example}[Uniqueness fails for a single value of $\e$]\label{ex:non_uniqueness_single}
    Consider a distribution for which
    \[p_0(x)=\begin{cases}
        \frac 1 6 (1+x)&\text{if } |x|\leq 1\\
        0&\text{otherwise}    
    \end{cases}
   \quad  p_1(x)=\begin{cases}
       \frac 13 (1-x)&\text{if }|x|\leq 1\\
       0&\text{otherwise}
   \end{cases}\]
   The only solutions to the first order necessary conditions $p_1(a+\e)-p_0(a-\e)=0$ and $p_0(b+\e)-p_1(b-\e)=0$ within $\supp \PP^\e$ are 
   \[a(\e)=\frac 13(1-\e)\quad \text{and} \quad b(\e)=\frac 13(1+\e)\]
   We first consider $\e$ small enough so that both of these points lie in $\interior(\supp \PP^{-\e})$, or in other words, $\e< 1/2$. Then $p_0'(a(\e)-\e)=p_0'(b(\e)+\e)=1/6$ and $p_1'(a(\e)+\e)=p_1'(b(\e)-\e)=-1/3$. Consequently, the point $a(\e)$ fails to satisfy the second order necessary condition \cref{eq:second_order_necessary_a}. To identify all adversarial Bayes classifiers under uniqueness up to degeneracy for $\e<1/2$, \cref{lemma:eps_in_interval} implies it remains to compare the adversarial risks of $\emptyset$, $\Rset$, and $(-\infty, b(\e))$. \Cref{thm:1d_degenerate} implies that none of these classifiers could be equivalent up to degeneracy. The risks of these sets compute to $R^\e(\emptyset)=2/3$, $R^\e(\Rset)=1/3$, and $R^\e((-\infty,b(\e)))=\frac 29\left( 1+\e\right)^2$. Therefore, for all $\e<1/2$, the set $(-\infty,b(\e))$ is an adversarial Bayes classifier iff $\e\leq \sqrt{3/2}-1$ while $\Rset$ is an adversarial Bayes classifier iff $1/2>\e\geq \sqrt{3/2}-1$. \Cref{thm:adv_bayes_increasing_e} then implies that this last statement holds without the restriction $\e<1/2$.
\end{example}

\begin{figure}
	\centering
	\begin{subfigure}[b]{0.30\textwidth}
		\centering
		\includegraphics[width=\textwidth]{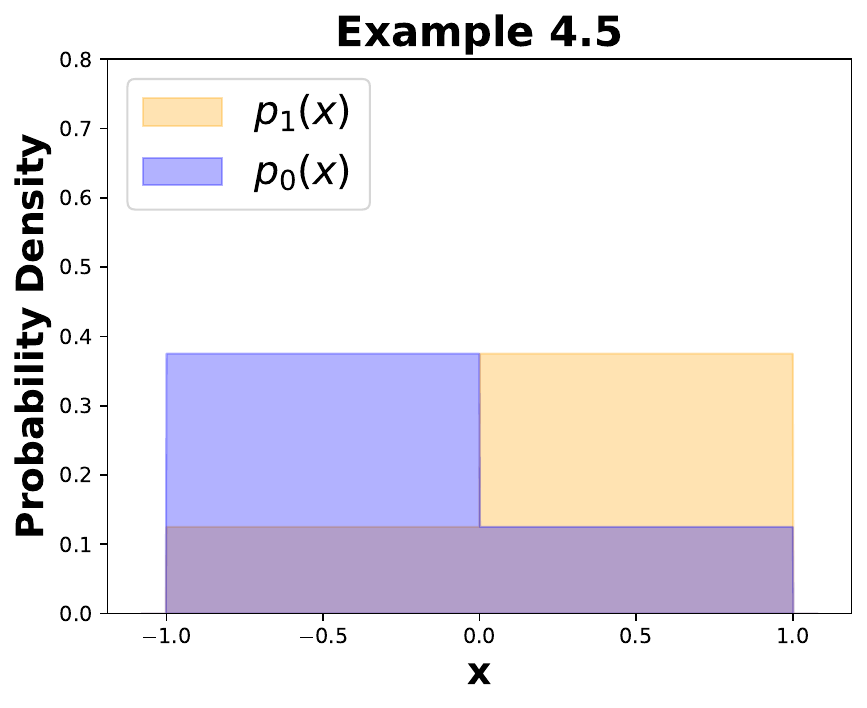}
		\caption{}
        \label{fig:non_uniqueness_all}
	\end{subfigure}
	\hfill
	\begin{subfigure}[b]{0.30\textwidth}
		\centering
		\includegraphics[width=\textwidth]{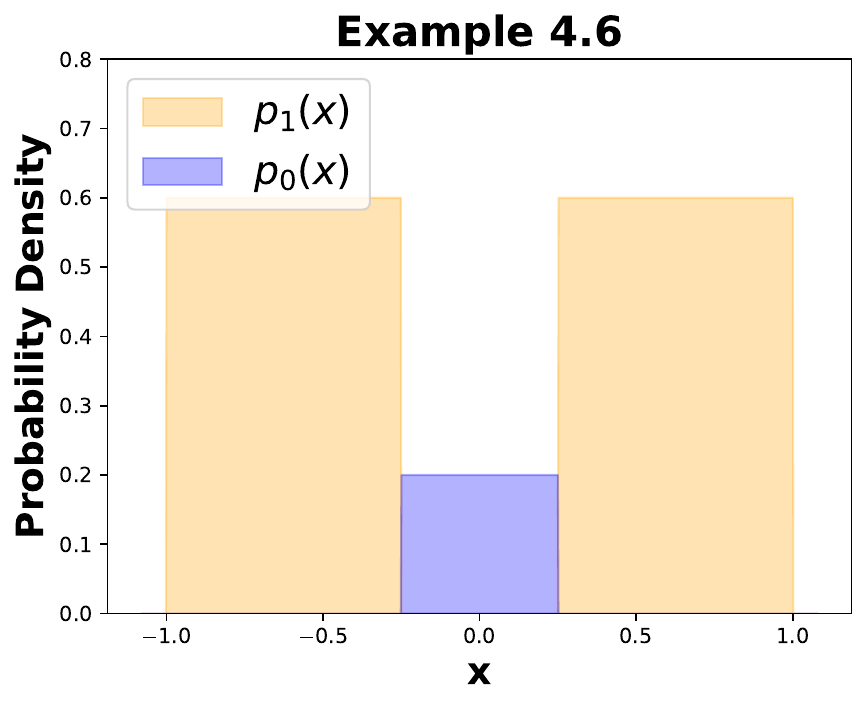}
		\caption{}
		\label{fig:degenerate}
	\end{subfigure}
	\hfill
	\begin{subfigure}[b]{0.30\textwidth}
		\centering
		\includegraphics[width=\textwidth]{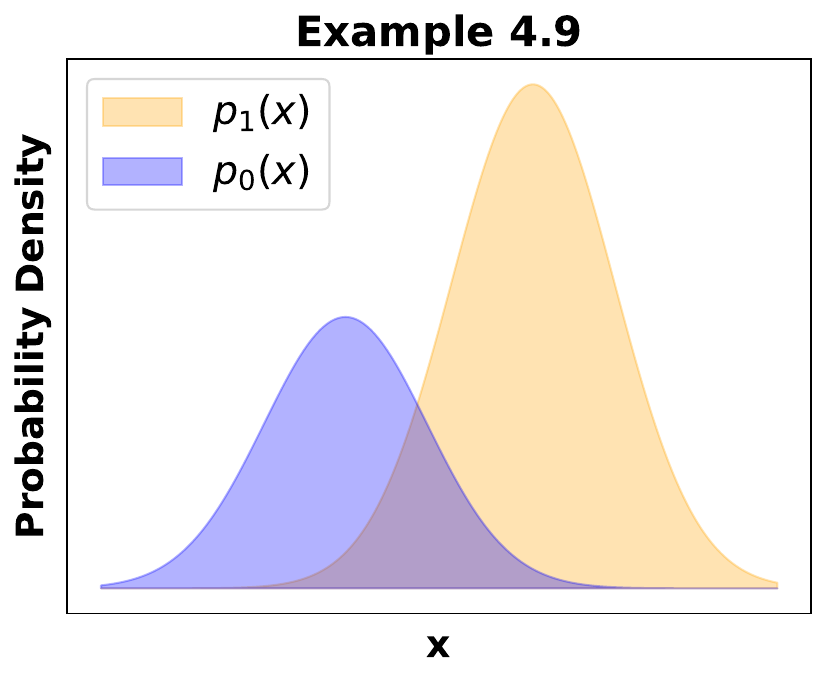}
		\caption{}
        \label{fig:gaussian_equal_var_unequal_weights}
	\end{subfigure}\caption{(a) The distribution of \cref{ex:non_uniqueness_all}. (b) The distribution of \cref{ex:degenerate}. (c) Gaussian mixture with unequal weights, unequal means, and equal variances as in \cref{ex:gaussians_unequal_weights}.}
	\label{fig:three_ex}
\end{figure}

\subsection{Anomalous examples and implications for the accuracy-robustness tradeoff}\label{sec:anomalous_examples}
In the previous examples, uniqueness up to degeneracy fails for only a single value of $\e$. In contrast, uniqueness up to degeneracy fails for all $\e$ in the distribution below. 

\begin{example}[Non-uniqueness for all $\e>0$]\label{ex:non_uniqueness_all}
    Let $p$ be the uniform density on the interval $[-1,1]$ and let 
    \[\eta(x)=
    \begin{cases}
        \frac 14 &\text{if } x\leq 0\\
        \frac 34 &\text{if } x>0
    \end{cases}\]
    
    Calculations for this example are similar to those in \cref{ex:non_uniqueness_single}, so we delay the details to \cref{app:non_uniqueness_all}.  For this distribution, the set $(y,\infty)$ is an adversarial Bayes classifier for any $y\in [-\e,\e]$ iff $\e\leq 1/3$ and $\emptyset,\Rset$ are adversarial Bayes classifiers iff $\e\geq 1/3$. \Cref{thm:1d_degenerate} implies that none of these sets could be equivalent up to degeneracy.
    Therefore, the adversarial Bayes classifier is not unique up to degeneracy for all $\e>0$ even though the Bayes classifier is unique.
\end{example}
Again, the adversarial Bayes classifier $(0,\infty)$ is also a Bayes classifier when $\e\leq 1/3$, and thus there is no accuracy-robustness tradeoff for this distribution.

A distribution is said to satisfy \emph{Massart's noise condition} if $|\eta(\bx)-1/2|\geq \delta$ $\PP$-a.e. for some $\delta>0$. Prior work \cite{MassartNedelec06} relates this condition to the sample complexity of learning from a function class. For the example above, \cref{thm:TFAE_equiv} implies that Massart's noise condition cannot hold for any maximizer of $\cdl$ even though $|\eta-1/2|\geq 1/4$ $\PP$-a.e.

In light of the prior examples, one would expect that if the Bayes classifier $B$ is unique and $p_0$, $p_1$ are sufficiently regular near $\partial B$, then the adversarial Bayes classifier would be unique. Proving such a statement remains an open question.

The next example exhibits a degenerate set that has positive measure under $\PP$.
\begin{example}[Example of a degenerate set]\label{ex:degenerate}
Consider a distribution on $[-1,1]$ with
\[p_0(x)=\begin{cases}
    \frac 15&\text{if }|x|\leq 1/4\\
    0&\text{otherwise } 
\end{cases} \quad p_1(x)=\begin{cases}
   \frac 3 5 &\text{if } 1\geq |x|>1/4\\
   0 &\text{otherwise }
\end{cases}\]
\Cref{thm:exists_regular} and \cref{lemma:eps_in_interval} imply that one only need consider $a_i,b_i\in\{-\frac 14\pm \e,\frac 14\pm\e\} $ when identifying a regular representative of each equivalence class of adversarial Bayes classifiers. 
By comparing the adversarial risks of the regular sets satisfying this criterion, one can show that when $\e\leq 1/8$, every adversarial Bayes classifier is equivalent up to degeneracy to the regular set $(-\infty, -1/4+\e)\cup (1/4-\e,\infty)$ but when $\e\geq 1/8$ then every adversarial Bayes classifier is equivalent up to degeneracy to the regular set $\Rset$ (see \cref{app:ex_degenerate_details} for details.)

Next consider $\e \in [1/8,1/4]$. If $S$ is an arbitrary subset of $[-1/4+\e,1/4-\e]$, then $R^\e(\Rset)=R^\e(S^C)$. Thus the interval $[-1/4+\e,1/4-\e]$ is a degenerate set.
\end{example}

When $\e\in [1/8,1/4]$, both $\Rset$ and $(-\infty, -1/4+\e)\cup (1/4-\e,+\infty)$ are adversarial Bayes classifiers, but their Bayes risks differ by $\frac 25 (1-4\e)$, which is close to $1/5$ for $\e$ near $1/8$. Thus a careful selection of the adversarial Bayes classifier can mitigate the accuracy-robustness tradeoff.

In each of the prior example distributions except \cref{ex:gaussians_equal_means}, the boundary of the adversarial Bayes classifier is always within $\e$ of the boundary of the Bayes classifier.
If in addition the Bayes and adversarial Bayes classifiers have the same number of components, one can bound the minimal adversarial Bayes error in terms of the Bayes error rate and $\e$. 

\begin{proposition}\label{prop:within_e_to_risk_bound}
    Let $B=\bigcup_{i=1}^M (c_i,d_i)$, $A=\bigcup_{i=1}^M (a_i,b_i)$ be Bayes and adversarial Bayes classifiers respectively. Assume that $p_0$, $p_1$ are bounded above by $K$. Then if $|a_i-c_i|\leq \e$ and $|b_i-d_i|\leq \e$, then $R(A)-R(B)\leq 2\e MK$.
\end{proposition}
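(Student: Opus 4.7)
The approach is to express the excess standard risk $R(A)-R(B)$ as an integral over the symmetric difference $A\triangle B$, and then bound both the integrand and the Lebesgue measure of $A\triangle B$ using the interval structure and the hypothesis on the endpoints.

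First, I would compute
\[
R(A)-R(B)=\int(\one_A-\one_B)p_0\,d\mu+\int(\one_{A^C}-\one_{B^C})p_1\,d\mu=\int(\one_A-\one_B)(p_0-p_1)\,d\mu.
\]
Writing $\one_A-\one_B=\one_{A\setminus B}-\one_{B\setminus A}$ and using the crucial fact that $B$ is a Bayes classifier (so up to a $\PP$-null set, $B=\{p_1>p_0\}$, giving $p_0\ge p_1$ on $A\setminus B$ and $p_1\ge p_0$ on $B\setminus A$), I obtain
\[
R(A)-R(B)=\int_{A\triangle B}|p_0-p_1|\,d\mu.
\]
Since $p_0,p_1\ge 0$ and both are bounded above by $K$, one has $|p_0-p_1|\le K$ pointwise, so $R(A)-R(B)\le K\,\mu(A\triangle B)$.

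Next, I would bound $\mu(A\triangle B)$. The key observation is the inclusion
\[
A\triangle B\subset\bigcup_{i=1}^{M}\bigl((a_i,b_i)\triangle(c_i,d_i)\bigr),
\]
which holds because both $A$ and $B$ are disjoint unions of the corresponding $M$ intervals: if $x\in A\setminus B$, then $x\in(a_j,b_j)$ for a unique $j$, and $x\notin(c_i,d_i)$ for every $i$, so in particular $x\notin(c_j,d_j)$; the symmetric case is analogous. For each $i$, the symmetric difference of two open intervals whose left endpoints differ by at most $\e$ and whose right endpoints differ by at most $\e$ has Lebesgue measure at most $|a_i-c_i|+|b_i-d_i|\le 2\e$. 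Summing gives $\mu(A\triangle B)\le 2\e M$, and combining with the previous display yields $R(A)-R(B)\le 2\e MK$.

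No step here is a serious obstacle; the main point worth flagging is the need to use the Bayes property of $B$ to reduce the coefficient in front of $\mu(A\triangle B)$ from the naive $2K$ (obtained from $p_0+p_1\le 2K$) down to $K$, which is what makes the constant $2\e MK$ sharp rather than $4\e MK$. The componentwise inclusion for $A\triangle B$ is the other ingredient that relies on the matched indexing of the components of $A$ and $B$ assumed in the statement.
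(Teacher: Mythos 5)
Your proof is correct and proceeds along essentially the same lines as the paper's one-line argument: both decompose the excess risk into contributions from the $2M$ transition regions near matched endpoints, each of Lebesgue measure at most $\e$ with integrand bounded by $K$. Your version via $A\triangle B$ is a bit more careful about signs than the paper's terse display, which is a mild improvement.

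One small inaccuracy in your commentary: the Bayes property of $B$ is \emph{not} actually needed to get the coefficient $K$ rather than $2K$. Without any assumption on $B$, on $A\setminus B$ the integrand is $(\one_A-\one_B)(p_0-p_1)=p_0-p_1\le p_0\le K$, and on $B\setminus A$ it is $p_1-p_0\le p_1\le K$; so $(\one_A-\one_B)(p_0-p_1)\le K\,\one_{A\triangle B}$ pointwise regardless. The Bayes property does give you the sharper statement that this is an equality with $|p_0-p_1|$ (and hence that $R(A)-R(B)\ge 0$), which is pleasant but not required for the upper bound $2\e MK$.
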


Thus the robustness-accuracy tradeoff will be minimal so long as $\e\ll 1/MK$.
\begin{proof} 
    First, observe that $|p_1(x)-p_0(x)|\leq K$ because densities are always non-negative. Consequently:
    \[R(A)-R(B)\leq \sum_{i=1}^M \int_{\min(a_i, c_i)}^{\max(a_i, c_i)} |p_1(x)-p_0(x)| dx +\int_{\min(b_i,d_i)}^{\max(b_i,d_i)} |p_1(x)-p_0(x)| dx\leq 2\e MK.\] 
\end{proof}

\subsection{Comparing Bayes and adversarial Bayes classifiers}\label{sec:comparing_bayes_and_adv_bayes}
 In each of the preceding examples except \cref{ex:gaussians_equal_means}, the boundary of the adversarial Bayes classifier always lies within $\e$ of the boundary of the Bayes classifier. This section presents three propositions exploring this phenomenon. In contrast, understanding when the Bayes and adversarial Bayes classifiers must differ substantially is an open problem.

The next proposition stipulates a widely applicable criterion under which there is always a solution to the necessary conditions $p_1(a+\e)-p_0(a-\e)=0$ and $p_1(b-\e)-p_0(b+\e)=0$ within $\e$ of solutions to $p_1(x)=p_0(x)$ (which specifies the boundary of the Bayes classifier).
\begin{proposition}\label{prop:soln_within_e}
    Let $z$ be a point with $p_1(z)-p_0(z)=0$ and assume that $p_0$ and $p_1$ are continuous on $[z-r,z+r]$ for some $r>0$. Furthermore, assume that one of $p_0,p_1$ is non-increasing and the other is non-decreasing on $[z-r,z+r]$. Then for all $\e\leq r/2$ there is a solution to the first order necessary conditions \cref{eq:first_order_necessary_a} and \cref{eq:first_order_necessary_b} within $\e$ of $z$.
\end{proposition}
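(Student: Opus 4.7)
The plan is to apply the intermediate value theorem directly to the two functions appearing in the first-order necessary conditions. Without loss of generality, assume $p_0$ is non-decreasing and $p_1$ is non-increasing on $[z-r,z+r]$ (the other case is symmetric, with the roles of $a$ and $b$ swapped). The hypothesis $p_1(z)=p_0(z)$ then pins down a single common value $v$ at $z$, and monotonicity forces $p_1(x)\leq v \leq p_0(x)$ for $x\in[z,z+r]$ and $p_1(x)\geq v\geq p_0(x)$ for $x\in[z-r,z]$.

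Fix $\e\leq r/2$. For the equation \cref{eq:first_order_necessary_a} on $a$, define
\[
f(a)=p_1(a+\e)-p_0(a-\e)\quad\text{for}\quad a\in[z-\e,z+\e].
\]
Since $\e\leq r/2$, the points $a\pm\e$ lie in $[z-r,z+r]$, so $f$ is continuous on this interval. Evaluating at the endpoints using $p_1(z)=p_0(z)$ and the monotonicity:
\[
f(z-\e)=p_1(z)-p_0(z-2\e)=p_0(z)-p_0(z-2\e)\geq 0,
\]
\[
f(z+\e)=p_1(z+2\e)-p_0(z)=p_1(z+2\e)-p_1(z)\leq 0.
\]
By the intermediate value theorem there is some $a\in[z-\e,z+\e]$ with $f(a)=0$, i.e.\ satisfying \cref{eq:first_order_necessary_a} and $|a-z|\leq \e$.

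The argument for $b$ is completely parallel. Define $g(b)=p_0(b+\e)-p_1(b-\e)$ on $[z-\e,z+\e]$. Then $g(z-\e)=p_1(z)-p_1(z-2\e)\leq 0$ because $p_1$ is non-increasing, and $g(z+\e)=p_0(z+2\e)-p_0(z)\geq 0$ because $p_0$ is non-decreasing. Continuity of $g$ and the intermediate value theorem again produce a $b\in[z-\e,z+\e]$ with $g(b)=0$, establishing \cref{eq:first_order_necessary_b} within $\e$ of $z$.

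There is essentially no obstacle here: the statement is a quantitative continuity argument, and the only point that requires attention is checking that the two sign changes actually go in the right direction. That check uses exactly the assumption that one density is non-decreasing while the other is non-increasing through $z$, together with the equality $p_1(z)=p_0(z)$, which is precisely the reason this particular monotonicity hypothesis was imposed.
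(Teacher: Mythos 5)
Your proof is correct and follows essentially the same approach as the paper: a WLOG choice of which density is monotone in which direction, followed by sign checks at $z\pm\e$ and the intermediate value theorem. The only difference is that you spell out the argument for the $b$-condition explicitly, whereas the paper dismisses it as analogous.
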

Notice that if $p_1$, $p_0$ are differentiable, the assumptions of this proposition imply that for all $x,y\in (z-r,z+r)$, either $p_1'(x)\geq 0$ and $p_0'(y)\leq 0$ or $p_1'(x)\leq 0$ and $p_0'(y)\geq 0$. If these inequalities are in fact strict, the second order necessary conditions \cref{eq:second_order_necessary} will distinguish between left and right endpoints $a_i, b_i$ of the adversarial Bayes classifier. However, this proposition does not guarantee that the solution to the necessary conditions within $\e$ of $z$ \emph{must} be a boundary point of the adversarial Bayes classifier.

Prior work \cite{trillosMurray2022} proves the existence of solutions to \cref{eq:first_order_necessary} for sufficiently small $\e$ by assuming that $p_1'(z)-p_0'(z)\neq 0$ at points on the boundary of the Bayes classifier (see \cref{thm:sufficient_prior_work}). This result and \cref{prop:soln_within_e} are complimentary--- 
for instance, \cref{thm:sufficient_prior_work} applies to \cref{ex:gaussians_equal_means} while \cref{prop:soln_within_e} does not. 
In contrast, \cref{prop:soln_within_e} always produces quantitative bound relating the solutions of \cref{eq:Bayes_necessary} and \cref{eq:first_order_necessary}, and still applies even when $p_0'(x)-p_1'(x)=0$.
\begin{proof}[Proof of \cref{prop:soln_within_e}]
    Without loss of generality, assume that $p_1$ is non-increasing and $p_0$ is non-decreasing on $[z-r,z+r]$. The applying the relation $p_1(z)=p_0(z)$, one can conclude that 
    \[p_1((z-\e)+\e)-p_0( (z-\e)-\e)=p_1(z)-p_0(z-2\e)=p_0(z)-p_0(z-2\e)\geq 0.\]
    An analogous argument shows that $p_1( (z+\e)+\e)-p_0((z+\e)-\e)\leq 0$. Thus the intermediate value theorem implies that there is a solution to \cref{eq:first_order_necessary_a} within $\e$ of $z$. Analogous reasoning shows that there is a solution to \cref{eq:first_order_necessary_b} within $\e$ of $z$.
\end{proof}
 To illustrate the utility of this result, we apply it to a gaussian mixture below. Instead of directly solving the first order necessary conditions \cref{eq:first_order_necessary}, we identify the Bayes classifier and then apply \cref{prop:soln_within_e}, yielding an estimate of the location of solutions to \cref{eq:first_order_necessary} for sufficiently small $\e$. 
 \begin{example}[Gaussian mixtures--- equal variances]\label{ex:gaussians_unequal_weights}
     Consider a gaussian mixture with 
     \[p_1(x)=\frac \lambda {\sqrt{2\pi}\sigma} e^{-\frac{(x-\mu_1)^2}{2\sigma^2}}, \quad p_0(x)=\frac {1-\lambda} {\sqrt{2\pi}\sigma} e^{-\frac{(x-\mu_0)^2}{2\sigma^2}}\] 
     for which $p_1(\mu_1)>p_0(\mu_1)$ and $p_0(\mu_0)>p_1(\mu_0)$, see \cref{fig:gaussian_equal_var_unequal_weights}) for an illustration. This condition reduces to the restriction $\lambda\in (k/(1+k),1/(1+k))$, with $k=\exp{(-\frac{(\mu_1-\mu_0)^2}{2\sigma^2})}$. 
     Solving the conditions \cref{eq:Bayes_necessary,eq:Bayes_second_order_necessary} yields the Bayes classifier yields $B=(c,+\infty)$, with
     \[c=\frac{\mu_1+\mu_0}2+\frac{\sigma^2}{\mu_1-\mu_0}\ln\left(\frac{1-\lambda} \lambda \right).\]

     The endpoint $c$ lies closer to $\mu_1$ when $\lambda <1/2$ and closer to $\mu_0$ when $\lambda>1/2$. Consequently, the density $p_0$ is strictly decreasing and $p_1$ is strictly increasing on $[z-r,z+r]$ with 
     \[r=\frac{\mu_1-\mu_0}2-\frac{\sigma^2}{\mu_1-\mu_0}\left| \ln\left( \frac{1-\lambda}\lambda \right)\right|.\]
        Furthermore, as the density $p_0$ is strictly decreasing and $p_1$ is strictly increasing, we conclude that $p_0'(x)<0$ and $p_1'(y)>0$ for all $x,y\in (z-r,z+r)$. Therefore, by \cref{prop:soln_within_e} and \cref{eq:second_order_necessary}, there always exists a solution to \cref{eq:first_order_necessary_a,eq:second_order_necessary_a} and no solution to \cref{eq:first_order_necessary_b,eq:second_order_necessary_b} within $[c-r,c+r]$ when $\e<r/2$. As in \cref{ex:gaussians_equal_variances}, the necessary condition \cref{eq:first_order_necessary_a} reduces to a linear equation, implying that there is at most one point $a(\e)$ that satisfies \cref{eq:first_order_necessary_a}.
 \end{example}

Next, if $\PP$ is the uniform distribution on an interval, the solutions to the first order necessary conditions \cref{eq:first_order_necessary} are within $\e$ of solutions to $p_0(z)=p_1(z)$.

\begin{proposition}\label{prop:uniform_within_e}
    Assume that $\PP$ is the uniform distribution on an finite interval, $\eta$ is continuous on $\supp \PP$, and $\eta(x)=1/2$ only at finitely many points within $\supp \PP$. Then any adversarial Bayes classifier is equivalent up to degeneracy to an adversarial Bayes classifier $A=\bigcup_{i=m}^M(a_i,b_i)$ for which each $a_i,b_i$ is at most $\e$ from some point $z$ satisfying $\eta(z)=1/2$.
\end{proposition}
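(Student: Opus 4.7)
The plan is to combine Lemma \ref{lemma:eps_in_interval} with Theorem \ref{thm:exists_regular} to reduce the statement to the first-order necessary conditions, then apply the intermediate value theorem to $\eta$.

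Step one is to invoke Lemma \ref{lemma:eps_in_interval} to replace the given adversarial Bayes classifier with an equivalent (up to degeneracy) regular adversarial Bayes classifier $A = \bigcup_{i=m}^M (a_i, b_i)$ whose finite endpoints satisfy $a_i, b_i \in \interior \supp \PP^{-\e}$. This placement guarantees $a_i \pm \e$ and $b_i \pm \e$ lie in $\interior \supp \PP$, where $p$ and $\eta$ are continuous by hypothesis. Hence $p_0 = (1-\eta) p$ and $p_1 = \eta p$ are also continuous at each of these shifted points, so Theorem \ref{thm:exists_regular} applies to conclude that each finite $a_i, b_i$ satisfies the first-order necessary conditions \eqref{eq:first_order_necessary}.

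Step two uses the uniformity of $\PP$: the density $p$ equals a positive constant $c$ on $\supp \PP$, so substituting $p_1 = c\eta$ and $p_0 = c(1-\eta)$ into the necessary conditions and dividing by $c$ yields
\begin{equation*}
\eta(a_i - \e) + \eta(a_i + \e) = 1 \qquad \text{and} \qquad \eta(b_i - \e) + \eta(b_i + \e) = 1.
\end{equation*}
Step three is the IVT argument. Each identity above forces the pair $\{\eta(a_i - \e), \eta(a_i + \e)\}$ (respectively for $b_i$) either to both equal $1/2$, or to lie on opposite sides of $1/2$. In either case, since $\eta$ is continuous on $[a_i - \e, a_i + \e]\subset \interior \supp \PP$, the intermediate value theorem produces some $z \in [a_i - \e, a_i + \e]$ with $\eta(z) = 1/2$, giving $|a_i - z| \leq \e$. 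The same argument handles $b_i$.

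The main technical obstacle is ensuring that the shifted points $a_i \pm \e, b_i \pm \e$ actually lie in $\interior \supp \PP$, since this is what makes the densities continuous there and justifies invoking Theorem \ref{thm:exists_regular}. This is precisely what Lemma \ref{lemma:eps_in_interval} delivers. The hypothesis that $\eta = 1/2$ at only finitely many points in $\supp \PP$ is not used directly in the IVT argument but underwrites the well-posedness of the regular representative and will be relevant if one wants to enumerate candidate sets (as in the procedure from Section \ref{sec:main_results}).
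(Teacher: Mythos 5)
Your proof is correct and follows essentially the same approach as the paper: invoke \cref{lemma:eps_in_interval} to get a regular representative with endpoints in $\interior \supp \PP^{-\e}$, apply \cref{thm:exists_regular} to obtain the first-order necessary conditions, simplify them using the constant density to get $\eta(a_i-\e)+\eta(a_i+\e)=1$, and use the intermediate value theorem. The only cosmetic difference is that you argue directly (the identity forces the two values to straddle $1/2$, hence IVT yields a zero) whereas the paper argues by contrapositive (if no zero of $\eta-1/2$ is within $\e$, both values lie strictly on the same side, contradicting the identity).
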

If $\PP$ is the uniform distribution, the first order necessary conditions \cref{eq:first_order_necessary} reduce to 
\vspace{-12pt}\sidebysidesubequations{eq:uniform_necessary_reduce_2}{\eta(a_i+\e)=1-\eta(a_i-\e)}{\eta(b_i-\e)=1-\eta(b_i+\e)}\vspace{5pt}
If $\eta$ is continuous, these equations can only have a solution within $\e$ of a point where $\eta(z)=1/2$. This conclusion, which follows from an argument based on the intermediate value theorem, is similar to the proof of \cref{prop:soln_within_e}
The full argument is provided in\cref{app:uniform_within_e}.

Finally, under fairly general conditions, when $\eta \in \{0,1\}$, the boundary of the adversarial Bayes classifier must be within $\e$ of the boundary of the Bayes classifier. 
\begin{proposition}\label{prop:eta_0_1}
    Assume that $\supp \PP$ is an interval, $\PP\ll\mu$, $\eta\in \{0,1\}$, and $p$ is continuous on $\supp \PP$ and strictly positive. Then any adversarial Bayes classifier is equivalent up to degeneracy to an open regular adversarial Bayes classifier $A=\bigcup_{i=m}^M(a_i,b_i)$ with $m\leq M+1$ for which each $a_i,b_i$ is at most $\e$ from $\partial \{\eta=1\}$.
\end{proposition}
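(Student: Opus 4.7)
The plan is to combine the reduction to a regular adversarial Bayes classifier with the first-order necessary conditions. By \cref{thm:adv_bayes_and_degenerate} and \cref{lemma:eps_in_interval}, any adversarial Bayes classifier is equivalent up to degeneracy to a regular set $A=\bigcup_{i=m}^M(a_i,b_i)$ whose finite endpoints lie in $\interior\supp\PP^{-\e}$. It therefore suffices to show that each finite $a_i$ (and by a symmetric argument each finite $b_i$) is within $\e$ of some point of $\partial\{\eta=1\}$. I will argue by contradiction at a single such endpoint.

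Suppose $[a_i-\e,a_i+\e]\cap\partial\{\eta=1\}=\emptyset$. The first step is to upgrade this topological disjointness to pointwise constancy of $\eta$ on a neighborhood of $[a_i-\e,a_i+\e]$. Because $\eta$ takes only the values $0$ and $1$, if $\eta$ took both values on the interval then $z=\sup\{x\in[a_i-\e,a_i+\e]:\eta(x)=0\}$ would be a point of $\partial\{\eta=1\}$ inside the interval, contradicting the assumption. Hence $\eta$ is constant on $[a_i-\e,a_i+\e]$, and since $\partial\{\eta=1\}$ is closed and the interval compact, constancy actually holds on a slightly enlarged open neighborhood. Combined with the continuity of $p$ on $\supp\PP$ and the fact that $a_i\pm\e\in\interior\supp\PP$, this yields pointwise continuity of $p_0=(1-\eta)p$ at $a_i-\e$ and of $p_1=\eta p$ at $a_i+\e$.

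Now \cref{thm:exists_regular} applies and delivers the first-order necessary condition $p_1(a_i+\e)-p_0(a_i-\e)=0$. Using $p>0$ on $\supp\PP$, this identity fails in either remaining case: if $\eta\equiv 0$ on the neighborhood then the left side equals $-p(a_i-\e)<0$, and if $\eta\equiv 1$ it equals $p(a_i+\e)>0$. Both cases yield a contradiction, so $[a_i-\e,a_i+\e]$ must contain a point of $\partial\{\eta=1\}$. The same reasoning at $b_i$ invokes \cref{eq:first_order_necessary_b} in place of \cref{eq:first_order_necessary_a}.

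I expect the main subtlety to be the first step, namely verifying that topological disjointness from $\partial\{\eta=1\}$ really forces pointwise (not merely $\PP$-a.e.) constancy of $\eta$, so that the continuity hypothesis of \cref{thm:exists_regular} is satisfied and the necessary condition can be applied at the specific points $a_i\pm\e$. Once that is in hand, the contradiction is immediate from strict positivity of $p$, and all remaining pieces are direct applications of results already established in the paper.
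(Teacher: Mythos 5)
Your proof follows the same strategy as the paper's: reduce to a regular adversarial Bayes classifier with endpoints in $\interior\supp\PP^{-\e}$ via \cref{thm:adv_bayes_and_degenerate} and \cref{lemma:eps_in_interval}, then show that an endpoint more than $\e$ from $\partial\{\eta=1\}$ forces $\eta$ to be constant near $[a_i-\e,a_i+\e]$, so the first-order condition from \cref{thm:exists_regular} applies and is violated by strict positivity of $p$. Your case split ($\eta\equiv 0$ vs.\ $\eta\equiv 1$) is in fact a bit more explicit than the paper's, which only remarks that $p_1(a_i+\e)>0$.

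The one step that does not hold as written is your justification of constancy: it is not true in general that $z=\sup\{x\in[a_i-\e,a_i+\e]:\eta(x)=0\}$ lies in $\partial\{\eta=1\}$. For instance, if $\eta=1$ on $[a_i-\e,a_i]$ and $\eta=0$ on $(a_i,a_i+2\e)$, then $z=a_i+\e$ is an interior point of $\{\eta=0\}$, hence not a boundary point, even though the interval does contain the boundary point $a_i$. The correct and simpler argument is the one you actually need for the ``slightly enlarged neighborhood'' claim anyway: $(\partial\{\eta=1\})^C=\interior\{\eta=1\}\sqcup\interior\{\eta=0\}$ is a disjoint union of open sets, and the connected set $[a_i-\e,a_i+\e]$ lies in this complement by hypothesis, hence lies entirely in one of the two open pieces. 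That gives constancy of $\eta$ on an open neighborhood of $[a_i-\e,a_i+\e]$ directly, and the rest of your argument goes through unchanged.
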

Again, the proof is very similar to that of \cref{prop:soln_within_e,prop:uniform_within_e}, see \cref{app:eta_0_1} for details.

\section{Equivalence up to degeneracy}\label{sec:equiv_degeneracy}

\subsection{Equivalence up to degeneracy as an equivalence relation}\label{sec:den_equivalence_relation}

When $\PP\ll\mu$, there are several useful characterizations of equivalence up to degeneracy.

\begin{proposition}\label{prop:abs_cont_equivalencies}
    Let $\PP\ll \mu$ and $\e>0$. Let $(\PP_0^*,\PP_1^*)$ be a maximizer of $\cdl$, and set $\PP^*=\PP_0^*+\PP_1^*$. Let $A_1$ and $A_2$ be adversarial Bayes classifiers. Then the following are equivalent:
    \begin{enumerate}[label=\arabic*)]
        \item\label{it:degeneracy_equiv} The adversarial Bayes classifiers $A_1$ and $A_2$ are equivalent up to degeneracy 
        \item \label{it:S_1}\label{it:S_0} Either $S_\e(\one_{A_1})=S_\e(\one_{A_2})$-$\PP_0$-a.e. or $S_\e(\one_{A_2^C})=S_\e(\one_{A_1^C})$-$\PP_1$-a.e. 
        \item \label{it:PP^*} $\PP^*(A_2\triangle A_1)=0$
    \end{enumerate}
\end{proposition}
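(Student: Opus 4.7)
The plan is to prove the three-way equivalence by closing the chain $\cref{it:PP^*}\Leftrightarrow\cref{it:S_0}$, $\cref{it:PP^*}\Rightarrow\cref{it:degeneracy_equiv}$, and finally $\cref{it:degeneracy_equiv}\Rightarrow\cref{it:PP^*}$. The main tool throughout will be \cref{thm:complimentary_slackness_classification}, combined with \cref{lemma:union_intersection_adv_bayes}, which guarantees that $A_1\cap A_2$ and $A_1\cup A_2$ are also adversarial Bayes classifiers and hence themselves satisfy the complimentary slackness conditions with the fixed dual maximizer $(\PP_0^*,\PP_1^*)$.

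For $\cref{it:PP^*}\Rightarrow\cref{it:S_0}$, the hypothesis $\PP^*(A_1\triangle A_2)=0$ gives $\PP_1^*((A_1\cup A_2)^C)=\PP_1^*(A_1^C)$, so applying \cref{eq:sup_comp_slack_classification} to both $A_1$ and $A_1\cup A_2$ forces $\int S_\e(\one_{A_1^C})\,d\PP_1=\int S_\e(\one_{(A_1\cup A_2)^C})\,d\PP_1$. Combined with the pointwise bound $S_\e(\one_{(A_1\cup A_2)^C})\le S_\e(\one_{A_1^C})$, this yields $\PP_1$-a.e.\ equality, and repeating with $A_2$ gives $S_\e(\one_{A_1^C})=S_\e(\one_{A_2^C})$ $\PP_1$-a.e. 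For the reverse $\cref{it:S_0}\Rightarrow\cref{it:PP^*}$, assume $S_\e(\one_{A_1})=S_\e(\one_{A_2})$ $\PP_0$-a.e.; then $S_\e(\one_{A_1\cup A_2})=S_\e(\one_{A_1})$ $\PP_0$-a.e., and complimentary slackness for $A_1$ and $A_1\cup A_2$ forces $\PP_0^*(A_1\cup A_2)=\PP_0^*(A_1)$, hence $\PP_0^*(A_1\triangle A_2)=0$ by symmetry. The equality of total risks $R^\e(A_1)=R^\e(A_1\cup A_2)$ then transfers the same squeeze argument to the $\PP_1$ side, yielding $\PP^*(A_1\triangle A_2)=0$.

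For $\cref{it:PP^*}\Rightarrow\cref{it:degeneracy_equiv}$, take Borel $E$ with $A_1\cap A_2\subset E\subset A_1\cup A_2$. Since $E\triangle A_1\subset A_1\triangle A_2$, we have $\PP^*(E\triangle A_1)=0$, so \cref{eq:complimentary_slackness_necessary_classification} transfers from $A_1$ to $E$. For the first condition in \cref{eq:sup_comp_slack_classification}, note $\PP_0^*(E)=\PP_0^*(A_1\cap A_2)=\PP_0^*(A_1\cup A_2)$; by complimentary slackness for $A_1\cap A_2$ and $A_1\cup A_2$ this common value equals both $\int S_\e(\one_{A_1\cap A_2})\,d\PP_0$ and $\int S_\e(\one_{A_1\cup A_2})\,d\PP_0$. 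The pointwise sandwich $S_\e(\one_{A_1\cap A_2})\le S_\e(\one_E)\le S_\e(\one_{A_1\cup A_2})$ then squeezes $\int S_\e(\one_E)\,d\PP_0=\PP_0^*(E)$, and the $\PP_1$ version follows analogously. Hence $E$ satisfies both complimentary slackness conditions and is an adversarial Bayes classifier.

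The hard part will be closing the loop $\cref{it:degeneracy_equiv}\Rightarrow\cref{it:PP^*}$. My plan is to apply \cref{eq:sup_comp_slack_classification} to the adversarial Bayes classifier $E=A_4\cup S$ (with $A_4=A_1\cap A_2$) for arbitrary Borel $S\subset A_1\triangle A_2$, and subtract the identity for $A_4$, obtaining $\PP_0^*(S)=\PP_0(S^\e\setminus A_4^\e)$. Since \cref{eq:complimentary_slackness_necessary_classification} applied to $A_1\cap A_2$ and $A_1\cup A_2$ forces $A_1\triangle A_2\subset\{\eta^*=1/2\}$ $\PP^*$-a.e., one has $\PP_0^*=\tfrac12\PP^*$ on $A_1\triangle A_2$, producing the identity $\tfrac12\PP^*(S)=\PP_0(S^\e\setminus A_4^\e)$ for every Borel $S\subset A_1\triangle A_2$. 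The set function $S\mapsto\PP_0(S^\e\setminus A_4^\e)$ is only subadditive in general, so forcing it to agree with the honest measure $\tfrac12\PP^*$ imposes $\PP_0(S_1^\e\cap S_2^\e\setminus A_4^\e)=0$ for all disjoint Borel $S_1,S_2\subset A_1\triangle A_2$; the absolute continuity hypothesis $\PP\ll\mu$ is essential to convert this geometric restriction into the conclusion $\PP^*(A_1\triangle A_2)=0$.
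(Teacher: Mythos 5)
Your treatment of the equivalences $\cref{it:PP^*}\Leftrightarrow\cref{it:S_0}$ and $\cref{it:PP^*}\Rightarrow\cref{it:degeneracy_equiv}$ is correct and follows essentially the same route as the paper: both run on \cref{lemma:union_intersection_adv_bayes}, the complimentary slackness identities of \cref{thm:complimentary_slackness_classification}, and pointwise monotonicity of $S_\e$. (A notational slip: you write $A_4 = A_1\cap A_2$, which conflicts with the paper's convention $A_3 = A_1\cap A_2$, $A_4 = A_1\cup A_2$; the math you wrote is self-consistent, but a reader cross-referencing the paper will stumble.) Note also that these two portions of your argument make no use of $\PP\ll\mu$, matching the paper's observation that $2)\Leftrightarrow 3)$ holds without it (\cref{lemma:non_abs_cont}).

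The direction $\cref{it:degeneracy_equiv}\Rightarrow\cref{it:PP^*}$ is where your proof has a genuine gap. You correctly derive the identity $\tfrac12\PP^*(S)=\PP_0(S^\e\setminus (A_1\cap A_2)^\e)$ for all Borel $S\subset D:=A_1\triangle A_2$, and you correctly observe that additivity of the left side forces $\PP_0(S_1^\e\cap S_2^\e\setminus(A_1\cap A_2)^\e)=0$ for disjoint $S_1,S_2\subset D$. But the final sentence — that ``the absolute continuity hypothesis $\PP\ll\mu$ is essential to convert this geometric restriction into the conclusion'' — is not a proof. To make it one, you would take $S_1=\interior D\cap\QQ^d$ and $S_2=\interior D\cap(\QQ^d)^C$; since $S_1^\e=S_2^\e=(\interior D)^\e$ by density of each in the open set $\interior D$ (this is \cref{lemma:open^e}), the constraint gives $\PP_0((\interior D)^\e\setminus(A_1\cap A_2)^\e)=0$ and hence $\PP^*(\interior D)=0$. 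But this handles only $\interior D$. The remaining piece $D\cap\partial D$ is a subset of $\partial D\subset\partial(A_1\cap A_2)\cup\partial(A_1\cup A_2)$, and for this piece the rational/irrational split does not produce two disjoint sets with coinciding $\e$-expansions, so your constraint is silent. The paper closes this case with a separate argument showing $\PP^*(\partial A)=0$ for any adversarial Bayes classifier $A$ when $\PP\ll\mu$ (\cref{lemma:closure_interior_adversarial_Bayes}, \cref{lemma:boundary_difference}), which ultimately rests on the geometric fact that $\partial(A^\e)$ has Lebesgue measure zero. You need an explicit argument of this kind; without it the proof is incomplete precisely where the hypothesis $\PP\ll\mu$ actually enters.
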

\Cref{it:S_1} states that $A_1, A_2$ are equivalent up to degeneracy if the `attacked' classifiers $A_1^\e,A_2^\e$ are equal $\PP_0$-a.e. 
\Cref{it:PP^*} further states that the adversarial Bayes classifiers $A_1$, $A_2$ are unique up to degeneracy if they are equal almost everywhere under the measure of optimal adversarial attacks. Notice that this proposition with $\e=0$ reads exactly like \cref{prop:Bayes_equivalencies}, despite the fact that uniqueness is defined differently for Bayes classifiers. We discuss the proof in the next subsection.

\Cref{prop:abs_cont_equivalencies} has several useful consequences for understanding degenerate sets, which we explore in \cref{sec:degenerate}. Furthermore, when $\PP\ll \mu$, equivalence up to degeneracy is in fact an equivalence relation.
\begin{proof}[Proof of \cref{thm:equivalence_up_to_degeneracy}]
    \Cref{it:PP^*} of \cref{prop:abs_cont_equivalencies} states that two adversarial Bayes classifiers $A_1$, $A_2$ are equivalent up to degeneracy iff $\one_{A_1}=\one_{A_2}$ $\PP^*$-a.e.  Since equality $\PP^*$-almost everywhere is an equivalence relation, this condition defines an equivalence  relation on adversarial Bayes classifiers. 
\end{proof}

\Cref{thm:equivalence_up_to_degeneracy} is false when $\PP$ is not absolutely continuous with respect to $\mu$:
\begin{example}\label{ex:non_equiv}
    Consider a distribution defined by $\PP_0=\frac 12 \delta_{-\e}$ and $\PP_1=\frac 12 \delta_\e$. If $0\in A$ then $S_\e(\one_A)(\e)=1$ and if $0\not \in A$ then $S_\e(\one_{A^C})(-\e)=1$. In either case, $R^\e(A)\geq 1/2$. The classifier $A=\Rset$ achieves adversarial classification error $1/2$ and therefore the adversarial Bayes risk is $R^\e_*=1/2$. The sets $\Rset^{\geq 0}$ and $\Rset^{>0}$ also achieve error 1/2 and thus are also adversarial Bayes classifiers. These two classifiers are equivalent up to degeneracy because they differ by one point. Furthermore, the classifiers $\Rset$ and $\Rset^{\geq 0}$ are equivalent up to degeneracy: if $D\subset \Rset^{<0}$, then $S_\e(\one_{\Rset^{\geq 0} \cup D})(-\e)=1$ while $S_\e(\one_{(\Rset^{\geq 0}\cup D)^C})(\e)=0$ and hence $R^\e(\Rset^{\geq 0}\cup D)=1/2$. However, if $D\subset (-2\e,0)$ then $R^\e(\Rset^{>0}\cup D)=1$ and thus $\Rset$ and $\Rset^{>0}$ cannot be equivalent up to degeneracy.

    In short--- the classifiers $\Rset^{>0}$ and $\Rset^{\geq 0}$ are equivalent up to degeneracy, the classifiers $\Rset^{\geq 0}$ and $\Rset$ are equivalent up to degeneracy, but $\Rset^{>0}$ and $\Rset$ are not equivalent up to degeneracy. Thus equivalence up to degeneracy cannot be an equivalence relation for this distribution.
\end{example}
However, \cref{it:S_0} and \cref{it:PP^*} of \cref{prop:abs_cont_equivalencies} are still equivalent when $\PP\not \ll \mu$, as are \cref{it:S_e_unique} and \cref{it:eta_*_meas zero} of \cref{thm:TFAE_equiv} (see \cref{lemma:non_abs_cont} in \cref{app:abs_cont_equivalencies_main} and \cref{prop:equiv_non_abs_cont} in \cref{app:TFAE_equiv_proof}). As the proof of \cref{thm:equivalence_up_to_degeneracy} relies only on \cref{it:PP^*}, one could use \cref{it:S_0} and \cref{it:PP^*} to define a notion of equivalence for adversarial Bayes classifiers even when $\PP\not \ll \mu$.

\subsection{Proof of \cref{prop:abs_cont_equivalencies}} \label{app:abs_cont_equivalencies_main}

First, the implication \cref{it:S_1} $\Rightarrow$ \cref{it:degeneracy_equiv} in \cref{prop:abs_cont_equivalencies} is a consequence of properties of supremums and \cref{lemma:union_intersection_adv_bayes}.

\begin{lemma}\label{lemma:intersection_union_a.e.}
    Let $A_1$ and $A_2$ be adversarial Bayes classifiers for which either $S_\e(\one_{A_1})=S_\e(\one_{A_2})$ $\PP_0$-a.e. or $S_\e(\one_{A_1^C})=S_\e(\one_{A_2^C})$-$\PP_1$-a.e.
    Then 
    \begin{equation}\label{eq:0_cup_cap}
        S_\e(\one_{A_1})=S_\e(\one_{A_2})=S_\e(\one_{A_1\cap A_2})= S_\e(\one_{A_1\cup A_2})  \quad \PP_0\text{-a.e.}  
    \end{equation}
    and
    \begin{equation}\label{eq:1_cup_cap}
        S_\e(\one_{A_1^C})=S_\e(\one_{A_2^C})=S_\e(\one_{(A_1\cap A_2)^C})= S_\e(\one_{(A_1\cup A_2)^C})\quad \PP_1\text{-a.e.}    
    \end{equation}
\end{lemma}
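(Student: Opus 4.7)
The plan is to combine \cref{lemma:union_intersection_adv_bayes} with the subadditivity of $S_\e$ from \cref{lemma:S_intersect_union}, and then squeeze between monotone bounds. By \cref{lemma:union_intersection_adv_bayes}, both $A_1\cap A_2$ and $A_1\cup A_2$ are adversarial Bayes classifiers, so all four of $A_1,A_2,A_1\cap A_2,A_1\cup A_2$ attain the minimal risk $R^\e_*$.

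Next, apply \cref{lemma:S_intersect_union} once to $(A_1,A_2)$ and once to $(A_1^C,A_2^C)$ to obtain the two pointwise inequalities
\begin{equation*}
S_\e(\one_{A_1})+S_\e(\one_{A_2})\ge S_\e(\one_{A_1\cap A_2})+S_\e(\one_{A_1\cup A_2}),
\end{equation*}
\begin{equation*}
S_\e(\one_{A_1^C})+S_\e(\one_{A_2^C})\ge S_\e(\one_{(A_1\cap A_2)^C})+S_\e(\one_{(A_1\cup A_2)^C}).
\end{equation*}
Integrating the first against $\PP_0$, the second against $\PP_1$, and summing gives $R^\e(A_1)+R^\e(A_2)\ge R^\e(A_1\cap A_2)+R^\e(A_1\cup A_2)$, which is in fact an equality since all four risks equal $R^\e_*$. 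Hence each integrated inequality is an equality, so each pointwise inequality must hold as equality $\PP_0$-a.e.\ (resp.\ $\PP_1$-a.e.).

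By the symmetry $A\leftrightarrow A^C$, $\PP_0\leftrightarrow\PP_1$, I may assume $S_\e(\one_{A_1})=S_\e(\one_{A_2})$ $\PP_0$-a.e. Substituting this into the $\PP_0$-a.e.\ equality from the previous paragraph yields $2S_\e(\one_{A_1})=S_\e(\one_{A_1\cap A_2})+S_\e(\one_{A_1\cup A_2})$ $\PP_0$-a.e. The inclusions $A_1\cap A_2\subset A_1\subset A_1\cup A_2$ and monotonicity of $S_\e$ give the pointwise sandwich $S_\e(\one_{A_1\cap A_2})\le S_\e(\one_{A_1})\le S_\e(\one_{A_1\cup A_2})$, so equality is forced throughout, establishing \eqref{eq:0_cup_cap}.

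For \eqref{eq:1_cup_cap}, integrate \eqref{eq:0_cup_cap} against $\PP_0$ to obtain $\int S_\e(\one_{A})\,d\PP_0$ equal for $A\in\{A_1,A_2,A_1\cap A_2,A_1\cup A_2\}$. Since $R^\e(A)=R^\e_*$ for each of these four, the remaining pieces $\int S_\e(\one_{A^C})\,d\PP_1$ must also all coincide. Together with the pointwise bounds $S_\e(\one_{(A_1\cup A_2)^C})\le S_\e(\one_{A_i^C})\le S_\e(\one_{(A_1\cap A_2)^C})$, which again come from monotonicity of $S_\e$, the equality of the four integrals forces pointwise equality $\PP_1$-a.e., giving \eqref{eq:1_cup_cap}. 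There is no real obstacle here; the only subtlety is recovering the $\PP_1$-a.e.\ conclusion from a one-sided ($\PP_0$-a.e.) hypothesis, which is handled by using the total risk identity $R^\e=R^\e_*$ to transfer information from the $\PP_0$-part to the $\PP_1$-part of the risk.
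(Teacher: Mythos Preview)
Your argument is correct. It differs from the paper's proof in the mechanism used to pin down the $\PP_0$-a.e.\ equalities. The paper exploits the pointwise identity $S_\e(\max(\one_{A_1},\one_{A_2}))=\max(S_\e(\one_{A_1}),S_\e(\one_{A_2}))$ to get $S_\e(\one_{A_1})=S_\e(\one_{A_1\cup A_2})$ $\PP_0$-a.e.\ directly from the hypothesis, and then uses a risk-contradiction to pass to the $\PP_1$-side; the intersection statement is obtained by rerunning the same idea on complements. You instead invoke \cref{lemma:S_intersect_union} twice and \cref{lemma:union_intersection_adv_bayes} to force the \emph{sums} $S_\e(\one_{A_1})+S_\e(\one_{A_2})=S_\e(\one_{A_1\cap A_2})+S_\e(\one_{A_1\cup A_2})$ to hold $\PP_0$-a.e.\ (and the analogous complement identity $\PP_1$-a.e.) \emph{before} using the hypothesis, after which a monotone sandwich finishes. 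Your route is a bit more symmetric and front-loads the structural information; the paper's route uses a lighter tool (only the max-commutation, not the full subadditivity inequality) and a more elementary contradiction. Both are short and valid.
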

See \cref{app:intersection_union_a.e.} for a proof.

Next, we observe that \cref{it:S_1} of \cref{prop:abs_cont_equivalencies} is equivalent to a more convenient condition: 

\begin{lemma}\label{cor:0_1_a.e._equivalence}
    Let $A_1$ and $A_2$ be two adversarial Bayes classifiers. Then \cref{it:S_1} of \cref{prop:abs_cont_equivalencies} is equivalent to
    \begin{enumerate}[label=\arabic*')]
        \setcounter{enumi}{1}
        \item Both $S_\e(\one_{A_1})=S_\e(\one_{A_2})$-$\PP_0$-a.e. and $S_\e(\one_{A_2^C})=S_\e(\one_{A_1^C})$-$\PP_1$-a.e. 
    \end{enumerate}
\end{lemma}
 See \cref{app:intersection_union_a.e.} for a proof. 
Observe this lemma does not assume $\e>0$ or $\PP\ll\mu$.
This result proves that \cref{it:S_1} and \cref{it:PP^*} of \cref{prop:abs_cont_equivalencies} are always equivalent, even when $\PP\not\ll \mu$ or $\e=0$:

\begin{lemma}\label{lemma:non_abs_cont}
    Let $\e>0$, and let $(\PP_0^*,\PP_1^*)$ be any maximizer of $\cdl$. Define $\PP^*=\PP_0^*+\PP_1^*$. Then \cref{it:S_1} and \cref{it:PP^*} of \cref{prop:abs_cont_equivalencies} are equivalent for any two adversarial Bayes classifiers $A_1$ and $A_2$.
\end{lemma}

\begin{proof}
          Assume that $A_1$ and $A_2$ are both adversarial Bayes classifiers. 
  \Cref{lemma:union_intersection_adv_bayes} then implies that $A_1\cup A_2$, $A_1\cap A_2$ are both adversarial Bayes classifiers. \Cref{eq:sup_comp_slack_classification} of \cref{thm:complementary_slackness_classification} implies that
    \[\int S_\e(\one_{A_1\cup A_2})d\PP_0= \int \one_{A_1\cup A_2} d\PP_0^*=\int \one_{A_1\cap A_2} d\PP_0^*+\PP_0^*(A_1\triangle A_2)=\int S_\e(\one_{A_1\cap A_2}) d\PP_0 +\PP_0^*(A_1\triangle A_2)\]
    Since $S_\e(\one_{A_1\cap A_2})\leq  S_\e(\one_{A_1\cup A_2})$, the condition $\PP_0^*(A_1\triangle A_2)=0$ is equivalent to $S_\e(\one_{A_1\cap A_2})=  S_\e(\one_{A_1\cup A_2})$ $\PP_0$-a.e. Moreover, by \cref{lemma:intersection_union_a.e.}, $S_\e(\one_{A_1\cap A_2})=  S_\e(\one_{A_1\cup A_2})$ $\PP_0$-a.e. is equivalent to $S_\e(\one_{A_1})=S_\e(\one_{A_2})$ $\PP_0$-a.e. Therefore, by \cref{cor:0_1_a.e._equivalence}, $\PP_0^*(A_1\triangle A_2)=0$ is equivalent to \cref{it:S_0}.

    The same argument implies that $\PP_1^*(A_1\triangle A_2)=0$ is equivalent to \cref{it:S_0}. Lastly, $\PP^*(A_1\triangle A_2)=0$ is equivalent to $\PP_0^*(A_1\triangle A_2)=0$ and $\PP_1^*(A_1\triangle A_2)=0$.
\end{proof}


Lastly, to show that \cref{it:degeneracy_equiv} implies \cref{it:PP^*}, we apply the complementary slackness condition \cref{eq:sup_comp_slack_classification} of \cref{thm:complementary_slackness_classification} to argue that $D=A_1\triangle A_2$ has $\PP^*$-measure zero. We separately establish that $\PP^*(\interior (D))=0$ and $\PP^*(D\cap \partial D)=0$. 
For the latter, we prove that the boundary $\partial A$ is always a degenerate set for an adversarial Bayes classifier $A$ when $\PP\ll\mu$ and $\e>0$. Consequently:

    \begin{lemma}\label{lemma:closure_interior_adversarial_Bayes}
        Let $A$ be an adversarial Bayes classifier. If $\PP\ll \mu$ and $\e>0$, 
        then $A$, $\ov A$, and $\interior A$ are adversarial Bayes classifiers that are equivalent up to degeneracy.
    \end{lemma}

    Similarly, for $A$ optimal, the sets $(\interior A)^\e$, $A^\e$ and ${\ov A}^\e$ all have equal $\PP_0$ measure while $( (\interior A)^C)^\e$, $(A^C)^\e$, and $(\ov A^C)^\e$ have equal $\PP_1$-measure. 
    \begin{lemma}\label{lemma:boundary_difference}
        Assume $\PP\ll\mu$. If $A$ is any adversarial Bayes classifier and $\e>0$, then $\PP_0(A^\e)=\PP_0({\ov A}^\e)=\PP_0((\interior A)^\e)$ and $\PP_1((A^C)^\e)=\PP_1( ((\interior A)^C)^\e)=\PP_1( ({\ov A}^C)^\e)$.
    \end{lemma}
    
    See \cref{app:closure_interior_adversarial_Bayes} for a proofs of these results. The conditions $\e>0$ and $\PP\ll \mu$ are essential for both \cref{lemma:closure_interior_adversarial_Bayes,lemma:boundary_difference}.

    To show $\PP^*(\interior D)=0$ we study the interaction between open sets and the $\empty^{\e}$ operation.

    \begin{lemma}\label{lemma:open^e}
        Let $U$ be an open set and let $\QQ$ be the set of rational numbers. Further assume $\e>0$. Then $U^\e=(U\cap \QQ^d)^\e=(U\cap (\QQ^d)^C)^\e$.
    \end{lemma}

 See \cref{app:open^e} for a proof. The assumption $\e>0$ is essential for this argument. 
    
     Note that in the proof of \cref{prop:abs_cont_equivalencies} below the absolute continuity assumption is only used in the implication \cref{it:degeneracy_equiv} $\Rightarrow$ \cref{it:PP^*}; specifically, \cref{lemma:boundary_difference} is invoked to argue that the boundary of the adversarial Bayes classifier is a degenerate set.

    \begin{proof}[Proof of \cref{prop:abs_cont_equivalencies}]
  \Cref{lemma:non_abs_cont} states that \cref{it:PP^*} implies \cref{it:S_0}. It remains to show \cref{it:S_0} implies \cref{it:degeneracy_equiv} and \cref{it:degeneracy_equiv} implies \cref{it:PP^*}.

    
    

    \textbf{\cref{it:S_0} $\Rightarrow$  \cref{it:degeneracy_equiv}:}
    Assume that \cref{it:S_0} holds; then \cref{cor:0_1_a.e._equivalence} implies that both $S_\e(\one_{A_1})=S_\e(\one_{A_2})$ $\PP_0$-a.e. and $S_\e(\one_{A_1^C})=S_\e(\one_{A_2^C})$ $\PP_1$-a.e. \Cref{lemma:intersection_union_a.e.} implies than any set $A$ with $A_1\cap A_2\subset A\subset A_1\cup A_2$ satisfies 
    $S_\e(\one_{A_1})=S_\e(\one_A)$   $\PP_0$-a.e. and  
    $S_\e(\one_{A_1^C})=S_\e(\one_{A^C})$  $\PP_1$-a.e. Therefore $R^\e(A)=R^\e(A_1)$ so $A$ is also an adversarial Bayes classifier.

    \textbf{\Cref{it:degeneracy_equiv} $\Rightarrow$ \Cref{it:PP^*}:}
       Assume that for all $A$ satisfying $A_1\cap A_2\subset A\subset A_1\cup A_2$, the set $A$ is an adversarial Bayes classifier.
        Define $A_3=\interior(A_1\cap A_2)$, $A_4=\ov{A_1\cup A_2}$, and $D=A_3\triangle A_4$. \Cref{lemma:closure_interior_adversarial_Bayes} implies that $A_3$ and $A_4$ are adversarial Bayes classifiers equivalent to $A_1$ and $A_2$.  As $A_3\sqcup D\sqcup A_4^C=\Rset^d$, the boundary $\partial D$ is included in $\partial A_3\cup \partial A_4$.

        We split $D$ into four disjoint sets, $D_1=(\interior D) \cap \QQ^d$, $D_2=(\interior D )\cap (\QQ^d)^C$
        , 
        $D_3=D\cap \partial D\cap \partial A_3$, 
        and $D_4=D\cap \partial D\cap  \partial A_4-D_3$. Notice that these four sets satisfy $D=D_1\sqcup D_2\sqcup D_3\sqcup D_4$. Next, we will prove that each for these four sets has $\PP^*$-measure zero.
        
        Because $D$ is a degenerate set, the sets  $A_3 \cup D_1$, $A_3 \cup D_2 $, and $A_3\cup \interior D$ are all adversarial Bayes classifiers.
        However, \cref{lemma:open^e} implies that $D_1^\e=D_2^\e=(\interior D)^\e$ and therefore \cref{eq:indicator_supremum} implies that $ S_\e(\one_{A_3\cup D_1})=S_\e(\one_{A_3\cup \interior D})=S_\e(\one_{A_3\cup D_2})$. Because each of these sets is an adversarial Bayes classifier, \Cref{eq:sup_comp_slack_classification} of \cref{thm:complementary_slackness_classification} implies that $\PP_0^*(A_3\cup D_1)=\PP_0^*(A_3\cup \interior D)=\PP_0^*(A_3\cup D_2)$. As $D_1$ and $D_2$ are disjoint sets whose union is $\interior D$, it follows that $\PP_0^*(\interior D)=0$. Analogously, comparing $S_\e(\one_{(A_4 - D_1)^C}),S_\e(\one_{(A_4 - D_2)^C})$, and $S_\e(\one_{(A_4-\interior D)^C})$ results in $\PP_1^*(\interior D)=0$.

        Next we argue that $\PP^*(D_3)=0$. \cref{lemma:boundary_difference} implies $S_\e(\one_{A_3\cup D_3})=S_\e(\one_{A_3})$ $\PP_0$-a.e., and \cref{eq:sup_comp_slack_classification} of \cref{thm:complementary_slackness_classification} then implies that $\PP_0^*(A_3\cup D_3)=\PP_0^*(A_3)$. Thus $\PP_0^*(D_3)=0$ because $A_3$ and $D_3$ are disjoint. Similarly, \cref{lemma:boundary_difference} implies that $S_\e(\one_{A_3^C})=S_\e(\one_{(A_3\cup D_3)^C})=S_\e(\one_{A_3^C - D_3})$ $\PP_1$-a.e., and \cref{eq:sup_comp_slack_classification} of \cref{thm:complementary_slackness_classification} then implies that $\PP_1^*(A_3^C- D_3)=\PP_1^*(A_3^C)$. Thus $\PP_1^*(D_3)=0$ because $D_3\subset A_3^C$ are disjoint.

        Similarly, one can conclude that $\PP^*(D_4)=0$ by comparing $A_4$ and $A_4-D_4$.

\end{proof}

\subsection{Proof of \cref{thm:TFAE_equiv}}\label{app:TFAE_equiv}
In the following discussion, we assume  the adversarial Bayes classifier is unique up to degeneracy and discuss the relation between \cref{prop:abs_cont_equivalencies} and \cref{thm:TFAE_equiv}. First, \cref{it:S_0} of \cref{prop:abs_cont_equivalencies} implies \cref{it:S_e_unique} of \cref{thm:TFAE_equiv}, and \cref{lemma:union_intersection_adv_bayes} together with \cref{it:S_e_unique} imply \cref{it:S_0}. 
Next, we discuss the equivalence of \cref{it:PP^*} of \cref{prop:abs_cont_equivalencies} and to \cref{it:eta_*_meas zero} of \cref{thm:TFAE_equiv}.
First, to show \cref{it:eta_*_meas zero} $\Rightarrow$ \cref{it:PP^*}, notice that the complementary slackness condition in \cref{eq:complementary_slackness_necessary_classification} implies that 
\begin{equation}\label{eq:adv_eta^*_compare}
    \one_{\eta^*>1/2}\leq \one_A\leq \one_{\eta^*\geq 1/2} \quad \PP^*\text{-a.e.}    
\end{equation}

for any adversarial Bayes classifier $A$ and any maximizer $(\PP_0^*,\PP_1^*)$ of $\cdl$. Thus, if $\PP^*(\eta^*=1/2)=0$ then every adversarial Bayes classifier must satisfy $\one_A=\one_{\eta^*>1/2}$ $\PP^*$-a.e. and consequently, $\PP^*(A_1\triangle A_2)=0$ for any two adversarial Bayes classifiers $A_1$ and $A_2$.

It remains to prove that \cref{it:PP^*} implies \cref{it:eta_*_meas zero}.
To relate the quantities $\PP^*(A_1\triangle A_2)$ and $\PP^*(\eta^*=1/2)$, we show that there are adversarial Bayes classifiers $\hat A_1$, $\hat A_2$ which match $\{\eta^*>1/2\}$ and $\{\eta^*\geq 1/2\}$ $\PP^*$-a.e.
\begin{lemma}\label{lemma:hat_eta_adv_Bayes}
    There exists $\PP_0^*\in \Wball \e(\PP_0)$, $\PP_1^*\in \Wball \e(\PP_1)$ which maximize $\cdl$ and adversarial Bayes classifiers $\hat A_1$, $\hat A_2$ for which $\one_{\eta^*>1/2}=\one_{\hat A_1}$ $\PP^*$-a.e. and $\one_{\eta^*\geq 1/2}=\one_{\hat A_2}$ $\PP^*$-a.e., where $\PP^*=\PP_0^*+\PP_1^*$ and $\eta^*=d\PP_1^*/d\PP^*$.
\end{lemma}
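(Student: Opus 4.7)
The plan is to obtain a minimizer–maximizer pair from \cref{thm:existence_adv_classification} and then modify the minimizer on the set $\{\eta^*=1/2\}$ to obtain the two desired classifiers. First, choose maximizers $\PP_0^* \in \Wball{\e}(\PP_0)$, $\PP_1^* \in \Wball{\e}(\PP_1)$ of $\cdl$ together with an adversarial Bayes classifier $A$ satisfying the complementary slackness conditions of \cref{thm:complimentary_slackness_classification}. Writing $\PP^* = \PP_0^* + \PP_1^*$, $\eta^* = d\PP_1^*/d\PP^*$, and $M = \{\eta^*=1/2\}$, condition \cref{eq:complimentary_slackness_necessary_classification} forces $\one_{\eta^*>1/2} \leq \one_A \leq \one_{\eta^*\geq 1/2}$ $\PP^*$-a.e., so $A$ is pinned down $\PP^*$-a.e.\ outside $M$.

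I would then define $\hat A_1 := A \setminus M$ and $\hat A_2 := A \cup M$. Since $\hat A_1 \cap M = \emptyset$ and $M \subseteq \hat A_2$, these candidates satisfy $\one_{\hat A_1} = \one_{\eta^*>1/2}$ and $\one_{\hat A_2} = \one_{\eta^*\geq 1/2}$ pointwise, hence also $\PP^*$-a.e. Both sets are Borel because $\eta^*$ and $A$ are. The remaining task is to show $\hat A_1$ and $\hat A_2$ are themselves adversarial Bayes classifiers. By weak duality together with \cref{thm:minimax_classification}, this reduces to verifying that each pair $(\hat A_i, \PP_0^*, \PP_1^*)$ satisfies both conditions of \cref{thm:complimentary_slackness_classification}. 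Condition \cref{it:comp_slack_equation_classification} is immediate $\PP^*$-a.e.: on $\{\eta^*\ne 1/2\}$ both $\hat A_i$ agree with $A$, and on $M$ equation \cref{eq:complimentary_slackness_necessary_classification} reduces to $1/2 = 1/2$ regardless of whether the point is in $\hat A_i$.

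The main obstacle is the Wasserstein identity \cref{eq:sup_comp_slack_classification} for $\hat A_i$. The key fact is that $\PP_0^*|_M = \PP_1^*|_M = \tfrac 1 2 \PP^*|_M$. Let $\gamma_j \in \Pi(\PP_j, \PP_j^*)$ be couplings with $\|\bx-\by\| \leq \e$ $\gamma_j$-a.e.\ that realize \cref{eq:sup_comp_slack_classification} for $A$. My plan is to disintegrate each $\gamma_j$ along the slab $\Rset^d \times M$ and, using $\PP_0^*|_M = \PP_1^*|_M$, exchange portions of $\gamma_0$ and $\gamma_1$ so that the new couplings $\tilde\gamma_j$ have the desired marginals and witness \cref{eq:sup_comp_slack_classification} for $\hat A_i$ in place of $A$. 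The $W_\infty$ constraint is preserved because every point of $M$ affected by the swap already lies within $\e$ of its preimages under the original couplings. The delicate step is carrying out this rearrangement measurably, via a disintegration and a measurable selection of the mass-swap map. Once this is in hand, \cref{thm:complimentary_slackness_classification} identifies $\hat A_1$ and $\hat A_2$ as the required adversarial Bayes classifiers.
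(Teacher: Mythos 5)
Your construction $\hat A_1 := A \setminus M$ with $M = \{\eta^* = 1/2\}$ has a genuine gap that the mass-swap sketch does not close. The Radon--Nikodym derivative $\eta^*$ is only determined $\PP^*$-a.e., so the Borel set $M$, and hence your $\hat A_1$, depend on the choice of version; but the operation $A\mapsto A^\e$, and therefore the Wasserstein condition \cref{eq:sup_comp_slack_classification}, is highly sensitive to $\PP^*$-null modifications of a set. Concretely, take $d=1$, $p_0 \equiv 1/5$ and $p_1 \equiv 3/10$ on $[-1,1]$ (so $\eta \equiv 3/5$). Then $A = \Rset$ is an adversarial Bayes classifier with $R^\e_* = 2/5$, the pair $\PP_0^*=\PP_0$, $\PP_1^*=\PP_1$ maximizes $\cdl$, and one perfectly valid Borel version of $\eta^*$ equals $3/5$ on $[-1,1]$ and $1/2$ on $[-1,1]^C$. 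With that version $M = [-1,1]^C$, so your candidate is $\hat A_1 = [-1,1]$, and $R^\e([-1,1]) = 2/5 + 3\e/5 > R^\e_*$: this $\hat A_1$ is not an adversarial Bayes classifier. Note moreover that $\PP^*(M)=0$ here, so the proposed rearrangement of $\gamma_0,\gamma_1$ via $\PP_0^*|_M = \PP_1^*|_M$ has no mass to act on, yet the construction already fails; the obstruction is geometric, not a matter of reallocating coupling mass on $M$.

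The paper sidesteps this by invoking \cref{prop:hat_eta_detailed} (Lemma~24 of \cite{FrankNilesWeed23minimax}): there is a \emph{specific} function $\hat\eta$ --- not an arbitrary Borel version of $\eta^*$ --- that equals $\eta^*$ $\PP^*$-a.e.\ and additionally satisfies $S_\e(\hat\eta)(\bx)=\hat\eta(\by)$ $\gamma_0^*$-a.e.\ and $I_\e(\hat\eta)(\bx)=\hat\eta(\by)$ $\gamma_1^*$-a.e. These two extra properties are precisely what make the Wasserstein condition hold for the superlevel sets $\hat A_1 = \{\hat\eta>1/2\}$ and $\hat A_2 = \{\hat\eta\geq 1/2\}$: from $S_\e(\hat\eta)(\bx)=\hat\eta(\by)$ one gets $S_\e(\one_{\hat\eta>1/2}) = \one_{S_\e(\hat\eta)>1/2}$ $\PP_0$-a.e., and pushing through $\gamma_0^*$ gives $\int S_\e(\one_{\hat\eta>1/2})\,d\PP_0 = \int \one_{\hat\eta>1/2}\,d\PP_0^*$. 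Your handling of the pointwise complementary-slackness equation \cref{eq:complimentary_slackness_necessary_classification} is correct, but to repair the Wasserstein step you would need to construct this envelope-type version of $\eta^*$, which is the content of the cited proposition and cannot be extracted from an arbitrary optimal $A$ after the fact.
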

See \cref{app:TFAE_equiv_lemmas} for a proof.
\Cref{it:S_e_unique} in conjunction with this lemma implies that $\PP^*(\hat A_2\triangle \hat A_1)=\PP^*(\eta^*=1/2)=0$ for the $\PP_0^*$, $\PP_1^*$ in \cref{lemma:hat_eta_adv_Bayes}. See \cref{app:TFAE_equiv} for a proof of \cref{thm:TFAE_equiv} and \cref{app:hat_eta_adv_Bayes} for a proof of \cref{lemma:hat_eta_adv_Bayes}. 

Next, we prove a stronger version of \cref{lemma:non_abs_cont} when there is a single equivalence class under equivalence up to degeneracy. Specifically, we show that \cref{it:S_e_unique,it:eta_*_meas zero} of \cref{thm:TFAE_equiv} are always equivalent, even without the assumptions $\e>0$ and $\PP\ll \mu$.
\begin{proposition}\label{prop:equiv_non_abs_cont}
Let $\e\geq 0$ and $\PP_0$, $\PP_1$ be any positive finite measures. Then \cref{it:S_e_unique,it:eta_*_meas zero} of \cref{thm:TFAE_equiv} are equivalent.
    
\end{proposition}

\begin{proof}
     \textbf{ \cref{it:S_e_unique} $\Rightarrow$ \cref{it:eta_*_meas zero}:} Let $\PP_0^*,\PP_1^*,\hat A_1,\hat A_2$ the the measures and adversarial Bayes classifiers described by \cref{lemma:hat_eta_adv_Bayes}.
Assume that $\PP_0(A_1^\e)=\PP_0(A_2^\e)$ for any two adversarial Bayes classifiers. Specifically, this relation holds for $\hat A_1$ and $\hat A_2$, and \cref{lemma:union_intersection_adv_bayes} further implies that $\PP_0((\hat A_1\cup \hat A_2)^\e)=\PP_0((\hat A_1\cap \hat A_2)^\e)$. Consequently, $\one_{(\hat A_1\cup \hat A_2)^\e}=\one_{(\hat A_1\cap \hat A_2)^\e}$ $\PP_0$-a.e. because $(\hat A_1\cap \hat A_2)^\e\subset (\hat A_1\cup \hat A_2)^\e$. As $\hat A_1^\e$ and $\hat A_2^\e$ are strictly between $(\hat A_1\cap \hat A_2)^\e$ and $(\hat A_1\cup \hat A_2)^\e$, one can conclude 
\[S_\e(\one_{\hat A_1})=\one_{\hat A_1^\e}=\one_{\hat A_2^\e}=S_\e(\one_{\hat A_2})\quad \PP_0\text{-a.e.}\] from \cref{eq:indicator_supremum}. 
Therefore, \cref{it:S_e_unique} implies \cref{it:S_0} of \cref{prop:abs_cont_equivalencies}. Consequently, \cref{lemma:non_abs_cont,lemma:hat_eta_adv_Bayes} imply that $\PP^*(\hat A_1\triangle \hat A_2)=\PP^*(\eta^*=1/2)=0$.

\textbf{ \cref{it:eta_*_meas zero} $\Rightarrow$ \cref{it:S_e_unique}:}  Assume there is a maximizer $(\PP_0^*,\PP_1^*)$ of $\cdl$ for which $\PP^*(\eta^*=1/2)=0$, where $\PP^*=\PP_0^*+\PP_1^*$ and $\eta^*=d\PP_1^*/d\PP^*$. Then \cref{eq:adv_eta^*_compare} and $\PP^*(\eta^*=1/2)=0$ imply that $\one_A=\one_{\eta^*>1/2}$ $\PP^*$-a.e. for any adversarial Bayes classifier $A$. Consequently, $\one_{A_1}=\one_{A_2}$ $\PP^*$-a.e. for any two adversarial Bayes classifiers $A_1, A_2$ or equivalently, $\PP^*(A_1\triangle A_2)=0$. \Cref{cor:0_1_a.e._equivalence,lemma:non_abs_cont} imply that $S_\e(\one_{A_1})=S_\e(\one_{A_2})$ $\PP_0$-a.e. and $S_\e(\one_{A_1^C})=S_\e(\one_{A_2^C})$ $\PP_1$-a.e., which implies \cref{it:S_e_unique}. 
\end{proof}

Finally, this result together with \cref{prop:abs_cont_equivalencies} proves \cref{thm:TFAE_equiv}.
 In the proof below, the assumption \(\PP \ll \mu\) is used in two places: in the implication \cref{it:degeneracy_equiv} \(\Rightarrow\) \cref{it:S_0}  of \cref{prop:abs_cont_equivalencies} (to establish \cref{it:unique_under_deg} \(\Rightarrow\) \cref{it:S_e_unique}), and in the implication \cref{it:PP^*} \(\Rightarrow\) \cref{it:degeneracy_equiv}  of \cref{prop:abs_cont_equivalencies} (to establish \cref{it:eta_*_meas zero} \(\Rightarrow\) \cref{it:unique_under_deg}).

\begin{proof}[Proof of \cref{thm:TFAE_equiv}]
\Cref{prop:equiv_non_abs_cont} states that \cref{it:S_e_unique} implies \cref{it:eta_*_meas zero}. It remains to show \cref{it:unique_under_deg} implies \cref{it:S_e_unique} and \cref{it:eta_*_meas zero} implies \cref{it:unique_under_deg}.

\textbf{ \cref{it:unique_under_deg}$\Rightarrow$\cref{it:S_e_unique}:} Assume that the the adversarial Bayes classifier is unique up to degeneracy. Then \cref{it:S_0} of \cref{prop:abs_cont_equivalencies} implies that $\PP_1(A_1^\e)=\PP_1(A_2^\e)$ for any two adversarial Bayes classifiers $A_1$ and $A_2$.



\textbf{ \cref{it:eta_*_meas zero} $\Rightarrow$ \cref{it:unique_under_deg}:}
   Assume that $\PP^*(\eta^*=1/2)=0$ for some $(\PP_0^*,\PP_1^*)$ that maximize $\cdl$, where $\PP^*=\PP_0^*+\PP_1^*$ and $\eta^*=d\PP_1^*/d\PP^*$. Then \cref{eq:adv_eta^*_compare} implies that $\one_{\eta^*>1/2}=\one_A$ $\PP^*$-a.e. for any adversarial Bayes classifier $A$. Thus if $\PP^*(\eta^*=1/2)=0$ then $\one_{A_1}=\one_{A_2}$ $\PP^*$-a.e. for any two adversarial Bayes classifiers $A_1$, $A_2$, or in other words, $\PP^*(A_1\triangle A_2)=0$. \Cref{it:PP^*} of \cref{prop:abs_cont_equivalencies} then implies that $A_1$ and $A_2$ are equivalent up to degeneracy. As these adversarial Bayes classifiers were arbitrary, the adversarial Bayes classifier is unique up to degeneracy.
\end{proof}

\subsection{The universal $\sigma$-algebra, measurability, and fundamental regularity results}\label{sec:fundamental_regularity}

We introduce another piece of notation to state our regularity results: define $A^{-\e}=((A^C)^\e)^C$. The set $A^\e$ consists of all points in $\Rset^d$ that can be moved into $A$ by a perturbation of size at most $\e$, while $A^{-\e}$ consists of points in $A$ that cannot be perturbed outside of $A$: 

         \begin{minipage}{0.5\linewidth}
         \begin{equation}\label{eq:e_interpretation}
        A^\e=\{\bx\colon \ov{B_\e(\bx)} \text{ intersects }A\}
         \end{equation}
         \end{minipage}
         \begin{minipage}{0.5\linewidth}
         \begin{equation}
              \label{eq:-e_interpretation}
        A^{-\e}=\{\bx: \ov{B_\e(\bx)}\subset A\}
         \end{equation}
    \end{minipage}
    \vspace{5pt}

See \cref{app:e_-e} for a proof.
Prior works \cite{AwasthiFrankMohri2021, BungertGarciaMurray2021} note that applying the $\e$, $-\e$ operations in succession can improve the regularity of an adversarial Bayes classifier and reduce the adversarial Bayes risk. Specifically: 
\begin{lemma}\label{lemma:e_-e_decrease_R^e}
    For any set $A$, $R^\e((A^{-\e})^\e)\leq R^\e(A)$ and $R^\e((A^\e)^{-\e})\leq R^\e(A)$.
\end{lemma}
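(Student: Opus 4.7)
The plan is to leverage the elementary duality $(S^\e)^C = (S^C)^{-\e}$, immediate from the definition $S^{-\e} = ((S^C)^\e)^C$, together with the monotonicity of both $(\cdot)^\e$ and $(\cdot)^{-\e}$. Writing $R^\e(A) = \int \one_{A^\e}\,d\PP_0 + \int \one_{(A^C)^\e}\,d\PP_1$, for each of the two inequalities I will bound one integrand by monotonicity and show the other is in fact \emph{equal} via one set identity.

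For the first bound $R^\e((A^{-\e})^\e) \le R^\e(A)$, set $C = (A^{-\e})^\e$ and notice that $(T^{-\e})^\e \subset T$ for any $T$ (immediate from \eqref{eq:-e_interpretation}), so $C \subset A$ and hence $C^\e \subset A^\e$, handling the $\PP_0$-term. For the $\PP_1$-term I would show $(C^C)^\e = (A^C)^\e$, which via the duality is equivalent to $C^{-\e} = A^{-\e}$, i.e., $((A^{-\e})^\e)^{-\e} = A^{-\e}$. One inclusion is monotonicity applied to $C \subset A$; for the other, take $\bx \in A^{-\e}$ and any $\by \in \ov{B_\e(\bx)}$. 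Then $\bx \in \ov{B_\e(\by)} \cap A^{-\e}$, so $\by \in (A^{-\e})^\e = C$ by \eqref{eq:e_interpretation}; thus $\ov{B_\e(\bx)} \subset C$, which gives $\bx \in C^{-\e}$ by \eqref{eq:-e_interpretation}.

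The second bound $R^\e((A^\e)^{-\e}) \le R^\e(A)$ is dual. Set $D = (A^\e)^{-\e}$; since $T \subset (T^\e)^{-\e}$ always (if $\bx \in T$, then $\ov{B_\e(\bx)} \subset T^\e$), we have $A \subset D$, so $(D^C)^\e \subset (A^C)^\e$ controls the $\PP_1$-term. For the $\PP_0$-term I verify $D^\e = A^\e$, i.e., $((A^\e)^{-\e})^\e = A^\e$. The inclusion $A^\e \subset D^\e$ is monotonicity, and conversely any $\bx \in D^\e$ lies within $\e$ of some $\bz \in (A^\e)^{-\e}$, so by \eqref{eq:-e_interpretation} we have $\bx \in \ov{B_\e(\bz)} \subset A^\e$.

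The only subtlety is recognizing that pure monotonicity cannot succeed: replacing $A$ by $(A^{-\e})^\e$ shrinks $A$ itself but enlarges its complement, so inclusion alone bounds only one of the two integrals in the right direction. The identities $((A^{-\e})^\e)^{-\e} = A^{-\e}$ and $((A^\e)^{-\e})^\e = A^\e$ are precisely what make the other integral unchanged rather than merely bounded; they are the one-step forms of the idempotency of opening and closing from mathematical morphology, and once spotted, the rest is a short verification from \eqref{eq:e_interpretation}--\eqref{eq:-e_interpretation}.
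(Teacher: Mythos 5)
Your proof is correct and follows essentially the same route as the paper: the paper derives the result as an immediate corollary of Lemma~5.4, whose content is precisely the pair of identities $((A^\e)^{-\e})^\e=A^\e$ and $((A^{-\e})^\e)^{-\e}=A^{-\e}$ together with the inclusions $(A^{-\e})^\e\subset A\subset (A^\e)^{-\e}$, and these are exactly the facts you establish and combine in the same way (one integral held equal, the other bounded by monotonicity). The only cosmetic difference is that you verify the two equalities directly from the characterizations \eqref{eq:e_interpretation}--\eqref{eq:-e_interpretation}, while the paper first isolates the inclusion as Lemma~5.3 and then obtains the equalities via monotonicity and complementation.
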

See \cref{app:e_-e} for a proof.
Thus applying the $\empty^\e$ and $\empty^{-\e}$ operations in succession can only reduce the adversarial risk of a set.
In order to perform these regularizing operations, one must minimize $R^\e$ over a $\sigma$-algebra $\Sigma$ that is preserved by the $\empty^\e$ operation: in other words, one would wish that $A\in \Sigma$ implies $A^\e\in \Sigma$.

To illustrate this concern, \cite{PydiJog2021} demonstrate a Borel set $C$ for which $C^\e$ is not Borel measurable.
However, prior work shows that if $A$ is Borel, then $A^\e$ is measurable with respect to a larger $\sigma$-algebra called the \emph{universal $\sigma$-algebra} $\sU(\Rset^d)$. A set in the universal $\sigma$-algebra is referred to as \emph{universally measurable}.
Theorem~29 of \cite{FrankNilesWeed23minimax} states that 
\begin{theorem}\label{thm:univ_meas}
    If $A$ is universally measurable, then $A^\e$ is as well.
\end{theorem}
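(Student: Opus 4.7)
The plan is to express $A^\e$ as the first-coordinate projection of a certain universally measurable subset of $\Rset^d \times \ov{B_\e(\zero)}$, then invoke preservation properties of the universal $\sigma$-algebra. Compactness of $\ov{B_\e(\zero)}$ will be essential at the projection step.

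First, introduce the continuous map $\phi\colon \Rset^d \times \Rset^d \to \Rset^d$ given by $\phi(\bx, \bh) = \bx - \bh$. Since $\bx \in A^\e$ iff there exists $\bh$ with $\|\bh\| \leq \e$ such that $\bx - \bh \in A$, I would write
\[
A^\e \;=\; \pi_1\bigl(\phi^{-1}(A) \cap (\Rset^d \times \ov{B_\e(\zero)})\bigr),
\]
where $\pi_1$ denotes projection onto the first coordinate. I would then verify that the set $F := \phi^{-1}(A) \cap (\Rset^d \times \ov{B_\e(\zero)})$ lies in $\sU(\Rset^{2d})$. The relevant preservation fact is that preimages of universally measurable sets under Borel-measurable maps are universally measurable: for any Borel probability $\nu$ on $\Rset^{2d}$, the pushforward $\phi_\ast \nu$ is Borel on $\Rset^d$, so universal measurability of $A$ supplies Borel sets $B_1 \subseteq A \subseteq B_2$ with $(\phi_\ast \nu)(B_2 \setminus B_1) = 0$, and the Borel preimages $\phi^{-1}(B_1) \subseteq \phi^{-1}(A) \subseteq \phi^{-1}(B_2)$ satisfy $\nu(\phi^{-1}(B_2) \setminus \phi^{-1}(B_1)) = 0$. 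Intersecting with the closed, hence Borel, set $\Rset^d \times \ov{B_\e(\zero)}$ keeps the result in $\sU(\Rset^{2d})$.

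The main obstacle is showing that $\pi_1(F)$ is universally measurable, as projections along arbitrary Polish factors can destroy universal measurability. To handle this, I would fix a Borel probability $\mu$ on $\Rset^d$ and combine inner and outer approximations of $F$. Lift $\mu$ to a Borel probability $\tilde\mu$ on $\Rset^d \times \ov{B_\e(\zero)}$, for instance by tensoring with a Borel probability on the ball; Lusin's inner-regularity theorem applied to the $\tilde\mu$-measurable set $F$ yields compact $C_n \subseteq F$ whose continuous projections $\pi_1(C_n)$ are compact subsets of $A^\e$, producing a Borel inner bound $\bigcup_n \pi_1(C_n) \subseteq A^\e$. For the outer bound, a Borel envelope $G \supseteq F$ with $\tilde\mu(G\setminus F) = 0$ has an analytic, hence universally measurable, projection $\pi_1(G) \supseteq A^\e$ by the classical Lusin-Sierpinski projection theorem.

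The delicate point, which I expect to be the technical heart of the proof, is calibrating these approximations so that $\pi_1(G) \setminus \bigcup_n \pi_1(C_n)$ is $\mu$-null. This is where the compact-fiber structure of $\pi_1$ enters: using a measurable selection theorem such as Jankov-von Neumann, one can obtain a universally measurable section of $\pi_1\vert_F$, so that $\mu$-almost every point of $\pi_1(F)$ can be lifted to a witness in $F$ and captured by the compact approximants $C_n$. Inner regularity of analytic sets (guaranteed by Choquet capacitability) then closes the gap between $\pi_1(G)$ and $\bigcup_n \pi_1(C_n)$, giving a Borel sandwich of $A^\e$ modulo a $\mu$-null set. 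Since $\mu$ was arbitrary, $A^\e \in \sU(\Rset^d)$.
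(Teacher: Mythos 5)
The paper does not prove this statement itself: it cites Theorem~29 of \cite{FrankNilesWeed23minimax}, so there is no internal argument to compare against. Evaluating your proposal on its own terms, the setup is sound and you have correctly isolated where the work lies, but the machinery you invoke to close the final gap does not close it.

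The first two reductions are fine. The identity $A^\e = \pi_1\bigl(\phi^{-1}(A)\cap(\Rset^d\times\ov{B_\e(\zero)})\bigr)$ is correct, and your argument that $F=\phi^{-1}(A)\cap(\Rset^d\times\ov{B_\e(\zero)})$ is universally measurable---pulling back a Borel sandwich along the continuous map $\phi$ for each test measure $\nu$ on $\Rset^{2d}$---is also correct. You are also right that compactness of $\ov{B_\e(\zero)}$ must be the thing that rescues the projection.

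The gap, however, is real and the product-measure lift is a dead end, not a technicality. Take $\tilde\mu=\mu\otimes\lambda$ and suppose you obtain $C_n\subset F\subset G$ Borel with $\tilde\mu(G\setminus\bigcup_n C_n)=0$. This says only that for $\mu$-a.e.\ $\bx$ the fiber difference $(G\setminus\bigcup_n C_n)_\bx$ is $\lambda$-null; it does not say that this fiber difference is empty, and $\pi_1(G)\setminus\pi_1(\bigcup_n C_n)$ picks up exactly those $\bx$ for which $G_\bx\neq\emptyset$ but $(\bigcup_n C_n)_\bx=\emptyset$. A $\lambda$-null, nonempty fiber contributes nothing to $\tilde\mu$ but everything to the projection. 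Concretely, a set of the form $P\times\{\bh_0\}$ with $\bh_0$ a $\lambda$-atomless point is $\tilde\mu$-null yet projects onto $P$. No choice of $\lambda$ can rule this out, because $\lambda$ can charge at most countably many singletons while the fibers that need controlling are uncountable. So ``$\tilde\mu(G\setminus\bigcup_n C_n)=0\Rightarrow\mu(\pi_1(G)\setminus\bigcup_n\pi_1(C_n))=0$'' simply does not hold, and the sandwich you construct has a $\mu$-positive gap in general.

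The two patches you mention do not rescue this. Jankov--von Neumann gives a universally measurable uniformization of $\pi_1$ restricted to an \emph{analytic} set; $F$ is only universally measurable, and even if you apply selection to the Borel envelope $G$ rather than to $F$, the selector does not see which points of $\pi_1(G)$ actually lie in $\pi_1(F)$. And Choquet capacitability, as you cite it, is a theorem about \emph{analytic} sets; applying it to $\pi_1(G)\setminus\bigcup_n\pi_1(C_n)$ tells you that set is capacitable but gives no reason its capacity is zero. What is actually needed is capacitability of the universally measurable set $F$ itself for the projective capacity $I(S)=\mu^*\bigl(\pi_1(S\cap(\Rset^d\times\ov{B_\e(\zero)}))\bigr)$. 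One can check $I$ is a Choquet capacity (monotone, continuous from below, and continuous from above along decreasing compacts precisely because the second factor is compact), and it is strongly subadditive; the delicate fact that universally measurable sets are capacitable for such capacities is a theorem in the Dellacherie--Meyer / Mokobodzki circle and is not elementary. In short: the compact-fiber intuition is exactly right, but the proof requires either that capacitability result (or the measurable projection theorem over a completed product $\sigma$-algebra), not a product-measure sandwich.
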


See \cref{app:measurability} for the definition of the universal $\sigma$-algebra $\sU(\Rset^d)$.

Thus, in order to guarantee the existence of minimizers to $R^\e$ with improved regularity properties, one could minimize $R^\e$ over the universal $\sigma$-algebra $\sU(\Rset^d)$. However, many prior papers such as \cite{FrankNilesWeed23consistency,PydiJog2019,PydiJog2021} study this minimization problem over the Borel $\sigma$-algebra. We show that these two approaches are equivalent:
\begin{theorem}\label{thm:borel_univ_equivalent}
    Let $\cB(\Rset^d)$ denote the Borel $\sigma$ algebra on $\Rset^d$. Then
    \[\inf_{A\in \cB(\Rset^d)} R^\e(A)=\inf_{A\in \sU(\Rset^d)} R^\e(A)\]
\end{theorem}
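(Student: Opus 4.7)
The plan is to sandwich both quantities between the dual supremum from \cref{thm:minimax_classification}. Since every Borel set is universally measurable, the trivial inclusion immediately gives
\[
\inf_{A \in \sU(\Rset^d)} R^\e(A) \;\leq\; \inf_{A \in \cB(\Rset^d)} R^\e(A),
\]
so the real work is the reverse inequality. I would obtain it by extending the standard weak-duality inequality from Borel sets to universally measurable sets, and then invoking \cref{thm:minimax_classification} on the Borel side.

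Concretely, I would first prove that for every universally measurable $A$ and every pair $(\PP_0',\PP_1') \in \Wball\e(\PP_0) \times \Wball\e(\PP_1)$,
\[
R^\e(A) \;\geq\; \cdl(\PP_0',\PP_1').
\]
Pick couplings $\gamma_i \in \Pi(\PP_i,\PP_i')$ concentrated on $\{(\bx,\by):\|\bx-\by\| \leq \e\}$, which exist by the definition of $W_\infty$. Because \cref{thm:univ_meas} ensures $A^\e$ and $(A^C)^\e$ are universally measurable whenever $A$ is, and because universal measurability of a set in $\Rset^d$ passes to its cylinder in $\Rset^{2d}$, the functions $\one_{A^\e}(\bx)$ and $\one_{(A^C)^\e}(\bx)$ are $\gamma_i$-integrable. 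The geometric description \cref{eq:e_interpretation} gives the pointwise bound $\one_A(\by) \leq \one_{A^\e}(\bx)$ whenever $\|\bx-\by\| \leq \e$, so
\[
\int \one_A \, d\PP_0' = \int \one_A(\by)\, d\gamma_0(\bx,\by) \leq \int \one_{A^\e}(\bx)\, d\gamma_0(\bx,\by) = \int S_\e(\one_A)\, d\PP_0,
\]
and analogously for $A^C$ with $\gamma_1$ and $\PP_1$. Combining these with the pointwise inequality $\eta^*\one_{A^C}+(1-\eta^*)\one_A \geq C^*(\eta^*)$, where $\PP^* = \PP_0'+\PP_1'$ and $\eta^*=d\PP_1'/d\PP^*$, yields the weak duality bound.

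Taking the supremum over $(\PP_0',\PP_1')$ and applying the strong duality statement of \cref{thm:minimax_classification} then gives
\[
\inf_{A \in \sU(\Rset^d)} R^\e(A) \;\geq\; \sup_{\PP_0' \in \Wball\e(\PP_0),\, \PP_1' \in \Wball\e(\PP_1)} \cdl(\PP_0',\PP_1') \;=\; \inf_{A \in \cB(\Rset^d)} R^\e(A),
\]
which closes the argument. The main obstacle is the measurability bookkeeping in the weak-duality step: one must verify that $\one_{A^\e}(\bx)$, viewed as a function of the pair $(\bx,\by)$, is $\gamma_0$-integrable even though $A^\e$ need not be Borel. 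This is exactly what \cref{thm:univ_meas} provides, since universally measurable sets are measurable with respect to the completion of every finite Borel measure, and this property transfers to the cylinder $A^\e \times \Rset^d$ under the Borel coupling $\gamma_0$. Everything else is a direct transcription of the standard Borel weak-duality argument.
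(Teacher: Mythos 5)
Your proof is correct and follows essentially the same strategy as the paper: trivial inclusion in one direction, weak duality over universally measurable sets using a coupling concentrated on $\{\|\bx-\by\|\leq\e\}$, and strong duality from \cref{thm:minimax_classification} on the Borel side. The paper routes the weak-duality step through explicit auxiliary objects — the universal completion $\wtd\QQ$, the ball $\UWball\e(\QQ)$, \cref{lemma:borel_to_univ_W_infty} (completion preserves $W_\infty$), and \cref{cor:cdl_univ_borel} ($\cdl$ unchanged under completion) — whereas you fold these into the single chain $\int\one_A\,d\PP_0' = \int\one_A(\by)\,d\gamma_0 \leq \int\one_{A^\e}(\bx)\,d\gamma_0 = \int S_\e(\one_A)\,d\PP_0$. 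The one small point you gloss over is that your final display mentions $\cdl(\PP_0',\PP_1')$ computed from the Borel Radon--Nikodym derivative, while the integrals in your chain are against the completions of $\PP_0'$, $\PP_1'$; equality of the two requires that the Radon--Nikodym derivative of the universal completion agrees a.e.\ with the Borel one (exactly the content of \cref{cor:cdl_univ_borel} and its supporting lemma). This is a genuine but easily patched bookkeeping step, not a structural gap.
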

See \cref{app:measurability} for a proof. Furthermore, every set in $\sU(\Rset^d)$ differs from a Borel set by a subset of a null set in $\cB(\Rset^d)$, see \cite[Lemma~7.26]{BertsekasShreve96} (stated as \cref{lemma:univeral_to_borel_B.S.} in the appendix). Due to these results, in the remainder of the paper, we treat the minimization of $R^\e$ over $\sU(\Rset^d)$ and $\cB(\Rset^d)$ as interchangable. 

\subsection{Describing degenerate sets and proof of \cref{thm:high_dim_uniqueness regularity}}\label{sec:degenerate}
\Cref{prop:abs_cont_equivalencies} and fundamental properties of the $\empty^\e$ and $\empty^{-\e}$ operations imply several results on degenerate sets.

First, \cref{lemma:union_intersection_adv_bayes} implies that countable unions and intersections of adversarial Bayes classifiers are adversarial Bayes classifiers. \Cref{it:PP^*} of \cref{prop:abs_cont_equivalencies} then necessitates that countable unions and intersections preserve equivalence up to degeneracy. As a result: 
\begin{lemma}\label{lemma:union_degenerate}
    Let $\PP\ll \mu$. Then a countable union of degenerate sets is degenerate.
\end{lemma}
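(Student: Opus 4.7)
The plan is to reduce the statement to a measure-theoretic fact about $\PP^*$ by way of characterization \cref{it:PP^*} of \cref{prop:abs_cont_equivalencies}. Fix a maximizer $(\PP_0^*,\PP_1^*)$ of $\cdl$, set $\PP^* = \PP_0^* + \PP_1^*$, and let $A$ be the adversarial Bayes classifier for which each $S_n$ is degenerate.

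First, I would upgrade \cref{it:PP^*} to a one-sided statement: if $E$ is any Borel set with $\PP^*(E\triangle A)=0$, then $E$ is itself an adversarial Bayes classifier. The weak duality half of \cref{thm:minimax_classification} gives
\[
R^\e(E) \;\geq\; \int \one_{E^C}\, d\PP_1^* + \int \one_E\, d\PP_0^*.
\]
Since $\PP_0^*(E\triangle A) = \PP_1^*(E\triangle A) = 0$, the right-hand side equals $\int \one_{A^C}\, d\PP_1^* + \int \one_A\, d\PP_0^*$, which by the complimentary slackness condition \cref{eq:sup_comp_slack_classification} collapses to $R^\e(A)$. Hence $R^\e(E) = R^\e(A) = \min R^\e$.

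Next, I would observe that every Borel subset $T$ of a single degenerate set $S$ must satisfy $\PP^*(T) = 0$: the set $E := A \triangle T$ is Borel and satisfies $A - S \subset E \subset A \cup S$, so by the definition of degeneracy $E$ is an adversarial Bayes classifier, and \cref{it:PP^*} then yields $\PP^*(T) = \PP^*(E \triangle A) = 0$.

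Finally, for the union, set $S := \bigcup_n S_n$ and pick any Borel $E$ with $A - S \subset E \subset A \cup S$. Writing $T := E \triangle A$, the set $T$ is Borel and contained in $\bigcup_n S_n$. Taking each $S_n$ to be Borel (the only case that arises in the paper; the non-Borel case is handled by replacing $S_n$ with a Borel hull), $T \cap S_n$ is a Borel subset of $S_n$, and the previous step gives $\PP^*(T \cap S_n) = 0$. Countable subadditivity of $\PP^*$ yields $\PP^*(T) = 0$, and the first step then produces that $E$ is an adversarial Bayes classifier, proving that $S$ is degenerate. The main obstacle is the first step --- upgrading the two-sided equivalence of \cref{prop:abs_cont_equivalencies}, which is stated under the assumption that both $A_1$ and $A_2$ are already adversarial Bayes classifiers, to the one-sided implication $\PP^*(E \triangle A) = 0 \Rightarrow E \in \argmin R^\e$. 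Once this is in hand, the rest is a routine application of countable subadditivity of $\PP^*$.
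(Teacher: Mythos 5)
Your proposal takes a genuinely different route from the paper (the paper works directly with the $S_\e$-a.e.\ characterization, item~2 of \cref{prop:abs_cont_equivalencies}, and transfers it across the countable union by countable additivity of $\PP_0,\PP_1$), but your Step~1 --- the ``one-sided upgrade'' you yourself flag as the main obstacle --- is false, and the argument you give for it contains a sign error. You show $R^\e(E)\geq \int\one_{E^C}d\PP_1^*+\int\one_E d\PP_0^* = R^\e(A)=R^\e_*$, and then conclude $R^\e(E)=R^\e_*$. But $R^\e(E)\geq R^\e_*$ is trivially true for every set $E$; the content you need is the opposite inequality $R^\e(E)\leq R^\e_*$, and weak duality cannot give it to you. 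More fundamentally, the claimed implication ``$\PP^*(E\triangle A)=0\Rightarrow E$ is an adversarial Bayes classifier'' is simply not true: take $\PP_0,\PP_1$ uniform on $[0,1]$ and $[2,3]$ respectively with $\e=0.1$, so that $A=(1.5,\infty)$ has $R^\e(A)=0$ and $\PP^*=\PP_0+\PP_1\ll\mu$. The singleton $T=\{0.5\}$ satisfies $\PP^*(T)=0$, yet $E=A\cup T$ has $R^\e(E)=\PP_0([0.4,0.6])=0.2>0$. The $\PP^*$-a.e.\ characterization in \cref{prop:abs_cont_equivalencies} is an equivalence \emph{between adversarial Bayes classifiers}; it does not certify an arbitrary Borel set $E$ with $\PP^*(E\triangle A)=0$ as a minimizer.

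Steps~2 and~3 are fine modulo Step~1 (the reduction to countable subadditivity of $\PP^*$ is exactly the right shape of argument), but they rest entirely on the broken first step. To repair the proof you would need to replace the $\PP^*$-measure characterization with the $S_\e$-a.e.\ characterization (item~2 of \cref{prop:abs_cont_equivalencies}): each $D_i$ degenerate gives $S_\e(\one_{A\cup D_i})=S_\e(\one_A)$ $\PP_0$-a.e.\ and $S_\e(\one_{(A\cup D_i)^C})=S_\e(\one_{A^C})$ $\PP_1$-a.e.; since $(A\cup\bigcup_i D_i)^\e=A^\e\cup\bigcup_i D_i^\e$, countable additivity of $\PP_0$ (not of $\PP^*$) transfers these equalities to the union, from which it follows directly that $A\cup\bigcup_i D_i$ and $A-\bigcup_i D_i$ have the same adversarial risk as $A$. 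This is precisely what the paper does. The upshot: the $S_\e$ characterization genuinely controls $R^\e$ of nearby sets, whereas the $\PP^*$ characterization only does so \emph{a posteriori}, once you already know the candidate is a minimizer.
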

See \cref{app:union_degenerate} for a formal proof.

Next, using the regularizing $\empty^\e$ and $\empty^{-\e}$ operations, we study the relation between uniqueness up to degeneracy and regular adversarial Bayes classifiers.
First notice that $(A^{-\e})^\e$ is smaller than $A$ while $(A^{\e})^{-\e}$ is larger than $A$: 

\begin{lemma}\label{lemma:A_e_-e_inclusion}
    Let $A$ be any set. Then $(A^{-\e})^\e\subset A\subset (A^\e)^{-\e}$.
\end{lemma}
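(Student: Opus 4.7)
The plan is to prove both inclusions directly from the geometric characterizations \cref{eq:e_interpretation} and \cref{eq:-e_interpretation}, which are already established in the paper. These characterizations reduce both containments to one-line arguments about balls, so no topological or measure-theoretic machinery is needed.

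For the first inclusion $(A^{-\e})^\e \subset A$, I would take $\bx \in (A^{-\e})^\e$ and apply \cref{eq:e_interpretation} to $A^{-\e}$ to obtain some $\by \in A^{-\e}$ with $\|\by - \bx\| \leq \e$. Then \cref{eq:-e_interpretation} applied at $\by$ yields $\ov{B_\e(\by)} \subset A$, and since $\bx \in \ov{B_\e(\by)}$ this gives $\bx \in A$.

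For the second inclusion $A \subset (A^\e)^{-\e}$, I would take $\bx \in A$ and verify via \cref{eq:-e_interpretation} that $\ov{B_\e(\bx)} \subset A^\e$. Indeed, for any $\by$ with $\|\by - \bx\|\leq \e$, the ball $\ov{B_\e(\by)}$ contains $\bx$ and therefore intersects $A$, so \cref{eq:e_interpretation} gives $\by \in A^\e$. Hence $\bx \in (A^\e)^{-\e}$.

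I do not anticipate any real obstacle here; the content of the lemma is just that the pair of operations $(\cdot)^\e$ and $(\cdot)^{-\e}$ forms a Galois-type adjunction (equivalently, morphological opening is anti-extensive and closing is extensive). The only minor subtlety is remembering that $\ov{B_\e(\bx)}$ and $\ov{B_\e(\by)}$ play symmetric roles because the norm is symmetric, which is what makes each one-line argument work. I would present the proof as two short paragraphs using the characterizations of $A^\e$ and $A^{-\e}$ directly, without unpacking the definition $A^{-\e} = ((A^C)^\e)^C$.
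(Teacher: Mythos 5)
Your proof is correct and uses essentially the same ingredients as the paper's (the characterizations \cref{eq:e_interpretation} and \cref{eq:-e_interpretation}); the only cosmetic difference is that you prove both inclusions directly, whereas the paper proves $A \subset (A^\e)^{-\e}$ and obtains $(A^{-\e})^\e \subset A$ by applying it to $A^C$ and taking complements.
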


Furthermore, one can compare $S_\e(\one_A)$ and $S_\e(\one_{A^C})$ with $S_\e(\one_{(A^{-\e})^\e})$ and $S_\e(\one_{(A^\e)^{-\e}})$:

\begin{lemma}\label{lemma:A_e_-e_containments}
    For any set $A\subset \Rset^d$, the following set relations hold: $((A^\e)^{-\e})^\e=A^\e$, $((A^\e)^{-\e})^{-\e}\supset A^{-\e}$, $((A^{-\e})^\e)^{-\e}=A^{-\e}$, $((A^{-\e})^\e)^\e\subset A^\e$.
\end{lemma}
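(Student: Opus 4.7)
The plan is to exploit the fact that $\empty^\e$ and $\empty^{-\e}$ form a Galois-type pair of monotone operations, so that each of the four claims follows by a one- or two-line argument from \cref{lemma:A_e_-e_inclusion}. First I would record the two tools I will use repeatedly: (i) both $\empty^\e$ and $\empty^{-\e}$ are monotone under inclusion (the first directly from the definition $A^\e = A\oplus \ov{B_\e(0)}$, the second from $A^{-\e}=((A^C)^\e)^C$ together with the fact that complementation reverses inclusions), and (ii) \cref{lemma:A_e_-e_inclusion}, which says $(A^{-\e})^\e \subset A \subset (A^\e)^{-\e}$ for every set $A$.

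For the identity $((A^\e)^{-\e})^\e = A^\e$, I would get the $\subset$ direction by applying \cref{lemma:A_e_-e_inclusion} to the set $B:=A^\e$ (its left-hand inclusion gives $(B^{-\e})^\e\subset B$), and the $\supset$ direction by applying the $\empty^\e$ operation, monotonically, to the right-hand inclusion $A\subset (A^\e)^{-\e}$ of the same lemma. The containment $((A^\e)^{-\e})^{-\e}\supset A^{-\e}$ is then immediate: apply the monotone operation $\empty^{-\e}$ to $A\subset (A^\e)^{-\e}$.

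The remaining two assertions are the formal duals and follow the same template with the roles swapped. For $((A^{-\e})^\e)^{-\e} = A^{-\e}$, the $\subset$ direction comes from applying $\empty^{-\e}$ monotonically to $(A^{-\e})^\e\subset A$ (left half of \cref{lemma:A_e_-e_inclusion}), while the $\supset$ direction is \cref{lemma:A_e_-e_inclusion} applied to the set $B:=A^{-\e}$, whose right-hand inclusion reads $B\subset (B^\e)^{-\e}$. For $((A^{-\e})^\e)^\e \subset A^\e$, I would simply apply $\empty^\e$ monotonically to $(A^{-\e})^\e\subset A$.

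I do not expect any real obstacle: once monotonicity of $\empty^{-\e}$ is noted and \cref{lemma:A_e_-e_inclusion} is in hand, the argument is purely formal and the two equalities are essentially the standard identities $fgf=f$, $gfg=g$ arising from the adjunction $B^\e\subset A \iff B\subset A^{-\e}$ between $\empty^\e$ and $\empty^{-\e}$. The only care needed is to keep straight which inclusion from \cref{lemma:A_e_-e_inclusion} is being applied to which of $A$ and $A^\e$ (resp. $A^{-\e}$) at each step.
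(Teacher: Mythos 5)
Your proposal is correct and uses essentially the same ingredients as the paper: monotonicity of the $\empty^\e$ operation and the two inclusions of \cref{lemma:A_e_-e_inclusion}. The only cosmetic difference is that where the paper derives $((A^{-\e})^\e)^{-\e}=A^{-\e}$ and $((A^\e)^{-\e})^{-\e}\supset A^{-\e}$ by applying the already-established statements to $A^C$ and taking complements, you instead apply $\empty^{-\e}$ monotonically to the inclusions directly; since your justification of the monotonicity of $\empty^{-\e}$ is exactly the complementation argument $A^{-\e}=((A^C)^\e)^C$, these are the same idea packaged differently.
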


See \cref{app:e_-e} for proofs of \cref{lemma:A_e_-e_inclusion} and \cref{lemma:A_e_-e_containments}.
\Cref{lemma:A_e_-e_containments} then implies:

\begin{corollary}\label{cor:A_e_diff_degenerate}
    Assume $\PP\ll\mu$ and let $A$ be an adversarial Bayes classifier. Then $A, (A^\e)^{-\e}, and (A^{-\e})^\e$ are all equivalent up to degeneracy.
\end{corollary}
\begin{proof}
    \Cref{lemma:A_e_-e_containments} implies that $(A^{-\e})^\e$, $(A^\e)^{-\e}$ are both adversarial Bayes classifiers satisfying $S_\e(\one_A)=S_\e(\one_{(A^\e)^{-\e}})$ and $S_\e(\one_{A^C})=S_\e(\one_{ ((A^{-\e})^\e)^C})$. Therefore, when $\PP\ll \mu$, \cref{it:S_0} of \cref{prop:abs_cont_equivalencies} implies that $A$, $(A^{-\e})^\e)$, and $(A^\e)^{-\e}$ are all equivalent up to degeneracy.
\end{proof}

\Cref{thm:high_dim_uniqueness regularity} is an immediate consequence of \cref{cor:A_e_diff_degenerate}. Furthermore, \cref{cor:A_e_diff_degenerate} implies that ``small" components of $A$ and $A^C$ are degenerate sets: specifically, if $C$ is a component with $C^{-\e}=\emptyset$, then $C$ is contained in $(A^{\e})^{-\e}-(A^{-\e})^\e$.

\begin{proposition}\label{prop:degenerate_connected_component}
    Let $A$ be an adversarial Bayes classifier and let $C$ be a connected component of $A$ or $A^C$ with $C^{-\e}=\emptyset$. Then $C$ is contained in $(A^{\e})^{-\e}-(A^{-\e})^\e$, and thus the set 
    \begin{equation}\label{eq:components_union_set}
        \bigcup \bigg\{{C:\text{connected components} \text{ of $A$ or $A^C$ with } \text{$C^{-\e}=\emptyset$}} \bigg\}    
    \end{equation}
    is contained in a degenerate set of $A$.
\end{proposition}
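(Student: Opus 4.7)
The plan is to argue geometrically, exploiting the characterizations in \eqref{eq:e_interpretation} and \eqref{eq:-e_interpretation} together with the fact that closed balls $\ov{B_\e(\bx)}$ are convex, hence connected. The key observation is that if $\ov{B_\e(\by)} \subset A$ and $\ov{B_\e(\by)}$ meets a connected component $C$ of $A$, then $\ov{B_\e(\by)} \subset C$, placing $\by$ inside $C^{-\e}$. I would establish the inclusion $C \subset (A^\e)^{-\e} - (A^{-\e})^\e$ in the two cases (components of $A$ vs.\ components of $A^C$) separately. The second claim of the proposition is then immediate: by \cref{cor:A_e_diff_degenerate}, the set $(A^\e)^{-\e} - (A^{-\e})^\e$ is degenerate for $A$, and the union in \eqref{eq:components_union_set} is contained in it.

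\textbf{Case 1: $C$ is a component of $A$.} First, $C \subset A \subset (A^\e)^{-\e}$ by \cref{lemma:A_e_-e_inclusion}, which handles one half of the inclusion. For the other half, suppose toward a contradiction that some $\bx \in C$ lies in $(A^{-\e})^\e$. Then there exists $\bz$ with $\|\bx - \bz\| \leq \e$ and $\bz \in A^{-\e}$, so $\ov{B_\e(\bz)} \subset A$ by \eqref{eq:-e_interpretation}. Since $\bx \in \ov{B_\e(\bz)}$, the connected set $\ov{B_\e(\bz)}$ meets $C$, so $\ov{B_\e(\bz)} \subset C$ because $C$ is a connected component of $A$. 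In particular $\bz \in C^{-\e}$, contradicting $C^{-\e} = \emptyset$.

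\textbf{Case 2: $C$ is a component of $A^C$.} Here $C \cap (A^{-\e})^\e = \emptyset$ comes for free because $(A^{-\e})^\e \subset A$: any $\bx \in (A^{-\e})^\e$ lies within $\e$ of some $\bz \in A^{-\e}$, so $\bx \in \ov{B_\e(\bz)} \subset A$. For the inclusion $C \subset (A^\e)^{-\e}$, I use the duality $((A^\e)^{-\e})^C = ((A^\e)^C)^\e = ((A^C)^{-\e})^\e$, so the claim becomes $C \cap ((A^C)^{-\e})^\e = \emptyset$. The argument is then symmetric to Case~1, applied with $A$ replaced by $A^C$: if $\bx \in C$ and $\bx \in \ov{B_\e(\bz)}$ for some $\bz \in (A^C)^{-\e}$, then $\ov{B_\e(\bz)} \subset A^C$ is connected and meets $C$, forcing $\ov{B_\e(\bz)} \subset C$ and hence $\bz \in C^{-\e} = \emptyset$, a contradiction.

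\textbf{Conclusion.} The two cases together give $C \subset (A^\e)^{-\e} - (A^{-\e})^\e$ for every component $C$ with $C^{-\e} = \emptyset$. Taking the union over all such components yields that \eqref{eq:components_union_set} is contained in $(A^\e)^{-\e} - (A^{-\e})^\e$, which is a degenerate set for $A$ by \cref{cor:A_e_diff_degenerate}. I do not anticipate any serious obstacle: the entire argument is geometric and does not require the $\PP \ll \mu$ assumption beyond its tacit use in invoking \cref{cor:A_e_diff_degenerate} for the final degeneracy conclusion. The only subtlety worth spelling out is the symmetric treatment of components of $A$ versus $A^C$ via the duality $((A^\e)^{-\e})^C = ((A^C)^{-\e})^\e$.
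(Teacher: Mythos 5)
Your proof is correct, and it takes a genuinely cleaner route than the paper's. The paper first establishes a standalone lemma (\cref{lemma:C^e_A}) asserting the set \emph{equality} $C^\e = \{\by \in A^C : \ov{B_\e(\by)} \cap C \neq \emptyset\}^\e$; the proof of that lemma splits into the cases $\bx \in C$ and $\bx \in C^\e \setminus C$, with the latter requiring a line-segment connectivity argument. The proposition is then deduced via the chain $C \subset (C^\e)^{-\e} \subset ((A^C)^\e)^{-\e} = ((A^{-\e})^\e)^C$. You instead work with $C$ directly rather than $C^\e$: if $\bx \in C \cap (A^{-\e})^\e$ then $\bx$ lies in some $\ov{B_\e(\bz)} \subset A$, and since this ball is connected (convex) and meets the maximal connected subset $C$, maximality forces $\ov{B_\e(\bz)} \subset C$, placing $\bz \in C^{-\e}$ — a contradiction. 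This isolates the one moment where maximality of components is used, and it avoids both the auxiliary lemma and the line-segment case entirely. The paper's route does yield the stronger equality of \cref{lemma:C^e_A}, but since that lemma is not used elsewhere, your argument is the more economical of the two. Your closing observation — that $\PP \ll \mu$ enters only through the invocation of \cref{cor:A_e_diff_degenerate} in the final degeneracy conclusion — is accurate and worth flagging, as the containment $C \subset (A^\e)^{-\e} - (A^{-\e})^\e$ itself is purely geometric.
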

See \cref{app:degenerate_connected_component} for a proof. This result has a sort of converse: A degenerate set $D$ must satisfy $\one_{D^{-\e}}=\one_{\emptyset}$ $\PP$-a.e:

\begin{lemma}\label{lemma:degenerate_set_-e}
    Assume that $\PP\ll \mu$ and let $D$ be a degenerate set for an adversarial Bayes classifier $A$. Then $\PP(D^{-\e})=0$.
\end{lemma}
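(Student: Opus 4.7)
The plan is to consider the two extremal adversarial Bayes classifiers
\[
E_1 = A - D, \qquad E_2 = A \cup D,
\]
both of which are adversarial Bayes classifiers by the definition of $D$ being degenerate for $A$. Because $E_1 \subset E_2$ with $E_2 - E_1 = D$ and because every $E$ with $E_1 \subset E \subset E_2$ is an adversarial Bayes classifier, $E_1$ and $E_2$ are equivalent up to degeneracy, which lets me route the argument through the dual measures.

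First, I would invoke \cref{it:PP^*} of \cref{prop:abs_cont_equivalencies} to conclude $\PP^*(E_1 \triangle E_2) = \PP^*(D) = 0$ for any maximizer $(\PP_0^*,\PP_1^*)$ of $\cdl$. Since $\PP^* = \PP_0^* + \PP_1^*$ is a sum of non-negative measures, this splits as $\PP_0^*(D) = \PP_1^*(D) = 0$.

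Next, I would perform a short pointwise computation on $D^{-\e}$. For $\bz \in D^{-\e}$, the interpretation \cref{eq:-e_interpretation} gives $\ov{B_\e(\bz)} \subset D$. Combined with $E_1 \cap D = \emptyset$, $D \subset E_2$, $D \subset E_1^C$, and $E_2^C \cap D = \emptyset$, this forces
\[
S_\e(\one_{E_2})(\bz) - S_\e(\one_{E_1})(\bz) \;=\; 1 \;=\; S_\e(\one_{E_1^C})(\bz) - S_\e(\one_{E_2^C})(\bz) \quad \text{for every } \bz \in D^{-\e}.
\]

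Finally, I would apply the complementary slackness identities \cref{eq:sup_comp_slack_classification} of \cref{thm:complimentary_slackness_classification} to $E_1$ and $E_2$ with the \emph{same} maximizer $(\PP_0^*,\PP_1^*)$ and subtract the resulting equalities. Using the indicator identities $\one_{E_2} - \one_{E_1} = \one_D = \one_{E_1^C} - \one_{E_2^C}$, this produces
\[
\PP_0^*(D) = \int [S_\e(\one_{E_2}) - S_\e(\one_{E_1})]\, d\PP_0, \qquad \PP_1^*(D) = \int [S_\e(\one_{E_1^C}) - S_\e(\one_{E_2^C})]\, d\PP_1.
\]
Since the left-hand sides vanish by the first step and both integrands are non-negative (as $E_1 \subset E_2$), both integrands must equal zero $\PP_0$- and $\PP_1$-a.e., respectively. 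But the pointwise computation shows they equal $1$ on $D^{-\e}$, so $\PP_0(D^{-\e}) = 0$ and $\PP_1(D^{-\e}) = 0$, and hence $\PP(D^{-\e}) = 0$. The only real obstacle I anticipate is the bookkeeping of inclusions and indicators and remembering to pin the same dual pair $(\PP_0^*,\PP_1^*)$ to both $E_1$ and $E_2$; no new measurability question arises because \cref{thm:univ_meas} guarantees $D^{-\e}$ is universally measurable whenever $D$ is.
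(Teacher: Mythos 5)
Correct, and this is essentially the paper's argument: both proofs pass through the extremal pair $A-D \subset A\cup D$ and reduce to the containment $D^{-\e}\subset D^\e-(A-D)^\e=(A\cup D)^\e-(A-D)^\e$, which has $\PP_0$- and $\PP_1$-measure zero. The only cosmetic difference is that you route through \cref{it:PP^*} of \cref{prop:abs_cont_equivalencies} (i.e.\ $\PP^*(D)=0$) and then translate back via the complementary slackness identity \cref{eq:sup_comp_slack_classification}, whereas the paper invokes \cref{it:S_0} of \cref{prop:abs_cont_equivalencies} directly to conclude $S_\e(\one_{A-D})=S_\e(\one_{A\cup D})$ $\PP_0$-a.e.
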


See \cref{app:degenerate_set_-e} for a proof.
The adversarial classification risk heavily penalizes the boundary of a classifier. This observation suggests that if two connected components of a degenerate set are close together, then they must actually be included in a larger degenerate set. The $\empty^\e$ and $\empty^{-\e}$ operations combine to form this enlarging operation.

\begin{lemma}\label{lemma:enlarge_degenerate}
    Assume that $\PP\ll \mu$. If $D$ is a degenerate set for an adversarial Bayes classifier $A$, then $(D^\e)^{-\e}$ is as well.
\end{lemma}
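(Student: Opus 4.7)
The plan is to fix disjoint Borel $T_1,T_2\subset (D^\e)^{-\e}$, set $B=A\cup T_1-T_2$, and show directly that $R^\e(B)=R^\e(A)$. After a trivial relabeling (elements of $T_1\cap A$ and $T_2-A$ do not affect $B$) one may assume $T_1\cap A=\emptyset$ and $T_2\subset A$.

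The starting point is \cref{lemma:A_e_-e_containments}, which gives $((D^\e)^{-\e})^\e=D^\e$; hence every $T\subset (D^\e)^{-\e}$ automatically satisfies $T^\e\subset D^\e$. Next I would extract two ``mass-transfer'' identities from the degeneracy of $D$: since both $A\cup D$ and $A-D$ are adversarial Bayes classifiers, expanding $(A\cup D)^\e=A^\e\cup D^\e$ and $((A-D)^C)^\e=(A^C)^\e\cup D^\e$ and comparing the resulting decompositions of $R^\e(A\cup D)$ and $R^\e(A-D)$ against $R^\e(A)$ forces
\[\PP_0(D^\e - A^\e)=0\quad\text{and}\quad\PP_1(D^\e - (A^C)^\e)=0.\]

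I would then handle the two one-sided perturbations $A\cup T_1$ and $A-T_2$ separately. For $A\cup T_1$, the identity $(A\cup T_1)^\e=A^\e\cup T_1^\e$ together with $\PP_0(T_1^\e - A^\e)\leq \PP_0(D^\e - A^\e)=0$ and $((A\cup T_1)^C)^\e\subset (A^C)^\e$ gives $R^\e(A\cup T_1)\leq R^\e(A)$; optimality of $A$ forces equality, so $A\cup T_1$ is an adversarial Bayes classifier with $\PP_1((A^C)^\e - (A^C - T_1)^\e)=0$. The symmetric argument shows $A-T_2$ is an adversarial Bayes classifier with $\PP_0(A^\e - (A-T_2)^\e)=0$.

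Finally, for the two-sided $B=(A-T_2)\cup T_1$, the identity $B^\e=(A-T_2)^\e\cup T_1^\e$ together with the sandwich $(A-T_2)^\e\subset B^\e\subset A^\e\cup T_1^\e$ yields
\[B^\e - A^\e\subset T_1^\e - A^\e,\qquad A^\e - B^\e\subset A^\e - (A-T_2)^\e,\]
both of which are $\PP_0$-null by the previous steps, so $\PP_0(B^\e)=\PP_0(A^\e)$. An analogous decomposition of $(B^C)^\e=(A^C - T_1)^\e\cup T_2^\e$ yields $\PP_1((B^C)^\e)=\PP_1((A^C)^\e)$, and summing gives $R^\e(B)=R^\e(A)$, so $B$ is an adversarial Bayes classifier. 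The main subtlety I expect is this last combination step: the coarser inclusion $A^\e\triangle B^\e\subset D^\e$ alone is not known to have controlled $\PP_0$-measure, so one must use both one-sided estimates simultaneously via the sandwich inclusions above.
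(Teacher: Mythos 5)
Your overall route works and closely parallels the paper's, though you carry out more explicit bookkeeping: you track both a union piece $T_1$ and a removal piece $T_2$ and combine them, whereas the paper only writes out the union side $A\cup (D^\e)^{-\e}$ via the sandwich $A^\e\subset (A\cup (D^\e)^{-\e})^\e\subset (A\cup D)^\e$ from \cref{lemma:A_e_-e_containments} and leaves the symmetric half and the combination implicit. Both arguments hinge on the same two lemmas, \cref{lemma:A_e_-e_containments} and \cref{prop:abs_cont_equivalencies}.

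The one genuine gap is the justification of the two mass-transfer identities. Expanding $R^\e(A\cup D)=R^\e(A)$ only yields
\[\PP_0(D^\e - A^\e)=\PP_1\bigl((A^C)^\e - ((A\cup D)^C)^\e\bigr),\]
and $R^\e(A-D)=R^\e(A)$ only yields
\[\PP_1(D^\e - (A^C)^\e)=\PP_0\bigl(A^\e - (A-D)^\e\bigr);\]
in each equation both sides could be strictly positive, and nothing in a bare comparison of risks rules out such a cancellation. So ``comparing the resulting decompositions'' does not by itself force either quantity to vanish. The missing input is precisely \cref{prop:abs_cont_equivalencies}: since $D$ is degenerate, $A$ and $A\cup D$ are equivalent up to degeneracy (every $E$ with $A\subset E\subset A\cup D$ is an adversarial Bayes classifier), and hence $S_\e(\one_A)=S_\e(\one_{A\cup D})$ $\PP_0$-a.e., i.e.\ $\one_{A^\e}=\one_{A^\e\cup D^\e}$ $\PP_0$-a.e., which is exactly $\PP_0(D^\e - A^\e)=0$. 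The same proposition applied to $A$ and $A-D$ gives $S_\e(\one_{A^C})=S_\e(\one_{(A-D)^C})$ $\PP_1$-a.e., hence $\PP_1(D^\e-(A^C)^\e)=0$. (Equivalently one could rerun the $\QQ^d$/$(\QQ^d)^C$-splitting argument from the proof of \cref{prop:abs_cont_equivalencies} directly; either way, something beyond the risks of $A\cup D$ and $A-D$ is required.) With that citation inserted, your one-sided steps for $A\cup T_1$ and $A-T_2$ and the final sandwich inclusions for $B^\e$ and $(B^C)^\e$ are all correct.
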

\begin{proof}
    Let $A_2=A \cup (D^\e)^{-\e}$. Then $S_\e(\one_{A^C})\geq S_\e(\one_{A_2^C})$. We will show that $S_\e(\one_{A_2})=S_\e(\one_{A})$ $\PP_0$-a.e., which will then imply that $A_2$ is an adversarial Bayes classifier, and furthermore $A$ and $A_2$ are equivalent up to degeneracy by \cref{prop:abs_cont_equivalencies}. Notice that the set $A_2$ satisfies
        \[A\subset A_2\subset ((A\cup D)^\e)^{-\e}\]
    by \cref{lemma:non_abs_cont} and then \cref{lemma:A_e_-e_containments} implies that $A^\e \subset (A\cup (D^{\e})^{-\e})^\e \subset (A\cup D)^\e$. Because $D$ is a degenerate set, $A_3=A\cup D$ is an adversarial Bayes classifier equivalent to $A$ and thus 
    \cref{prop:abs_cont_equivalencies} implies that $\one_{A^\e}= \one_{(A\cup D)^\e}$-$\PP_0$-a.e. which in turn implies $\one_{A^\e}= \one_{A_2^\e}$-$\PP_0$-a.e.
\end{proof}

\section{The adversarial Bayes classifier in one dimension}\label{sec:necessary}
In this section, we assume that $d=1$. Recall that connected subsets of $\Rset$ are either intervals or single point sets. The length of an interval $I$ will be denoted $|I|$. When expressing a regular set as a union $\bigcup_{i=m}^M (a_i,b_i)$, we impose the convention that the indices $m,M$ satisfy $-\infty \leq m\leq M-1\leq +\infty$, where the case $m=M-1$ specifically accounts for the empty set. 

The arguments in this section frequently compare $b_i$ with $a_{i+1}$ or $a_i$ with $b_{i+1}$. To facilitate these comparisons, we adopt the following additional notation: when $\bigcup_{i=m}^M (a_i,b_i)$ represents a regular adversarial Bayes classifier and $M$ is finite, we define $a_{M+1}=+\infty$; similarly, if $m$ is finite, we define $b_{m-1}$ as $-\infty$.

\subsection{Regular adversarial Bayes classifiers---Proof of \cref{thm:adv_bayes_and_degenerate,thm:exists_regular}}\label{sec:regular_adv_bayes}

To start, every open regular set with $\e>0$ can be expressed in the desired form.
\begin{lemma}\label{lemma:^e_to_order}
    Let $A$ be a set that is the union of open intervals length at least $2\e$, for some $\e>0$. Then $A$ can be expressed as 
    \[A=\bigcup_{i=m}^M (a_i,b_i)\]
    with $a_i<b_i<a_{i+1}$ and $-\infty\leq m\leq M-1\leq +\infty$. 
\end{lemma}

See \cref{app:1d_necessary} for a proof. Next, notice that if $I$ is a connected component of $A$ or $A^C$ and $|I|<2\e$, then $I^{-\e}=\emptyset$. Thus the set of connected components of $A$ or $A^C$ of length strictly less than $2\e$ is contained in a degenerate set by \cref{prop:degenerate_connected_component}. 

However, if $|I|=2\e$, then $I^{-\e}$ could contain a single point: if $I=[x-\e,x+\e]$ then $I^{-\e}=\{x\}$ while $I^{-\e}=\emptyset$ if $I$ is not closed. Due to this observation, the set of connected components of $A$ and $A^C$ of length exactly $2\e$ is actually degenerate set as well. This observation enables the construction of two particularly useful adversarial Bayes classifiers that are equivalent to $A$ up to degeneracy.

    \begin{lemma}\label{lemma:Rset_split}
        Let $\e>0$ and $\PP_0,\PP_1\ll \mu$. Suppose $A$ is an adversarial Bayes classifier. Then there are adversarial Bayes classifiers $\td A_1,\td A_2$ satisfying $\td A_1\subset A\subset \td A_2$ which are equivalent to $A$ up to degeneracy and 
        \begin{equation}\label{eq:td_A_1_A_2}
            \td A_1=\bigcup_{i=m}^M (\td a_i,\td b_i),\quad \td A_2^C=\bigcup_{j=n}^N (\td e_j,\td f_j)    
        \end{equation}

        where the intervals $(\td a_i,\td b_i)$, $(\td e_i,\td f_i)$ satisfy $\td b_i-\td a_i>2\e$, $\td a_i<\td b_i<\td a_{i+1}$, $\td f_i-\td e_i>2\e$, and $\td e_i<\td f_i<\td e_{i+1}$. 
    \end{lemma}

    \begin{proof}[Proof of \cref{lemma:Rset_split}]
        \Cref{lemma:closure_interior_adversarial_Bayes} implies that $\interior A$ and $\ov A$ are both adversarial Bayes classifiers equivalent to $A$, and thus \cref{cor:A_e_diff_degenerate} implies that $D_1=((\interior A)^\e)^{-\e}- ((\interior A)^{-\e})^\e$ and $D_2=((\ov A)^\e)^{-\e}- ((\ov A)^{-\e})^\e$ are degenerate sets.  
        Consequently, the sets 
        $\td A_1=\interior A-D_1$, $\td A_2=\ov A\cup D_2$, and $A$ are all equivalent up to degeneracy. 
        
        The adversarial Bayes classifier $\interior A$ is an open set, and thus every connected component of $\interior A$ is open. Therefore, if $I$ is a connected component of $\interior A$ of length less than or equal $2\e$, then $I^{-\e}=\emptyset$ and \cref{prop:degenerate_connected_component} implies that $I\subset D_1$. Hence every connected component of $\td A_1$ has length strictly larger than $2\e$. 
        
        As $(\ov A)^C$ is an open set and $\td A_2^C=(\ov A)^C- D_2$, the same argument implies that every connected component of $\td A_2^C$ has length strictly larger than $2\e$. The claim on the ordering of the $a_i$, $b_i$ and $e_i$, $f_i$ is then a consequence of \cref{lemma:^e_to_order}.
    \end{proof}
    
    This result suffices to establish \cref{thm:adv_bayes_and_degenerate}; a brief proof sketch follows. The classifiers $\td A_1$ and $\td A_2$ have ``one-sided" regularity--- the connected components of $\td A_1$ and $\td A_2^C$ have length strictly greater than $2\e$. Next, we use these classifiers to construct a classifier $A'$ for which both $A'$ and $(A')^C$ have components larger than $2\e$.

    As $\td A_1\subset \td A_2$, the sets $\td A_1$ and $\td A_2^C$ are disjoint. Therefore, $\Rset$ equals the disjoint union
        \[\Rset= \td A_1\sqcup \td A_2^C \sqcup D.\] 
        Both $\td A_1$ and $\td A_2^C$ are a disjoint union of intervals of length greater than $2\e$, and thus the degenerate set $D=\td A_1^C\cap \td A_2$ must be a disjoint union of countably many intervals and isolated points. Notice that because $D$ is degenerate, the union of $\td A_1$ and an arbitrary measurable portion of $D$ is an adversarial Bayes classifier as well. To construct a regular adversarial Bayes classifier, we let $D_1$ be the connected components of $D$ that are adjacent to some open interval of $\td A_1$. The remaining components of $D$, $D_2=D-D_1$, must be adjacent to $\td A_2$. Therefore, if $A'=\td A_1\cup D_1$, the connected components of $A'$ and $(A')^C=\td A_2\cup D_2$ must have length strictly greater than $2\e$.

\begin{proof}[Proof of \cref{thm:adv_bayes_and_degenerate}]
        Let  $\td A_1\subset \td A_2$ be the adversarial Bayes classifiers defined in \cref{eq:td_A_1_A_2} of \cref{lemma:Rset_split}.
        Then $D=\td A_2-\td A_1$ is a degenerate set and thus
                \begin{equation}\label{eq:Rset_split}
            \Rset =D \sqcup \bigcup_{i=m}^M (\td a_i,\td b_i)\sqcup \bigcup_{i=n}^N (\td e_i,\td f_i)
        \end{equation}
        
        For each $i$, define 
        \[\hat a_i=\inf\{x: (x,\td b_i)\text{ does not intersect }\td A_2^C\}\]
        \[\hat b_i=\sup\{x: (\td a_i,x)\text{ does not intersect }\td A_2^C\}\]
        and let 
        \[ \hat A=\bigcup_{i=m}^M (\hat a_i,\hat b_i)\]
        Notice that $(\hat a_i,\hat b_i)\supset (\td a_i,\td b_i)$ so that $\hat b_i-\hat a_i>2\e$. Similarly, by the definition of the $\hat a_i$ and $\hat b_i$, every interval $(\hat b_i,\hat a_{i+1})$ with $i,i+1\in [m,M]$ must include some $(\td e_j,\td f_j)$ and thus $\hat a_{i+1}-\hat b_i>2\e$.
            As $\hat A\triangle A\subset D$, the set $\hat A$ is an adversarial Bayes classifier equivalent to $A$. 
            
        Next, we will show that any two intervals $(\hat a_k,\hat b_k)$, $(\hat a_p, \hat b_p)$ are either disjoint or equal. Assume that $(\hat a_k,\hat b_k)$ and $(\hat a_p, \hat b_p)$ are not disjoint, and thus intersect at a point $x$. By the definition of $\hat a_k$ and $\hat b_k$, the interval $(x,\hat b_k)$ does not intersect $\td A_2^C$. Thus the definition of $\hat b_p$ implies that $\hat b_p\geq \hat b_k$. Reversing the roles of $\hat b_p$ and $\hat b_k$, one can then conclude that $\hat b_p=\hat b_k$. One can show that $\hat a_p=\hat a_k$ via a similar argument. Thus we can choose $(a_i, b_i)$ to be unique disjoint intervals for which 
        \[\bigsqcup_{i=k}^K ( a_i, b_i)=\bigcup_{i=m}^M (\hat a_i,\hat b_i).\]

        Finally, \cref{lemma:^e_to_order} guarantees that the intervals can be ordered so that $a_i<b_i<a_{i+1}$ for all $i$.
    \end{proof}

Next, \cref{thm:exists_regular} is a consequence of the fact that the adversarial risk of $A=\bigcup_{i=m}^M (a_i,b_i)$ equals \eqref{eq:risk_on_intervals} when $A$ is regular.

\begin{proof}[Proof of \cref{thm:exists_regular}]

    Because $b_i-a_i>2\e$ and $a_i-b_{i-1}>2\e$, we can treat $R^\e(A)$ as a differentiable function of $a_i$ on a small open interval around $a_i$ as described by \cref{eq:risk_on_intervals}. The first order necessary conditions for a minimizer then imply the first relation of \eqref{eq:first_order_necessary} and the second order necessary conditions for a minimizer then imply the first relation of \eqref{eq:second_order_necessary}. The argument for $b_i$ is analogous.
\end{proof}

\subsection{Degenerate sets in one dimension---proof of \cref{thm:1d_degenerate}}\label{sec:degenerate_1d}
First, every component of $A$ or $A^C$ with length less than or equal to $2\e$ must be degenerate. In comparison, notice that this statement is strictly stronger than \cref{prop:degenerate_connected_component}.

\begin{lemma}\label{lemma:small_components_degenerate_1d}
    If a connected component $C$ of $A$ or $A^C$ has length less than or equal to $2\e$, then $C$ is degenerate.
\end{lemma}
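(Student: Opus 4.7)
The proof will pass through \cref{lemma:Rset_split}, which produces adversarial Bayes classifiers $\td A_1 \subset A \subset \td A_2$ equivalent to $A$ up to degeneracy and for which both $\td A_1$ and $\td A_2^C$ decompose into connected components of length strictly greater than $2\e$. I plan to argue that $\td A_2 - \td A_1$ is a degenerate set for $A$ and that every connected component $C$ of $A$ or $A^C$ with $|C|\leq 2\e$ is contained in this degenerate set, after which degeneracy of $C$ follows immediately from the observation that any Borel subset of a degenerate set is itself degenerate.

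To see that $\td A_2 - \td A_1$ is degenerate for $A$, note that because $\td A_1 \subset A \subset \td A_2$, the sandwich $A - (\td A_2 - \td A_1) \subset E \subset A \cup (\td A_2 - \td A_1)$ simplifies to $\td A_1 \subset E \subset \td A_2$, and any such $E$ is adversarial Bayes by the equivalence up to degeneracy of $\td A_1$ and $\td A_2$ provided by \cref{lemma:Rset_split}.

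For the containment $C \subset \td A_2 - \td A_1$, consider first the case $C \subset A$. Then $C \subset A \subset \td A_2$, so it remains to show $C \cap \td A_1 = \emptyset$. Any connected component of $\td A_1$ is a connected subset of $A$ and must therefore lie entirely inside a single connected component of $A$; if such a component of $\td A_1$ intersected $C$ it would be contained in $C$, contradicting the fact that components of $\td A_1$ have length strictly greater than $2\e \geq |C|$. The case $C \subset A^C$ is symmetric: $C \cap \td A_1 = \emptyset$ because $\td A_1 \subset A$, and $C \cap \td A_2^C = \emptyset$ by the analogous length argument applied to the components of $\td A_2^C \subset A^C$, all of which have length strictly greater than $2\e$.

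The proof is then complete because if $T \subset S$ and $S$ is degenerate for $A$, then the inclusion $T \subset S$ forces $A - T \supset A - S$ and $A \cup T \subset A \cup S$, so any sandwich $A - T \subset E \subset A \cup T$ lies inside $A - S \subset E \subset A \cup S$ and is therefore adversarial Bayes. There is essentially no obstacle once \cref{lemma:Rset_split} is in hand; the only mild subtlety is the borderline case $|C| = 2\e$ with $C$ closed, where $C^{-\e}$ is a single point and \cref{prop:degenerate_connected_component} does not apply on its own. This case is already absorbed into \cref{lemma:Rset_split} via the interior/closure regularization together with absolute continuity of $\PP$, which is why the present lemma reduces so cleanly to a consequence of \cref{lemma:Rset_split}.
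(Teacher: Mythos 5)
Your proof is correct and follows essentially the same route as the paper's: both invoke \cref{lemma:Rset_split} and use the fact that every connected component of $\td A_1$ (resp.\ $\td A_2^C$) has length strictly greater than $2\e$ to conclude that a component $C$ with $|C|\leq 2\e$ must be disjoint from $\td A_1$ (resp.\ $\td A_2^C$). The only cosmetic difference is that you package the conclusion as $C\subset \td A_2-\td A_1$ via a single degenerate set, whereas the paper argues with $A-\td A_1$ and $\td A_2-A$ separately; your version is in fact a little cleaner and makes explicit the closing step (a subset of a degenerate set is degenerate) that the paper leaves implicit.
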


\begin{proof}
       Let $A$ be an adversarial Bayes classifier and let $\td A_1$ and $\td A_2$ be the two equivalent adversarial Bayes classifiers of \cref{lemma:Rset_split}. Because every connected component of component of $\td A_1$ has length strictly larger than $2\e$, the connected components of $A$ of length less than or equal to $2\e$ must be included in the degenerate set $A-\td A_1$. Similarly, the connected components of $A^C$ of length less than or equal to $2\e$ are included in $\td A_2^C-A^C$, which is a degenerate set.
\end{proof}
Conversely, the length of a degenerate interval contained in $\supp \PP$ is at most $2\e$.
\begin{corollary}\label{cor:degenerate_size}
    Let $\PP\ll \mu$. Assume that $I\subset \supp \PP^\e$ is a degenerate interval for an adversarial Bayes classifier $A$. Then $|I|\leq 2\e$.
\end{corollary}

\begin{proof}
    \Cref{lemma:degenerate_set_-e} implies that if $I$ is a degenerate interval then $\PP(I^{-\e})=0$. Because $I$ is an interval, the set $I^{-\e}$ is either empty, a single point, or an interval. Next, observe that any interval contained in $(\supp \PP)^\e$ satisfies $I^{-\e}\subset \supp \PP$. As $I\subset \supp \PP$ and every interval larger than a single point has positive measure under $\mu$, it follows that $I^{-\e}$ is at most a single point and thus $|I|\leq 2\e$.
\end{proof}

This result is then sufficient to prove the fourth bullet of \cref{thm:1d_degenerate}.
To start:

\begin{proposition}
\label{lemma:degenerate_1d_eta_0_1}
    Let $\PP\ll \mu$ and let $A$ be an adversarial Bayes classifier. If $\supp \PP$ is an interval and the adversarial Bayes classifier $A$ has a degenerate interval $I$ contained in $\supp \PP^\e$, then $\eta(x)\in \{0,1\}$ on a set of positive measure.
\end{proposition}
A formal proof is provided in \cref{app:degenerate_1d_eta_0_1}, we sketch the main ideas below, assuming that $I$ is a degenerate interval contained in $\interior (\supp \PP^{-\e})$.
One can then find a `maximal' degenerate interval $J=[d_3,d_4]$ containing $I$ inside $\supp \PP$, in the sense that if $J'\supset I$ is a degenerate interval and $J\subset J'$ then $J'=J$. \Cref{cor:degenerate_size} implies that $|J|\leq 2\e$ and \cref{lemma:enlarge_degenerate} implies that $J$ is of distance strictly more than $2\e$ from any other degenerate set. Thus the intervals $[d_3-\e,d_3)$, $(d_4,d_4+\e]\subset \supp \PP$ do not intersect a degenerate subset of $A$, and these intervals must be entirely contained in $A$ or $A^C$ due to \cref{lemma:small_components_degenerate_1d}. Thus one can compute the difference $R^\e(A\cup J)-R^\e(A-J)$ under four cases: 1) $[d_3-\e,d_3)\subset A$, $(d_4,d_4+\e]\subset A$; 2) $[d_3-\e,d_3)\subset A$, $(d_4,d_4+\e]\subset A^C$; 3) $[d_3-\e,d_3)\subset A^C$, $(d_4,d_4+\e]\subset A$; 4) $[d_3-\e,d_3)\subset A^C$, $(d_4,d_4+\e]\subset A^C$.

In each case, this difference results in $\int_{I'} p_1(x)dx=0$ or $\int_{I'} p_0(x)dx=0$ on some interval $I'\subset \interior \supp \PP$, which implies either $\eta=1$ or $\eta=0$, respectively, on a set of positive measure.

\Cref{lemma:degenerate_1d_eta_0_1} implies that if $\PP(\eta\in\{0,1\})=0$, any degenerate set for a regular adversarial Bayes classifier $A$ must be contained in $\partial A\cup \ov{(\supp \PP^\e)^C}$. Next, in order to show the fourth bullet of \cref{thm:1d_degenerate}, we show that this set is in fact degenerate. The proof is presented below, with technical details deferred to \cref{app:identify_degenerate_eta_0_1}. First, if $A$ is regular, identifying a degenerate set is straightforward. 
\begin{lemma}\label{lemma:max_degenerate_set}
    Assume that $\PP\ll\mu$ and let $A$ be a regular adversarial Bayes classifier. Then the set $\ov{(\supp \PP^\e)^C}\cup \partial A$ is degenerate for $A$.
\end{lemma}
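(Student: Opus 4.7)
The plan is to write $\ov{(\supp \PP^\e)^C}\cup \partial A$ as the union of its two pieces, $\partial A$ and $F := \ov{(\supp \PP^\e)^C}$, show each is separately degenerate for $A$, and invoke \cref{lemma:union_degenerate} to combine them.

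The $\partial A$ piece is immediate from the regularity hypothesis. Because $A = \bigcup_i (a_i,b_i)$ is a disjoint union of open intervals, $A$ is open, so $\interior A = A$ and $\ov A = A\cup \partial A$. Hence any Borel $E$ with $A - \partial A \subset E \subset A \cup \partial A$ satisfies $\interior A \subset E \subset \ov A$, and \cref{lemma:closure_interior_adversarial_Bayes} immediately identifies $E$ as an adversarial Bayes classifier.

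For $F$, I will argue directly that any Borel $E$ with $A - F \subset E \subset A \cup F$ satisfies $R^\e(E) = R^\e(A)$. The key geometric observation is that $F = \{\by : \dist(\by,\supp \PP) \ge \e\}$, since the distance function to $\supp \PP$ is continuous. So if $\bx \in \supp \PP$ and $\by \in F \cap \ov{B_\e(\bx)}$, the inequalities $\e \le \dist(\by,\supp \PP) \le \|\bx - \by\| \le \e$ force $\|\bx-\by\| = \e$, and $\bx$ must be a closest point of $\supp \PP$ to $\by$. In one dimension this means $\by = \bx \pm \e$ and the open interval of length $2\e$ on the side of $\bx$ containing $\by$ is disjoint from $\supp \PP$; consequently $\bx$ must be an endpoint of some connected component of $(\supp \PP)^C$ of length at least $2\e$. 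Since $(\supp \PP)^C$ is open in $\Rset$, it has at most countably many connected components, so the bad set $B := \{\bx \in \supp \PP : \ov{B_\e(\bx)}\cap F \neq \emptyset\}$ is countable and hence $\PP$-null, using $\PP \ll \mu$. For $\bx \in \supp\PP \setminus B$, the relations $\ov{B_\e(\bx)}\cap F = \emptyset$ and $E \triangle A \subset F$ combine to give $\ov{B_\e(\bx)}\cap E = \ov{B_\e(\bx)}\cap A$, whence $S_\e(\one_E)(\bx) = S_\e(\one_A)(\bx)$ and the analogous equality for the complements. Integrating against $\PP_0$ and $\PP_1$ then yields $R^\e(E) = R^\e(A)$, so $E$ is an adversarial Bayes classifier.

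The main obstacle I anticipate is the geometric argument isolating $B$: one must combine the triangle inequality with the one-dimensional fact that $(\supp \PP)^C$ decomposes into countably many open intervals. Once $B$ is identified as countable, the rest is routine. The one-dimensional hypothesis is essential here, since in higher dimensions the set of points in $\supp \PP$ at distance exactly $\e$ from $F$ need not be Lebesgue-null and no analogous counting argument is available.
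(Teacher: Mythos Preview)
Your proof is correct, and both you and the paper handle $\partial A$ identically via \cref{lemma:closure_interior_adversarial_Bayes}. For $F=\ov{(\supp\PP^\e)^C}$ the approaches diverge. You give a direct one-dimensional argument: the set $B$ of points of $\supp\PP$ whose closed $\e$-ball meets $F$ must consist of endpoints of components of $(\supp\PP)^C$ of length at least $2\e$, hence is countable and $\PP$-null. The paper instead shows $\PP(F^\e)=0$ through dimension-free lemmas: \cref{lemma:e_non_disjoint} yields $((\supp\PP^\e)^C)^\e\cap\supp\PP=\emptyset$, and \cref{lemma:boundary^e_leb_zero} (together with \cref{lemma:closure_e_commute}) handles the passage to the closure. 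So your closing remark that the one-dimensional hypothesis is essential applies only to your endpoint-counting route; the paper's argument goes through in any $\Rset^d$. One small slip: you assert $F=\{\by:\dist(\by,\supp\PP)\ge\e\}$ as an equality, but only the inclusion $F\subset\{\dist\ge\e\}$ holds in general (e.g.\ if $\supp\PP=(-\infty,0]\cup[2\e,\infty)$ then $F=\emptyset$ while $\dist(\e,\supp\PP)=\e$). This does not hurt you, since your argument only uses that inclusion.
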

See \cref{app:identify_degenerate_eta_0_1} for a proof.
Next, we argue that if $\PP(\eta=0\text{ or }1)=0$, the set $\ov{(\supp \PP^\e)^C}\cup \partial A$ is a maximal degenerate set. If $A$ is a regular adversarial Bayes classifier and $D\subset \interior (\supp \PP^\e)$ is a degenerate set which contains two points $x\leq y$ at most $2\e$ apart, then \cref{lemma:enlarge_degenerate} implies that $[x,y]\subset (D^{\e})^{-\e}$ is degenerate, which would contradict \cref{lemma:degenerate_1d_eta_0_1}. Thus $D\cap \interior (\supp \PP^\e)$ must be comprised of points that are strictly more than $2\e$ apart. However, if $x\in D$ is more than $2\e$ from any point in $\partial A$, then one can argue that $R^\e(A-\{x\})-R^\e(A)>0$ if $x\in A$ and $R^\e(A\cup \{x\})-R^\e(A)>0$ if $x\not \in A$. Thus if $D$ is a degenerate set is disjoint from $\ov{(\supp \PP^\e)^C}$, then $D$ must be contained in $\partial A$. 

\begin{lemma}\label{lemma:fourth_bullet_for_regular}
    Assume that $\PP\ll\mu$, $\PP(\eta=0\text{ or }1)=0$, and $\supp \PP$ is an interval. Then if $D$ is a degenerate set for a regular adversarial Bayes classifier $A$, then $D\subset \ov{(\supp \PP^\e)^C}\cup \partial A$.
\end{lemma}
\begin{proof}
        Let $D$ be a degenerate set disjoint from $\ov{(\supp \PP^\e)^C}$, or equivalently $D\subset \interior(\supp \PP^{\e})$. We will show that $D\subset \partial A$.
    First, we use a proof by contradiction to argue that the points in $D\cup \partial A$ are strictly greater than $2\e$ apart. 
    
    If $\partial A$ and $D$ are both degenerate, \cref{lemma:union_degenerate} implies that $D\cup \partial A$ is degenerate as well. For contradiction, assume that $x< y$ are two points in $(D\cup \partial A)\cap \interior(\supp \PP^\e)$ with $y-x\leq 2\e$. Then \cref{lemma:enlarge_degenerate} implies that $[x,y]\subset ((D\cup \partial A)^\e)^{-\e}$ is a degenerate interval contained in $\interior(\supp \PP)$. This statement contradicts \cref{lemma:degenerate_1d_eta_0_1}. Therefore, $D\cup \partial A$ is comprised of points that are more than $2\e$ apart.

   Next, we will show that a degenerate set cannot include any points in $\interior (\supp \PP^\e)$ which are more than $2\e$ from $\partial A$. Let $z$ be any point in $\interior (\supp \PP^\e)$ that is strictly more than $2\e$ from $\partial A$. Assume first that $z\in A$. Then 
    \[R^\e(A-\{z\})-R^\e(A)=\int_{z-\e}^{z+\e}\eta(x) d\PP\]
    However, if $z\in \interior (\supp \PP^\e)$ then $(z-\e,z+\e)$ intersects $\supp \PP$ and thus has positive measure under $\PP$. As $\eta(x)>0$ on $\supp \PP$, one can conclude that $R^\e(A-\{z\})-R^\e(A)>0$.  Similarly, if $z\in A^C$, then one can show that $R^\e(A\cup \{z\})-R^\e(A)>0$. Therefore $z$ cannot be in any degenerate set.

    In summary: $D\cup \partial A$ is comprised of points that are more than $2\e$ apart, but no more than $2\e$ from $\partial A$. Therefore, one can conclude that $D\subset \partial A$.
\end{proof}
A technical argument extends \cref{lemma:fourth_bullet_for_regular} to all adversarial Bayes classifiers by comparing the boundary of a given adversarial Bayes classifier $A$ to the boundary of an equivalent regular adversarial Bayes classifier $A_r$.

\begin{corollary}\label{cor:fourth_bullet}
    Assume that $\PP\ll\mu$, $\PP(\eta=0\text{ or }1)=0$, and $\supp \PP$ is an interval. Then if $D$ is a degenerate set for any adversarial Bayes classifier $A$, then $D\subset \ov{(\supp \PP^\e)^C}\cup \partial A$.
\end{corollary}
See \cref{app:identify_degenerate_eta_0_1} for a proof.

Combining previous results then proves \cref{thm:1d_degenerate}:
    the first bullet of \cref{thm:1d_degenerate} is established in \cref{prop:degenerate_connected_component,lemma:small_components_degenerate_1d}, the second in \cref{cor:degenerate_size}, the third in \cref{lemma:union_degenerate}, and the fourth in \cref{cor:fourth_bullet}.

\Cref{lemma:degenerate_1d_eta_0_1} and \cref{cor:fourth_bullet} are false when $\supp \PP$ is not an interval.

\begin{figure}
    \centering
    \includegraphics[width=0.30\textwidth]{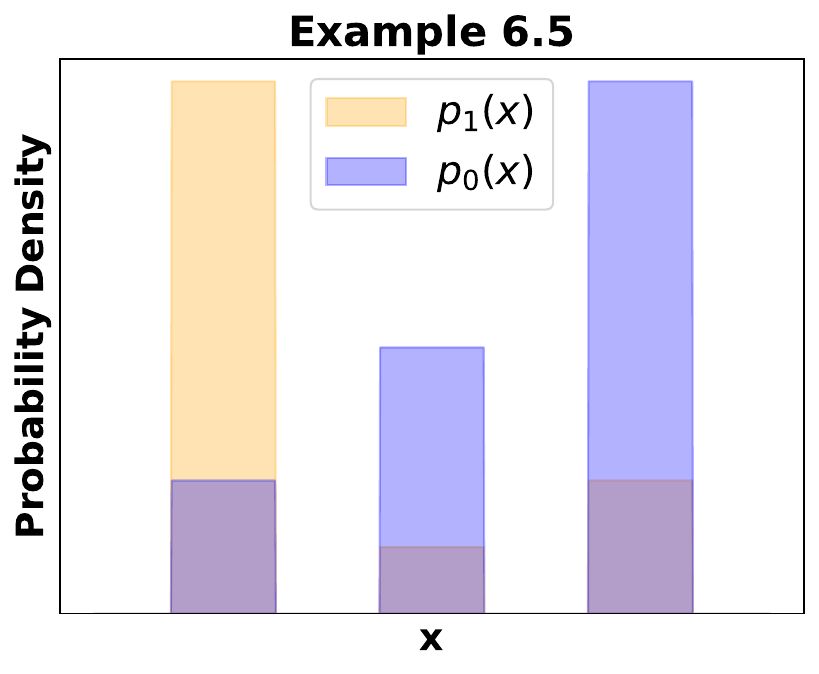}
    \caption{The distribution of \cref{ex:deg_eta_0_1_counterexample}.}
    \label{fig:det_eta_0_1_counterexample}
\end{figure}

\begin{example}\label{ex:deg_eta_0_1_counterexample}
    Consider a probability distribution for which 
    \[p_1(x)=\begin{cases}
        \frac 8 {25 \e} &\text{if }-\frac{5}2 \e\leq x \leq -\frac{3}2\e\\
        \frac 1 {25 \e}&\text{if }-\frac 12 \e \leq x\leq +\frac 12 \e\\
        \frac 2 {25 \e}&\text{if }+\frac 32 \e \leq x\leq +\frac 52 \e\\
        0&\text{otherwise}
    \end{cases}\quad 
    p_0(x)=\begin{cases}
        \frac 2{25 \e}& \text{if } -\frac 52 \e \leq x\leq -\frac 32 \e \\
        \frac 4 {25 \e} &\text{if }-\frac 12 \e\leq x\leq +\frac 12\e \\
        \frac{8}{25\e} &\text{if } +\frac 32 \e \leq x\leq +\frac 52 \e\\
        0&\text{otherwise}
    \end{cases}\]
    See \cref{fig:det_eta_0_1_counterexample} for an illustration. There are no solutions $x$ to the necessary conditions 
    \cref{eq:first_order_necessary} within $\supp \PP^\e$ at which $p_0$ is continuous at 
    $x\pm \e$ and $p_1$ 
    continuous at $x\mp \e$. Thus the only possible values for the $a_i$s and $b_i$s within $\supp \PP^\e$ are $\{-\frac 72\e, -\frac 52\e, -\frac 32 \e, -\frac 12 \e, +\frac 12 \e, +\frac 32 \e, +\frac 52 \e, +\frac 72\e\}$. By comparing the risks of all adversarial Bayes classifiers with endpoints in this set, one can show that $(-\infty,-\frac 12 \e)$ is an adversarial Bayes classifier. At the same time, $R^\e((-\infty,-\frac 12 \e)\cup S)=R^\e((-\infty,-\frac 12\e))$ for any subset $S$ of $[-\frac 12 \e,+\frac 12\e]$. Thus $[-\frac 12 \e, +\frac 12 \e]$ is a degenerate set, but $\eta(x)=\frac 15$ on $[-\frac 12 \e,+\frac 12 \e]$. See \cref{app:deg_eta_0_1_counterexample} for details.

\end{example}

\subsection{Regularity as $\epsilon$ Increases---Proof of \cref{thm:adv_bayes_increasing_e}}\label{sec:increasing_e}

Let $A_1$ and $A_2$ be two regular adversarial Bayes classifiers corresponding to perturbation radiuses $\e_1$ and $\e_2$ respectively.
 Notice that the adversarial classification risk in \cref{eq:adv_classification_risk_set} pays a penalty of $1$ within $\e$ of each $a_i$ and $b_i$. This consideration suggests that as $\e$ increases, there should be fewer transitions between the two classes. 
The key observation is that so long as $A_1$ is non-trivial, no connected component of $A_2$ should contain a connected component of $A_1^C$ and no connected component of $A_2^C$ should contain a connected component of $A_1$.

\begin{lemma}\label{lemma:inclusion_basic}
    Assume that $\PP\ll \mu$ is a measure for which $\supp \PP$ is an interval $I$ and $\PP(\eta(x)=0\text { or } 1)=0$. Let $A_1=\bigcup_{i=m}^M (a_i^1,b_i^1)$ and $A_2=\bigcup_{j=n}^N (a_j^2,b_j^2)$ be two regular adversarial Bayes classifiers corresponding to perturbation sizes $0< \e_1<\e_2$. 
    \begin{itemize}
        \item If both $\Rset$ and $\emptyset$ are adversarial Bayes classifiers for perturbation radius $\e_1$, then both $\Rset$ and $\emptyset$ are adversarial Bayes classifiers for perturbation radius $\e_2$.
        \item Assume that $\Rset$ and $\emptyset$ are not both adversarial Bayes classifiers for perturbation radius $\e_1$. Then for each interval $(a_i^1,b_{i}^1)$, the set $(a_i^1,b_{i}^1)\cap \interior(I^{\e_1})$ cannot contain any non-empty $(b_j^2,a_{j+1}^2)\cap \interior(I^{\e_1})$ and for each interval $(b_i^1,a_{i+1}^1)$, the set $(b_i^1,a_{i+1}^1)\cap \interior(I^{\e_1})$ cannot contain any non-empty $(a_j^2,b_{j}^2)\cap \interior(I^{\e_1})$.
    \end{itemize}

    
\end{lemma}

\Cref{ex:gaussians_equal_variances} demonstrates the exception to the first bullet--- when $\e\geq (\mu_1-\mu_0)/2$, both $\Rset$ and $\emptyset$ are adversarial Bayes classifiers. The intersection with $\interior(I^{\e_1})$ is required because $(\interior(I^{\e_1}))^C$ is a degenerte set for $A_1$. The claim in the second bullet would be contradicted by choosing $A_2=\Rset$, $A_1=\emptyset$, and $\e_2>\e_1\geq (\mu_1-\mu_0)/2$. \Cref{lemma:inclusion_basic} provides a way to pair each component of $A_2 \cap I^{\e_2} $ to a component of $ A_1\cap I^{\e_1}$ and each component of $A_2^C\cap I^{\e_2}$ to a component of $A_1\cap I^{\e_1}$, thereby proving \cref{thm:adv_bayes_increasing_e} when $\e_2>\e_1>0$. The challenge in extending this result to $\e_1=0$ is that there may not exist a regular Bayes classifier, as demonstrated in \cref{ex:counterexample_Bayes}. \Cref{lemma:inclusion_basic} is rephrased in \cref{lemma:inclusion_basic_0} of \cref{app:increasing_e} to circumvent this technicality.

To prove \cref{lemma:inclusion_basic}, notice that if $A_2=\bigcup_{i=1}^M(a_i^2,b_i^2)$ is a regular adversarial Bayes classifier and $(a_j^2,b_j^2)\subset I^{-\e_2}$, then $R^{\e_2}(A_2- (a_j^2,b_j^2))\geq R^{\e_2}(A_2)$ which is equivalent to 
 \[ 0\leq  \int_{a_j^2-\e_2}^{b_j^2+\e_2} p_1dx-\left( \int_{a_j^2-\e_2}^{a_j^2+\e_2} p dx +\int_{a_j^2+\e_2}^{b_j^2-\e_2} p_0 dx+\int_{b_j^2-\e_2}^{b_j^2+\e_2} p dx\right)=\int_{a_j^2+\e_2} ^{b_j^2-\e_2} p_1(x)dx-\int_{a_j^2-\e_2}^{b_j^2+\e_2} p_0(x)dx\]
As $p_0,p_1$ are non-zero on $\supp \PP$, replacing $\e_2$ with $\e_1$ in this equation would strictly increase the first integral and decrease the second, thereby strictly increasing the entire expression.

Thus, if $(a_j^2-\e_1, b_j^2+\e_1)\subset A_1^C\cap I$, this calculation would imply that $R^{\e_1}(A_1 \cup (a_i^2,b_i^2))< R^{\e_1}(A_1)$, which would contradict the fact that $A_1$ is an adversarial Bayes classifier. Similar but more technical calculations performed in \cref{app:increasing_e} show that if $(a_i^2,b_i^2)\subset A_1^C\cap \interior(I^{\e_1})$ then $R^{\e_1}(A_1 \cup (a_i^2,b_i^2))< R^{\e_1}(A_1)$ and so $A_1$ cannot be an adversarial Bayes classifier.

\section{Related works}\label{sec:related_works}

 Prior work analyzes several variations of our setup, such as perturbations in open balls \cite{BungertGarciaMurray2021}, alternative perturbation sets \cite{BhagojiCullinaMittalji2019lower}, attacks using general Wasserstein $p$-metrics \cite{trillosMurray2022,TrillosJacobsKim22}, minimizing $R^\e$ over Lebesgue measurable sets \cite{PydiJog2021}, the multiclass setting \cite{TrillosJacobsKim22}, and randomized classifiers \cite{pydi2023role,TrillosJacobsKim22}. Due to the plethora of attacks present in the literature, this paper contains proofs of intermediate results that appear in prior work. Understanding the uniqueness of the adversarial Bayes classifier in these other contexts remains an open question. Establishing a notion of uniqueness for randomized classifiers in the adversarial context is particularly interesting, as randomized classifiers are essential in calculating the minimal possible error in adversarial multiclass classification \cite{TrillosJacobsKim22} but not binary classification \cite{pydi2023role}. 

 Prior work \cite{AikenBrewerMurray2022, BhagojiCullinaMittalji2019lower, PydiJog2019}
 adopts a different method for identifying adversarial Bayes classifiers for various distributions. To prove a set is an adversarial Bayes classifier, \cite{BhagojiCullinaMittalji2019lower} first show a strong duality result $\inf_A R^\e(A)=\sup_\gamma \tilde D(\gamma)$ for some dual risk $\tilde D$ on the set of couplings between $\PP_0$ and $\PP_1$. Subsequently, \cite{AikenBrewerMurray2022,BhagojiCullinaMittalji2019lower,PydiJog2019} exhibit a set $A$ and a coupling $\gamma$ for which the adversarial risk of $A$ matches the dual risk of $\gamma$, and thus $A$ must minimize the adversarial classification risk. This approach involves solving the first order necessary conditions \cref{eq:first_order_necessary}, and \cite{AikenBrewerMurray2022} relies on a result of \cite{trillosMurray2022} which states that these relations hold for sufficiently small $\e$ under reasonable assumptions.  In contrast, this paper uses equivalence up to degeneracy to show that it suffices to consider sets with enough regularity for the first order necessary conditions to hold; and the solutions to these conditions typically reduce the possibilities for the adversarial Bayes classifier to a finite number of sets.

Prior work on regularity \cite{AwasthiFrankMohri2021, BungertGarciaMurray2021} prove the existence of adversarial Bayes classifiers with one sided tangent balls. \Cref{thm:high_dim_uniqueness regularity} states that each equivalence class under equivalence up to degeneracy has a representative with this type of regularity. Furthermore, results of \cite{AikenBrewerMurray2022} imply that under reasonable assumptions, one can choose adversarial Bayes classifiers $A(\e)$ for which $\comp(A(\e))+\comp(A(\e)^C)$ is always decreasing in $\e$. Specifically, they show that as $\e$ increases, the only possible discontinuous changes in $A(\e)$ are merged components, deleted components, or a endpoint of a component changing discontinuously in $\e$. This statement and \cref{lemma:inclusion_basic} are complementary results; neither one implies the other.


\section{Conclusion}
We defined a new notion of uniqueness for the adversarial Bayes classifier, which we call \emph{uniqueness up to degeneracy}. The concept of uniqueness up to degeneracy produces a method for calculating the adversarial Bayes classifier for a reasonable family of distributions in one dimension, and assists in understanding their regularity properties. We hope that the theoretical insights in this paper will assist in the development of algorithms for robust learning.  
\section*{Acknowledgments}
Natalie Frank was supported in part by the Research Training Group in Modeling and Simulation funded by the National Science Foundation via grant RTG/DMS – 1646339 NSF grant DMS-2210583 and grants DMS-2210583, CCF-1535987, IIS-1618662. Special thanks to Jonathan Niles-Weed, Pranjal Awasthi, and Mehryar Mohri for helpful conversations.

\bibliographystyle{siamplain}
\bibliography{references}
\pagebreak
\appendix

\ifthenelse{\boolean{appendixmode}}{\section*{Contents of the appendix}}{\section*{Contents of the supplement}}
\begin{itemize}[label={}, itemsep=0.5em]
    \item \textbf{\hyperref[app:app_outline]{\Cref*{app:app_outline}: Organization of the \ifthenelse{\boolean{appendixmode}}{appendix}{supplement}}} \dotfill \pageref{app:app_outline}

    \item \textbf{\hyperref[app:background_main_deferred]{\Cref*{app:background_main_deferred}: Deferred arguments from \cref*{sec:Background,sec:main_results}}} \dotfill \pageref{app:background_main_deferred}
    
    \begin{itemize}[label={}]
        \item\hyperref[app:Bayes_equivalencies]{\Cref*{app:Bayes_equivalencies}: Proof of \cref*{prop:Bayes_equivalencies}} \dotfill \pageref{app:Bayes_equivalencies}
        
        \item\hyperref[app:union_intersection_adv_bayes]{\Cref*{app:union_intersection_adv_bayes}: Proof of \cref*{lemma:union_intersection_adv_bayes}} \dotfill \pageref{app:union_intersection_adv_bayes}
        
        \item \hyperref[app:W_infty]{\Cref*{app:W_infty}: Complementary slackness-- proof of \cref*{thm:complementary_slackness_classification}} \dotfill \pageref{app:W_infty}
    \end{itemize}
    
    \item\hyperref[app:counterexample_Bayes]{\textbf{\Cref*{app:counterexample_Bayes}: Deferred arguments from \cref*{sec:main_results}---\tocbreak\Cref*{ex:counterexample_Bayes} details}} \dotfill \pageref{app:counterexample_Bayes}

    \item \textbf{\hyperref[app:abs_cont_equivalencies]{\Cref*{app:abs_cont_equivalencies}: Deferred proofs from \cref*{app:abs_cont_equivalencies_main}}} \dotfill \pageref{app:abs_cont_equivalencies}

    \begin{itemize}[label={}]
        \item \hyperref[app:intersection_union_a.e.]{\Cref*{app:intersection_union_a.e.}: Proof of \cref*{cor:0_1_a.e._equivalence,lemma:intersection_union_a.e.}} \dotfill \pageref{app:intersection_union_a.e.}

        \item \hyperref[app:closure_interior_adversarial_Bayes]{\Cref*{app:closure_interior_adversarial_Bayes}: Proof of \cref*{lemma:closure_interior_adversarial_Bayes,lemma:boundary_difference}} \dotfill \pageref{app:closure_interior_adversarial_Bayes}

        \item \hyperref[app:open_carrot_e]{\Cref*{app:open^e}: Proof of \cref*{lemma:open^e}} \dotfill \pageref{app:open^e}
    \end{itemize}

    \item \textbf{\hyperref[app:TFAE_equiv_proof]{\Cref*{app:TFAE_equiv_proof}: Deferred proofs from \cref*{app:TFAE_equiv}---{\tocbreak}proof of \cref*{lemma:hat_eta_adv_Bayes}}} \dotfill \pageref{app:TFAE_equiv_proof}

    \item \textbf{\hyperref[app:e_-e]{\Cref*{app:e_-e}: More about the $\empty^\e$, $\empty^{-\e}$, and $S_\e$ operations}} \dotfill \pageref{app:e_-e}

    \item \textbf{\hyperref[app:measurability]{\Cref*{app:measurability}: Measurability}} \dotfill \pageref{app:measurability}
    \begin{itemize}[label={}]
         \item \hyperref[app:define_universal_sigma_algebra]{\Cref*{app:define_universal_sigma_algebra}: Defining the universal $\sigma$-algebra} \dotfill \pageref{app:define_universal_sigma_algebra}

         \item \hyperref[app:borel_univ_equivalent]{\Cref*{app:borel_univ_equivalent}: Proof of \cref*{thm:borel_univ_equivalent}} \dotfill \pageref{app:borel_univ_equivalent}

         \item \hyperref[app:univ_borel_deferred_lemmas]{\Cref*{app:univ_borel_deferred_lemmas}: Proofs of \cref*{lemma:borel_to_univ_W_infty,cor:cdl_univ_borel,cor:S_e_and_W_inf_universal}} \dotfill \pageref{app:univ_borel_deferred_lemmas}
    \end{itemize}
    \item \textbf{\hyperref[app:degenerate]{\Cref*{app:degenerate}: Deferred proofs from \cref*{sec:degenerate}}} \dotfill \pageref{app:degenerate}
    \begin{itemize}[label={}]
         \item \hyperref[app:union_degenerate]{\Cref*{app:union_degenerate}: Proof of \cref*{lemma:union_degenerate}} \dotfill \pageref{app:union_degenerate}

         \item \hyperref[app:degenerate_connected_component]{\Cref*{app:degenerate_connected_component}: Proof of \cref*{prop:degenerate_connected_component}} \dotfill \pageref{app:degenerate_connected_component}

         \item \hyperref[app:degenerate_set_-e]{\Cref*{app:degenerate_set_-e}: Proof of \cref*{lemma:degenerate_set_-e}} \dotfill \pageref{app:degenerate_set_-e}
    \end{itemize}
    \item \textbf{\hyperref[app:1d_necessary]{\Cref*{app:1d_necessary}: Deferred proofs from \cref*{sec:regular_adv_bayes}---{\tocbreak}proof of \cref*{lemma:^e_to_order}}} \dotfill \pageref{app:1d_necessary}
    
    \item \textbf{\hyperref[app:degenerate_1d]{\Cref*{app:degenerate_1d}: Deferred Proofs from \cref*{sec:degenerate_1d}}} \dotfill \pageref{app:degenerate_1d}
    \begin{itemize}[label={}]
         \item \hyperref[app:degenerate_1d_eta_0_1]{\Cref*{app:degenerate_1d_eta_0_1}: Proof of \cref*{lemma:degenerate_1d_eta_0_1}} \dotfill \pageref{app:degenerate_1d_eta_0_1}

         \item \hyperref[app:identify_degenerate_eta_0_1]{\Cref*{app:identify_degenerate_eta_0_1}: Proof of \cref*{lemma:max_degenerate_set,cor:fourth_bullet}} \dotfill \pageref{app:identify_degenerate_eta_0_1}
    \end{itemize}

    \item \textbf{\hyperref[app:increasing_e]{\Cref*{app:increasing_e}: Deferred proofs from \cref*{sec:increasing_e}}} \dotfill \pageref{app:increasing_e}
    \item \textbf{\hyperref[app:examples]{\Cref*{app:examples}: Adversarial Bayes classifier examples{\tocbreak}and deferred \cref*{sec:examples} proofs}} \dotfill \pageref{app:examples}
        
    \begin{itemize}[label={}]
         \item \hyperref[app:gaussians_equal_variances]{\Cref*{app:gaussians_equal_variances}: \Cref*{ex:gaussians_equal_variances} details} \dotfill \pageref{app:gaussians_equal_variances}

         \item \hyperref[app:gaussians_equal_means_details]{\Cref*{app:gaussians_equal_means_details}: \Cref*{ex:gaussians_equal_means} details} \dotfill \pageref{app:gaussians_equal_means_details}

        \item \hyperref[app:eps_in_interval]{\Cref*{app:eps_in_interval}: Proof of \cref*{lemma:eps_in_interval}} \dotfill \pageref{app:eps_in_interval}

        \item \hyperref[app:non_uniqueness_all]{\Cref*{app:non_uniqueness_all}: \Cref*{ex:non_uniqueness_all} details} \dotfill \pageref{app:non_uniqueness_all}

        \item \hyperref[app:ex_degenerate_details]{\Cref*{app:ex_degenerate_details}: \Cref*{ex:degenerate} details} \dotfill \pageref{app:ex_degenerate_details}

        \item \hyperref[app:uniform_within_e]{\Cref*{app:uniform_within_e}: Proof of \cref*{prop:uniform_within_e}} \dotfill \pageref{app:uniform_within_e}

        \item \hyperref[app:eta_0_1]{\Cref*{app:eta_0_1}: Proof of \cref*{prop:eta_0_1}} \dotfill \pageref{app:eta_0_1}

        \item \hyperref[app:deg_eta_0_1_counterexample]{\Cref*{app:deg_eta_0_1_counterexample}: \Cref*{ex:deg_eta_0_1_counterexample} details} \dotfill \pageref{app:deg_eta_0_1_counterexample}
    \end{itemize}
    
\end{itemize}

\ifthenelse{\boolean{appendixmode}}{\section{Organization of the appendix}}{\section{Organization of the supplement}} \label{app:app_outline}
The appendix is organized to be read sequentially and generally follows the order of the main sections of the paper. The exception is \cref{app:examples}, which contains detailed computations for all examples of adversarial Bayes classifiers and is presented at the very end.

Other than the following exceptions, each lettered appendix is self-contained:
\begin{itemize}
    \item \Cref{app:identify_degenerate_eta_0_1} relies on \cref{lemma:boundary^e_leb_zero}
    \item \cref{app:increasing_e} relies on \cref{lemma:transition_eta_0_1}
    \item \cref{app:eps_in_interval} relies on \cref{lemma:transition_eta_0_1}
    \item \cref{app:deg_eta_0_1_counterexample} relies on \cref{lemma:transition_eta_0_1}
\end{itemize}

\section{Deferred proofs from \cref{sec:Background}}\label{app:background_main_deferred}
\subsection{Proof of \cref{prop:Bayes_equivalencies}}\label{app:Bayes_equivalencies}

\begin{proof}[Proof of \cref{prop:Bayes_equivalencies}]
    We show the implications \cref{it:Bayes_equivalencies_unique} $\Rightarrow$ \cref{it:Bayes_equivalencies_values} $\Rightarrow$ \cref{it:Bayes_equivalencies_eta} $\Rightarrow$ \cref{it:Bayes_equivalencies_unique}.

\textbf{\cref{it:Bayes_equivalencies_unique} $\Rightarrow$ \cref{it:Bayes_equivalencies_values}:} 

If the Bayes classifier is unique, then $\one_{B_1}=\one_{B_2}$ $\PP$-a.e. Consequently, any two Bayes classifiers satisfy $\PP_0(B_1)=\PP_0(B_2)$.

\textbf{ \cref{it:Bayes_equivalencies_values} $\Rightarrow$ \cref{it:Bayes_equivalencies_eta}:}

Let $R_*$ be the minimal value of the Bayes risk. For a Bayes classifier $B$ 
\[R_*-\PP_0(B)=\PP_1(B^C)\]
and consequently, the value of $\PP_1(B^C)$ is unique over all Bayes classifiers iff the value of $\PP_0(B^C)$ is unique over all Bayes classifiers. 
Consequently, \cref{it:Bayes_equivalencies_values} is equivalent to requiring that both $\PP_1(B^C)$ and $\PP_0(B)$ have unique values across all Bayes classifiers.

Both $\{\eta(\bx)>1/2\}$ and $\{\eta(\bx)\geq 1/2\}$ are Bayes classifiers, as the functions $\one_{\eta>1/2}$, $\one_{\eta\geq 1/2}$ minimize \cref{eq:C_def_a} pointwise. \Cref{it:Bayes_equivalencies_values} implies that $\PP_0(\{\eta(\bx)\geq 1/2\})=\PP_0( \{\eta(\bx)> 1/2\})$ and similarly, $\PP_1(\{\eta(\bx)\geq 1/2\}^C)=\PP_1( \{\eta(\bx)> 1/2\})^C$. These relations result in $\PP_0(\{\eta(\bx)=1/2\})=0$ and $\PP_1(\{\eta(\bx)=1/2\})=0$, or in other words $\PP(\{\eta(\bx)=1/2\})=0$.

\textbf{\cref{it:Bayes_equivalencies_eta} $\Rightarrow$ \cref{it:Bayes_equivalencies_unique}:}

As discussed in \cref{sec:background_bayes_classifiers}, any Bayes classifier must include $\{\eta(\bx)>1/2\}$ and exclude $\{\eta(\bx)<1/2\}$ up to sets of $\PP$-measure zero. The set of points within $\supp \PP$ not contained in either $\{\eta(\bx)>1/2\}$ or $ \{\eta(\bx)<1/2\}$ is 
\[\supp \PP\cap \{\eta(\bx)>1/2\}^C\cap \{\eta(\bx)<1/2\}^C= \supp \PP \cap \{\eta(\bx)=1/2\}.\]
Consequently, if $B_1$ and $B_2$ are two Bayes classifiers, then their symmetric difference $B_1\triangle B_2$ must be contained in $\supp \PP^C \cup \{\eta(\bx)=1/2\}$ up to measure zero sets. Thus if $\PP(\{\eta=1/2\})=0$, then $\PP(B_1\triangle B_2)=0$, and consequently $\one_{B_1}=\one_{B_2}$ $\PP$-a.e. 
\end{proof}

\subsection{Proof of \cref{lemma:union_intersection_adv_bayes}}\label{app:union_intersection_adv_bayes}

First, the $S_\e$ operation satisfies a subadditivity property:

\begin{lemma}\label{lemma:S_intersect_union}
    Let $S_1$ and $S_2$ be two subsets of $\Rset^d$. Then 
    \begin{equation}\label{eq:S_e_cap_cup}
        S_\e(\one_{S_1})+S_\e(\one_{S_2})\geq S_\e(\one_{S_1\cap S_2})+S_\e(\one_{S_1\cup S_2})
    \end{equation}
\end{lemma}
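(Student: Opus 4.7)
The plan is to reduce the claimed inequality to a pointwise statement about indicator functions of $\e$-expansions of sets. First I would observe that for any set $A$, $S_\e(\one_A)(\bx) = \sup_{\|\bh\|\leq \e}\one_A(\bx+\bh) = \one_{A^\e}(\bx)$, by the characterization of $A^\e$ given in \cref{eq:e_interpretation}. Thus the claim is equivalent to the pointwise inequality
\begin{equation*}
\one_{S_1^\e}(\bx) + \one_{S_2^\e}(\bx) \;\geq\; \one_{(S_1\cap S_2)^\e}(\bx) + \one_{(S_1\cup S_2)^\e}(\bx) \quad \text{for all } \bx \in \Rset^d.
\end{equation*}

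Next I would establish two elementary set-theoretic facts about Minkowski sums. Since $(S_1\cup S_2)\oplus \ov{B_\e(\zero)} = (S_1 \oplus \ov{B_\e(\zero)}) \cup (S_2 \oplus \ov{B_\e(\zero)})$, one has $(S_1\cup S_2)^\e = S_1^\e \cup S_2^\e$. On the other hand, since $S_1\cap S_2 \subset S_i$ for $i=1,2$, monotonicity of the $\e$-expansion yields $(S_1\cap S_2)^\e \subset S_1^\e \cap S_2^\e$. Combining these gives
\begin{equation*}
\one_{(S_1\cup S_2)^\e} = \one_{S_1^\e \cup S_2^\e} \quad \text{and} \quad \one_{(S_1\cap S_2)^\e} \leq \one_{S_1^\e \cap S_2^\e}.
\end{equation*}

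Finally I would invoke the standard pointwise identity for indicators, $\one_U + \one_V = \one_{U\cup V} + \one_{U \cap V}$, applied to $U = S_1^\e$ and $V = S_2^\e$, to conclude
\begin{equation*}
\one_{S_1^\e}(\bx) + \one_{S_2^\e}(\bx) = \one_{S_1^\e \cup S_2^\e}(\bx) + \one_{S_1^\e \cap S_2^\e}(\bx) \geq \one_{(S_1\cup S_2)^\e}(\bx) + \one_{(S_1\cap S_2)^\e}(\bx).
\end{equation*}
This is exactly \cref{eq:S_e_cap_cup}. There is no serious obstacle here; the only subtlety worth noting is that the inclusion $(S_1\cap S_2)^\e \subset S_1^\e \cap S_2^\e$ can be strict, which is why the statement is an inequality rather than an equality.
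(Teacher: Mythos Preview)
Your proof is correct and follows essentially the same approach as the paper. Both arguments compute the left-hand side as $\one_{S_1^\e\cap S_2^\e}+\one_{S_1^\e\cup S_2^\e}$ and bound the right-hand side by this same quantity; the paper phrases the bound via the inequalities $S_\e(\min(f,g))\leq\min(S_\e f,S_\e g)$ and $S_\e(\max(f,g))=\max(S_\e f,S_\e g)$, whereas you phrase it through the equivalent set relations $(S_1\cup S_2)^\e=S_1^\e\cup S_2^\e$ and $(S_1\cap S_2)^\e\subset S_1^\e\cap S_2^\e$.
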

\begin{proof}
    First, notice that 
    \begin{equation}\label{eq:S_e_sum_compute}
    \begin{aligned}
        &S_\e(\one_{S_1})(\bx)+S_\e(\one_{S_2})(\bx)=\begin{cases}
        0&\text{if }\bx \not \in S_1^\e\text{ and }\bx \not \in S_2^\e\\
        1&\text{if } \bx \in S_1^\e\triangle S_2^\e\\
        2&\text{if }\bx\in S_1^\e\cap S_2^\e
    \end{cases}\\
    &=\one_{S_1^\e\cap S_2^\e}(\bx)+\one_{S_1^\e\cup S_2^\e}(\bx)
    \end{aligned}
    \end{equation}
    due to \cref{eq:indicator_supremum}.
    Next, one can always swap the order of two maximums but a min-max is always larger than a max-min. Therefore:
    \begin{equation}\label{eq:compare_S_e_cup_cap}
    \begin{aligned}
        &S_\e(\one_{S_1\cap S_2})+S_\e(\one_{S_1\cup S_2})=S_\e(\min(\one_{S_1},\one_{S_2}))+S_\e(\max(\one_{S_1},\one_{S_2}))\\
        &\leq \min(S_\e(\one_{S_1}),S_\e(\one_{S_2}))+\max(S_\e(\one_{S_1}),S_\e(\one_{S_2}))=\one_{S_1^\e\cap S_2^\e}+\one_{S_1^\e\cup S_2^\e}
    \end{aligned}
    \end{equation}
    Comparing \cref{eq:S_e_sum_compute} and \cref{eq:compare_S_e_cup_cap} results in \cref{eq:S_e_cap_cup}.
\end{proof}
\Cref{lemma:union_intersection_adv_bayes,lemma:S_intersect_union} are not a novel results, a similar statements have appeared in \cite{BungertGarciaMurray2021,PydiJog2021}.

Therefore, the adversarial classification risk is sub-additive.
\begin{corollary}\label{cor:adv_classification_subadditive}
    Let $S_1$ and $S_2$ be any two sets. Then 
    \[R^\e(S_1\cap S_2)+R^\e(S_1\cup S_2)\leq R^\e(S_1)+R^\e(S_2)\]
\end{corollary}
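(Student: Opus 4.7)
The plan is to reduce the statement to Lemma A.1 by applying that pointwise inequality separately to the two integrals comprising $R^\e$. Recall
\[
R^\e(A) = \int S_\e(\one_{A^C})\,d\PP_1 + \int S_\e(\one_A)\,d\PP_0,
\]
so $R^\e(S_1) + R^\e(S_2)$ naturally splits into a $\PP_0$-part involving $S_\e(\one_{S_1}) + S_\e(\one_{S_2})$ and a $\PP_1$-part involving $S_\e(\one_{S_1^C}) + S_\e(\one_{S_2^C})$. The $\PP_0$-part is handled directly by Lemma A.1, giving the pointwise bound
\[
S_\e(\one_{S_1}) + S_\e(\one_{S_2}) \;\geq\; S_\e(\one_{S_1\cap S_2}) + S_\e(\one_{S_1\cup S_2}).
\]

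For the $\PP_1$-part, I would apply Lemma A.1 with the pair $(S_1^C, S_2^C)$ in place of $(S_1, S_2)$, then invoke De Morgan's laws $S_1^C \cap S_2^C = (S_1\cup S_2)^C$ and $S_1^C \cup S_2^C = (S_1 \cap S_2)^C$ to obtain
\[
S_\e(\one_{S_1^C}) + S_\e(\one_{S_2^C}) \;\geq\; S_\e(\one_{(S_1\cup S_2)^C}) + S_\e(\one_{(S_1\cap S_2)^C}).
\]
Integrating the first inequality against $\PP_0$ and the second against $\PP_1$, then summing, yields exactly $R^\e(S_1) + R^\e(S_2) \geq R^\e(S_1\cap S_2) + R^\e(S_1\cup S_2)$.

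There is no real obstacle here beyond verifying that all four expressions are integrable, which follows since each $S_\e(\one_E)$ is bounded by $1$ and is universally measurable whenever $E$ is (by Theorem 4.3), so the integrals are well-defined regardless of whether $S_1, S_2$ are Borel or merely universally measurable. Once the corollary is in hand, Lemma 2.6 follows immediately: if $S_1, S_2$ both achieve the minimum value $R^\e_*$, then the right side equals $2R^\e_*$, while both terms on the left are at least $R^\e_*$, forcing equality throughout and making $S_1\cap S_2$ and $S_1\cup S_2$ adversarial Bayes classifiers.
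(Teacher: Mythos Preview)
Your proof is correct and is exactly the argument the paper intends: the corollary is stated immediately after Lemma~A.1 as a direct consequence, and your application of that lemma to $(S_1,S_2)$ for the $\PP_0$-integral and to $(S_1^C,S_2^C)$ together with De~Morgan for the $\PP_1$-integral is precisely how it follows. Your added remarks on measurability and the derivation of Lemma~2.6 are also in line with the paper.
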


This result then directly implies \cref{lemma:union_intersection_adv_bayes}:
\begin{proof}[Proof of \cref{lemma:union_intersection_adv_bayes}]
    Let $A_1$ and $A_2$ be two adversarial Bayes classifiers, and let $R_*^\e$ be the minimal adversarial Bayes risk. Then \cref{cor:adv_classification_subadditive} implies that 
    \[2R^\e_*\geq R^\e(A_1\cup A_2)+R^\e(A_1\cap A_2)\]
    and hence $A_1\cap A_2$ and $A_1\cup A_2$ must be adversarial Bayes classifiers as well.
\end{proof}

\subsection{Complementary slackness-- proof of \cref{thm:complementary_slackness_classification}}\label{app:duality_proofs}\label{app:W_infty}

The complementary slackness relations of \cref{thm:complementary_slackness_classification} are a consequence of the minimax relation of \cref{thm:minimax_classification} and properties of the $W_\infty$ metric. 

    Integrating the maximum of an indicator function over an $\e$-ball is intimately linked to maximizing an integral over a $W_\infty$ ball of measures: 
        \begin{lemma}\label{lemma:S_e_and_W_inf}
        Let $\QQ$ be a positive measure. Then for any Borel set $A$ 
        \[\int S_\e(\one_A)d\QQ\geq \sup_{\QQ'\in \Wball \e(\QQ)}\int \one_Ad\QQ'\]
    \end{lemma}

Lemma~5.1 of \cite{PydiJog2021} and Lemma~3 of \cite{FrankNilesWeed23minimax} proved slightly different versions of this result, so we include a proof here for completeness.
\begin{proof}
    Let $\QQ'$ be any measure with $W_\infty(\QQ,\QQ')\leq \e$ and let $\gamma$ be any coupling between $\QQ$ and $\QQ'$ for which 
    \[\esssup_{(\bx,\by)\sim \gamma} \|\bx-\by\|=W_\infty(\QQ,\QQ').\]
    Such a coupling exists by Theorem~2.6 of \cite{Jylha15}. 
    Then $S_\e(\one_A)(\bx)\geq \one_A(\by)$ $\gamma$-a.e. and consequently
    \[\int S_\e(\one_A)(\bx)d\QQ(\bx)=\int S_\e(\one_A)(\bx) d\gamma(\bx,\by)\geq \int \one_A(\by) d\gamma(\bx,\by)=\int \one_A(\by) d\QQ'(\by)\]
    Now taking a supremum over all $\QQ'\in \Wball \e(\QQ)$ concludes the proof.
\end{proof}
    One can prove \cref{thm:complementary_slackness_classification} with this result.
\begin{proof}[Proof of \cref{thm:complementary_slackness_classification}]

\mbox{}\\
\textbf{Forward Direction: }

    Let $A$ be a minimizer of $\cprm$ and assume that  $\PP_0^*\in \Wball \e (\PP_0)$, $\PP_1^*\in \Wball \e(\PP_1)$ maximize $\cdl$. Then: 
    \begin{align}
        &\cprm(A)=\int S_\e(\one_{A^C})d\PP_1+\int S_\e(\one_A)d\PP_0\geq \int \one_{A^C}d\PP_1^*+\int \one_A d\PP_0^*\label{eq:comp_slack_proof_1}\\
        &=\int \eta^* \one_{A^C}d\PP_1^*+\int (1-\eta^*) \one_A d\PP_0^*\geq \int C^*(\eta^*)d\PP^*=\cdl(\PP_0^*,\PP_1^*)\label{eq:comp_slack_proof_2}
    \end{align}
    The first inequality follows from \cref{lemma:S_e_and_W_inf} while the second inequality follows from the definition of $C^*$ in \cref{eq:C_def}. By \cref{thm:minimax_classification}, the first expression of \cref{eq:comp_slack_proof_1} and the last expression of \cref{eq:comp_slack_proof_2} are equal. Thus all the inequalities above must in fact be equalities. Thus the fact that the inequality in \cref{eq:comp_slack_proof_2} is an equality implies \cref{eq:complementary_slackness_necessary_classification}. \Cref{lemma:S_e_and_W_inf} and the fact that the inequality in \cref{eq:comp_slack_proof_1} must be an equality implies \cref{eq:sup_comp_slack_classification}.

\textbf{Backward Direction: }

    Assume that $A$, $\PP_0^*$, and $\PP_1^*$ satisfy \cref{eq:sup_comp_slack_classification} and \cref{eq:complementary_slackness_necessary_classification}. We will argue that $A$ must be a minimizer of $\cprm$ and $(\PP_0^*,\PP_1^*)$ must maximize $\cdl$.

    First, notice that \cref{thm:minimax_classification} implies that $R^\e(A')\geq \cdl(\PP_0',\PP_1')$ for \emph{any} Borel $A'$ and \emph{any} $\PP_0'\in \Wball\e(\PP_0), \PP_1'\in \Wball \e (\PP_1)$.
    Thus if one can show
    \begin{equation}\label{eq:comp_slack_target}
        R^\e(A)=\cdl(\PP_0^*,\PP_1^*), 
    \end{equation}
    then $A$ must minimize $\cprm$ because for any other $A'$,
    \[\cprm(A')\geq \cdl(\PP_0^*,\PP_1^*)=\cprm(A).\]
    Similarly, one could conclude that $\PP_0^*,\PP_1^*$ maximize $\cdl$ because for any other $\PP_0'\in \Wball \e(\PP_0)$ and $\PP_1'\in \Wball \e(\PP_1)$,
    \[\cdl(\PP_0',\PP_1')\leq R^\e(A)=\cdl(\PP_0^*,\PP_1^*).\]
    
    Hence it remains to show \cref{eq:comp_slack_target}. Applying \cref{eq:sup_comp_slack_classification} followed by \cref{eq:complementary_slackness_necessary_classification}, one can conclude that 
    \begin{align*}
        R^\e(A)&=\int S_\e(\one_{A^C})d\PP_1+\int S_\e(\one_A)d\PP_0=\int \one_{A^C}d\PP_1^*+\int \one_Ad\PP_0^*&\text{\cref{eq:sup_comp_slack_classification}}\\
        &=\int \eta^*\one_{A^C}+(1-\eta^*)\one_A d\PP^*=\int C^*(\eta^*)d\PP^*=\cdl(\PP_0^*,\PP_1^*)&\text{\cref{eq:complementary_slackness_necessary_classification}}
    \end{align*}
    
\end{proof}
\section{Deferred arguments from \cref{sec:main_results}--- \Cref{ex:counterexample_Bayes} details}\label{app:counterexample_Bayes}
    It remains to verify two claims: 1) The formula \cref{eq:Bayes_soln} equals $\{\eta(x)>1/2\}$, and 2) The Bayes classifier in \cref{eq:Bayes_soln} is not equivalent to any regular set, in the the sense defined by \cref{def:equivalence_Bayes}.
    
    \subsubsection*{1) Verifying \cref{eq:Bayes_soln}}
    As discussed in \cref{sec:background_bayes_classifiers}, the set $\{\eta(x)>1/2\}$ is always a Bayes classifier. Similarly,
    the conditions \cref{eq:Bayes_necessary} are equivalent to $\eta(c_i)=\eta(d_i)=1/2$. The function $\eta$ in \cref{eq:counterexample_Bayes_def} satisfies $\eta(x)=1/2$ when either $x=0$, or
    \vspace{-10pt}\sidebysidesubequations{eq:Bayes_necessary_counterex}{\frac \pi x =2n\pi +\frac \pi 2}{\frac \pi x=(2n+1)\pi+\frac \pi 2}
    \vspace{5pt}
    These equations are solved by

    \vspace{-10pt}\sidebysidesubequations{eq:Bayes_necessary_counterex_soln}{x=\frac 2 {4n+1}}{x=\frac 2 {4n+3}}
    \vspace{5pt}

    respectively with $n\in \mathbb Z$. Observe that $\eta(x)>1/2$ iff $\frac \pi x\in ((2n+1)\pi+\frac \pi 2 , 2(n+1)\pi +\frac \pi 2)$ while $\eta(x)<1/2$ iff $\frac \pi x\in (2n\pi+\frac \pi 2, (2n+1)\pi +\frac \pi 2)$. Consequently, the solutions \cref{eq:Bayes_necessary_counterex_soln_a} are the left endpoints of the intervals comprising $\{\eta(x)>1/2\}$ while the solutions \cref{eq:Bayes_necessary_counterex_soln_b} are form the right endpoints. As $\lim_{n\to \infty} \frac 1 {4n+1}=0$ and $\lim_{n\to -\infty} \frac 1 {4n+3}=0$, the point $x=0$ is not an endpoint of any interval comprising $\{\eta(x)>1/2\}$.

    \subsubsection*{2) Proving \cref{eq:Bayes_soln} is not equivalent to any regular set}
    \begin{proposition}
        The Bayes classifier in \cref{eq:Bayes_soln} is not equivalent to any regular set.
    \end{proposition}
    In this context, \emph{equivalence} is defined according to \cref{def:equivalence_Bayes} while \emph{regularity} refers to \cref{def:regularity} with $\e=0$. Specifically, a set is $S$ is called \emph{regular} if every point in $S$ is contained in an interval of positive length that is entirely contained in $B$, while every point in $B^C$ is contained in an interval $I$ of positive length that is entirely contained in $B^C$. 

    \begin{proof}
        The complement of $B=\{\eta(x)>1/2\}$ in \cref{eq:Bayes_soln} within $\supp \PP$ is
        \[B^C\cap \supp \PP= \bigcup_{n=-\infty}^{-1} \left[\frac 2 {4n+1},\frac 2 {4n-1}\right]\cup \bigcup_{n=1}^\infty \left[\frac 2 {4n+1}, \frac 2 {4n-1}\right]\]
        Similarly, based on the computations in \eqref{eq:Bayes_necessary_counterex_soln}, the set $\{\eta<1/2\}\cap \supp \PP$ is given by 
        \begin{equation}\label{eq:Bayes_counterex_eta<1/2}
            \{\eta<1/2\}\cap \supp \PP=  \bigcup_{n=-\infty}^{-1} \left(\frac 2 {4n+1},\frac 2 {4n-1}\right)\cup \bigcup_{n=1}^\infty \left(\frac 2 {4n+1}, \frac 2 {4n-1}\right)
        \end{equation}        
        Crucially, $B$ is not regular because the point $0$ is an isolated point of $B^C$--- this point is not contained in any interval of positive length that is entirely contained in $B^C$.

        For contradiction, assume that $E$ is a regular Bayes classifier equivalent to $B$. 

        Assume first that $0\in E$. Then there exists an interval $I$ of positive length containing $0$ that is entirely contained in $E$. This interval must include either $(-\delta,0]$ or $[0,\delta)$ for some $\delta>0$, and therefore intersects $B^C$ on a set of positive measure. Recall from \cref{sec:background_bayes_classifiers} that any Bayes classifier must contain $\{\eta(x)>1/2\}$ (computed in \cref{eq:Bayes_soln}) and exclude $\{\eta(x)<1/2\}$ (given in \cref{eq:Bayes_counterex_eta<1/2}) up to measure zero sets. However, both intervals $[0,\delta)$ and $(-\delta,0]$ intersect $\{\eta(x)<1/2\}$ on sets of positive measure, and thus $E$ cannot be a Bayes classifier. Therefore, $0\not \in E$.
        
        An analogous argument shows that $0\not \in E^C$, leading to a contradiction.
    \end{proof}

\section{Deferred proofs from \cref{app:abs_cont_equivalencies_main}}\label{app:abs_cont_equivalencies}

\subsection{Proof of \cref{cor:0_1_a.e._equivalence,lemma:intersection_union_a.e.}}\label{app:intersection_union_a.e.}

\begin{proof}[Proof of \cref{lemma:intersection_union_a.e.}]
    We will assume that $S_\e(\one_{A_1})=S_\e(\one_{A_2})$ $\PP_0$-a.e., the argument for $S_\e(\one_{A_1^C})=S_\e(\one_{A_2^C})$ $\PP_1$-a.e. is analogous.
      If $S_\e(\one_{A_1})=S_\e(\one_{A_2})$ $\PP_0$-a.e., then 
    \[S_\e(\one_{A_1})=S_\e(\one_{A_2})=\max(S_\e(\one_{A_1}),S_\e(\one_{A_2}))=S_\e(\one_{A_1\cup A_2})\quad \PP_0\text{-a.e.}\]
    However, $S_\e(\one_{A_1^C})\geq S_\e(\one_{(A_1\cup A_2)^C})$. If this inequality were strict on a set of positive $\PP_1$-measure, we would have $R^\e(A_1\cup A_2)<R^\e(A_1)$ which would contradict the fact that $A_1$ is an adversarial Bayes classifier. Thus $S_\e(\one_{A_1^C})= S_\e(\one_{(A_1\cup A_2)^C})$ $\PP_1$-a.e. The same argument applied to $A_2$ then shows that $S_\e(\one_{A_1^C})= S_\e(\one_{(A_1\cup A_2)^C})=S_\e(\one_{A_2^C})$ $\PP_1$-a.e.

    Now as $S_\e(\one_{A_1^C})=S_\e(\one_{A_2^C})$ $\PP_1$-a.e., one can conclude that 
    \[S_\e(\one_{A_1^C})=S_\e(\one_{A_2^C})=\max(S_\e(\one_{A_1^C}),S_\e(\one_{A_2^C}))=S_\e(\one_{(A_1\cap A_2)^C})\quad \PP_1\text{-a.e.}\] 
    An analogous argument implies \cref{eq:0_cup_cap}.
\end{proof}

\Cref{cor:0_1_a.e._equivalence} is a direct consequence of this result:
\begin{proof}[Proof of \cref{cor:0_1_a.e._equivalence}]
       \Cref{lemma:intersection_union_a.e.} implies that $S_\e(\one_{A_1})=S_\e(\one_{A_2})$ $\PP_0$-a.e. iff $S_\e(\one_{A_1^C})=S_\e(\one_{A_2^C})$ $\PP_1$-a.e.    
\end{proof}

\subsection{Proof of \cref{lemma:closure_interior_adversarial_Bayes,lemma:boundary_difference}}\label{app:closure_interior_adversarial_Bayes}

The $\empty^\e$ operation on sets interacts particularly nicely with Lebesgue measure.

    \begin{lemma}\label{lemma:boundary^e_leb_zero}
        For any set $A$ and $\e>0$, $\partial (A^\e)$ has Lebesgue measure zero. 
    \end{lemma}

    This result is standard in geometric measure theory, see for instance Lemma~4 in \cite{AwasthiFrankMohri2021} for a proof.
    Next, the closure and $\empty^\e$ operations commute:
    \begin{lemma}\label{lemma:closure_e_commute}
        Let $A$ be any set in $\Rset^d$. Then $\ov{A^\e}=\left(\ov A\right)^\e$.
    \end{lemma}
    \begin{proof}We show the two inclusions $\ov{A^\e}\subset \left(\ov A\right)^\e$ and $\ov{A^\e}\supset \left(\ov A\right)^\e$ separately.
    
    \textbf{Showing $\ov{A^\e}\subset \left(\ov A\right)^\e$:}
        First, because the direct sum of a closed set and a compact set must be closed, $\left(\ov A\right)^\e$ is a closed set that contains $A^\e$. Therefore, because $\ov{A^\e}$ is the smallest closed set containing $A^\e$, the set $\ov{A^\e}$ must be contained in $\left(\ov A\right)^\e$.

    \textbf{Showing $\ov{A^\e}\supset \left(\ov A\right)^\e$:}
        Let $\bx \in \left(\ov A\right)^\e$, we will show that $\bx \in \ov{A^\e}$. If $\bx \in \left(\ov A\right)^\e$, then $\bx=\ba+\bh$ for some $\ba \in \ov A$ and $\bh \in \ov{B_\e(\zero)}$. Let $\ba_i$ be a sequence of points contained in $A$ that converges to $\ba$. Then $\ba_i+\bh \in A^\e$, and $\ba_i+\bh$ converges to $\ba+\bh$. Therefore, $\ba+\bh\in \ov{A^\e}$.    
    \end{proof}

Next, this result implies \cref{lemma:boundary_difference}:

\begin{proof}[Proof of \cref{lemma:boundary_difference}]

First, \cref{lemma:boundary^e_leb_zero,lemma:closure_e_commute} imply that 
\begin{equation}\label{eq:closure_interior_adv_Bayes_main_swap_closure}
    \PP_0(A^\e)= \PP_0(\ov{A^\e})= \PP_0(\ov A^\e) 
\end{equation} 
Furthermore, $\PP_1\left((A^C)^\e\right) \geq  \PP_1\left((\ov A^C)^\e\right)$ and thus $R^\e(A)\geq R^\e(\ov A)$. Consequently, $\ov A$ must be an adversarial Bayes classifier  and 
\begin{equation}\label{eq:closure_interior_other_equals_closure}
    \PP_1\left((A^C)^\e\right) =  \PP_1\left((\ov A^C)^\e\right)
\end{equation}
Furthermore, \cref{lemma:boundary^e_leb_zero,lemma:closure_e_commute} again imply that
\begin{equation}\label{eq:closure_interior_other_equals_interior}
    \PP_1\left((A^C)^\e\right)=\PP_1\left(\ov{(A^C)^\e}\right) =\PP_1\left (\left(\ov{(A^C)}\right)^\e\right)=  \PP_1\left(\left((\interior A)^C\right)^\e\right).
\end{equation}
Next a similar line of reasoning shows that $\interior A$ is also an adversarial Bayes classifier and thus
\begin{equation}\label{eq:closure_interior_adv_Bayes_main_swap_interior}
    \PP_0(A^\e)= \PP_0((\interior A)^\e) .
\end{equation} 
\Crefrange{eq:closure_interior_adv_Bayes_main_swap_closure}{eq:closure_interior_adv_Bayes_main_swap_interior} imply the desired result.
\end{proof}
\cref{lemma:boundary_difference} requires both the assumption of absolute continuity and $\e>0$. \Cref{lemma:boundary^e_leb_zero} is invoked to prove \cref{eq:closure_interior_adv_Bayes_main_swap_closure,eq:closure_interior_other_equals_interior}.

Finally, \cref{lemma:boundary_difference} implies that $\interior A$, $A$ and $\ov A$ are all equivalent up to degeneracy.
\begin{proof}[Proof of \cref{lemma:closure_interior_adversarial_Bayes}]
\Cref{lemma:boundary_difference} implies that if $E$ is any measurable set with $\interior A \subset E\subset \ov A$, then $\PP_0(E^\e)=\PP_0(A^\e)$ and $\PP_1((E^C)^\e)=\PP_1((A^C)^\e)$. Therefore, $E$ must be an adversarial Bayes classifier.

\end{proof}


    \subsection{Proof of \cref{lemma:open^e}}\label{app:open^e}\label{app:open_carrot_e}

    Before proving \cref{lemma:open^e}, we reproduce another useful intermediate result from \cite{AwasthiFrankMohri2021}.
    \begin{lemma}\label{lemma:e_ball_union}
        Let $\ba_n$ be a sequence that approaches $\ba$. Then $B_\e(\ba)\subset \bigcup_{n=1}^\infty B_\e(\ba_n)$. 
    \end{lemma}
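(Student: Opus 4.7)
The plan is to show that every point of $B_\e(\ba)$ is eventually contained in $B_\e(\ba_n)$, which immediately gives the stated inclusion.

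Let $\bx \in B_\e(\ba)$. By definition, $\|\bx - \ba\| < \e$, so the quantity $\delta := \e - \|\bx - \ba\|$ is strictly positive. Since $\ba_n \to \ba$, there exists $N$ such that $\|\ba_n - \ba\| < \delta$ for all $n \geq N$. For such $n$, the triangle inequality yields
\begin{equation*}
    \|\bx - \ba_n\| \leq \|\bx - \ba\| + \|\ba - \ba_n\| < \|\bx - \ba\| + \delta = \e,
\end{equation*}
so $\bx \in B_\e(\ba_n) \subset \bigcup_{n=1}^\infty B_\e(\ba_n)$. As $\bx \in B_\e(\ba)$ was arbitrary, the desired inclusion follows.

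There is no real obstacle here; the only subtlety is that the balls are open (so the strict inequality $\|\bx - \ba\| < \e$ provides the positive slack $\delta$ needed to absorb the tail of the sequence). If the balls were closed the statement would fail on the boundary, which is why openness matters.
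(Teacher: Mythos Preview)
Your proof is correct and follows essentially the same approach as the paper: pick a point in $B_\e(\ba)$, use the strict inequality to get positive slack, and apply the triangle inequality with a sufficiently late term of the sequence. The only cosmetic difference is that the paper sets $\delta = \|\by - \ba\|$ and chooses $n$ with $\|\ba - \ba_n\| < \e - \delta$, whereas you set $\delta = \e - \|\bx - \ba\|$ directly; the arithmetic is identical.
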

    \begin{proof} 
        Let $\by$ be any point in $B_\e(\ba)$ and let $\delta =\|\by-\ba\|$. Pick $n$ large enough so that $\|\ba-\ba_n\|< \e-\delta$. Then
        \[\|\by-\ba_n\|\leq \|\ba -\ba_n\|+\|\by-\ba\|<\e-\delta +\delta =\e\]
        and thus $\by\in B_\e(\ba_n)$.
    \end{proof}

        \begin{proof}[Proof of \cref{lemma:open^e}]
        We will argue that $U^\e=(U\cap \QQ^d)^\e$, the argument for $U\cap (\QQ^d)^C$ is analogous.

        First, $U\cap \QQ^d\subset U$ implies that $(U\cap \QQ^d)^\e \subset U^\e$. 

        For the opposite containment, let $\bu$ be any point in $U$. We will argue that $\ov{B_\e(\bu)}\subset (U\cap \QQ)^\e$. Because $U$ is open, there is a ball $B_r(\bu)$ contained in $U$. Because $\QQ^d$ is dense in $\Rset^d$, for every $\by \in B_r(\bu)$, there is  a sequence $\by_n\in \QQ$ converging to $\by$. Thus \cref{lemma:e_ball_union} implies that 
        \[  \ov{B_\e(\bu)} \subset B_r(\bu)^\e\subset (B_r(\bu)\cap \QQ^d)^\e\subset (U\cap \QQ^d)^\e \]
        Taking a union over all $\bu\in U$ results in $U^\e\subset (U\cap \QQ^d)^\e$.

    \end{proof}

\section{Deferred proofs from \cref{app:TFAE_equiv}---proof of \cref{lemma:hat_eta_adv_Bayes}}\label{app:hat_eta_adv_Bayes}\label{app:TFAE_equiv_proof}\label{app:TFAE_equiv_lemmas}
    In order to conveniently introduce the construction of $\hat \eta$, we define an operation $I_\e$ analogous to $S_\e$: for a measurable function $g$,
    \[I_\e(g)(\bx)=\inf_{\bh \in \ov{B_\e(\bx)}} g(\bx+\bh).\]

    Lemma~24 of \cite{FrankNilesWeed23minimax} shows that there exists a function $\hat \eta$ and maximizers $\PP_0^*,\PP_1^*$ of $\cdl$ for which optimal attacks on $\hat \eta$ are are given by $\PP_0^*$, $\PP_1^*$:

   \begin{proposition}\label{prop:hat_eta_detailed}

     There exists a function $\hat \eta:\Rset^d\to [0,1]$ and measures $\PP_0^*\in \Wball\e (\PP_0)$, $\PP_1^*\in \Wball \e (\PP_1)$ with the following properties:
         \begin{enumerate}
             \item \label{it:hat_eta_1} Let $\PP^*=\PP_0^*+\PP_1^*$ and $\eta^*=d\PP_1^*/d\PP^*$. Then 
             \[\hat \eta(\by)=\eta^*(\by)\quad \PP^*-\text{a.e.}\]
             \item \label{it:hat_eta_2} Let $\gamma_i^*$ be a coupling between $\PP_i$ and $\PP_i^*$ for which $\esssup_{(\bx,\by)\sim \gamma_i^*}\|\bx-\by\|\leq \e$. Then for these $\PP_0^*,\PP_1^*$, the function $\hat \eta$ satisfies 
             \[I_\e(\hat \eta)(\bx)=\hat \eta(\by) \quad \gamma_1^*\text{-a.e.} \quad \text{and} \quad S_\e(\hat \eta)(\bx)=\hat \eta(\by)\quad \gamma_0^*\text{-a.e.}\]
         \end{enumerate}

    \end{proposition}
Recall that Theorem~2.6 of \cite{Jylha15} proves that when $W_\infty(\QQ,\QQ')\leq \e$, there always exists a coupling $\gamma$ between $\QQ$ and $\QQ'$ with $\esssup_{(\bx,\by)\sim \gamma}\|\bx-\by\| \leq \e$.

Next, we prove that one can take $\hat A_1=\{\hat \eta>1/2\}$ and $\hat A_2=\{\hat \eta \geq 1/2\}$ in \cref{lemma:hat_eta_adv_Bayes}.

\begin{proof}[Proof of \cref{lemma:hat_eta_adv_Bayes}]
    Let $\PP_0^*$, $\PP_1^*$, $\gamma_0^*,$ and $\gamma_1^*$ be the measures given by \cref{prop:hat_eta_detailed} and set $\PP^*=\PP_0^*+\PP_1^*$ and $\eta^*=d\PP_1^*/d\PP^*$. Let $\hat \eta$ denote the function described in \cref{prop:hat_eta_detailed}. We will show that the classifier $\hat A_1=\{\hat \eta>1/2\}$ satisfies the required properties by verifying the complementary slackness conditions in \cref{thm:complementary_slackness_classification}, the argument for $\hat A_2$ is analogous. 
    
    \Cref{it:hat_eta_1} of \cref{prop:hat_eta_detailed} implies that $\one_{\{\hat \eta >1/2\}}=\one_{\eta^*>1/2}$ $\PP^*$-a.e. and $\one_{\{\hat \eta >1/2\}^C}=\one_{\{\hat \eta \leq 1/2\}} =\one_{\eta^*\leq 1/2}$ $\PP^*$-a.e.

   Therefore,
    \[\eta^*\one_{\{\hat \eta >1/2\}^C}+(1-\eta^*)\one_{\{\hat \eta >1/2\}} =C^*(\eta^*)\quad \PP^*\text{-a.e.,}\]

    verifying \cref{eq:complementary_slackness_necessary_classification} of \cref{thm:complementary_slackness_classification}.

    Next, \cref{it:hat_eta_2} of \cref{prop:hat_eta_detailed} implies that $\hat \eta$ assumes its maximum over closed $\e$-balls $\PP_0$-a.e. and hence $S_\e(\one_{\hat \eta >1/2})(\bx)=\one_{S_\e(\hat \eta(\bx))>1/2}$ $\PP_0$-a.e. Additionally, \cref{it:hat_eta_2} of \cref{prop:hat_eta_detailed} implies that $\one_{S_\e(\hat \eta)(\bx)>1/2}=\one_{\hat \eta(\by)>1/2}$ $\gamma_0^*$-a.e.  Therefore, one can conclude that 
    \begin{equation}\label{eq:c_slack_verify_0}
      \int S_\e(\one_{\hat \eta>1/2})(\bx)d\PP_0(\bx)=\int \one_{\hat \eta(\by)>1/2} d\gamma_0^*(\bx,\by)=\int \one_{\hat \eta>1/2}d\PP_0^*  
    \end{equation}
    
    Similarly, using the fact that $I_\e(\hat \eta)(\bx)=\hat \eta(\by)$ $\gamma_1^*$-a.e., one can show that $\int S_\e(\one_{\hat \eta \leq 1/2})d\PP_1=\int \one_{\hat \eta\geq 1/2} d\PP_1^*$. This statement together with \cref{eq:c_slack_verify_0} verifies \cref{eq:sup_comp_slack_classification}. 
\end{proof}

\section{More about the $\empty^\e$, $\empty^{-\e}$, and $S_\e$ operations}\label{app:e_-e} 
This appendix provides a unified exposition of several results relating to the $\empty^\e$ and $\empty^{-\e}$ relations---namely \Cref{eq:e_interpretation,eq:-e_interpretation}, \cref{lemma:e_-e_decrease_R^e,lemma:A_e_-e_inclusion,lemma:A_e_-e_containments}. These results have all appeared elsewhere in the literature ---see for instance \cite{AwasthiFrankMohri2021,BungertGarciaMurray2021}.

The characterizations of the $\empty^\e$ and $\empty^{-\e}$ operations provided by \cref{eq:e_interpretation} and \cref{eq:-e_interpretation} are an essential tool for understanding how $\empty^\e$ and $\empty^{-\e}$ interact.
\begin{proof}[Proof of \Cref{eq:e_interpretation}]
    To show \cref{eq:e_interpretation}, notice that $\bx\in A^\e$ iff $\bx\in \ov{B_\e(\ba)}$ for some element $\ba$ of $A$. Thus:
    \begin{equation*}
        \bx\in A^\e \Leftrightarrow \bx \in \ov{B_\e(\ba)}\text{ for some }\ba\in A\Leftrightarrow \ba \in \ov{B_\e(\bx)}\text{ for some }\ba\in A \Leftrightarrow \ov{B_\e(\bx)}\text{ intersects }A 
    \end{equation*}\mbox{}
    \end{proof}
\Cref{eq:-e_interpretation} then follows directly from \Cref{eq:e_interpretation}:
\begin{proof}[Proof of \Cref{eq:-e_interpretation}]
By \Cref{eq:e_interpretation}, 
\[ \bx\in (A^C)^\e \Leftrightarrow \ov{B_\e(\bx)} \text{ intersects }A^C\]
Now $A^{-\e}= ((A^C)^\e)^C$, and so taking compliments of the relation above implies
\[\bx \in A^{-\e}\Leftrightarrow \ov{B_\e(\bx)}\text{ does not intersect } A^C\Leftrightarrow \ov{B_\e(\bx)}\subset A\]
    
\end{proof}

Next, \Cref{eq:e_interpretation} and \Cref{eq:-e_interpretation} immediately imply \cref{lemma:A_e_-e_inclusion}:
\begin{proof}[Proof of \cref{lemma:A_e_-e_inclusion}]
    By \Cref{eq:e_interpretation}, \Cref{eq:-e_interpretation}, $(A^\e)^{-\e}$ is the set of points $\bx$ for which $\ov{B_\e(\bx)}\subset A^\e$. For any point $\ba \in A$, $\ov{B_\e(\ba)}\subset A^\e$ and thus $A\subset (A^\e)^{-\e}$. Applying this statement to the set $A^C$ and then taking compliments results in $(A^{-\e})^\e\subset A$.  
\end{proof}

\Cref{lemma:A_e_-e_inclusion} then immediately implies \cref{lemma:A_e_-e_containments}:

\begin{proof}[Proof of \cref{lemma:A_e_-e_containments}]
First, \cref{lemma:A_e_-e_inclusion} implies that $A\subset (A^\e)^{-\e}$ and thus $A^\e\subset ((A^\e)^{-\e})^\e$. At the same time, \cref{lemma:A_e_-e_inclusion} implies that $((A^\e)^{-\e})^\e=\Big( \Big(A^{\e}\Big)^{-\e} \Big)^\e\subset A^\e$. Therefore, $((A^\e)^{-\e})^\e=A^\e$.
Applying this result to $A^C$ and then taking compliments then results in $((A^{-\e})^\e)^{-\e}=A^{-\e}$.

Next, \cref{lemma:A_e_-e_inclusion} implies that $(A^{-\e})^\e\subset A$ and hence $( (A^{-\e})^\e)^\e\subset A^\e$. Applying this result to $A^C$ and then taking compliments $((A^\e)^{-\e})^{-\e}\supset A^{-\e}$.

\end{proof}

\Cref{lemma:e_-e_decrease_R^e} is then an immediate consequence of \cref{lemma:A_e_-e_containments}.
\section{Measurability}\label{app:measurability}
\subsection{Defining the universal $\sigma$-algebra}\label{app:define_universal_sigma_algebra}

	Let $\cM_+(\Rset^d)$ be the set of finite positive measures on the Borel $\sigma$-algebra $\cB(\Rset^d)$. For a Borel measure $\nu$ in $\cM_+(\Rset^d)$, let $\cL_\nu(\Rset^d)$ be the completion of $\cB(\Rset^d)$ under $\nu$. Then the \emph{universal $\sigma$-algebra} $\sU(\Rset^d)$ is defined as 
	
		\[\sU(\Rset^d)=\bigcap_{\nu\in \cB(\Rset^d)}\cL_\nu(\Rset^d)\]
	
	In other words, $\sU(\Rset^d)$ is the $\sigma$-algebra of sets which are measurable with respect to the completion of \emph{every} finite positive Borel measure $\nu$. See \cite[Chapter 7]{BertsekasShreve96} or \cite{nishiura2010} for more about this construction. 
	
	Due to \cref{thm:univ_meas}, throughout this paper, we adopt the convention that $\int S_\e(\one_A)d\nu$ is the integral of $S_\e(\one_A)$ with respect to the completion of $\nu$.

\subsection{Proof of \cref{thm:borel_univ_equivalent}}\label{app:borel_univ_equivalent}

    First, notice that because every Borel set is universally measurable, $\inf_{A\in \cB(\Rset^d)} R^\e(A)\geq \inf_{A\in \sU(\Rset^d)}R^\e(A)$. The opposite inequality relies on a duality statement similar to \cref{thm:minimax_classification}, but with the primal minimized over universally measurable sets and the dual maximized over measures on $\sU(\Rset^d)$.

      For a Borel measure $\QQ$, there is a canonical extension to the universal $\sigma$-algebra called the \emph{universal completion}.
    \begin{definition}
        The \emph{universal completion} $\wtd \QQ$ of a Borel $\QQ$ is the completion of $\QQ$ restricted to the universal $\sigma$-algebra.
    \end{definition}
     Notice that $\QQ(E)=\wtd \QQ(E)$ for any Borel measure $\QQ$ and Borel set $E$. Furthermore, \cite[Lemma~7.26]{BertsekasShreve96} states that every universally measurable set differs from Borel set by a subset of a null Borel set.

\begin{lemma}\label{lemma:univeral_to_borel_B.S.}
    The set $E$ is universally measurable iff given any Borel measure $\QQ$, there are Borel sets $B_1$, $B_2$ for which $B_1\subset E\subset B_2$ and $\QQ(B_1)=\QQ(B_2)$.
\end{lemma}
     
     As a consequence, 
     \begin{equation}\label{eq:integral_univ_borel_reduce}
         \int g d\QQ=\int gd\wtd \QQ\quad \text{for any Borel function $g$.}
     \end{equation}
    \Cref{lemma:univeral_to_borel_B.S.} provides a useful result for translating statements for $\cB(\Rset^d)$ to $\sU(\Rset^d)$.

     In addition to the $W_\infty$-ball of Borel measures $\Wball \e(\QQ)$ around $\QQ$, one can consider the $W_\infty$ ball of universal completions of measures around $\QQ$, which we will call $\UWball \e(\QQ)$.
     
          Explicitly, for a Borel measure $\QQ$, define
     
	\[\UWball \e(\QQ)=\{\wtd \QQ':\QQ'\in \Wball \e(\QQ)\}.\]
     
     The following result shows that if $\QQ'\in \Wball \e (\QQ)$, then $W_\infty(\wtd \QQ,\wtd \QQ')\leq \e$, and thus $\UWball \e(\QQ)$ contains only measures that are within $\e$ of $\wtd \QQ$ in the $W_\infty$ metric.

     \begin{lemma}\label{lemma:borel_to_univ_W_infty}
         Let $\QQ$ and $\QQ'$ be Borel measures with $W_\infty(\QQ,\QQ')\leq \e$ and let $\wtd \QQ,\wtd \QQ'$ be their universal completions. Then $W_\infty(\wtd \QQ,\wtd\QQ')\leq \e$.
     \end{lemma}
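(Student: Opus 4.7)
The natural plan is to lift an optimal Borel coupling to the universal $\sigma$-algebra and show that its universal completion witnesses $W_\infty(\wtd\QQ,\wtd\QQ')\leq\e$. Specifically, by Theorem~2.6 of \cite{Jylha15}, since $W_\infty(\QQ,\QQ')\leq\e$, there exists a Borel coupling $\gamma\in\Pi(\QQ,\QQ')$ with $\esssup_{(\bx,\by)\sim\gamma}\|\bx-\by\|\leq\e$. I would let $\wtd\gamma$ denote the universal completion of $\gamma$ (viewed as a Borel measure on $\Rset^d\times\Rset^d$) and argue that $\wtd\gamma\in\Pi(\wtd\QQ,\wtd\QQ')$ and that its essential diameter is still bounded by $\e$.

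First I would check that the essential-supremum condition transfers. The set $\Delta=\{(\bx,\by):\|\bx-\by\|\leq\e\}$ is closed, hence Borel, and $\gamma(\Delta)=\QQ(\Rset^d)$ because $\esssup_{(\bx,\by)\sim\gamma}\|\bx-\by\|\leq\e$. Since $\wtd\gamma$ agrees with $\gamma$ on Borel sets by \cref{eq:integral_univ_borel_reduce}, the complement of $\Delta$ has $\wtd\gamma$-measure zero, so $\esssup_{(\bx,\by)\sim\wtd\gamma}\|\bx-\by\|\leq\e$ immediately.

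The key step is verifying the marginal identities, and this is where the main obstacle lies: one must show that for every $A\in\sU(\Rset^d)$, the cylinder $A\times\Rset^d$ actually belongs to $\sU(\Rset^d\times\Rset^d)$, so that $\wtd\gamma(A\times\Rset^d)$ is well defined. To that end, I would fix an arbitrary finite Borel measure $\nu$ on $\Rset^d\times\Rset^d$, let $\pi_1$ be projection onto the first coordinate, and note that $(\pi_1)_*\nu$ is Borel on $\Rset^d$. Because $A$ is universally measurable, we may write $A=B\cup N$ with $B$ Borel and $N\subset N_0$ for some Borel set $N_0$ with $(\pi_1)_*\nu(N_0)=0$. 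Then $A\times\Rset^d=(B\times\Rset^d)\cup(N\times\Rset^d)$, with $B\times\Rset^d$ Borel and $N\times\Rset^d\subset N_0\times\Rset^d$ a $\nu$-null Borel set. Hence $A\times\Rset^d$ lies in the $\nu$-completion of the Borel $\sigma$-algebra, and since $\nu$ was arbitrary, $A\times\Rset^d\in\sU(\Rset^d\times\Rset^d)$.

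With measurability in hand, the marginal formula is routine: using the same decomposition $A=B\cup N$ with $N\subset N_0$ and $\QQ(N_0)=0$, the set $N\times\Rset^d$ is $\wtd\gamma$-null because $\gamma(N_0\times\Rset^d)=\QQ(N_0)=0$, so
\begin{equation*}
\wtd\gamma(A\times\Rset^d)=\gamma(B\times\Rset^d)=\QQ(B)=\wtd\QQ(A),
\end{equation*}
and the analogous argument with $\Rset^d\times A$ gives $\wtd\gamma(\Rset^d\times A)=\wtd\QQ'(A)$. Therefore $\wtd\gamma\in\Pi(\wtd\QQ,\wtd\QQ')$, which combined with the essential-support bound yields $W_\infty(\wtd\QQ,\wtd\QQ')\leq\e$. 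The only subtlety worth flagging is that one must use a single Borel null set $N_0$ simultaneously in both the universal-measurability check and the marginal computation; otherwise the argument goes through mechanically.
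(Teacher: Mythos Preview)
Your proposal is correct and follows essentially the same strategy as the paper: take an optimal Borel coupling $\gamma$ (via Jylh\"a), pass to its completion, verify the marginals on universally measurable sets by Borel approximation, and observe that $\Delta_\e$ is Borel so the support condition transfers. The only cosmetic difference is that the paper restricts the completion of $\gamma$ to the product $\sigma$-algebra $\sigma(\sU(\Rset^d)\times\sU(\Rset^d))$ (where $A\times\Rset^d$ is trivially measurable) and uses an inner/outer Borel sandwich $E_1\subset S\subset E_2$ instead of your union decomposition, so your extra paragraph proving $A\times\Rset^d\in\sU(\Rset^d\times\Rset^d)$ is not needed there; also, your closing remark about needing ``a single Borel null set $N_0$ simultaneously'' is inaccurate---the $N_0$ in the measurability check depends on the test measure $\nu$, while the $N_0$ in the marginal computation is specific to $\QQ$, and these are unrelated.
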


     Next, to compare the values of $\cdl$ on $\Wball \e(\PP_0)\times \Wball \e(\PP_1)$ and $\UWball \e( \PP_0) \times \UWball \e( \PP_1)$, we show:
     \begin{corollary}\label{cor:cdl_univ_borel}
         Let $\PP_0,\PP_1$ be two Borel measures and let $\wtd \PP_0,\wtd \PP_1$ be their universal completions. Then $\cdl(\PP_0,\PP_1)=\cdl(\wtd \PP_0,\wtd\PP_1)$. 
     \end{corollary}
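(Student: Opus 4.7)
The plan is to exploit the fact that $C^*$ is a Borel function of $\eta$ so that its integral against $\PP_0+\PP_1$ cannot see the difference between a Borel measure and its universal completion. The key identity to establish is that a single Borel version of the Radon--Nikodym derivative $\eta = d\PP_1/d(\PP_0+\PP_1)$ also serves as a Radon--Nikodym derivative for $d\wtd\PP_1/d(\wtd\PP_0+\wtd\PP_1)$; once this is done, the result follows from \eqref{eq:integral_univ_borel_reduce}.

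First I would verify that $\wtd\PP_0+\wtd\PP_1$ agrees with $\wtd{\PP_0+\PP_1}$ on every $E\in\sU(\Rset^d)$. Since $E$ is in the completion $\cL_{\PP_0+\PP_1}(\Rset^d)$, there are Borel sets $B_1\subset E\subset B_2$ with $(\PP_0+\PP_1)(B_2\setminus B_1)=0$. Nonnegativity forces $\PP_i(B_2\setminus B_1)=0$ for both $i=0,1$, so $\wtd\PP_i(E)=\PP_i(B_1)$, and summing gives $(\wtd\PP_0+\wtd\PP_1)(E)=(\PP_0+\PP_1)(B_1)=\wtd{\PP_0+\PP_1}(E)$. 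In particular, the Borel sets $B_1,B_2$ witnessing $E$ in $\cL_{\PP_0+\PP_1}$ also witness it in $\cL_{\PP_i}$, so each $\wtd\PP_i$ is well-defined on all of $\sU(\Rset^d)$.

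Next, pick any Borel measurable representative $\eta$ of $d\PP_1/d(\PP_0+\PP_1)$. For any Borel set $B$, the defining relation $\PP_1(B)=\int_B\eta\,d(\PP_0+\PP_1)$ together with the identifications above gives $\wtd\PP_1(B)=\PP_1(B)=\int_B\eta\,d(\PP_0+\PP_1)=\int_B\eta\,d(\wtd\PP_0+\wtd\PP_1)$, where the last step uses \eqref{eq:integral_univ_borel_reduce} applied to the Borel function $\eta\,\one_B$. Standard approximation then extends this identity from Borel $B$ to all $E\in\sU(\Rset^d)$ (using a Borel sandwich $B_1\subset E\subset B_2$ as above), so $\eta$ is a valid Radon--Nikodym derivative for $\wtd\PP_1$ with respect to $\wtd\PP_0+\wtd\PP_1$.

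Finally, recalling from \eqref{eq:C_def} that $C^*(\eta)=\min(\eta,1-\eta)$ is a Borel measurable function of $\eta$, the composition $C^*(\eta)$ is Borel measurable. Applying \eqref{eq:integral_univ_borel_reduce} once more yields
\begin{equation*}
\cdl(\PP_0,\PP_1)=\int C^*(\eta)\,d(\PP_0+\PP_1)=\int C^*(\eta)\,d(\wtd\PP_0+\wtd\PP_1)=\cdl(\wtd\PP_0,\wtd\PP_1),
\end{equation*}
which is the desired equality. The only step that requires any real care is confirming that the Borel $\eta$ remains a Radon--Nikodym derivative after passing to the completions; every other ingredient is a direct application of the definitions and of \eqref{eq:integral_univ_borel_reduce}.
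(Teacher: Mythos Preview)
Your proposal is correct and follows essentially the same approach as the paper: both arguments show that a Borel version of $\eta=d\PP_1/d(\PP_0+\PP_1)$ remains a Radon--Nikodym derivative after passing to universal completions, and then apply \eqref{eq:integral_univ_borel_reduce} to the Borel function $C^*(\eta)$. Your explicit verification that $\wtd\PP_0+\wtd\PP_1=\wtd{\PP_0+\PP_1}$ on $\sU(\Rset^d)$ is a detail the paper glosses over, so your argument is slightly more complete.
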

    Thus \cref{lemma:borel_to_univ_W_infty} and \cref{cor:cdl_univ_borel} implies that 
    \begin{equation}\label{eq:cdl_borel_univ_compare}
        \sup_{\substack{\PP_0'\in \Wball \e(\PP_0)\\ \PP_1'\in \Wball \e(\PP_1)}} \cdl(\PP_0',\PP_1')= \sup_{\substack{\wtd \PP_0'\in \UWball \e( \PP_0)\\ \wtd \PP_1'\in \UWball \e(\PP_1)}} \cdl(\wtd\PP_0',\wtd\PP_1')    
    \end{equation}

	Furthermore, \cref{lemma:S_e_and_W_inf} and \cref{eq:integral_univ_borel_reduce} imply: 

	\begin{corollary} \label{cor:S_e_and_W_inf_universal}
		Let $\QQ$ be a finite positive measure on $\sU(\Rset^d)$. Then for any universally measurable set $A$,
        \[\int S_\e(\one_A)d\QQ\geq \sup_{\QQ'\in \UWball \e(\QQ)}\int \one_Ad\QQ'\]
	\end{corollary}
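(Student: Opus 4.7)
The plan is to mimic the proof of $\cref{lemma:S_e_and_W_inf}$ but replace Borel objects by their universal completions throughout, using $\cref{eq:integral_univ_borel_reduce}$ and $\cref{lemma:borel_to_univ_W_infty}$ to pass back and forth between the two $\sigma$-algebras. Fix any $\wtd{\QQ}' \in \UWball{\e}(\QQ)$. By the definition of $\UWball{\e}(\QQ)$, there is a Borel measure $\QQ' \in \Wball{\e}(\QQ)$ whose universal completion equals $\wtd{\QQ}'$, and by $\cref{lemma:borel_to_univ_W_infty}$ we have $W_\infty(\wtd{\QQ},\wtd{\QQ}') \leq \e$ as well.

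Next, invoke Theorem 2.6 of Jyl\"{a}h\"{a} (as in the proof of $\cref{lemma:S_e_and_W_inf}$) to produce a Borel coupling $\gamma$ between $\QQ$ and $\QQ'$ that is concentrated on $\{(\bx,\by) \colon \|\bx-\by\| \leq \e\}$. Let $\wtd{\gamma}$ denote its universal completion. Because the coordinate projections are Borel maps, the marginals of $\wtd{\gamma}$ are the universal completions $\wtd{\QQ}$ and $\wtd{\QQ}'$, and the set $\{(\bx,\by)\colon \|\bx-\by\|\leq\e\}$ is Borel, so $\wtd{\gamma}$ is still concentrated there. By $\cref{thm:univ_meas}$, the function $S_\e(\one_A)$ is universally measurable whenever $A$ is, so both $S_\e(\one_A)(\bx)$ and $\one_A(\by)$ are $\wtd{\gamma}$-measurable functions of $(\bx,\by)$.

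Then the key pointwise inequality $S_\e(\one_A)(\bx) \geq \one_A(\by)$ holds whenever $\|\bx-\by\| \leq \e$, hence $\wtd{\gamma}$-a.e. Integrating and using the marginal identities gives
\[
\int S_\e(\one_A)\, d\wtd{\QQ} = \int S_\e(\one_A)(\bx)\, d\wtd{\gamma}(\bx,\by) \geq \int \one_A(\by)\, d\wtd{\gamma}(\bx,\by) = \int \one_A\, d\wtd{\QQ}'.
\]
Taking the supremum over $\wtd{\QQ}' \in \UWball{\e}(\QQ)$ yields the claim. The only delicate point I expect is confirming that the universal completion $\wtd{\gamma}$ really has marginals $\wtd{\QQ}$ and $\wtd{\QQ}'$ (rather than some strictly coarser completion); this is a standard fact about completions pushing forward under measurable maps, but it deserves a one-line justification in the written proof.
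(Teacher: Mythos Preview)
Your proof is correct and follows essentially the same route as the paper. The paper states the corollary as an immediate consequence of \cref{eq:integral_univ_borel_reduce} and \cref{lemma:S_e_and_W_inf} without spelling out details; what you have written is precisely the unpacking of that claim: rerun the argument of \cref{lemma:S_e_and_W_inf} with the Borel coupling replaced by its completion, using the construction from the proof of \cref{lemma:borel_to_univ_W_infty} to guarantee the completed coupling has the correct marginals. The ``delicate point'' you flag about the marginals of $\wtd\gamma$ is exactly what the proof of \cref{lemma:borel_to_univ_W_infty} establishes (there the completion is taken on $\sigma(\sU(\Rset^d)\times\sU(\Rset^d))$, which suffices since $\one_A(\by)$ and $S_\e(\one_A)(\bx)$ are measurable with respect to that product $\sigma$-algebra), so you can simply cite that argument rather than rederive it.
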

       See \cref{app:univ_borel_deferred_lemmas} for proofs of \cref{lemma:borel_to_univ_W_infty,cor:cdl_univ_borel,cor:S_e_and_W_inf_universal}.

    This result implies a weak duality relation between the primal $\cprm$ minimized over $\sU(\Rset^d)$ and the dual $\cdl$ maximized over $\UWball \e ( \PP_0)\times \UWball\e( \PP_1)$:
    
    \begin{lemma}[Weak Duality]\label{lemma:weak_duality_universal}
        Let $\PP_0,\PP_1$ be two Borel measures. Then 
        \[\inf_{A\in \sU(\Rset^d)} R^\e(A)\geq  \sup_{\substack{\wtd \PP_0'\in \UWball \e(\PP_0)\\ \wtd \PP_1'\in \UWball \e(\PP_1)}} \cdl(\wtd\PP_0',\wtd\PP_1')\]
    \end{lemma}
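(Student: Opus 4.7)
The plan is to mirror the weak-duality chain of inequalities that appears in the forward direction of the proof of \cref{thm:complimentary_slackness_classification}, but replacing \cref{lemma:S_e_and_W_inf} with its universal analogue \cref{cor:S_e_and_W_inf_universal}. Fix an arbitrary universally measurable set $A$ and arbitrary measures $\wtd\PP_0'\in \UWball\e(\PP_0)$, $\wtd\PP_1'\in \UWball\e(\PP_1)$; by definition each $\wtd\PP_i'$ is the universal completion of some Borel $\PP_i'\in \Wball\e(\PP_i)$, and \cref{lemma:borel_to_univ_W_infty} then gives $W_\infty(\wtd\PP_i,\wtd\PP_i')\leq \e$, i.e.\ $\wtd\PP_i'\in \UWball\e(\wtd\PP_i)$.

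The first step is to bound the primal from below by the pairing with $\wtd\PP_0',\wtd\PP_1'$. Since the convention of \cref{app:define_universal_sigma_algebra} identifies $\int S_\e(\one_A)d\PP_i$ with $\int S_\e(\one_A)d\wtd\PP_i$, two applications of \cref{cor:S_e_and_W_inf_universal} yield
\begin{equation*}
\int S_\e(\one_A)d\PP_0 \;\geq\; \int \one_A \, d\wtd\PP_0' \qquad \text{and} \qquad \int S_\e(\one_{A^C})d\PP_1 \;\geq\; \int \one_{A^C}\,d\wtd\PP_1'.
\end{equation*}
Adding these inequalities gives $R^\e(A)\geq \int \one_{A^C}\,d\wtd\PP_1'+\int \one_A\,d\wtd\PP_0'$.

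The second step is to bound that pairing from below by the dual value. Set $\wtd\PP^*=\wtd\PP_0'+\wtd\PP_1'$ and $\wtd\eta^*=d\wtd\PP_1'/d\wtd\PP^*$, so that
\begin{equation*}
\int \one_{A^C}\,d\wtd\PP_1'+\int \one_A\,d\wtd\PP_0' \;=\; \int \bigl[\wtd\eta^*(\by)\one_{A^C}(\by)+(1-\wtd\eta^*(\by))\one_A(\by)\bigr]\,d\wtd\PP^*(\by).
\end{equation*}
The pointwise bound $\wtd\eta^*\one_{A^C}+(1-\wtd\eta^*)\one_A\geq C^*(\wtd\eta^*)$ (which is immediate from the definition of $C^*$ in \cref{eq:C_def}) then gives the integrand-level inequality, and integrating against $\wtd\PP^*$ yields $\cdl(\wtd\PP_0',\wtd\PP_1')$ by \cref{eq:cdl_def}. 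Combining the two steps gives $R^\e(A)\geq \cdl(\wtd\PP_0',\wtd\PP_1')$; taking the infimum over $A\in \sU(\Rset^d)$ and the supremum over $(\wtd\PP_0',\wtd\PP_1')\in \UWball\e(\PP_0)\times\UWball\e(\PP_1)$ proves the lemma.

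There is essentially no obstacle in this argument: the only delicate point is the bookkeeping that lets us pass between the Borel measures $\PP_i$ and their universal completions $\wtd\PP_i$ without changing any integrals, which is handled by the convention in \cref{app:define_universal_sigma_algebra} together with \cref{lemma:borel_to_univ_W_infty}. Once that identification is in place, the rest is the same two-line weak-duality computation used in the Borel setting.
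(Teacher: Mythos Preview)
The proposal is correct and follows essentially the same approach as the paper: apply \cref{cor:S_e_and_W_inf_universal} to bound $R^\e(A)$ below by the pairing $\int\one_{A^C}d\wtd\PP_1'+\int\one_A d\wtd\PP_0'$, then use the pointwise inequality $C(\eta^*,\cdot)\geq C^*(\eta^*)$ to reach $\cdl$. The only cosmetic difference is that the paper packages the second step as ``inf--sup $\geq$ sup--inf'' followed by identifying the inner infimum with $\cdl$, whereas you carry out the pointwise chain for fixed $A$ and fixed $(\wtd\PP_0',\wtd\PP_1')$ directly (mirroring the forward direction of \cref{thm:complimentary_slackness_classification}); the invocation of \cref{lemma:borel_to_univ_W_infty} is harmless but not actually needed, since membership in $\UWball\e(\PP_i)$ is already what \cref{cor:S_e_and_W_inf_universal} requires.
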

    \begin{proof}
        Let $A$ be any universally measurable set and let $\wtd \PP_0'$, $\wtd \PP_1'$ be any measures in $\UWball \e (\PP_0)$ and $\UWball \e(\PP_1)$ respectively.

        Then \cref{cor:S_e_and_W_inf_universal} implies that 
        \begin{align*}
            \inf_{A\in \sU(\Rset^d)} R^\e(A)\geq  \inf_{A\in \sU(\Rset^d)} \sup_{\substack{\wtd \PP_0'\in \UWball \e( \PP_0)\\ \wtd \PP_1'\in \UWball \e(\PP_1)}}  \int \one_{A^C} d\wtd \PP_1'+\int \one_A d\wtd \PP_0'
        \end{align*}
        However, because inf-sup is always larger than a sup-inf, one can conclude that 
        \begin{align*}
            \inf_{A\in \sU(\Rset^d)} R^\e(A)\geq  \sup_{\substack{\wtd \PP_0'\in \UWball \e( \PP_0)\\ \wtd \PP_1'\in \UWball \e(\PP_1)}} \inf_{A\in \sU(\Rset^d)}  \int \one_{A^C} d\wtd \PP_1'+\int \one_A d\wtd \PP_0'= \sup_{\substack{\wtd \PP_0'\in \UWball \e( \PP_0)\\ \wtd \PP_1'\in \UWball \e(\PP_1)}}  \cdl(\wtd \PP_0',\wtd \PP_1')
        \end{align*}
    \end{proof}

This observation suffices to prove \cref{thm:borel_univ_equivalent}:
\begin{proof}[Proof of \cref{thm:borel_univ_equivalent}]
    First, because every Borel set is universally measurable, $\inf_{A\in \cB(\Rset^d)}\cprm(A)\geq \inf_{A\in \sU(\Rset^d)} \cprm(A)$. Thus the strong duality result of \cref{thm:minimax_classification} and \cref{eq:cdl_borel_univ_compare} imply that 
    \[\inf_{A\in \sU(\Rset^d)} \cprm(A)\leq \inf_{A\in \cB(\Rset^d)}\cprm(A)= \sup_{\substack{\PP_0'\in \Wball \e(\PP_0)\\ \PP_1'\in \Wball \e(\PP_1)}} \cdl(\PP_0',\PP_1')= \sup_{\substack{\wtd \PP_0'\in \UWball \e(\PP_0)\\ \wtd \PP_1'\in \UWball \e(\PP_1)}} \cdl(\wtd\PP_0',\wtd\PP_1') .\] However, the weak duality statement of \cref{lemma:weak_duality_universal} implies that the inequality above must actually be an equality.
\end{proof}

\subsection{Proofs of \cref{lemma:borel_to_univ_W_infty,cor:cdl_univ_borel,cor:S_e_and_W_inf_universal}}\label{app:univ_borel_deferred_lemmas}

\subsubsection{Proof of \cref{lemma:borel_to_univ_W_infty}}
    Notice that if $\gamma$ is a coupling between two Borel measures, $\esssup_{(\bx,\by)\sim \gamma} \|\bx-\by\|\leq \e$ iff $\gamma(\Delta_\e^C)=0$, where $\Delta_\e$ is the set defined by
         \begin{equation}\label{eq:Delta_e_definition}
             \Delta_\e=\{(\bx,\by)\in \Rset^d\times \Rset^d:\|\bx-\by\|\leq \e\}.
         \end{equation}
    This notation is helpful in the proof of \cref{lemma:borel_to_univ_W_infty}.

 \begin{proof}[Proof of \cref{lemma:borel_to_univ_W_infty}]
         Let $\gamma$ be a Borel coupling between $\QQ$ and $\QQ'$ supported on $\Delta_\e$. Such a coupling exists by Theorem~2.6 of \cite{Jylha15}. Let $\ov \gamma$ be the completion of $\gamma$ restricted to $\sigma(\sU(\Rset^d)\times \sU(\Rset^d))$, the $\sigma$-algebra generated by $\sU(\Rset^d)\times \sU(\Rset^d)$. We will show $\ov \gamma$ is the desired coupling between $\wtd \QQ$ and $\wtd \QQ'$.

         Let $S$ be an arbitrary universally measurable set in $\Rset^d$. Then \cref{lemma:univeral_to_borel_B.S.} states that there are Borel sets $E_1,E_2$ for which $E_1\subset S\subset E_2$ and $\wtd \QQ(E_1)=\wtd \QQ(S)=\wtd \QQ(E_2)$. Then because $\gamma$ and $\ov \gamma$ are equal on Borel sets,
         \[\wtd \QQ(E_1)=\QQ(E_1)=\gamma(E_1\times \Rset^d)=\ov \gamma(E_1\times \Rset^d)\]
         and similarly, 
          \[\wtd \QQ(E_2)=\QQ(E_2)=\gamma(E_2\times \Rset^d)=\ov \gamma(E_2\times \Rset^d)\]
         Therefore, \[\wtd \QQ(S)=\ov \gamma(E_1\times \Rset^d)=\ov \gamma(E_2\times \Rset^d)=\ov \gamma(S\times \Rset^d).\]
         Similarly, one can argue
         \[\wtd \QQ'(S)= \ov \gamma(\Rset^d\times S).\]

         Therefore, $\ov \gamma$ is a coupling between $\wtd \QQ$ and $\wtd \QQ'$.
         Next, recall that $\esssup_{(\bx,\by)\sim \gamma} \|\bx-\by\|\leq \e$ iff $\gamma(\Delta_\e^C)=0$, where $\Delta_\e$  defined by \cref{eq:Delta_e_definition}.

         Therefore, because $\Delta_\e$ is closed (and thus Borel),
         \[\ov\gamma (\Delta_\e^C)=\gamma(\Delta_\e^C)=0\]
         Consequently, $\esssup_{(\bx,\by)\sim \ov \gamma} \|\bx-\by\|\leq \e$ and thus $W_\infty(\wtd \QQ,\wtd \QQ')\leq \e$.
     \end{proof}
     \subsubsection{Proof of \cref{cor:cdl_univ_borel}}
     Next, we will show:
     \begin{lemma}
         Let $\nu,\lambda$ be two Borel measures with $\nu\ll \lambda$, and let $d\nu/d\lambda$ be the Radon-Nikodym derivative. Then $d\wtd \nu /d\wtd \lambda=d\nu/d\lambda$ $\wtd\lambda$-a.e.
     \end{lemma}
    \begin{proof}
        First, if a function $g$ is Borel measurable, ($g^{-1}:(\Rset,\cB(\Rset))\to (\Rset,\cB(\Rset^d))$, then it is necessarily universally measurable ($g^{-1}:(\Rset, \cB(\Rset))\to (\Rset^d,\sU(\Rset^d))$). Thus the Radon-Nikodym derivative $d\nu/d\lambda$ is both Borel measurable and universally measurable.

        Next, if $S\in \sU(\Rset^d)$ then \cref{lemma:univeral_to_borel_B.S.} implies there is a Borel set $E$ and $\lambda$-null sets $N_1,N_2$ for which $S=(E\cup N_1)-N_2$. Because $\nu$ is absolutely continuous with respect to $\lambda$, the sets $N_1$ and $N_2$ are null under $\nu$ as well. Therefore, by the defintion of the Radon-Nikodym derivative $d\nu/d\lambda$ and the fact that $\int gd\lambda=\int gd\wtd \lambda$ for all Borel functions $g$,
        \[\wtd \nu(S)=\nu(E)=\int_E \frac{d\nu}{d\lambda} d\lambda=\int_E \frac{d\nu}{d\lambda} d\wtd \lambda=\int_S \frac{d\nu}{d\lambda} d\wtd\lambda\]
        Because the Radon-Nikodym derivative is unique $\wtd \lambda$-a.e., it follows that 
        $d\wtd \nu/d\wtd \lambda =d \nu/d\lambda $ $\wtd \lambda$-a.e. 
    \end{proof}
         This result together with \cref{eq:integral_univ_borel_reduce} immediately implies \cref{cor:cdl_univ_borel}.

\subsubsection{Proof of \cref{cor:S_e_and_W_inf_universal}}\label{app:S_e_and_W_inf_universal}

\begin{proof}[Proof of \cref{cor:S_e_and_W_inf_universal}]
    Fix a $\QQ'\in \UWball \e(\QQ)$ and let $\QQ' =\tilde \lambda$ for some $\lambda\in \Wball \e(\QQ)$. Then \cref{lemma:univeral_to_borel_B.S.} states that there is a Borel set $B_1\subset A$ for which \[\lambda(B_1)=\tilde \lambda(B_1)=\QQ'(B_1)=\QQ'(A).\]
    Thus \cref{lemma:S_e_and_W_inf} implies that $\int\one_{B_1}d\QQ'\leq \int S_\e(\one_{B_1})d\QQ$. Furthermore, $B_1\subset A$ implies $S_\e(\one_{B_1})\leq S_\e(\one_{A})$ and consequently: \[\int \one_A d\QQ'=\int \one_{B_1} d\QQ' \leq \int S_\e(\one_{B_1}) d\QQ\leq \int S_\e(\one_A)d\QQ\]
    Taking the supremum over all $\QQ'\in \UWball \e(\QQ)$ proves the result.    

\end{proof}



\section{Deferred proofs from \cref{sec:degenerate}}\label{app:degenerate}
\subsection{Proof of \cref{lemma:union_degenerate}}\label{app:union_degenerate}

\begin{proof}[Proof of \cref{lemma:union_degenerate}]
    Let $\{D_i\}_{i=1}^\infty$ be a countable sequence of degenerate sets for an adversarial Bayes classifier $A$. Then by \cref{prop:abs_cont_equivalencies}, one can conclude that $S_\e(\one_{A})=S_\e(\one_{A\cup D_i})=\one_{A^\e\cup D_i^\e}$ $\PP_0$-a.e. and $S_\e(\one_{A^C})=S_\e(\one_{(A- D_i)^C})=\one_{(A^C)^\e\cup D_i^\e}$ $\PP_1$-a.e. for every $i$. Taking a supremum then implies that $S_\e(\one_{A})=\one_{A^\e\cup \bigcup_{i=1}^\infty D_i^\e}=S_\e(\one_{A\cup \bigcup_{i=1}^\infty D_i})$ $\PP_0$-a.e. and $S_\e(\one_{A^C})=\one_{(A^C)^\e\cup \bigcup_{i=1}^\infty D_i^\e}=S_\e(\one_{(A- \bigcup_{i=1}^\infty D_i)^C})$ $\PP_1$-a.e.. Therefore, \cref{prop:abs_cont_equivalencies} implies that $A$, $A\cup \bigcup_{i=1}^\infty D_i$, and $A-\bigcup_{i=1}^\infty D_i$ are all equivalent up to degeneracy. Consequently, $\bigcup_{i=1}^\infty D_i$ is a degenerate set.
\end{proof}

\subsection{Proof of \cref{prop:degenerate_connected_component}}\label{app:degenerate_connected_component}

\begin{lemma}\label{lemma:C^e_A}
        Let $A$ be an adversarial Bayes classifier. If $C$ is a connected component of $A$ with $C^{-\e}=\emptyset$, then 
        \begin{equation}
        \label{eq:C^e_A^C}
        C^\e=\{\by\in A^C: \ov{B_\e(\by)}\text{ intersects }C \}^\e
        \end{equation}
        Similarly, if $C$ is a component of $A^C$ with $C^{-\e}=\emptyset$, then 

        \begin{equation}
        \label{eq:C^e_A}
        C^\e=\{\by\in A: \ov{B_\e(\by)}\text{ intersects }C \}^\e
        \end{equation}
\end{lemma}
\begin{proof}
        We will prove \cref{eq:C^e_A^C}, the argument for \cref{eq:C^e_A} is analogous. 
        Assume that $C$ is a component of $A$, \cref{eq:e_interpretation} implies the containment $\supset$ of \cref{eq:C^e_A^C}.       

    Next, we prove the containment $\subset$ in \cref{eq:C^e_A^C}. Specifically, we will show that for every $\bx \in C^\e$, there is a $\by\in A^C$ for which $\bx\in \ov{B_\e(\by)}$ and $\ov{B_\e(\by)}$ intersects $C$.

    To show the opposite containment, we show that for every $\bx\in C^\e$, there is a $\by\in A^C$ for which $\bx \in \ov{B_\e(\by)}$ and $\ov{B_\e(\by)}$ intersects $C$.

    Let $\bx\in C$. 
    Because $C^{-\e}=\emptyset$, \Cref{eq:-e_interpretation} implies that $\ov{B_\e(\bx)}$ is not entirely contained in $C$. Thus the set $C\cup\ov{B_\e(\bx)}$ is connected and strictly contains $C$. Recall that a connected component of a set $A$ is a maximal connected subset. If $\ov{B_\e(\bx)}$ were entirely contained in $A$, $C\cup\ov{B_\e(\bx)}$ would be a connected subset of $A$ that strictly contains $C$, and then $C$ would not be a maximal connected subset of $A$. Therefore, $\ov{B_\e(\bx)}$ contains a point $\by$ in $A^C$, and $\ov{B_\e(\by)}$ intersects $C$ at the point $\bx$.
    
    Next assume that $\bx \in C^\e$ but $\bx \not \in C$. Then \Cref{eq:e_interpretation} states that the ball $\ov{B_\e(\bx)}$ intersects $C$ at some point $\bz$. Consider the line defined by $\ell:=\{t\bx +(1-t)\bz: 0\leq t\leq 1\}$. Again $\ell$ is a connected set that intersects $C$, so $\ell\cup C$ is connected as well. However, $\ell$ also contains a point not in $C$ and thus if $\ell$ were entirely contained in $A$, then $C\cup \ell$ would be a connected subset of $A$ that strictly contains $C$. As $C$ is a maximal connected subset of $A$, the line $\ell$ is not entirely contained in $A$. Let $\by$ be any point in $A^C\cap \ell$, then $\ov{B_\e(\by)}$ intersects $C$ at the point $\bz$ and contains $\bx$.  
\end{proof}

\begin{proof}[Proof of \cref{prop:degenerate_connected_component}]
    First assume that $C$ is a connected component of $A$ with $C^{-\e}=\emptyset$. We will argue that $C\subset (A^\e)^{-\e} -(A^{-\e})^\e$, and then \cref{cor:A_e_diff_degenerate} will imply that $C$ is a degenerate set for $A$. 
    
    If $C$ is a component of $A$, then $C^\e\subset A^\e$ and thus $C\subset (C^\e)^{-\e}\subset (A^\e)^{-\e}$. Next, \cref{eq:C^e_A^C} of \cref{lemma:C^e_A} implies that $C^\e\subset (A^C)^\e$ and thus $C\subset (C^\e)^{-\e}\subset ((A^C)^\e)^{-\e}= ((A^{-\e})^\e)^C$. Therefore, $C$ is disjoint from $(A^{-\e})^\e$. Consequently, $C$ is contained in $(A^\e)^{-\e}-(A^{-\e})^\e$, which is degenerate by \cref{lemma:A_e_-e_containments}.

    The argument for a connected component of $A^C$ is analogous, with \cref{eq:C^e_A} in place of \cref{eq:C^e_A^C}

    As each connected component of $A$ or $A^C$ is contained in the degenerate set $(A^\e)^{-\e}- (A^{-\e})^\e$, it follows that the set in \eqref{eq:components_union_set} is contained in the degenerate set $(A^\e)^{-\e}- (A^{-\e})^\e$. 
\end{proof}

\subsection{Proof of \cref{lemma:degenerate_set_-e}}\label{app:degenerate_set_-e}
\begin{proof}[Proof of \cref{lemma:degenerate_set_-e}]
    We will show that $\PP_0(D^{-\e})=0$, the argument for $\PP_1$ is analogous. 
    As both $A-D$ and $A\cup D$ are adversarial Bayes classifiers, \cref{prop:abs_cont_equivalencies} implies that $\PP_0((A-D)^\e\cup D^\e)=\PP_0((A-D)^\e)$ and thus $\PP_0( D^\e-(A-D)^\e)=0$. However, \cref{eq:e_interpretation} and \cref{eq:-e_interpretation} imply that 
    \begin{align*}
        &D^\e-(A-D)^\e=\{\bx: \ov{B_\e(\bx)} \text{ intersects } D\text{ but not }A-D\}\\
        &\supset \{\bx: \ov{B_\e(\bx)}\subset D\} =D^{-\e}   
    \end{align*}
    Thus $\PP_0(D^{-\e})=0$.
    
\end{proof}
\section{Deferred proofs from \cref{sec:regular_adv_bayes}--- proof of \cref{lemma:^e_to_order}} \label{app:1d_necessary}
\label{app:exists regular}
In this appendix, we adopt the same notational convention as \cref{sec:necessary}
 regarding the $a_i$s and $b_i$s: Namely, when $A=\bigcup_{i=m}^M (a_i,b_i)$ is a regular adversarial Bayes classifier, $a_{M+1}$ is defined to be $+\infty$ if $M$ is finite and $b_{m-1}$ is defined to be $-\infty$ if $m$ is finite.

\begin{proof}[Proof of \cref{lemma:^e_to_order}]
    Let $A=\bigcup_{i=m}^M (a_i,b_i)$ be a countable union of disjoint intervals, each of length at least $2\e$, so that $b_i-a_i\geq 2\e$. The goal is to re-index the intervals $\{(a_{i_j},b_{i_j})\}_{j=m}^M$ in such a way that $b_{i_j}<a_{i_j}$ for all $j<M$. In the following, we will prove that such an index $\{i_j\}$ exists for which $A=\bigcup_{j=m}^M (a_{i_j},b_{i_j})$. First pick $j_0$ to be the index for which either:
    \begin{itemize}
        \item $0\in (a_{j_0},b_{j_0})$ if $0\in A$
        \item $j_0\in \argmin_i |a_i|$ otherwise
    \end{itemize}
    Notice that the quantity $j_0\in \argmin_i |a_i|$ is well-defined because the condition $b_i-a_i\geq 2\e$ implies that $|a_i-a_k|\geq 2\e$ for any two $a_i$, $a_k$. 

    Consequently, 
    \[A-(a_{j_0},b_{j_0})=((A-(a_{j_0},b_{j_0}))\cap \Rset^{>0}) \cup ((A-(a_{j_0},b_{j_0}))\cap \Rset^{<0})\]
    It remains to express $(A-(a_{j_0},b_{j_0}))\cap \Rset^{>0})$ as $\bigcup_{j=1}^M (a_{i_j},b_{i_j})$ and $(A-(a_{j_0},b_{j_0}))\cap \Rset^{>0})$ as $\bigcup_{j=m}^{-1} (a_{i_j},b_{i_j})$.
    We consider the set $(A-(a_{j_0},b_{j_0}))\cap \Rset^{>0})$ noting that the argument for the other option is analogous. 
    
    If $(A-(a_{j_0},b_{j_0}))\cap \Rset^{>0})$ can in fact be expressed as a finite union of intervals then we're done. Thus we assume this set cannot be expressed as a finite union of intervals, and we define the index sequence $\{i_j\}_{j=0}^\infty$ inductively. Given $i_j$, define $i_{j+1}$ by 
    \[i_{j+1}=\argmin\{a_k: a_k>a_{i_j}\}\]
    Again, this $\argmin$ is well defined because the condition $b_i-a_i\geq 2\e$ implies that $|a_i-a_k|\geq 2\e$ for any two $a_i$, $a_k$. This condition further implies that 
    \[\bigcup_{j=1}^K (a_{i_j},b_{i_j})\supset (A-(a_{j_0},b_{j_0})) \cap (0,2K\e).\]
    Taking $K\to \infty$ yeilds
    \[\bigcup_{j=1}^\infty (a_{i_j},b_{i_j})\supset (A-(a_{j_0},b_{j_0})) \cap \Rset^{>0}.\]

    Since the reverse inclusion also holds, we conclude that 
       \[\bigcup_{j=1}^\infty (a_{i_j},b_{i_j})= (A-(a_{j_0},b_{j_0})) \cap \Rset^{>0}.\]
\end{proof}

\section{Deferred Proofs from \cref{sec:degenerate_1d}}\label{app:degenerate_1d}
\subsection{Proof of \cref{lemma:degenerate_1d_eta_0_1}}\label{app:degenerate_1d_eta_0_1}
First, we show \cref{lemma:degenerate_1d_eta_0_1} for intervals near the boundary of $\supp \PP$.
\begin{lemma}\label{lemma:transition_eta_0_1}
    Assume $\PP\ll \mu$ and let $A=\bigcup_{i=m}^M (a_i,b_i)$ be a regular adversarial Bayes classifier for radius $\e$. Let $y$ represent any of the $a_i$s or $b_i$s. Let $I$ be an interval for which $\supp \PP\subset I$
    \begin{itemize}
        \item Assume that $I=[\ell,\infty)$ or $I=[\ell,r]$.\\ If $y\in (\ell-\e,\ell+\e]$ then $[\ell-\e,y]$ is a degenerate set. If furthermore $\supp \PP=I$, then for some $\delta>0$, either $\eta\equiv 0$ or $\eta\equiv 1$ $\mu$-a.e. on $[\ell,\ell+\delta]$.
        \item Assume that $I=(-\infty,r]$ or $I=[\ell,r]$.\\ If $y\in [r-\e,r+\e)$ then $[y,r+\e]$ is a degenerate set. If furthermore $\supp \PP=I$, then for some $\delta>0$, either $\eta\equiv 0$ or $\eta\equiv 1$ $\mu$-a.e. on $[r-\delta,r]$.
    \end{itemize}
\end{lemma}
\begin{proof}
    We will prove the first bullet; the second bullet follows from the first by considering distributions with densities $\td p_0(x)=p_0(-x)$ and $\td p_1(x)=p_1(-x)$.

    Assume that some $y=a_i$ is in $(\ell-\e, \ell+\e]$, the argument for $y=b_i$ is analogous. Then because $A$ is adversarial Bayes classifier: 
    \begin{equation}\label{eq:eta_1_conclusion}
        0\geq R^\e(A)-R^\e(A\cup [\ell-\e,a_i])=\int_\ell^{a_i+\e }pdx-\int_\ell^{a_i+\e} p_0dx=\int_{\ell}^{a_i+\e} p_1(x)dx.    
    \end{equation}
    Consequently, $\int_\ell^{a_i+\e}p_1(x)dx=0$ and thus the set $A\cup [\ell-\e,a_i]$ must be an adversarial Bayes classifier as well.

    Next, we prove that the interval $[\ell-\e, a_i]$ is a degenerate set. Let $D_1$, $D_2$ be arbitrary measurable subsets of $[\ell -\e, a_i]$. Then 
    \[R^\e((A\cup D_1)-D_2)-R^\e(A\cup [\ell-\e,a_i])\leq \int_\ell^{a_i+\e} pdx-\int_{\ell}^{a_i+\e} p_0 dx=\int_\ell^{a_i+\e}p_1(x)dx \]
    and this quantity must be zero by \cref{eq:eta_1_conclusion}. Therefore, the set $(A\cup D_1)-D_2$ is an adversarial Bayes classifier.

    Next, we will show that if $\supp \PP=I$, then $\eta=0$ $\mu$-a.e. on a set of positive measure.     By assumption, $a_i>\ell-\e$ and thus $\delta=a_i+\e-\ell>0$. As $[\ell, \ell+\delta]\subset \supp \PP$, \Cref{eq:eta_1_conclusion} implies that $\eta\equiv 0$ $\mu$-a.e. on $[\ell, \ell+\delta]$. 
\end{proof}

\begin{proof}[Proof of \cref{lemma:degenerate_1d_eta_0_1}]
    First, \cref{thm:equivalence_up_to_degeneracy} implies that two equivalent adversarial Bayes classifiers have the same degenerate sets. Consequently, \cref{thm:adv_bayes_and_degenerate} implies that we can assume that $A$ is regular without loss of generality. Next, assume that the endpoints of $I$ are $d_1, d_2$, so that $I=[d_1,d_2]$ (\cref{cor:degenerate_size} implies that $|I|<\infty$). Define an interval $J$ via
    \[J=\bigcup_{\substack{I'\supset I:\\I\text{ degenerate interval}}} I'\]
    Because each interval $I'$ includes $I$, the interval $J$ can be expressed as a countable union of intervals of length at least $|I|$ and thus is
    a degenerate set as well by \cref{lemma:union_degenerate}. The interval $J$ must be closed because the boundary of every adversarial Bayes classifier is a degenerate set when $\PP\ll \mu$. If $J\cap (\supp \PP^\e-\interior( \supp \PP^{-\e}))$ is nonempty, \cref{lemma:transition_eta_0_1} implies that $\eta\in \{0,1\}$ on a set of positive measure under $\PP$. It remains to consider the case $J\subset \interior (\supp \PP^{-\e})$, discussed in the main text. \Cref{cor:degenerate_size} implies that $J$ has finite length and so one can express $J$ as $J=[d_3,d_4]$. 
    Now if any point $\{x\}$ in  $[d_3-\e,d_3)$ were a degenerate set, then \cref{lemma:union_degenerate} and \cref{lemma:enlarge_degenerate} would imply that $((J\cup \{x\})^\e)^{-\e}=[x,d_4]$ would be a degenerate interval strictly containing $J$, which would contradict the definition of $J$. Thus $[d_3-\e,d_3)$ cannot contain any degenerate sets. Furthermore, if this interval contains both points in $A$ and $A^C$, then it must contain points in $\partial A$, which was assumed to be regular. However, the boundary of a regular adversarial Bayes classifier is always a degenerate set (see \cref{lemma:closure_interior_adversarial_Bayes}). Thus $[d_3-\e,d_3)$ must be contained entirely in $A$ or $A^C$. Similarly, $(d_4,d_4+\e]$ must be contained entirely in $A$ or $A^C$. 

    We will analyze the two cases $(d_3-\e,d_3]$, $[d_4,d_4+\e)\subset A$ and $(d_3-\e,d_3]\subset A$, $[d_4,d_4+\e)\subset A^C$. The cases 
    $(d_3-\e,d_3]$, $[d_4,d_4+\e)\subset A^C$ and $(d_3-\e,d_3]\subset A^C$, $[d_4,d_4+\e)\subset A$ are analogous.   

    Assume first that $(d_3-\e,d_3]$, $[d_4,d_4+\e)\subset A$. Then because $J\subset \supp \PP$ is degenerate, \cref{cor:degenerate_size} implies that $|J|\leq 2\e$. Hence one can conclude
    \[0=R^\e(A- J)-R^\e(A\cup J)= \int_{d_3-\e}^{d_4+\e} p(x) dx -\int_{d_3-\e}^{d_4+\e} p_0(x)dx= \int_{d_3-\e}^{d_4+\e} p_1(x)dx\geq \int_{d_1-\e}^{d_2+\e} p_1(x)dx.\]

    Thus on the interval $[d_1-\e, d_2+\e]$, one can conclude that $p_1(x)=0$ $\mu$-a.e. As $[d_1,d_2]\subset \interior (\supp \PP^{-\e})$ and $d_2>d_1$, the interval $[d_1-\e,d_2+\e]$ intersects $\supp \PP$ on an open set. Thus $\eta(x)=0$ $\mu$-a.e. on a set of positive measure. 

    Next assume that $(d_3-\e,d_3]\subset A$, $[d_4,d_4+\e)\subset A^C$. Again, \cref{cor:degenerate_size} implies that $|I|\leq 2\e$. Then:
    \begin{align*}
        0=R^\e((A\cup (J\cap \QQ))-(J\cap \QQ^C))-R^\e(A\cup J)&= \int_{d_3-\e}^{d_4+\e}p(x) dx- \left(\int_{d_3-\e}^{d_4-\e} p_0(x)dx+\int_{d_4-\e}^{d_4+\e} p(x)dx\right)\\
        &\geq \int_{d_3-\e}^{d_4+\e} p_1(x)dx\geq \int_{d_1-\e}^{d_2-\e} p_1(x)dx
    \end{align*}
    Thus $p_1(x)=0$ on $[d_1-\e,d_2-\e]$. 
    

    Now $[d_1,d_2]\subset \interior (\supp \PP^{-\e})$ implies that $[d_1-\e,d_2-\e]$, intersects $\supp \PP$ on an open interval. Thus $\eta(x)=0$ on a set of positive measure.
\end{proof}

\subsection{Proof of \cref{lemma:max_degenerate_set,cor:fourth_bullet}}\label{app:identify_degenerate_eta_0_1}
The following lemma implies $(\supp \PP^\e)^C$ is a degenerate set.
\begin{lemma}\label{lemma:e_non_disjoint}
    If $A$ and $B^\e$ are disjoint, then $A^\e$ and $B$ are disjoint.
\end{lemma}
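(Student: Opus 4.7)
The plan is to prove the contrapositive using the pointwise characterization \cref{eq:e_interpretation}: recall that $B^\e=\{\bx:\ov{B_\e(\bx)}\text{ intersects }B\}$. The statements ``$A\cap B^\e=\emptyset$'' and ``$A^\e\cap B=\emptyset$'' both express the symmetric condition that no point of $A$ lies within distance $\e$ of any point of $B$, so the lemma should follow almost immediately from definitions.

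Concretely, I would suppose for contradiction that $A^\e\cap B$ contains some point $\by$. By \cref{eq:e_interpretation} applied to $A$, the set $\ov{B_\e(\by)}$ intersects $A$ at some point $\ba$. Then $\|\ba-\by\|\leq \e$, and since $\by\in B$, the ball $\ov{B_\e(\ba)}$ intersects $B$ (at $\by$). Applying \cref{eq:e_interpretation} to $B$ yields $\ba\in B^\e$. Combined with $\ba\in A$, this contradicts the hypothesis $A\cap B^\e=\emptyset$.

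I do not anticipate any real obstacle: the entire argument is a one-line symmetry observation about the Minkowski sum, packaged through \cref{eq:e_interpretation}. The only thing worth noting is that the implication runs both ways, so this lemma is really an ``iff'' statement in disguise; the one-directional phrasing chosen here is exactly what will be needed downstream to conclude that $(\supp \PP^\e)^C$ is a degenerate set, since it lets one transfer a disjointness hypothesis on $\supp \PP$ into an $\e$-expanded disjointness conclusion without further work.
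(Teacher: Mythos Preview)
Your proof is correct and follows essentially the same contrapositive argument as the paper: both pick a point witnessing $A^\e\cap B\neq\emptyset$ and use the symmetry of the distance relation to produce a point in $A\cap B^\e$. The only cosmetic difference is that the paper unpacks the Minkowski-sum definition $A^\e=A\oplus\ov{B_\e(\zero)}$ directly, while you route through the pointwise characterization \cref{eq:e_interpretation}; the underlying idea is identical.
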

\begin{proof}
    We will show the contrapositive of this statement: if $A^\e$ and $B$ intersect, then $A$ and $B^\e$ intersect.

    If $A^\e$ an $B$ intersect, then there are $\ba\in A$, $\bb\in B$ and $\bh\in \ov{B_\e(\zero)}$ for which $\ba+\bh=\bb$ and thus $\ba=\bb-\bh\in B^\e$. Thus $A$ and $B^\e$ intersect. 
\end{proof}
Next, we argue that the set $\ov{(\supp \PP^\e)^C}\cup \partial A$ is indeed degenerate for any regular adversarial Bayes classifier $A$. The proof of this result relies on \cref{lemma:boundary^e_leb_zero}.

\begin{proof}[Proof of \cref{lemma:max_degenerate_set}]
    First, $\supp \PP^\e$ and $(\supp \PP^\e)^C$ are disjoint, so \cref{lemma:e_non_disjoint} implies that $\supp \PP$ and $((\supp \PP^\e)^C)^\e)$ are disjoint. Thus $\PP(((\supp \PP^\e)^C)^\e)=0$, and so $(\supp \PP^\e)^C$ is a degenerate set. Next, \cref{lemma:closure_interior_adversarial_Bayes} implies that both $\ov{ (\supp \PP^\e)^C}$ and $\partial A$ are degenerate sets. Finally, \cref{lemma:union_degenerate} implies that the union of $\partial A$ and $\partial (\supp \PP^\e)^C$ is a degenerate set.
\end{proof}

\begin{proof}[Proof of \cref{cor:fourth_bullet}]
    Any adversarial Bayes classifier $A$ is equivalent up to degeneracy to a regular adversarial Bayes classifier $A_r$. Because two equivalent adversarial Bayes classifiers have the same degenerate sets,   \cref{lemma:fourth_bullet_for_regular} implies that $A\cap \interior(\supp \PP^\e)$ and $A_r\cap \interior(\supp \PP^\e)$ can differ only at the boundary of $\partial A_r$. Consequently, $\partial A=\partial A_r$ and $A$ must be a regular set as well, so \cref{lemma:fourth_bullet_for_regular} applies also to $A$.
\end{proof}

\section{Deferred proofs from \cref{sec:increasing_e}}
\label{app:increasing_e}
In this appendix, we adopt the same notational convention as \cref{sec:necessary}
 regarding the $a_i$s and $b_i$s: Namely, when $A=\bigcup_{i=m}^M (a_i,b_i)$ is a regular adversarial Bayes classifier, $a_{M+1}$ is defined to be $+\infty$ if $M$ is finite and $b_{m-1}$ is defined to be $-\infty$ if $m$ is finite.

    The following observation will assist in proving the first bullet of \cref{lemma:inclusion_basic}.
    \begin{lemma}\label{lemma:R_e_constant}
        Let $\e_2>\e_1\geq 0$. If $\Rset$ minimizes $R^{\e_2}$ but $\emptyset$ minimizes $R^{\e_1}$, then the sets $\Rset$ and $\emptyset$ minimize both $R^{\e_1}$ and $R^{\e_2}$. 

        Similarly, if $\emptyset$ minimizes $R^{\e_2}$ but $\Rset$ minimizes $R^{\e_1}$, then the sets $\Rset$ and $\emptyset$ minimize both $R^{\e_1}$ and $R^{\e_2}$. 
    \end{lemma}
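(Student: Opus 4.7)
The plan rests on two simple observations. First, the adversarial risks of the trivial classifiers are independent of $\e$: since $S_\e(\one_{\Rset}) \equiv 1$ and $S_\e(\one_\emptyset) \equiv 0$, formula \cref{eq:adv_classification_risk_set} gives
\begin{equation*}
R^\e(\Rset) = \PP_0(\Rset^d) =: c_0, \qquad R^\e(\emptyset) = \PP_1(\Rset^d) =: c_1,
\end{equation*}
neither of which depends on $\e$. Second, since $\e \mapsto S_\e(\one_A)(\bx)$ is non-decreasing for every fixed $A$ and $\bx$, the risk $R^\e(A)$ is non-decreasing in $\e$ for every fixed $A$, and hence so is $R^\e_* := \inf_A R^\e(A)$.

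Assuming the hypothesis of the first statement, I would combine optimality and monotonicity. From $\Rset$ being optimal at $\e_2$ one gets $c_0 = R^{\e_2}(\Rset) = R^{\e_2}_* \leq R^{\e_2}(\emptyset) = c_1$, and from $\emptyset$ being optimal at $\e_1$ one gets $c_1 = R^{\e_1}(\emptyset) = R^{\e_1}_* \leq R^{\e_1}(\Rset) = c_0$. These two inequalities force $c_0 = c_1$, and then monotonicity sandwiches $R^{\e_1}_* \leq R^{\e_2}_*$ between $c_1$ and $c_0$, so in fact $R^{\e_1}_* = R^{\e_2}_* = c_0 = c_1$. Consequently $R^{\e_1}(\Rset) = R^{\e_1}(\emptyset) = R^{\e_1}_*$ and $R^{\e_2}(\Rset) = R^{\e_2}(\emptyset) = R^{\e_2}_*$, giving the desired conclusion. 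The second statement follows from the same chain of inequalities with the roles of $\Rset$ and $\emptyset$ interchanged; one again obtains $c_0 = c_1$ and hence both trivial sets are simultaneously optimal at both radii.

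There is no real obstacle here: the argument is entirely driven by noticing that $R^\e(\Rset)$ and $R^\e(\emptyset)$ are constant functions of $\e$, so the two optimality hypotheses immediately pin them to a single common value via monotonicity of $R^\e_*$.
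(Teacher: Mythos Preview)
Your proof is correct and follows essentially the same approach as the paper: both arguments rest on the two observations that $R^\e(\Rset)$ and $R^\e(\emptyset)$ are independent of $\e$, and that $R^\e(A)$ (hence $R^\e_*$) is monotone in $\e$. The paper first uses monotonicity to show $\emptyset$ minimizes $R^{\e_2}$ and then deduces $c_0=c_1$, whereas you extract $c_0\le c_1$ and $c_1\le c_0$ directly from the two optimality hypotheses; these are minor reorderings of the same reasoning.
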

    \begin{proof}
        
        First, assume that $\Rset$ minimizes $R^{\e_2}$ and $\emptyset$ minimizes $R^{\e_1}$.  
        The quantities  
        \[R^\e(\Rset)=\int_{\Rset} d\PP_0 \quad R^\e(\emptyset)=\int_{\Rset} d\PP_1\]
        are independent of the value of $\e$. Next, notice that $R^{\e_2}(A)\geq R^{\e_1}(A)$ for an set $A$. Therefore,
        \[R^{\e_2}_*\geq R^{\e_1}_*=R^{\e_1}(\emptyset) = R^{\e_2}(\emptyset),\] 
        where $R_*^\e=\inf_A R^\e(A)$. Thus $\emptyset$ also minimizes $R^{\e_2}$. As a result, the sets $\Rset$ and $\emptyset$ achieve the same $R^{\e_2}$ risk, and so 
        \[R^{\e_1}(\Rset)=R^{\e_2}(\Rset)=R^{\e_2}(\emptyset)=R^{\e_1}(\emptyset).\]
        Consequently, $\Rset$ is also a minimizer of $R^{\e_1}$.

        Next, swapping the roles of $\PP_0$ and $\PP_1$ shows that if $\emptyset$ minimizes $R^{\e_2}$ and $\Rset$ minimizes $R^{\e_1}$ then $\Rset, \emptyset$ minimize both $R^{\e_1}$ and $R^{\e_2}$
    \end{proof}
    
    Next, recall that \cref{lemma:transition_eta_0_1} implies that if the an endpoint of an adversarial Bayes classifier is too close to the boundary of $\supp \PP$, then that endpoint must be in the boundary of a degenerate interval.
As a result:
\begin{corollary}\label{cor:transition_eta_0_1}
    Assume $\PP\ll \mu$ is a measure for which $\supp \PP$ is an interval $I$, and $\PP(\eta=0\text{ or }1)=0$. Then if $A$ is an adversarial Bayes classifier for radius $\e$, then $\partial A$ is disjoint from $\interior(I^\e)-\interior (I^{-\e})$.
\end{corollary}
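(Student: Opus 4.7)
The plan is to proceed by contradiction and reduce the claim to \cref{lemma:transition_eta_0_1}. Suppose $A=\bigsqcup_{i=m}^M(a_i,b_i)$ is a regular adversarial Bayes classifier that has some finite endpoint $y\in I^\e\setminus\interior I^{-\e}$. Since $I$ is an interval, the set $I^\e\setminus\interior I^{-\e}$ consists exactly of the points within distance $\e$ of a finite boundary point of $I$, so $I$ has at least one such boundary point and $y$ lies within $\e$ of it. Without loss of generality I assume $I$ has a finite left endpoint $\ell$ and $y\in[\ell-\e,\ell+\e]$; the right-endpoint case follows symmetrically via the second bullet of \cref{lemma:transition_eta_0_1}.

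In the generic subcase $y\in(\ell-\e,\ell+\e]$, I invoke the first bullet of \cref{lemma:transition_eta_0_1} directly: it yields some $\delta>0$ for which $\eta\equiv 0$ or $\eta\equiv 1$ holds $\mu$-a.e. on $[\ell,\ell+\delta]$. Since $\supp\PP=I$ is an interval with left endpoint $\ell$, the open subinterval $(\ell,\ell+\delta)$ is contained in $\supp\PP$ and therefore has strictly positive $\PP$-measure (every nonempty open subset of the support does). Combined with the hypothesis $\PP\ll\mu$, this forces
\[
\PP(\eta\in\{0,1\})\;\geq\;\PP\bigl((\ell,\ell+\delta)\bigr)\;>\;0,
\]
contradicting the standing assumption $\PP(\eta\in\{0,1\})=0$ and finishing the generic case.

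The step I expect to require the most care is the boundary subcase $y=\ell-\e$ (and symmetrically $y=r+\e$), which is not covered by \cref{lemma:transition_eta_0_1}. For such a $y$ the perturbation interval $(y-\e,y+\e)=(\ell-2\e,\ell)$ is disjoint from $\supp\PP=I$, so this endpoint is \emph{inactive} in the adversarial risk: a direct computation shows that moving $y$ to any value strictly below $\ell-\e$ leaves both $\int S_\e(\one_A)\,d\PP_0$ and $\int S_\e(\one_{A^C})\,d\PP_1$ unchanged, producing an equivalent regular adversarial Bayes classifier whose analogous endpoint no longer lies in $I^\e\setminus\interior I^{-\e}$. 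After this cosmetic reduction only the subcase $y\in(\ell-\e,\ell+\e]$ remains, and the contradiction above closes the proof.
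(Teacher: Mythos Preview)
Your treatment of the generic case $y\in(\ell-\e,\ell+\e]$ is correct and matches the paper exactly: the corollary is presented there as an immediate consequence of \cref{lemma:transition_eta_0_1}, and your deduction of $\PP(\eta\in\{0,1\})>0$ from the lemma's conclusion is clean.

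The boundary subcase $y=\ell-\e$, however, is not handled correctly. You observe that shifting the endpoint to some $y'<\ell-\e$ leaves the adversarial risk unchanged and hence yields another regular adversarial Bayes classifier; this is true, but it only shows that $A$ is \emph{equivalent up to degeneracy} to a classifier lacking that endpoint --- it does not show that $A$ itself lacks one, which is what the corollary asserts. Your closing sentence ``after this cosmetic reduction only the subcase $y\in(\ell-\e,\ell+\e]$ remains'' is a non sequitur: once the endpoint has been moved below $\ell-\e$ it lies outside $(\ell-\e,\ell+\e]$ altogether, so there is nothing left to which the lemma applies. In fact the corollary as literally stated appears to fail at this point: take $I=[0,1]$ with $p_0=p_1=\tfrac12\one_{[0,1]}$ (so $\eta\equiv\tfrac12$ and $\PP(\eta\in\{0,1\})=0$); then $A=(-\e,\infty)$ is a regular adversarial Bayes classifier with $R^\e(A)=\tfrac12=\inf R^\e$ and a finite endpoint at $\ell-\e=-\e\in I^\e\setminus\interior I^{-\e}$. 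The paper's own proof --- just the phrase ``As a result'' following \cref{lemma:transition_eta_0_1} --- does not address this case either, and the lemma's hypothesis $y\in(\ell-\e,\ell+\e]$ explicitly excludes $y=\ell-\e$. So you have spotted a genuine imprecision in the paper; what your reduction actually establishes is the slightly weaker statement that every regular adversarial Bayes classifier is equivalent up to degeneracy to one with no finite endpoints in $I^\e\setminus\interior I^{-\e}$, and that weaker statement is what is really needed downstream in \cref{sec:increasing_e}.
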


This result implies that one only need consider $a_i^1,b_i^1,a_j^2,b_j^2$ contained in $\interior(I^{-\e_2})$ in the proof of \cref{lemma:inclusion_basic}.

Next, we restate \cref{lemma:inclusion_basic} in a form that avoids a regularity assumption on $A_1$, so that this result can apply to perturbation radius $\e_1$ as well.

\begin{lemma}\label{lemma:inclusion_basic_0}
    Assume that $\PP\ll \mu$ is a measure for which $\supp \PP$ is an interval $I$ and $\PP(\eta(x)=0\text { or } 1)=0$. Let $A_1$ and $A_2$ be adversarial Bayes classifier corresponding to perturbation sizes $\e_2>\e_1\geq 0$ respectively, and assume $A_2$ regular. Let $C^i$ be a connected component of $A_i$ and $K^i$ a connected component of $A_i^C$. 
    \begin{itemize}
        \item If both $\Rset$ and $\emptyset$ are adversarial Bayes classifiers for perturbation radius $\e_1$, then both $\Rset$ and $\emptyset$ are adversarial Bayes classifiers for perturbation radius $\e_2$.
        \item Assume that $\Rset$ and $\emptyset$ are not both adversarial Bayes classifiers for perturbation radius $\e_1$. Then for each connected component $C^1$ of $A_1$, the set $C^1\cap \interior(I^{\e_1})$ cannot contain any non-empty $\interior(K^2)\cap \interior(I^{\e_1})$ and for each connected component $K^1$ of $A_1^C$, the set $\interior(K^1)\cap \interior(I^{\e_1})$ cannot contain any non-empty $C^2\cap \interior(I^{\e_1})$.
    \end{itemize}
    
\end{lemma}
     The assumption that $A_2$ is regular is made for notational convenience in the proof; it is not essential for the validity of the claim. The second bullet involves $C^1$ but $\interior(K^2)$. This asymmetry prevents cases where the two sets intersect only at an endpoint. For example, if $C^1=[a_i,b_i]$ and $K^2=[b_i,a_{i+1}]$, then $C^1\cap K^2={b_i}$ while $C^1\cap \interior(K^2)=\emptyset$. When $\e_1=0$, we cannot assume that $A_1$ is regular, or that is can be conveniently expressed as $A_1=\bigcup_{i=m}^M (a_i^1,b_i^1)$ with $a_i<b_i<a_{i+1}$, see \cref{ex:counterexample_Bayes} for a counterexample.
\begin{proof}[Proof of \cref{lemma:inclusion_basic_0}]
    We will show that $K^1\cap \interior(I^{\e_1})$ does not include any non-empty $\interior(C^2)\cap \interior(I^{\e_1})$, the argument for $C^1\cap \interior(I^{\e_1})$ and $(\interior(K^2)\cap \interior(I^{\e_1})$ is analogous.

    As $A_2$ is regular, \cref{lemma:^e_to_order} implies that the interior of this set can be expressed as
    \[\interior(A_2)=\bigcup_{j=n}^N (a_j^2,b_j^2)\]
    with $-\infty\leq n\leq N-1\leq +\infty$ and $a_j^2<b_j^2<a_{j+1}^2$. Consequently, every connected component $C^2$ of $\interior(A_2)$ equals $(a_j^2,b_j^2)$ for some $j$ while every connected component $K^2$ of $\interior(A_2^C)$ equals $(b_{j}^2, a_{j+1}^2)$ for some $j$. Next, let the left and right endpoints of $K^1$ be $b^1$ and $a^1$ respectively so that
    \[\interior(K^1)=(b^1,a^1).\]
    
    Fix an interval $(a_j^2,b_j^2)$ and assume that $(a_j^2,b_j^2)\cap \interior(I^{\e_1})\neq \emptyset $, $(a_j^2,b_j^2)\cap \interior(I^{\e_1})\subset \interior(K^1)\cap \interior(I^{\e_1})$.

    First, notice that the assumption $\eta\neq 0,1$ implies that none of the $a_j^2$s, $b_j^2$s are in $\interior(I^{\e_2})-\interior (I^{-\e_2})$ due to \cref{cor:transition_eta_0_1}. Thus because the intersection $(a_j^2,b_j^2)\cap \interior(I^{\e_1})$ is non-empty, then either $\interior(I^{\e_2})\subset (a_j^2,b_j^2)$ or at least one endpoint of $(a_j^2,b_j^2)$ is in $\interior (I^{-\e_2})$. 

    If in fact $(a_j^2,b_j^2)\supset \interior(I^{\e_2})$, then $\interior(K^1)\cap \interior(I^{\e_1})\supset (a_j^2,b_j^2)\cap \interior(I^{\e_1})$ must include $\interior(I^{\e_1})$. Thus 
    $R^{\e_1}(A_1)=R^{\e_1}(\emptyset)$ while $R^{\e_2}(A_2)=R^{\e_2}(\Rset)$. \Cref{lemma:R_e_constant} then implies that $\Rset$, $\emptyset$ are both adversarial Bayes classifiers for both perturbation sizes $\e_1$ and $\e_2$, which implies the first bullet of \cref{lemma:inclusion_basic}.
    
    Thus, to show the second bullet of \cref{lemma:inclusion_basic}, it remains to consider $(a_j^2,b_j^2)\not \supset \interior(I^{\e_2})$.  
    We will show $(a_j^2,b_j^2)\not \supset \interior(I^{\e_2})$ 
    but $(a_j^2,b_j^2)\cap \interior(I^{\e_1})\neq \emptyset$ and $(a_j^2,b_j^2)\cap \interior(I^{\e_1})\subset \interior(K^1)\cap \interior(I^{\e_1})$ results in a contradiction.

     Because $A_2$ is regular the inclusion $\interior(K^1)\cap \interior(I^{\e_1})\supset (a_j^2,b_j^2)\cap \interior(I^{\e_1})$ implies that $b^1-a^1>2\e_2$. As $b_j^2-a_j^2>2\e_2$ and the interval $(a_j^2,b_j^2)$ is included in the adversarial Bayes classifier $A_2$, it follows that $R^\e(A_2)\leq R^\e(A_2-(a_j^2,b_j^2))$ which implies 
    \[ \int_{a_j^2-\e_2}^{a_j^2+\e_2} p dx +\int_{a_j^2+\e_2}^{b_j^2-\e_2} p_0 dx+\int_{b_j^2-\e_2}^{b_j^2+\e_2} p dx\leq \int_{a_j^2-\e_2}^{b_j^2+\e_2} p_1dx\]
    and consequently 
    \begin{equation}\label{eq:p_comp_minimal_forward}
        \int_{a_j^2-\e_2}^{b_j^2+\e_2} p_0dx\leq \int_{a_j^2+\e_2}^{b_j^2-\e_2} p_1 dx. 
    \end{equation}

     Next, $b_j^2-a_j^2>2\e_2$ implies $(b_j^2-(\e_2-\e_1))-(a_j^2+(\e_2-\e_1))>2\e_1$. Notice that 
     \[(a_j^2+\e_2-\e_1,b_j^2-(\e_2-\e_1))\cap \interior(I^{\e_1})\subset (a_j^2,b_j^2)\cap \interior(I^{\e_1})\subset K^1\cap \interior(I^{\e_1})\] 
     is then a connected component of $\left(A_1\cup (a_j^2+(\e_2-\e_1),b_j^2-(\e_2-\e_1))\right)\cap \interior(I^{\e_1})$. Therefore,
     \begin{equation}\label{eq:adv_risk_modified_classifier}
     \begin{aligned}
         R^{\e_1}(A_1)-R^{\e_1}\left(A_1\cup \left(a_j^2+(\e_2-\e_1),b_j^2-(\e_2-\e_1)\right)\right) =\\
         \int_{c_j}^{d_j} p_1dx - \left( \int_{c_j}^{a_j^2+\e_2} p dx+\int_{a_j^2+\e_2}^{b_j^2-\e_2} p_0dx+ \int_{b_j^2-\e_2}^{d_j} pdx \right)
    \end{aligned}
     \end{equation}
    where $c_j= \max(b^1+\e_1,a_{j}^2+\e_2-2\e_1)$ and $d_j= \min(a^1+\e_1,b_j^2-\e_2+2\e_1)$. We will now argue that this quantity is positive, contradicting the fact that $A_1$ is an adversarial Bayes classifier.  
    
    Adding 
    \[\int_{c_j}^{a_j^2+\e_2} p_1 dx+ \int_{b_j^2-\e_2}^{d_j} p_1dx\] to both sides of \cref{eq:p_comp_minimal_forward} implies that
    \begin{align}
        &\int_{c_j}^{d_j}  p_1dx
        \geq \int_{c_j}^{a_j^2+\e_2} p dx+ \int_{b_j^2-\e_2}^{d_j} pdx+\int_{a_j^2+\e_2}^{b_j^2-\e_2} p_0dx +\int_{a_j^2-\e_2}^{c_j} p_0dx +\int_{d_j}^{b_j^2+\e_2}p_0dx\nonumber\\
        &>\int_{c_j}^{a_j^2+\e_2} p dx+ \int_{b_j^2-\e_2}^{d_j} pdx+\int_{a_j^2+\e_2}^{b_j^2-\e_2} p_0dx \label{eq:inequality_to_components}
    \end{align}
     We now prove that this last inequality is in fact strict. First, recall that the interval $(a_j^2,b_j^2)$ does not contain $\interior(I^{\e_2})$ and thus \cref{cor:transition_eta_0_1} implies that at least one of $a_j^2$, $b_j^2$ must be in $\interior(I^{-\e_2})$. Consequently, $\interior \supp \PP$ must contain at least one of $a_j^2-\e_2$ and $b_j^2+\e_2$. Lastly, $c_j-(a_j^2-\e_2)\geq 2(\e_2-\e_1)>0$ and $b_j^2+\e-d_j\geq 2(\e_2-\e_1)>0$ and thus at least one of the intervals $[a_j^2-\e,c_j]$, $[d_j,b_j^2+\e_2]$ must have positive $\PP$-measure. The assumption $\PP(\eta=0\text{ or }1)=0$ implies $\supp \PP_0=\supp \PP_1$ and consequently one of these intervals must have positive $\PP_0$-measure. 
    
    The strict inequality in \cref{eq:inequality_to_components} together with \cref{eq:adv_risk_modified_classifier} implies $R^{\e_1}(A_1)-R^{\e_1}(A_1 \cup (a_j^2+\e_2-\e_1,b_j^2-(\e_2-\e_1))>0$, which contradicts the fact that $A_1$ is an adversarial Bayes classifier. 
\end{proof}

The following result reduces general $A_2$ to regular $A_2$, allowing the application of \cref{lemma:inclusion_basic_0}:
\begin{lemma}\label{lemma:regular_match}
    Assume $d=1$, $\PP\ll\mu$, $\e>0$, $\supp \PP$ is an interval, and $\PP(\eta=0\text{ or }1)=0$. Then for any adversarial Bayes classifier $A$, there is a regular adversarial Bayes classifier $A_r$ for which $A\cap \interior(\supp \PP^\e)= A_r\cap  \interior(\supp \PP^\e)$
\end{lemma}
    \begin{proof}
        Let $A_r$ be the regular adversarial Bayes classifier equivalent to $A$. The fourth bullet of \cref{thm:1d_degenerate} implies that $(A\triangle A_r) \cap \interior (\supp \PP)^\e\subset \partial A_r$. However, if $D_1,D_2\subset \partial A_r$, then $A_r\cup D_1-D_2$ is also a regular set. Taking $D_1=A-A_r$ and $D_2=A_r-A$ establishes the claim.    
    \end{proof}

\Cref{thm:adv_bayes_increasing_e} then directly follows from \cref{lemma:inclusion_basic_0}.

\begin{proof}[Proof of \cref{thm:adv_bayes_increasing_e}]
    We will show the statement for $A_1\cap \interior(I^{\e_1})$ and $A_2\cap \interior(I^{\e_1})$, the argument for $A_1^C\cap \interior(I^{\e_1})$ and $A_2^C\cap \interior(I^{\e_1})$ is analogous. Namely, we will prove that either $\comp(A_1\cap \interior(I^{\e_1}))\geq \comp(A_2\cap \interior(I^{\e_1}))$ or $\Rset,\emptyset$ minimize both $R^{\e_1}$ and $R^{\e_2}$.

    If $\comp(A_1\cap \interior (I^{\e_1}))=+\infty$ then there is nothing to prove. Thus, we assume $A_1\cap \interior (I^{\e_1})$ has finitely many components.

    Next, as $\e_2>0$, \cref{lemma:regular_match} implies that it suffices to consider regular $A_2$. Furthermore, the fact that $A_2$ is regular and $\interior(I^{\e_1})$ is open implies that $\comp(A_2\cap \interior(I^{\e_1}))=\comp(\interior(A_2)\cap \interior(I^{\e_1}))$.
    
     As $A_2$ is regular we can write
    \[\interior(A_2)=\bigcup_{j=n}^N (a_j^2,b_j^2).\]
    with $a_i^2<b_i^2<a_{i+1}^2$, $n$ and $N$ finite with $n\leq N-1$, and $-\infty\leq n\leq N-1\leq +\infty$. 
    
     The first bullet of \cref{lemma:inclusion_basic_0} implies that that if both $\emptyset$, $\Rset$ are adversarial Bayes classifiers for perturbation radius $\e_1$, then both are adversarial Bayes classifiers for perturbation radius $\e_2$.

    Next, assume that for perturbation size $\e_1$, the sets $\Rset$, $\emptyset$ are not both adversarial Bayes classifiers.
    \Cref{cor:transition_eta_0_1} implies that there are no $a_j^2,b_j^2\in \interior(I^{\e_2})-\interior(I^{-\e_2})$. As $\interior(I^{-\e_2})\subset \interior(I^{\e_1})\subset \interior(I^{\e_2})$ are all intervals which are connected sets, one can conclude that $\comp(\interior(A_2)\cap \interior(I^{\e_2}))=\comp(\interior(A_2)\cap \interior(I^{\e_1}))$. Therefore, it remains to show that $\comp(A_1\cap \interior(I^{\e_1}))\geq \comp(\interior(A_2)\cap \interior(I^{\e_1}))$.

    Let $C^1$ be any component of $A_1\cap \interior I^{\e_1}$. Because $\interior(I^{\e_1})$ is an interval, the intersections $C^1\cap \interior(I^{\e_1})$, $(b_j^2,a_{j+1}^2)\cap \interior(I^{\e_1})$ 
    are intervals for $j\in [n,N]$. If the interval $C^1\cap \interior(I^{\e_1})$ intersects both the intervals $(a_j^2,b_j^2)\cap \interior(I^{\e_1})$ 
    and $(a_{j+1}^2,b_{j+1}^2)\cap \interior(I^{\e_1})$ for some $j$, then $C^1\cap \interior(I^{\e_1})$ must contain $(b_j^2,a_{j+1}^2)\cap \interior(I^{\e_1})$ for some $j$, 
    which contradicts \cref{lemma:inclusion_basic_0}. Thus there is at most one interval $(a_j^2,b_j^2)\cap \interior(I^{\e_1})$ for each component $C^1$ of $A_1\cap \interior(I^{\e_1})$, which implies that $\comp(A_1\cap \interior(I^{\e_1}))\geq \comp(\interior(A_2)\cap \interior(I^{\e_1}))=\comp(A_2\cap \interior(I^{\e_2}))$.
\end{proof}
\section{Adversarial Bayes classifier examples and deferred \cref{sec:examples} proofs}\label{app:examples}
The following lemma is helpful for verifying the second order necessary conditions for gaussian mixtures.
\begin{lemma}\label{lemma:gaussian_derivative}
    Let $g(x)=\frac{t}{\sqrt{2\pi}\sigma}e^{-\frac{(x-\mu)^2}{2\sigma^2}}$. Then $g'(x)=-\frac{x-\mu}{\sigma^2}g(x)$.
\end{lemma}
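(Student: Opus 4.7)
This is a direct chain-rule computation, so the plan is simply to differentiate the exponential factor and recognize that the non-exponential prefactor is unchanged. Explicitly, I would write $g(x) = C \cdot e^{f(x)}$, where $C = \tfrac{t}{\sqrt{2\pi}\sigma}$ is a constant in $x$ and $f(x) = -\tfrac{(x-\mu)^2}{2\sigma^2}$. Then $g'(x) = C f'(x) e^{f(x)} = f'(x) g(x)$, so the whole task reduces to computing $f'(x)$.

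For $f(x) = -\tfrac{(x-\mu)^2}{2\sigma^2}$, the chain rule gives $f'(x) = -\tfrac{2(x-\mu)}{2\sigma^2} = -\tfrac{x-\mu}{\sigma^2}$. Substituting back yields $g'(x) = -\tfrac{x-\mu}{\sigma^2} g(x)$, which is exactly the claimed identity.

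There is no genuine obstacle here; the only thing to be careful about is keeping track of the sign from the negative exponent and the factor of $2$ that cancels when differentiating $(x-\mu)^2$. The result will then be invoked later (see \cref{app:gaussians_equal_variances} and \cref{app:gaussians_equal_means_details}) to check the second order necessary condition \cref{eq:second_order_necessary} for Gaussian-mixture distributions, where one needs to compare $p_0'(a-\epsilon)$ and $p_1'(a+\epsilon)$ (or the analogous expressions at $b$); this lemma converts such derivatives into algebraic expressions in $x-\mu_i$ and $\sigma_i$ that can be compared directly.
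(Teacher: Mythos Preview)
Your proposal is correct and takes essentially the same approach as the paper: both apply the chain rule directly to the exponential, compute $f'(x)=-\tfrac{x-\mu}{\sigma^2}$, and recognize the result as $f'(x)g(x)$. The paper's proof is a one-line version of what you wrote.
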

\begin{proof}
    The chain rule implies that 
    \[g'(x)=-\frac{x-\mu}{\sigma^2} \cdot \frac{t}{\sqrt{2\pi}\sigma}e^{-\frac{(x-\mu)^2}{2\sigma^2}}= -\frac{x-\mu}{\sigma^2}g(x)\]
\end{proof}

\subsection{\Cref{ex:gaussians_equal_variances} details}\label{app:gaussians_equal_variances}

It remains to verify two of the claims made in \cref{ex:gaussians_equal_variances}--- namely, 1) that $b(\e)$ does not satisfy the second order necessary condition \cref{eq:second_order_necessary_b}, and 2) Comparing the adversarial risks of $\Rset$, $\emptyset$, $(a(\e),+\infty)$ proves that $(a(\e),+\infty)$ is an adversarial Bayes classifier iff $\e\leq \frac{\mu_1-\mu_0} 2$ and $\Rset$, $\emptyset$ are adversarial Bayes classifiers iff $\e\geq \frac{\mu_1-\mu_0} 2$.

\subsubsection*{1) Showing $b(\e)$ doesn't satisfy the second order necessary condition \cref{eq:second_order_necessary_b}}
Due to \cref{lemma:gaussian_derivative} the equation \cref{eq:second_order_necessary_b} reduces to 
\[p_0'(b(\e)+\e)-p_1'(b(\e)-\e)=-\frac{b(\e)+\e-\mu_0}{\sigma^2}p_0(b(\e)-\e)+\frac{b(\e)-\e-\mu_1}{\sigma^2}p_1(b(\e)+\e)\]
Furthermore, the first order necessary condition $p_0(b(\e)-\e)-p_1(b(\e)+\e)=0$ implies that 
\[p_0'(b(\e)+\e)-p_1'(b(\e)-\e)=\frac{p_1(b+\e)}{\sigma^2}\left( -(b(\e)+\e-\mu_0)+(b(\e)-\e-\mu_1) \right)=\frac{p_1(b+\e)}{\sigma^2}(\mu_0-\mu_1-2\e)\]
This quantity is negative due to the assumption $\mu_1>\mu_0$.

\subsubsection*{2) Comparing the adversarial risks of $\Rset$, $\emptyset$, and $(a(\e),+\infty)$}

First, notice that $R^\e(\emptyset)=R^\e(\Rset)=\frac 12$. 

Thus it suffices to compare the risks of $(a(\e),+\infty)$ and $\Rset$. Let 
\[\Phi(x)=\int_{-\infty}^x \frac 1 {\sqrt{2\pi}} e^{-\frac{t^2}2} dt\]
be the cdf of a standard gaussian. Then $R^\e( (a(\e),+\infty))\leq R^\e(\Rset)$ iff 
\[\int_{-\infty}^{a(\e)+\e} p_1(x)dx +\int_{a(\e)-\e}^{+\infty} p_0(x)dx\leq \int_{-\infty}^{+\infty} p_0(x)dx.\]
This equation is equivalent to 
\[\int_{-\infty}^{a(\e)+\e} \frac 1{\sqrt{2\pi} \sigma} e^{-\frac{(x-\mu_1)^2}{2\sigma^2}} dx\leq \int_{-\infty}^{a(\e)-\e} \frac 1 {\sqrt{2\pi}\sigma}e^{-\frac{(x-\mu_0)^2}{2\sigma^2}}\]
which is also equivalent to 
$\Phi\left( \frac{a(\e)+\e-\mu_1}\sigma\right)\leq \Phi \left( \frac{a(\e)-\e-\mu_0}{\sigma}\right)$.
As the function $\Phi$ is strictly increasing, this relation is equivalent to the inequality
\[ \frac{a(\e)+\e-\mu_1}\sigma\leq \frac{a(\e)-\e-\mu_0}{\sigma}\] which simplifies as $\e\leq \frac{\mu_1-\mu_0}2$. Therefore, $(-\infty,a(\e))$ is an adversarial Bayes classifier iff  $\e\leq \frac{\mu_1-\mu_0}2$ and $\Rset,\emptyset$ are adversarial Bayes classifiers iff  $\e\geq \frac{\mu_1-\mu_0}2$.

\subsection{\Cref{ex:gaussians_equal_means} details}\label{app:gaussians_equal_means_details}
The constant $k=\ln \frac{\lambda \sigma_0}{(1-\lambda)\sigma_1}$ will feature prominently in subsequent calculations, notice that the assumption $\frac \lambda{\sigma_1}>\frac{1-\lambda}{\sigma_0}$ implies that $k>0$. The equation \cref{eq:first_order_necessary_b} requires solving $\frac{1-\lambda}{\sigma_0} e^{-(b+\e)^2/2\sigma_0^2}=\frac{\lambda}{\sigma_1} e^{-(b-\e)^2/2\sigma_1^2}$, with solutions \cref{eq:b(e)_def} and 
\begin{equation}\label{eq:y(e)_equal_means_gaussians}
    y(\e)=\frac{\e\left( \frac 1 {\sigma_1^2}+\frac 1 {\sigma_0^2}\right)-\sqrt{\frac{4\e^2}{\sigma_0^2\sigma_1^2}+2\left(\frac 1 {\sigma_1^2}-\frac 1 {\sigma_0^2} \right)k}}{\frac 1 {\sigma_1^2}-\frac 1 {\sigma_0^2}}.
\end{equation}
    The discriminant is positive as $k>0$ and $\sigma_0>\sigma_1$. However, one can show that $y(\e)$ does not satisfy the second order necessary condition \cref{eq:second_order_necessary_b} (see below).  Similarly, the only solution to the necessary conditions \cref{eq:first_order_necessary_a} and \cref{eq:second_order_necessary_a} is $a(\e)=-b(\e)$.

    Thus there are five candidate sets for the adversarial Bayes classifier: $\emptyset$, $\Rset$, $(-\infty,b(\e))$, $(a(\e),+\infty)$ and $(a(\e),b(\e))$. \Cref{thm:1d_degenerate} implies that none of these sets could be equivalent up to degeneracy. By comparing the adversarial classification risks, one can show that $(a(\e),b(\e))$ has the strictly smallest adversarial classification risk from these five options (see below). Therefore, $(a(\e),b(\e))$ is the adversarial Bayes classifier for all $\e$.

It remains to verify two of the claims above--- namely, 1) that $y(\e)$ does not satisfy the second order necessary condition \cref{eq:second_order_necessary_b}, and 2) proving that $(a(\e),b(\e))$ is always the adversarial Bayes classifier by comparing the risks of $(a(\e),b(\e))$, $\Rset$, $\emptyset$, $(a(\e),\infty)$, and $(-\infty,b(\e))$.

\subsubsection*{1) The point $y(\e)$ does not satisfy the second order necessary condition \cref{eq:second_order_necessary_b}}

First, notice that 
    \begin{equation}\label{eq:y_e_bound}
        y(\e)\leq \frac{\e\left(\frac 1 {\sigma_1^2}+\frac 1 {\sigma_0^2}\right)-\sqrt{\frac{4\e^2}{\sigma_0^2\sigma_1^2}}}
        {\frac 1 {\sigma_1^2}-\frac 1 {\sigma_0^2}}=\frac{\e\left(\frac 1 {\sigma_1^2}+\frac 1 {\sigma_0^2}\right)-\frac{2\e}{\sigma_0\sigma_1}}
        {\frac 1 {\sigma_1^2}-\frac 1 {\sigma_0^2}}
    \end{equation}
    This bound will show that $y(\e)$ fails to satisfy the second order necessary condition \cref{eq:second_order_necessary_b}. Next, we compute the derivative $p_i'$ in terms of $p_i$ using \Cref{lemma:gaussian_derivative}. Specifically, $p_i'(x)=\frac{-x}{\sigma_i^2}p_i(x)$
    and therefore
    \[p_0'(y(\e)+\e)-p_1'(y(\e)-\e)=-\frac{y(\e)+\e}{\sigma_0^2} p_0(y(\e)+\e)+\frac{y(\e)-\e}{\sigma_1^2}p_1(y(\e)-\e)\]
    The first order condition $p_0(y(\e)+\e)-p_1(y(\e)-\e)=0$ implies
    \[p_0'(y(\e)+\e)-p_1'(y(\e)-\e)={p_0(y(\e)+\e)}\left( y(\e)\left(\frac 1 {\sigma_1^2}-\frac 1 {\sigma_0^2}\right)-\e\left(\frac 1{\sigma_1^2}+\frac 1 {\sigma_0^2}\right)\right)\]

    However, \cref{eq:y_e_bound} implies that
    \[{p_0(y(\e)+\e)}\left( y(\e)\left(\frac 1 {\sigma_1^2}-\frac 1 {\sigma_0^2}\right)-\e\left(\frac 1{\sigma_1^2}+\frac 1 {\sigma_0^2}\right)\right)\leq {p_0(y(\e)+\e)}\cdot \frac{-2\e}{\sigma_0\sigma_1}<0\]
    
    Thus, the only solution to first \cref{eq:first_order_necessary_b} and \cref{eq:second_order_necessary_b} is $b(\e)$.

\subsubsection*{2) Comparing the risks of $(a(\e),b(\e))$, $\Rset$, $\emptyset$, $(a(\e),\infty)$, and $(-\infty,b(\e))$}

        First, we argue that $R^\e\left((a(\e),\infty)\right)>R^\e\left((a(\e),b(\e)\right)$: 
    \begin{equation}\label{eq:integrals_subtraction}
    \begin{aligned}
R^\e\big((a(\e),\infty)\big)-R^\e\big((a(\e),b(\e))\big)&=\int^{+\infty}_{b(\e)+\e} p_0(x)-p_1(x)dx-\int_{b(\e)-\e}^{b(\e)+\e} p_1(x)dx\\
&=\int_{b(\e)}^\infty p_0(x+\e)-p_1(x-\e)dx
\end{aligned}
    \end{equation}
   The same calculation that solves for $b(\e)$ in \cref{eq:b(e)_def} and $y(\e)$ in \cref{eq:y(e)_equal_means_gaussians} then shows that $p_0(x+\e)-p_1(x-\e)$ is strictly positive when $x>b(\e)$. 

    Additionally, $R^\e\big((a(\e),+\infty)\big)=R^\e\big((-\infty,b(\e))\big)$ because $a(\e)=-b(\e)$ and  $p_0$, $p_1$ are symmetric around zero. Consequently, writing out the integrals as in the first line of \cref{eq:integrals_subtraction}, results in $R^\e(\Rset)-R^\e\big( (-\infty,b(\e))\big) =R^\e\big( (a(\e), \infty)\big)-R^\e\big( (a(\e),b(\e))\big)$. Thus
    \[R^\e(\Rset)-R^\e\big((a(\e),b(\e))\big)=2 \Big(R^\e\big((a(\e),\infty)\big)-R^\e\big( (a(\e),b(\e) \big)\Big)>0\]
    and hence one can conclude that $R^\e\big( (a(\e),b(\e))\big)<R^\e(\Rset)$ and $R^\e\big( (a(\e),b(\e))\big)<R^\e\big((-\infty,b(\e))\big)$. Furthermore, the assumption $\lambda\geq 1/2$ implies $R^\e(\Rset)\leq R^\e(\emptyset)$ and consequently $R^\e(\emptyset)>R^\e \big( (a(\e),b(\e))\big)$. Accordingly, the set $( a(\e), b(\e))$ must be the adversarial Bayes classifier.

    \subsection{Proof of \cref{lemma:eps_in_interval}}\label{app:eps_in_interval}
     \Cref{lemma:transition_eta_0_1,lemma:max_degenerate_set} of \cref{app:degenerate_1d_eta_0_1} are used in the proof of \cref{lemma:eps_in_interval}.

    \begin{proof}[Proof of \cref{lemma:eps_in_interval}]
    Without loss of generality one can assume that $A$ open and regular by \cref{thm:adv_bayes_and_degenerate,lemma:closure_interior_adversarial_Bayes}. Concretely, the set $A$ is a union of open intervals due to \cref{lemma:closure_interior_adversarial_Bayes}.
    
    There is nothing to show if $\supp \PP=\Rset$.
    
We now consider smaller support--- for concreteness, we will assume that $\supp \PP=[\ell,\infty)$, the cases $\supp \PP=[\ell,r]$, $\supp \PP=(-\infty,r]$ are analogous.

Let 
\[i^*=\argmin_{a_i\geq \ell} a_i-\ell\]
\[j^*=\argmin_{b_i\geq \ell} b_i-\ell\]

We will now consider four cases:
\begin{enumerate}[label=\Roman*)]
    \item\label{it:a_i_low_ex} $\ell-a_{i^*}\leq \ell -b_{j^*}$ and $a_{i^*}>\ell+\e$; in which case  $A'=(a_{i^*},+\infty)\cap A$ is the desired adversarial Bayes classifier
    \item \label{it:a_i_low_ex_deg}
    $\ell-a_{i^*}\leq \ell -b_{j^*}$ and $a_{i^*}\leq \ell+\e$; in which case $A'=(-\infty, a_{i^*}]\cup A$ is the desired adversarial Bayes classifier
    \item \label{it:b_i_low_ex} $\ell-a_{i^*}> \ell -b_{j^*}$ and $b_{j^*}>\ell+\e$; in which case $A'=(-\infty, b_{j^*}) \cup A$ is the desired adversarial Bayes classifier
    \item \label{it:b_i_low_ex_deg} $\ell-a_{i^*}> \ell -b_{j^*}$ and $b_{j^*}\leq \ell +\e$; in which case $A'=( b_{j^*},\infty) \cap A$ is the desired adversarial Bayes classifier
\end{enumerate}
We will show \cref{it:a_i_low_ex,it:a_i_low_ex_deg}, the arguments for \cref{it:b_i_low_ex,it:b_i_low_ex_deg} is analogous.

\textbf{\cref{it:a_i_low_ex}:} First, we argue that $A$ and $A'$ are equivalent. \Cref{lemma:max_degenerate_set} implies that $(-\infty, \ell-\e]$ is a degenerate set for $A$. Next, there can be at most one point of $\partial A$ in $[\ell-\e,\ell]$ because $A$ is regular.  By the definition of $i_*$, if there is some point of $\partial A$ in $[\ell-\e,\ell]$, that point must be $b_{i^*-1}$. If $b_{i^*-1}\in [\ell-\e,\ell]$, then \cref{lemma:transition_eta_0_1} implies that $ [\ell-\e,b_{i^*-1}]$ is a degenerate set, and consequently $A$ is equivalent $A-((-\infty, \ell-\e] \cup[\ell-\e,b_{i^*-1}])$. As $(b_{i^*-1},a_{i^*}]$ is disjoint from $A$, it follows that $A-((-\infty, \ell-\e] \cup[\ell-\e,b_{i^*-1}])=A'$. On the other hand, if $b_{i^*-1}\not \in [\ell-\e,\ell]$, then $A$ is equivalent to $A-(-\infty,\ell-\e]$ and this set is disjoint from $(\ell-\e,a_{i*})$. Consequently, $A'=A-(\infty,\ell-\e]$ and thus $A$ and $A'$ are equivalent.

Next, we show that $A':= (a_{i^*},\infty) \cap A$ is a regular set. Because $A$ is regular, the point $a_{i^*}$ is more than $2\e$ from any other boundary point of $\partial A$. As $\partial ((a_{i^*},+\infty)\cap A)\subset\partial (-\infty, a_{i^*}) \cup \partial A=\partial A$, the point $a_i^*$ must 
be more than $2\e$ from any other boundary point of $ ( a_{i^*},+\infty)\cap A$. Therefore, $A'$ is regular.

The assumption $a_{i^*}>\ell+\e$ implies that $A'\subset (\ell+\e+\delta,+\infty)$ for some $\delta>0$ and consequently $A'$ can only have boundary points in $(\ell+\e, +\infty)=\interior (\supp \PP^{-\e})$.

Finally, $A'$ is open as it is the intersection of open sets.

\textbf{ \cref{it:a_i_low_ex_deg}:}
First, we argue that $A$ and $A'$ are equivalent. \Cref{lemma:max_degenerate_set,lemma:transition_eta_0_1} imply that the sets $[\ell-\e,a_{i^*}]$ and $(-\infty, \ell-\e]$ are degenerate sets for $A$.  Therefore, $A$ and $A'$ are equivalent up to degeneracy.

Next, the same argument as \cref{it:a_i_low_ex} shows that $A'= A\cup (-\infty, a_{i^*}]$ is a regular set:  $\partial (A\cup (-\infty,a_{i^*}])\subset \partial A\cup \partial (-\infty, a_{i^*}]=\partial A$. Thus the boundary points of $A'$ must be at least $2\e$ apart because $A$ is regular. Therefore, the classifier $A'$ is regular.

Further, the set $A'$ is open because $(-\infty, a_{i^*}]\cup (a_{i^*},b_{i^*})=(-\infty, b_{i^*})$ and consequently, $A'= (-\infty, b_{i^*})\cup A$.

Finally, to show that $\partial A' \subset \interior (\supp \PP^{-\e})$, we argue that $A'$ has no boundary points in $(-\infty,\ell+\e]=(\interior (\supp \PP^{-\e}))^C$. As $(-\infty, b_{i^*})\subset A'$, the set $A'$ has no boundary points in $(-\infty, b_{i^*}]$. However, the interval $(-\infty, b_{i^*}]$ contains $(-\infty,\ell+\e]$ as $b_{i^*}-a_{i^*}>2\e$ because $A$ is regular and $a_{i*}\geq \ell$.
\end{proof}

\subsection{\Cref{ex:non_uniqueness_all} details}\label{app:non_uniqueness_all}
    \Cref{thm:exists_regular} implies that when $\e<1/2$ the candidate solutions for the $a_i$, $b_i$ are $[-\e,\e]\cup \{-1-\e,-1+\e, 1-\e$,$ 1+\e\}$. However, \cref{lemma:eps_in_interval} implies that one only needs to consider points $a_i$, $b_i$ in $[-\e,\e]$ when identifying adversarial Bayes classifiers under equivalence up to degeneracy. However, $R^\e((y,\infty))<R^\e((-\infty,y))$ for any $y\in [-\e,\e]$ because $p_1(x)>p_0(x)$ for $x>\e$ while $p_1(x)<p_0(x)$ for any $x<-\e$. Thus, the candidate sets for the adversarial Bayes classifier are $\Rset$, $\emptyset$, and $(y,\infty)$ for any $y\in [-\e,\e]$. Next, any point $y\in[-\e,\e]$ achieves the same risk: $R^\e((y,\infty))=\e+\frac 14(1-\e)$ while $R^\e(\Rset)=R^\e(\emptyset)=1/2$. Thus $\emptyset,\Rset$ are adversarial Bayes classifiers when $\e\in[1/3,1/2)$ and $(y,\infty)$ is an adversarial Bayes classifier when $\e\leq 1/3$. \Cref{thm:adv_bayes_increasing_e} extends this result result to larger values of $\e$: specifically, this theorem implies that $(y,\infty)$ is an adversarial Bayes classifier for any $y \in [-\e,\e]$ iff $\e\leq 1/3$ while $\Rset$, $\emptyset$ are adversarial Bayes classifiers iff $\e\geq 1/3$.

\subsection{\Cref{ex:degenerate} details}\label{app:ex_degenerate_details}
It remains to compare the adversarial risks of all sets whose boundary is included in $\{-1/4\pm \e,1/4\pm\e \}$ for all $\e>0$. As points in the boundary of a regular set must be strictly more than $2\e$ apart, the boundary of a regular adversarial Bayes classifier can include at most one of $\{-\frac 14-\e, -\frac 14+\e\}$ and at most one of $\{\frac 14-\e, \frac 14+\e\}$. Let $\cS$ be the set of open sets with at most one boundary point in $\{-\frac 14-\e, -\frac 14+\e\}$, at most one boundary point in $\{\frac 14-\e, \frac 14+\e\}$, and no other boundary points.

Instead of explicitly computing the adversarial risk of each classifier in $\cS$, we will rule out most combinations by understanding properties of such sets, and then comparing to the adversarial risk of $\Rset$, for which $R^\e(\Rset)=1/10$ for all possible $\e$. We consider three separate cases:

\textbf{When $\e>1/4$:} 
If a set $A$ includes at least one endpoint in $\interior (\supp \PP^{-\e})$, then 
\[R^\e(A)\geq 2\e \inf_{x\in \supp \PP} p(x)\geq \frac {2\e}5>\frac 1 {10}=R^\e(\Rset)\]
The only two sets in $\cS$ that have no endpoints in $\interior (\supp \PP^{-\e})$ are $\Rset$ and $\emptyset$, but $R^\e(\emptyset)=9/10$. Thus if $\e>1/4$, then $\Rset$ is an adversarial Bayes classifier, and this classifier is unique up to degeneracy.

\textbf{When $1/8\leq \e\leq 1/4$:}
If either $1/4+\e,-1/4-\e$ are in the boundary of a set $A$, then 
\[R^\e(A)\geq \int_{y-\e}^{y+\e} p(x)dx= \frac 35\cdot 2\e\geq \frac 3 {20} >R^\e(\Rset).\]
(The value $y$ above is either $1/4+\e$ or $-1/4-\e$.)
Consequently, for these values of $\e$, only sets in $\cS$ with at most one endpoint in $\{-1/4+\e\}$ and at most one endpoint in $\{1/4-\e\}$ can be adversarial Bayes classifiers. 

Next, if a set $A$ in $\cS$ excludes either $(-\infty,-1/4)$ or $(1/4,\infty)$, then 
\[R^\e(A)\geq \int_S p_1(x)dx\geq \frac 3 5\cdot \frac 34>R^\e(\Rset).\]
(The set $S$ above represents either $(-\infty,-1/4)$ or $(1/4,\infty)$.)
As a result, such a set cannot be an adversarial Bayes classifier.

However, $\Rset$ and $(-\infty,-1/4+\e)\cup (1/4-\e,\infty)$ are the only two sets in $\cS$ that include $(-\infty,-1/4)\cup (1/4,\infty)$ with at most one endpoint in $\{-1/4+\e\}$ and at most one endpoint in $\{1/4-\e\}$. The set $(-\infty,-1/4+\e)\cup (1/4-\e,\infty)$ is not a regular set when $\e\geq 1/8$. Consequently, When $\e \in (1/8,1/4]$, the set $\Rset$ is an adversarial Bayes classifier, and this classifier is unique up to degeneracy.

\textbf{When $\e<1/8:$}
First, if $A$ excludes $(-\infty,-1/4-\e)$ or $(1/4+\e,+\infty)$, then 
\[R^\e(A)\geq \frac 35 \cdot (\frac 34-\e)\geq \frac 35 \cdot (\frac 34-\frac 18)=\frac 38>R^\e(\Rset).\] There are only five sets in $\cS$ that include both $(-\infty,-1/4-\e)$, $(1/4+\e,+\infty)$: $A_1=(-\infty, -1/4+\e)\cup (1/4-\e,\infty)$, $A_2=(-\infty, -1/4-\e)\cup ( 1/4-\e,\infty)$, $A_3=(-\infty, -1/4+\e)\cup (1/4+\e,\infty)$, $A_4=(-\infty,-1/4-\e)\cup (1/4+\e,\infty)$, and $A_5=\Rset$. All of these sets are regular when $\e<1/8$. One can compute:
\[R^\e(A_1)=\frac {4\e}5, R^\e(A_2)=R^\e(A_3)=\frac{8\e}5,R^\e(A_4)=\frac {12\e}5,\text{ and }R^\e(\Rset)=\frac 1 {10}\]
Of these five alternatives, the set $A_1$ has the strictly smallest risk when $\e\in (0,1/8)$. Consequently, when $\e\in (0,1/8)$, the set $A_1$ is the adversarial Bayes classifier and is unique up to degeneracy.

\subsection{Proof of \cref{prop:uniform_within_e}}\label{app:uniform_within_e}
\begin{proof}[Proof of \cref{prop:uniform_within_e}]
     Due to \cref{thm:adv_bayes_and_degenerate} and \cref{lemma:eps_in_interval}, any adversarial Bayes classifier is equivalent up to degeneracy to a regular adversarial Bayes classifier $A=\bigcup_{i=m}^M (a_i,b_i)$ for which all the finite $a_i$ and $b_i$ are contained in $\interior (\supp \PP^{-\e})$. 

    For every point $x$ in $\interior (\supp \PP^{-\e})$, the densities $p_0$ and $p_1$ are both continuous at $x-\e$ and $x+\e$. Consequently, the necessary conditions \cref{eq:first_order_necessary} 
    reduce to 



    on this set. If $a_i$ is more than $\e$ away 
    from a point $z$ satisfying $\eta(z)=1/2$, the continuity of $\eta$ implies that $\eta(a_i+\e), \eta(a_i-\e)$ are 
    either both strictly larger than $1/2$ or strictly smaller than $1/2$, and thus $a_i$ would not satisfy \cref{eq:uniform_necessary_reduce_2_a}. As a result, every $a_i$ must be within $\e$ of a solution to $\eta(z)=1/2$. An analogous argument shows that the same holds for solutions to \cref{eq:uniform_necessary_reduce_2_b}.
\end{proof}
\subsection{Proof of \cref{prop:eta_0_1}}\label{app:eta_0_1}
\begin{proof}[Proof of \cref{prop:eta_0_1}]
    Due to \cref{thm:adv_bayes_and_degenerate} and \cref{lemma:eps_in_interval}, any adversarial Bayes classifier is equivalent up to degeneracy to a regular adversarial Bayes classifier $A=\bigcup_{i=m}^M (a_i,b_i)$ for which all the finite $a_i$ and $b_i$ are contained in $\interior (\supp \PP^{-\e})$. Consequently, if there is some $a_i$ or $b_i$ in $\interior (\supp \PP^\e)$, then $\e<|\supp \PP|/2$.
    
    For contradiction, assume that $a_i$ is not within $\e$ of any point in $\partial\{\eta=1\}$. Then for some $r>0$, $\eta$ is either identically 1 or identically 0 on $(a_i-\e-r,a_i+\e+r)$ and thus $p_1=p\eta$ and $p_0=p(1-\eta)$ are continuous on this set. Furthermore, one of $p_1(a_i+\e)$ and $p_0(a_i-\e)$ is strictly positive and the other is zero. Consequently, $a_i$ cannot satisfy the necessary condition \cref{eq:first_order_necessary_a}, thus contradicting \cref{thm:exists_regular}. The argument for the $b_i$ is analogous
\end{proof}

\subsection{\Cref{ex:deg_eta_0_1_counterexample} details}\label{app:deg_eta_0_1_counterexample}
%
    It remains to compare the risks of all regular sets with endpoints in $\{-\frac 72\e, -\frac 52\e, -\frac 32 \e, -\frac 12 \e, +\frac 12 \e, +\frac 32 \e, +\frac 52 \e, +\frac 72\e\}$, and show that $\Rset$ is indeed an adversarial Bayes classifier. Rather than explicitly writing out all such sets and computing their adversarial risks, we show that one need not consider certain sets in $\cS$ because if they were adversarial Bayes classifiers, they would be equivalent up to degeneracy to other sets in $\cS$. 
    
    First, \cref{lemma:transition_eta_0_1} with $I=[-\frac 52 \e, +\frac 52 \e]$ implies that if $A$ is a regular adversarial Bayes classifier and $y\in \{-\frac 72 \e,-\frac 52 \e,-\frac 32 \e\}$ is in $\partial A$, then $[-\frac 72 \e,y]$ is a degenerate set. This observation rules out sets with endpoints in $\{-\frac 72\e,-\frac 52 \e, -\frac 32 \e\}$ when identifying all possible adversarial Bayes classifiers under equivalence up to degeneracy. 
    Similarly, \cref{lemma:transition_eta_0_1} also implies that there is no need to consider $\{+\frac 32 \e,+\frac 52 \e,+\frac 72 \e\}$ as possible values of the $a_i$s or $b_i$s. Thus it remains to compare the risks of regular sets whose boundary is contained in $\{-\frac 12 \e,-\frac 12 \e\}$. As points in the boundary of a regular set are at least $2\e$ apart, one can rule out sets with more than one boundary point in $\{-\frac 12 \e,+\frac 12 \e\}$.

    Consequently, it remains to compare the adversarial risks of six sets: $\Rset$, $\emptyset$, $(-\frac 12 \e,+\infty)$, $(-\infty, -\frac 12 \e)$, $(+\frac 12 \e, +\infty)$, and $(-\infty, +\frac 12 \e)$. The adversarial risks of these sets are: 
    \[R^\e(\Rset)=\frac {14} {25}\quad\quad R^\e(\emptyset)=\frac {11}{25}\]
    \[R^\e\left((-\frac 12 \e,+\infty)\right)=R^\e\left((+\frac 12 \e,+\infty)\right)=\frac {21} {25}\]
    \[R^\e\left((-\infty,-\frac 12 \e)\right)=R^\e\left((-\infty, +\frac 12 \e)\right)=\frac 9{25}\]
    Therefore, the set $(-\infty,-\frac 12 \e)$ is an adversarial Bayes classifier.

\end{document}